\documentclass[twoside]{article}

\usepackage[preprint]{aistats2026}

\usepackage{amsmath}
\usepackage{amssymb}
\usepackage{url} 
\usepackage{graphicx}
\usepackage{booktabs}
\usepackage{multirow}
\usepackage{amsmath}
\usepackage{mathtools}
\usepackage{threeparttable}
\usepackage{amssymb}
\usepackage{amsthm}
\usepackage[ruled,vlined]{algorithm2e} \usepackage{subcaption} \usepackage{thmtools}  
\usepackage{graphicx}
\usepackage[hidelinks]{hyperref}
\usepackage{cleveref}

\newtheorem{theorem}{Theorem}[section]

\newtheorem{lemma}[theorem]{Lemma}

\newtheorem{definition}[theorem]{Definition}

\newcommand\nnz{\mathrm{nnz}}
\newcommand\poly{\mathrm{poly}}
\newcommand\polylog{\mathrm{polylog}}

\newcommand{\eps}{\varepsilon}
\renewcommand{\epsilon}{\varepsilon}

\usepackage[round]{natbib}

\renewcommand{\cite}{\citep}

\allowdisplaybreaks
\begin{document}

\twocolumn[

\aistatstitle{Scalable Learning of Multivariate Distributions via Coresets}

\aistatsauthor{Zeyu Ding \And Katja Ickstadt \And Nadja Klein \And Alexander Munteanu \And Simon Omlor}

\aistatsaddress{Lamarr Institute for ML \& AI, \\ TU Dortmund \And Scientific Computing Center,\\Karlsruhe Institute of Technology \And Department of Statistics,\\ TU Dortmund} ]

\begin{abstract}
Efficient and scalable non-parametric or semi-parametric regression analysis and density estimation are of crucial importance to the fields of statistics and machine learning. However, available methods are limited in their ability to handle large-scale data. We address this issue by developing a novel coreset construction for multivariate conditional transformation models (MCTMs) to enhance their scalability and training efficiency. To the best of our knowledge, these are the first coresets for semi-parametric distributional models. Our approach yields substantial data reduction via importance sampling. It ensures with high probability that the log-likelihood remains within multiplicative error bounds of $(1\pm\varepsilon)$ and thereby maintains statistical model accuracy. Compared to conventional full-parametric models, where coresets have been incorporated before, our semi-parametric approach exhibits enhanced adaptability, particularly in scenarios where complex distributions and non-linear relationships are present, but not fully understood.
To address numerical problems associated with normalizing logarithmic terms, we follow a geometric approximation based on the convex hull of input data. This ensures feasible, stable, and accurate inference in scenarios involving large amounts of data. Numerical experiments demonstrate substantially improved computational efficiency when handling large and complex datasets, thus laying the foundation for a broad range of applications within the statistics and machine learning communities.
\end{abstract}

\section{INTRODUCTION}

In today's era of Big Data, the vast volume and increasing complexity of data in many real-world applications pose new challenges to statistics and machine learning. Traditional methods require intense computing times on large data and straightforward solutions such as uniform subsampling may lead to infeasible solutions. They often rely on strong modeling assumptions that are too restrictive to capture complicated phenomena accurately. These limitations demand computationally efficient and scalable techniques for more flexible multivariate models.

To address the need of scalable learning, the method of \emph{coresets} received a lot of attention in recent years. It aims to sample or select a relatively small but representative subset of the original data that can be used to accurately approximate the objective function for any solution, thereby significantly reducing computing, memory and storage requirements \cite{Phillips17,MunteanuS18}.

However, most of the existing coreset literature focused on clustering, linear regression, or generalized linear models that can handle only relatively simple and limited distributional assumptions. Little research has been conducted on more flexible non-parametric, semi-parametric or highly non-linear multivariate models. This gap becomes more critical as the machine learning and statistics communities increasingly rely on complex distributional regression and estimation methodologies including not only multivariate features but also multivariate outcomes.

\subsection{Brief Introduction to MCTMs}
\label{sec:MCTMintro}

In this paper, we specifically focus on multivariate conditional transformation models \citep[MCTMs;][]{klein2022multivariate}, which have the unique advantage of modeling both unconditional and conditional distributions, non-linear dependence structures, and flexible effects of features. The fundamental idea of MCTMs is to model marginal distributions non-parametrically via monotonic transformation functions. Multiple univariate distributions are then combined into a multivariate distribution via a \emph{copula}, that contributes all information required to model the multivariate dependence structure. Sklar's Theorem \citep{Sklar1959} is well-known and states that every multivariate distribution can be composed and represented in that way. Of course, representing and calculating such a model explicitly in a lossless way for arbitrary multivariate distributions may require solving infinite-dimensional numerical problems which is arguably not practical.

MCTMs thus formulate a semi-parametric model that rely on the  monotonic univariate transformations being approximated as a linear combination of Bernstein polynomial basis functions whose choice ensures monotonicity of the composed functions. The dependence structure between the univariate output components is imposed by a Gaussian copula \citep{Son2000}. We note that those modeling choices may be replaced by different functional basis families and different copulas, but we stress and demonstrate that it is not as restrictive as it may seem. Notice that, although Gaussian copulas are centrally symmetric, the resulting model need not be symmetric unless all marginal distributions are symmetric as well, see for instance the density contour plots in \Cref{fig:dgp_vis_1,fig:dgp_vis_2,fig:dgp_vis_3,fig:dgp_vis_4,fig:dgp_vis_5} in \Cref{app:sim_visual}.

For the sake of a concise presentation, we restrict ourselves to unconditional density estimation without features. See \citep{klein2022multivariate} for details and an extension to the conditional case of distributional regression.
The goal is to learn the density $ f_Y(y) $ of a $ J $-dimensional random vector $ Y = (Y_1, \ldots, Y_J)^T  \in \mathbb{R}^J $ and its distribution function $ F_Y(y) = P(Y \leq y) =P(h(Y)\leq h(y))$. This involves a bijective, strictly monotonically increasing transformation function $ h : \mathbb{R}^J \rightarrow \mathbb{R}^J $ estimated from the data. It maps $ Y $ to another random vector $ Z \in \mathbb{R}^J $ that follows some convenient reference distribution. In particular, the common marginal distribution $P_Z$ of the components $Z_j\sim P_Z$, $ j \in [J]$ is a modeling choice and is assumed to not depend on the data. It thus holds that 
\[ h(Y) = (h_1(Y), \ldots, h_J(Y))^T  \stackrel{d}{=} (Z_1, \ldots, Z_J)^T  = Z
\,.\]
Furthermore, it is assumed that each component $ h_j $ can be expressed as a linear combination of strictly monotonically increasing marginal transformation functions $ \tilde{h}_j : \mathbb{R} \rightarrow \mathbb{R} $ of the form
\[ h_j(y_1, \ldots, y_j) = \lambda_{j1}\tilde{h}_1(y_1) + \ldots + \lambda_{jj}\tilde{h}_j(y_j)\,,\] 
where $\lambda_{jl}$, $l\leq j$ are the unrestricted entries of a $J \times J$ lower triangular matrix $\Lambda$, such that $\Sigma=\Lambda^{-1}(\Lambda^{-1})^T$ is the covariance matrix of $Z$. Hence, $\Lambda$ describes the conditional dependencies between the $J$ components. The functions $ \tilde{h}_j(y_j), j \in [J] $, are scaled marginal transformations, each of which carry an intercept term and together allow the generation of arbitrary marginal distributions. The marginal transformations are modeled semi-parametrically via some basis functions $ a_j : \mathbb{R} \rightarrow \mathbb{R}^d $ and basis coefficients $ \vartheta_j \in\mathbb{R}^d$. The $ a_j $ are chosen, for example, to be Bernstein polynomials which allow for a convenient way to impose monotonicity. The unknown model parameters are collected in the vector $ \theta = (\vartheta^T , \lambda^T )^T$.
For consistency with other coreset literature, we consider \emph{minimizing} the negative log-likelihood, or specifically
the sum of loss contributions of each data point $ y_i \in \mathbb{R}^J, i \in [n], $ to the negative log-likelihood, which is equivalent to calculating the maximum likelihood estimator $\hat \theta_n = \operatorname{argmin}\limits_{\theta=(\vartheta^T ,\lambda^T )^T } f(\theta)$, where 
\begin{align}
\label{eq:log_llk}
f(\theta) = \sum_{i=1}^{n} \frac{1}{2}\sum^J_{j = 1} 
& \left( \sum^{j-1}_{\jmath = 1} \lambda_{j\jmath} {a}_{\jmath}(y_{i\jmath})^T  {\vartheta_{\jmath}}+ {a}_j(y_{ij})^T {\vartheta}_j \right)^2 \notag \\
&\qquad\qquad - \log ({a}'_j(y_{ij})^T {\vartheta}_j)\,.
\end{align}
See \citep{klein2022multivariate} for details and a derivation of the log-likelihood and its gradients. 
MCTMs can be fitted to data by optimizing the linear basis coefficients $\vartheta$ of univariate transformation models and the covariance structure of the Gaussian copula, which is parametrized through the unrestricted entries of the lower triangular matrix of the modified Cholesky factor $\Lambda$. MCTMs can thus be seen as a multivariate extension of univariate conditional transformation models \citep[CTMs;][]{HothornKB2014} to handle multi-dimensional outputs and their dependence structure.

As can be seen in \Cref{eq:log_llk}, the log-likelihood function of an MCTM contains both quadratic and logarithmic parts in terms of the parameters to optimize. The quadratic part is numerically tractable and can be reformulated in a way that enables handling within the framework of $\ell_2$-subspace embeddings \citep{Woodruff14}, in particular via $\ell_2$ leverage score subsampling \citep{DrineasMM06,RudelsonV07}. The logarithmic part is known to be unstable and computationally burdensome when optimizing models to fit large-scale datasets. Considering sensitivity sampling for similarly structured loss functions has required special efforts and novel techniques in recent work on Poisson regression models \citep{LieM24}. For the MCTMs considered here, the situation is even more aggravated since the terms that show up in the log-likelihood do not directly depend on the plain data, but instead on the polynomial basis transformations in the quadratic part and on their derivatives in the logarithmic part.

\subsection{Our Goals and Contributions}
We develop the first coreset approximation framework for MCTMs, aiming to achieve the following goals.

1. \textbf{Significant computational reduction while guaranteeing an accurate log-likelihood approximation.} By combining $\ell_2$ leverage score sampling with a geometric convex-hull approximation, we provide a $(1\pm\varepsilon)$-factor log-likelihood approximation for MCTMs using only a small fraction of the data.

2. \textbf{Solving the problem of unstable values of logarithmic terms.} We construct coresets separately for $a(y)$ that represent polynomial basis transformations of plain data $y$, and their respective derivatives $a'(y)$. This allows to use standard methods for the former part. For the second part, extending recent prior work of \citet{LieM24}, our method includes points to avoid extreme directions where the logarithm tends to infinity or produces infeasible solutions.

3. \textbf{Compatibility with other popular probabilistic models:} MCTMs build a flexible semi-parametric framework that has connections to contemporary machine learning methods such as deep learning and normalizing flows \citep{NFsurvey2021,papamakarios2021normalizing}. By scaling up MCTMs, our coresets bring significant speedup and scaling possibilities to downstream learning tasks that build upon MCTMs.

\textbf{Theoretically}, we develop a leverage score and sensitivity sampling analysis along with a convex hull approximation scheme to provide a small subset of data. We prove that it approximates the log-likelihood within a multiplicative $(1\pm\eps)$ factor to the full dataset.

\textbf{Empirically}, we demonstrate through simulated and real-world data experiments that the method offers significant benefits for large-scale data: increased computational efficiency and scalability, preserving the original model fit, likelihood ratio, and estimation bias.

We summarize our main contributions as follows:
\begin{enumerate}
\item \textbf{The first coresets for the MCTM framework.} To the best of our knowledge, we are the first to develop a coreset construction method for MCTMs, filling the gap of data reduction techniques for highly flexible semi-parametric model fitting methods for multivariate distributions.
\item \textbf{Convex hull based stabilization of logarithmic normalizing terms.} Numerical issues of logarithmic terms in the log-likelihood are eliminated based on prior work \citep{LieM24} by a convex-hull approximation of the derivative $a'(y)$ of transformed data $y$ . 
The properties of this geometric approximation are investigated theoretically and empirically. 
\item \textbf{Illustrative large-scale data scenarios.} We illustrate through experiments with simulated and real-world data that our new method operates more efficiently than using the original large-scale data. At the same time it retains almost the same model fit as the original full-data estimation, illustrating our theoretical guarantees in practice.
\end{enumerate}

\subsection{Related Work}
\textbf{Probabilistic Transformation Models. } 
Originally restricted to univariate outputs, \citet{HothornKB2014} proposed a boosting approach for estimating a monotonically invertible function that transforms the output variable to a convenient, data independent distribution (e.g., standard normal). This transformation function is a semi-parametric, monotonic spline approximation that can depend on input variables in a flexible manner. This framework indirectly models the full conditional distribution of the output and was later extended towards an entirely likelihood-based approach, known as \emph{most likely transformations} \citep{hothornMB2018}. \citet{klein2022multivariate} built on this approach using likelihood inference. They developed MCTMs that can estimate  the joint conditional distribution of a multivariate response directly based on the features. The original work of \citet{klein2022multivariate} considers datasets of up to ten thousand observations and up to ten-dimensional outputs. In contrast to previous probabilistic methods, MCTMs directly estimate the joint distribution function of a multivariate output vector, avoiding the loss of information that would result from modeling only the mean or quantiles.

Transformation models are closely related to normalizing flows \citep[NFs;][]{NFsurvey2021,papamakarios2021normalizing}, which are popular in machine learning. {We elaborate on this connection in \Cref{app:NFs}.} The main idea of both approaches is to apply a data-driven transformation such that the transformed variable follows a standard normal or some other convenient distribution. 
However, NFs transform their input via some kind of deep neural network as a black box, whereas our model implements a semi-parametric transformation using a Bernstein polynomial basis, which remains analytically tractable. NFs are most commonly employed as flexible variational distributions in machine learning, whereas our focus is on modeling the distribution of multivariate data along with their dependence structure. MCTMs have also been extended towards regression, where density estimation is conditioned on feature variables. The linear underlying structure of transformation functions of MCTMs have the great advantage of enhancing interpretability compared to the neural network approach of NFs. 
For instance, in MCTMs we can directly interpret the dependencies between marginal components of the outcome and clearly separate these effects from those modeling the marginal distributions. Further, MCTMs are likelihood-based and therefore yield access to confidence intervals via bootstrapping.

\textbf{Coresets for Large-Scale Statistical Modeling. } The main idea of the coreset method is to select the \emph{most important} subset of data based on their contribution to the objective function (e.g., log-likelihood or loss) and reweight them, so that training the model on this subset closely approximates the results of the original dataset. Early works on coresets focus on parametric models, such as linear regression \citep{clarkson2005subgradient,DrineasMM06,dasgupta2009sampling} or generalized linear models \citep{munteanu2018coresets,MolinaMK18,MunteanuOP22,LieM24,FrickKM24}. These also make up the bulk of work on coresets for statistical models.

Inferring whole probability distributions received relatively little attention in the coreset literature. Among them are coresets for univariate kernel density estimation \citep{PhillipsT18,PhillipsT20,CharikarKNS20}. Bayesian coreset based models focus on multivariate parameter distributions rather than multivariate outcomes \cite{geppert2017random,huggins2016coresets,campbell2018bayesian,ding2024scalable}. To our knowledge, the only attempt to learn multivariate data distributions using coresets for scalability has been made in the context of dependency networks that infer inter-variable dependencies with graph structure \cite{MolinaMK18}. This approach is again composed of multiple parametric models.

In summary, the coreset method has shown great potential in improving the efficiency of regression models and Bayesian inference. However, we stress that these methods have to date been mainly developed for parametric models, e.g., (generalized) linear models. Very limited work has considered more complex and flexible non- or semi-parametric distribution models for multivariate outputs. Our attempt to combine coresets with MCTMs is, to the best of our knowledge, the first of its kind, leveraging the flexibility of multivariate distributional regression with efficient data reduction. Our work fills the gap in the methodology for inference of large-scale multivariate distributional models.

\section{MCTM CORESET CONSTRUCTION}

Considering the MCTM model whose negative log-likelihood function can be defined as in \Cref{eq:log_llk}, the goal of the coreset approach is to obtain a small subset $C\subset D$ of data, such that the approximation of the original likelihood function is bounded within a factor $(1\pm\epsilon)$. The full details and proofs of our construction are deferred to \Cref{app:theory}.

After applying the basis functions $a$ and their derivatives $a'$ to the raw data, we can assume that for $(i, j) \in [n] \times [J]$ we are given data points $a_{ij}\coloneqq a_j(y_{ij}) \in \mathbb{R}^d$ and similarly for the derivatives we have $a_{ij}'\coloneqq a_j'(y_{ij})\in \mathbb{R}^d$. We use $A, A'\in \mathbb{R}^{nJ\times d}$ to denote the corresponding data matrices. Additionally, we assume that there is an intercept, i.e., that for each $(i, j)$ the first coordinate of both $a_{ij}$ and $a_{ij}' $ is $1$ to make it consistent with the presence of intercepts in the transformation functions $h$ defined before, in \Cref{sec:MCTMintro}.

Now for $i \in [n], j \in [J]$ consider the function $f(a_{ij}, \vartheta, \lambda)=\frac{1}{2}(\sum_{k=1}^j \lambda_{j, k} \vartheta_k a_{ij} )^2 - \log(\vartheta_j a_{ij}') $ which is the negative logarithm of the corresponding likelihood component $g(i, j)={\vartheta_j a_{ij}'} \exp(- \frac{1}{2}(\sum_{k=1}^j \lambda_{j, k} \vartheta_k a_{ij} )^2) $.
We assume that for all $i \in [n], j \in [J]$ and all choices of parameters it holds that $g(i,j) \leq c$ for some constant $c \in \mathbb{R}_{\geq 1}$. This is a natural Lipschitz-type restriction ensuring that the distribution function has a smooth transition from $0$ to $1$ preventing sudden jumps. Note that this is equivalent to $-\ln(g(i, j))\geq -\ln(c)$.

\phantomsection\label{helper}
We further add to the $nJ$ loss contributions a total shift of $\log \mathcal N = nJ(\ln(c)+1)$. This corresponds to a normalization term $\mathcal N$ that ensures non-negativity and thus allows a relative approximation to be meaningful. Crucially, it does not affect the optimization because it is independent of the parameters that we optimize.

In the following, we sample with probabilities that are larger than the $\ell_2$ leverage scores plus a uniform term and add the convex hull of $\{a_{ij}' ~|~ i \in [n], j\in [J]\}$. This yields a coreset for the loss function $f(A,\vartheta,\lambda)$ to be minimized in \Cref{eq:log_llk}. Let $w\in \mathbb{R}^{n \times J}$ be a weight vector. We split the weighted version of $f$ into three parts (weights $w_{ij}$ are omitted in the unweighted case):
\begin{align*}
    &\hspace{-.2in}\text{1) squared part:}\quad f_1(A, \vartheta, \lambda, w) \\
    &= \frac{1}{2}\sum\nolimits_{(i, j) \in [n]\times [J]}w_{i j}\left( \sum_{k=1}^j \lambda_{j, k} \langle \vartheta_k, a_{ij} \rangle \right)^2 \\
    &\hspace{-.2in}\text{2) positive log part:}\quad f_2(A, \vartheta, \lambda, w) \\
    &= \sum\nolimits_{(i, j) \in [n]\times [J]} w_{i j}\max\{\log(\langle \vartheta_j, a'_{ij} \rangle), 0\} \\
    &\hspace{-.2in}\text{3) negative log part:}\quad f_3(A, \vartheta, \lambda, w) \\
    &= \sum\nolimits_{(i, j) \in [n]\times [J]}w_{i j}\max\{-\log(\langle \vartheta_j, a'_{ij} \rangle), 0\}
\end{align*}

\textbf{1) Squared Part.}\label{sec:squared}
We let $u_i = \sup_{\|x\|_2=1}\frac{|M_ix|^2}{\|Mx\|_2^2}$ for the $i$-th row of a matrix $M$ denote their $\ell_2$ leverage score. We show that sampling proportionally to the $\ell_2$ leverage scores preserves the squared part $f_1$. Given rows $a_{ij} \in \mathbb{R}^d$ where $i \in [n]$ and $j \in [J]$, we are looking for an $\varepsilon$-coreset, which is given by a matrix comprising a subset of rows indexed by $S \subseteq [n] \times [J]$ and corresponding weights $w_{i,j}$ for every $(i,j)\in S$, such that for all $\vartheta_1 , \ldots, \vartheta_J \in \mathbb{R}^d $ and $\lambda \in \mathbb{R}^{J \times J}$ it holds that
\begin{align*}
    &\hspace{-.05in}\left|\sum_{i=1}^n \sum_{j=1}^J \left(\sum_{k=1}^j\lambda_{j, k} \langle \vartheta_k, a_{ij} \rangle \right)^2 \right. \\
    &\left. \hspace{1.1in} - \sum_{(i, j) \in S} w_{i, j} \left(\sum_{k=1}^j \lambda_{j, k} \langle \vartheta_k, a_{ij} \rangle \right)^2 \right| \\
    &\hspace{.5in}\leq \varepsilon \left( \sum_{i=1}^n \sum_{j=1}^J \left(\sum_{k=1}^j\lambda_{j, k} \langle \vartheta_k, a_{ij} \rangle\right)^2 \right)\,.
\end{align*}
To this end, we arrange data points in a matrix such that sampling rows of this matrix yields a coreset for the function defined above.
We set $B \in \mathbb{R}^{nJ \times dJ^2}$ to be the matrix whose rows equal
$(b_{iJ+j})_{k}= a_{il}$ if $k=(j-1)J+l$ for some $l \in [J]$ and $(b_{iJ+j})_{k}= 0$ otherwise.

Then $B$ consists of $n$ vertically stacked blocks $B_i$, for $i\in [n]$. The $i$-th block is defined by
\[
    B_i =
  \left[ {\begin{array}{cccccc}
    b_i & 0 & 0 & 0 & \cdots & 0 \\
    0 & b_i & 0 & 0 &\cdots & 0\\
    0 & 0 & b_i & 0 &\cdots & 0 \\
    \vdots & \vdots & \vdots &  \ddots & \cdots &  0\\
    0& 0 & 0 & 0 & \cdots & b_i \\
  \end{array} } \right] \in \mathbb{R}^{J\times dJ^2}\,,
\]
where $b_i=(b_{i1}, b_{i2}, \ldots, b_{iJ})$.
The idea is that for any possible parametrization, the squared part can be represented by a product of the new matrix $B$ with the vector $\theta = (\vartheta^T , \lambda^T )^T $. An $\eps$-subspace embedding for $\ell_2$ is therefore sufficient to approximate the squared part, i.e., it yields that $\forall \theta\colon \|B'\theta\|_2^2 = (1\pm\eps) \|B\theta\|_2^2$, where $B'$ consists of few weighted rows subsampled from $B$ according to their $\ell_2$ leverage scores \cite{Woodruff14}.

\begin{restatable}{lemma}{lemsquarepart}
\label{lem:main:squarepart}
There exists a coreset $S$ for $f_1$ of size $O(J^2 d /\varepsilon^2) $ which can be computed in time $O(\nnz(B)\log(nJ) +\poly(dJ))$ and with high probability by sampling and reweighting according to the $\ell_2$ leverage scores of $B$, where $\nnz(B)$ denotes the number of non-zero entries of $B$. We then have $|f_1(A, \vartheta, \lambda)-f_1(A(S), \vartheta, \lambda, w)|\leq \varepsilon f_1(A,\vartheta, \lambda)$, where $A(S)$ denotes the restriction of $A$ to the indices in $S$.
\end{restatable}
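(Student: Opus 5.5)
The plan is to linearize $f_1$, so that it becomes a squared Euclidean norm $\|Bx\|_2^2$ of a product of $B$ with one vector $x=x(\vartheta,\lambda)\in\mathbb{R}^{dJ^2}$, and then to apply a standard $\ell_2$ leverage score subspace embedding to $B$.

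First, fix $(\vartheta,\lambda)$ and set $x$ to be the vector whose block indexed by $k=(j-1)J+l$, a block of $d$ coordinates, equals $\lambda_{j,l}\vartheta_l$ for $l\le j$ and $0$ for $l>j$. Since $\lambda$ is lower triangular, row $(i,j)$ of $B$ picks out exactly block $j$ of $x$. The inner product of that row with $x$ is then $\sum_{l\le j}\lambda_{j,l}\langle\vartheta_l,a_{il}\rangle$. Here I read the entries of row $(i,j)$ as the $a_{il}$ over $l\in[J]$, which is how the block-diagonal form $B_i$ places $b_i$. Hence
\[
f_1(A,\vartheta,\lambda,w)=\tfrac12\sum_{(i,j)}w_{ij}\bigl(B_{(i,j)}x\bigr)^2 ,
\]
and in particular $f_1(A,\vartheta,\lambda)=\tfrac12\|Bx\|_2^2$. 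The map $(\vartheta,\lambda)\mapsto x$ is not linear and not onto, but that does not matter: it is enough to have a guarantee for every $x\in\mathbb{R}^{dJ^2}$, that is, for the column span of $B$, whose rank is at most $dJ^2$.

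Second, I would invoke the standard result \cite{Woodruff14,DrineasMM06,RudelsonV07}. Sample rows i.i.d. with probabilities $p_{ij}\ge u_{ij}/\sum u$, where the $u_{ij}$ are the leverage scores of $B$, take $m=O(\mathrm{rank}(B)\log(\mathrm{rank}(B)/\delta)/\varepsilon^2)$ samples, and give each sampled row weight $1/(m p_{ij})$. The matrix Chernoff bound then shows that $\|WSBx\|_2^2=(1\pm\varepsilon)\|Bx\|_2^2$ holds for all $x$ with high probability. To get exactly $O(J^2d/\varepsilon^2)$ without the log factor, I would follow leverage sampling with a constant-factor reduction, or cite the sparsification bounds (Batson–Spielman–Srivastava type, or the refined sampling bounds collected in \cite{Woodruff14}). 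Alternatively, I would state the size up to a logarithmic factor, which is the form usually quoted for this kind of bound. Restricting to $x=x(\vartheta,\lambda)$ and multiplying by $\tfrac12$ gives $|f_1(A,\vartheta,\lambda)-f_1(A(S),\vartheta,\lambda,w)|\le\varepsilon f_1(A,\vartheta,\lambda)$.

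Third, the running time. Computing exact leverage scores costs too much, so I would use the approximate leverage score algorithm: apply a sparse subspace embedding (CountSketch) followed by a JL projection to get constant-factor approximations of all $u_{ij}$. This takes $O(\nnz(B)\log(nJ)+\poly(dJ))$ time, where the log factor comes from the JL dimension needed for a union bound over the $nJ$ rows. Constant-factor overestimates only change the sample size by a constant factor. This also explains the form $p_{ij}\gtrsim u_{ij}$ used in the paper, since adding a uniform term does no harm.

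The step I expect to be the main obstacle is the bookkeeping in the first step: checking that the indexing of $B$ exactly reproduces the lower-triangular mixing $\sum_{k\le j}\lambda_{j,k}\langle\vartheta_k,a_{ik}\rangle$. The rest is standard and can be quoted.
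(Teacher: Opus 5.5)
Your proposal is correct and matches the paper's proof. The paper likewise sets $x_{jk}=\lambda_{j,k}\vartheta_k$ so that $f_1(A,\vartheta,\lambda)=\tfrac12\|Bx\|_2^2$; your reading of row $(i,j)$ as carrying $a_{il}$ in block $(j-1)J+l$ is the one under which this identity holds. It then invokes an $\varepsilon$-subspace embedding for $B$ obtained by $\ell_2$ leverage score sampling and restricts it to those $x$. Your extra remarks go beyond what the paper writes without changing the argument: you justify the running time via approximate leverage scores, and you flag the $\log(dJ)$ factor that i.i.d.\ leverage sampling incurs, which the paper's stated size $O(J^2d/\varepsilon^2)$ silently drops.
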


\textbf{2) Positive Logarithmic Part.}
We now turn our attention to the positive logarithmic part $f_2$. This is going to be handled using the sensitivity framework, which generalizes the previous leverage score sampling for the $\ell_2$ norm to more general families of functions. We refer to \Cref{sec:sensitivityframework} for details.
First of all, we bound the VC dimension of the logarithmic function. This is done in a standard way. Using strict monotonicity, the logarithmic function of the inner product can be inverted (respecting their domain and range), relating it to linear classifiers in $d$ dimensions. The latter have a VC dimension of $d+1$, which is known from classic learning theory \cite{kearns94clt}.

The second part is bounding the total sensitivity, which is the sum of all sensitivity scores $s_i=\sup_{\vartheta,\lambda}\frac{f_2(a_{ij},\theta,\lambda)}{\sum f_2(a_{ij},\theta,\lambda)} $ of single data point contributions. To bound this value, we leverage that for all parameters $\lambda$ and $\vartheta$ it holds that
$\ln(\vartheta_j a_{ij}') -\frac{1}{2}( w_{i, j}\sum_{k=1}^j \lambda_{j, k} \vartheta_k a_{ij} )^2\leq \ln(c),$
which allows us to relate the contribution of the logarithmic part to the squared part up to a constant $\gamma>1$ such that $s_i\leq \gamma(u_i+1/n)$. This allows to reuse the $\ell_2$ leverage scores for this part as well, albeit with an additional uniform component, and with an increase of the sample size, which comes from incorporating the VC dimension and the Lipschitz bound $c$. We also note that the $\eps$-error is relative to $f_1$ instead of $f_2$ directly.

\begin{restatable}{lemma}{lemposlogpart}
\label{lem:main:poslogpart}
Assume that $S$ is a sample consisting of $O(J^2 d^2 \ln(c d J)  c^{6} /\varepsilon^2) $ points drawn with probability $p_i\geq \alpha(u_i+1/n)$, where $\alpha \in O(J^2 d \ln(c d J)  c^{6} /\varepsilon^2)$. Then with high probability it holds that $|f_2(A, \vartheta, \lambda)-f_2(A(S), \vartheta, \lambda, w)|\leq \varepsilon f_1(A,\vartheta, \lambda)$, where $A(S)$ denotes the restriction of $A$ to the indices in $S$.
\end{restatable}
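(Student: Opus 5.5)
We will use the sensitivity framework of \Cref{sec:sensitivityframework}, applied to the non-negative functions $g_{ij}(\vartheta,\lambda)=\max\{\log\langle\vartheta_j,a'_{ij}\rangle,0\}$. Since the error is measured against $f_1$ rather than $f_2$, we will not approximate $f_2$ in relative error. Instead we approximate the sum $f_1+f_2$ (or equivalently control $f_2$ with $f_1$ as the reference mass), and define sensitivities relative to $f_1$:
\[
\zeta_{ij}=\sup_{\vartheta,\lambda}\frac{g_{ij}(\vartheta,\lambda)}{f_1(A,\vartheta,\lambda)+ nJ},
\]
where the additive $nJ$ comes from the normalization shift $\log\mathcal N$. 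The standard sampling theorem then gives an additive error of $\eps$ times the reference mass. The remaining work is to show this reference mass is bounded by a constant multiple of $f_1$, or to absorb the $nJ$ term as explained below.

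\textbf{Step 1: VC dimension.} For thresholds $r\ge 0$ we have $g_{ij}\ge r$ iff $\langle\vartheta_j,a'_{ij}\rangle\ge e^r$. This is an affine halfspace in $\vartheta_j\in\mathbb{R}^d$. The range space induced by the $g_{ij}$ therefore has VC dimension $O(d)$, or $O(dJ)$ when all blocks $j$ are taken jointly. This step is routine.

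\textbf{Step 2: sensitivity bound.} We use the Lipschitz-type assumption $g(i,j)\le c$, i.e.
\[
\log\langle\vartheta_j,a'_{ij}\rangle\le \ln c+\tfrac12\Big(\sum_{k\le j}\lambda_{j,k}\langle\vartheta_k,a_{ij}\rangle\Big)^2 = \ln c + \tfrac12 (B_i\theta)_j^2 .
\]
This gives $g_{ij}\le \ln c + \tfrac12 (B_i\theta)_j^2$. By the definition of the leverage score $u_{ij}$ of row $(i,j)$ of $B$, we have $(B_i\theta)_j^2\le u_{ij}\|B\theta\|_2^2 = 2u_{ij} f_1$. Hence
\[
g_{ij}\le \ln c + u_{ij}f_1 \le \gamma\,(u_{ij}+1/n)\,(f_1+\text{shift}),
\]
with $\gamma=O(\ln c)$. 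Summing over all $(i,j)$ gives total sensitivity $\mathfrak S=O(\ln c\,(dJ^2+J))$, using $\sum u_{ij}\le \operatorname{rank}B\le dJ^2$. The sampling probabilities $p_i\ge\alpha(u_i+1/n)$ dominate these sensitivities. Plugging $\mathfrak S$ and the VC dimension into the sensitivity framework bound $O(\frac{\mathfrak S}{\eps^2}(\Delta\log\mathfrak S+\log(1/\delta)))$ yields the stated sample size. The extra powers of $c$ arise from rescaling $\eps$ in the next step.

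\textbf{Step 3 (main obstacle): removing the additive shift.} The sampling argument naturally gives $|f_2-\tilde f_2|\le\eps(f_1+nJ\ln c)$, not $\le\eps f_1$. I would first try a case split. On parameters where $f_1\gtrsim nJ$, the shift is absorbed into $f_1$ after rescaling $\eps$ by a $\mathrm{poly}(c)$ factor. On parameters where $f_1$ is small, both $f_2$ and its estimate are small for every point: each term is at most $\ln c+u_{ij}f_1$, and the uniform component of the sampling makes the weighted estimate concentrate. In that regime I would use the $\ell_2$ subspace embedding from \Cref{lem:main:squarepart}, which also holds for this sample since the probabilities dominate the leverage scores, to control the $u_{ij}f_1$ part uniformly.

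If this case split fails, the fallback is to accept the error relative to $f_1+\log\mathcal N$. This is exactly the normalized objective the paper approximates, and the total loss is at least $\log\mathcal N$-proportional, so the bound is still meaningful. I expect the bookkeeping of this step to produce the $c^6$ factor.
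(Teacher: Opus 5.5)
Your proposal has a genuine gap, and it starts in Step~2, not in the bookkeeping of Step~3. You use the assumption $g(i,j)\le c$ only at the given parameter. That yields the \emph{additive} bound $\max\{\ln\langle\vartheta_j,a'_{ij}\rangle,0\}\le \ln c+u_{ij}f_1$, and a bound of this form cannot give an error of $\eps f_1$ because it does not vanish as $f_1\to 0$. Take parameters with $f_1\ll nJ$, in particular $f_1=0$, where the lemma demands $f_2(A(S),\vartheta,\lambda,w)=f_2(A,\vartheta,\lambda)$ exactly. There, sampling against the reference mass $f_1+nJ\ln c$ only gives an absolute error of order $\eps nJ\ln c$. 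So your small-$f_1$ case does not close: a term bounded by $\ln c+u_{ij}f_1$ is not small relative to $f_1$. Your fallback, error relative to $f_1+\log\mathcal N$, is a strictly weaker statement than the lemma. It would probably suffice for the main theorem, where $f\ge nJ$ is used anyway, but it is not what is claimed.

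The missing idea, which is the core of the paper's proof, is that the assumption holds for \emph{all} parameters. You can therefore apply it to rescaled parameters, and this turns it into a \emph{multiplicative} bound.
\begin{itemize}
\item Write $x=\langle\vartheta_j,a'_{ij}\rangle$ and $y=\sum_{k\le j}\lambda_{j,k}\langle\vartheta_k,a_{ij}\rangle$.
\item Applying the assumption to $(s\vartheta,\lambda)$ gives $\ln s+\ln x\le\ln c+\tfrac12 s^2y^2$ for every $s>0$.
\item Taking $s=1/|y|$ gives $x\le\sqrt e\,c\,|y|$. If $y=0$, letting $s\to\infty$ rules out $x>0$.
\item Combining this with $\ln z\le z^2/(2e)$ for $z>0$ gives $\max\{\ln x,0\}\le\tfrac{c^2}{2}y^2\le c^2u_{ij}f_1$.
\end{itemize}
So the positive log part is dominated termwise by $c^2$ times the squared part, with no additive constant. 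The consequences are:
\begin{itemize}
\item the sensitivities relative to $f_1$ are $O(c^2u_{ij})$, and the total sensitivity is $O(c^2dJ^2)$;
\item $f_1+f_2\le(1+c^2)f_1$;
\item sensitivity sampling with error parameter $\eps/\Theta(c^2)$ gives $|f_2-\tilde f_2|\le\eps f_1$ directly, where $\tilde f_2=f_2(A(S),\vartheta,\lambda,w)$.
\end{itemize}
Neither the shift $\log\mathcal N$ nor the uniform term $1/n$ is needed for this part.

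The paper phrases the same step through a scalar $b$ with $\langle\vartheta_j,a'_{ij}\rangle=b\langle\vartheta_j,a_{ij}\rangle$. It derives $b\le ec$ by scaling, then $\ln(bt)\le (b^2/e)\tfrac12t^2$, which gives sensitivity $ec^2u_{ij}$ and $f_1+f_2\le(ec^2+1)f_1$. Scaling the full constraint, as above, is the cleaner way to tie the bound to the leverage scores of $B$.
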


\textbf{3) Negative Logarithmic Part.}
Next, we handle the remaining negative logarithmic part given by $f_3$. We note that it has an asymptote at $0,$ precluding finite bounds on the sensitivity. We handle this similarly to \cite{LieM24} by restricting the optimization space to $D(\eta)=\{ (\vartheta, \lambda) ~|~ \forall (i, j) \in [n] \times [J]: \langle \vartheta_j, a_{ij}'\rangle > \eta \}$ comprising only solutions for which the inner product is bounded away by $\eta\geq 0$ from zero. Setting $\eta=0$ corresponds to the original domain.\footnote{The final choice will be $\eta = \Theta(\eps)$ and negative value correction to the positive domain was common practice in \citep{klein2022multivariate} before our theoretical investigations.} By avoiding high sensitivity points in this way, $f_3$ can be bounded in terms of uniform sensitivities together with the VC dimension bound $d+1$ and approximated by invoking the sensitivity framework again.

\begin{restatable}{lemma}{lemneglogpart}
\label{lem:main:neglogpart}
    Let $ (\vartheta^*, \lambda^*)$ be an optimal solution. Then there exist $ (\vartheta, \lambda) \in D(\eta)$ with $|f(A, \vartheta, \lambda)-f(A, \vartheta^*, \lambda^*)|\leq 2 J \eta f_1(A, \vartheta^*, \lambda^*) + J \eta n + J^2 \eta ^2 n$.
    Further, if $S$ is a sample consisting of $\alpha$ points drawn with probability $p_i\geq \alpha/n$ where $\alpha \in O(d \ln(c d J)   /\eta^2)$ combined with all points that are on the convex hull of $\{a_{ij}' ~|~ i \in [n], j\in [J]\}$. Then with high probability, it holds for all $ (\vartheta, \lambda) \in D(\eta)$  that $|f_3(A, \vartheta, \lambda)-f_3(A(S), \vartheta, \lambda, w)|\leq \eta f_1(A, \vartheta, \lambda) + \eta n$, where $A(S)$ denotes the restriction of $A$ to the indices in $S$.
\end{restatable}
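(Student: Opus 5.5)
The lemma has two parts, and I would treat them separately.

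\textbf{Part one: moving an optimum into $D(\eta)$.} Let $(\vartheta^*,\lambda^*)$ be optimal. Since the first coordinate of every $a'_{ij}$ (and every $a_{ij}$) is the intercept $1$, I would shift the intercept coefficient of each $\vartheta^*_j$ by $\eta$. That is, set $\vartheta_j=\vartheta^*_j+\eta e_1$ and $\lambda=\lambda^*$. Then $\langle\vartheta_j,a'_{ij}\rangle=\langle\vartheta^*_j,a'_{ij}\rangle+\eta>\eta$, because the optimum has positive inner products (otherwise $f=\infty$). Hence $(\vartheta,\lambda)\in D(\eta)$.

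The log part can only decrease. For the squared part, each inner sum $r_{ij}=\sum_{k\le j}\lambda_{j,k}\langle\vartheta_k,a_{ij}\rangle$ changes by $\eta\sum_{k\le j}\lambda_{j,k}$. I would then expand
\[
(r+\delta)^2-r^2=2r\delta+\delta^2 .
\]
The cross term is bounded with $2|r\delta|\le |\delta|(r^2+1)$, and summing over $(i,j)$ should give the stated $2J\eta f_1+J\eta n+J^2\eta^2 n$. This step needs $|\sum_k\lambda_{j,k}|\le J$. If the $\lambda$ entries are not normalized, I would instead shift only $\vartheta_j$ in a direction that leaves the other rows unaffected, or rescale. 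This normalization detail is where I expect friction.

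\textbf{Part two: sampling bound for $f_3$ on $D(\eta)$.} I would use the sensitivity framework of \Cref{sec:sensitivityframework}.

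\emph{Convex hull points.} Let $H$ be the set of points on the convex hull of $\{a'_{ij}\}$. Because $\langle\vartheta_j,\cdot\rangle$ is linear, its minimum over all $a'_{ij}$ is attained on $H$. Including $H$ with weight $1$ guarantees that the coreset never admits solutions that are infeasible on the full data. It also fixes the domain $D(\eta)$ exactly, since $D(\eta)$ can be checked on $H$ alone.

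\emph{Uniform sensitivity bound.} On $D(\eta)$ every term satisfies $\max\{-\log\langle\vartheta_j,a'_{ij}\rangle,0\}\le \log(1/\eta)$. I would therefore aim for an additive rather than relative guarantee. I would add the shift $1$ per point, consistent with the normalization $nJ(\ln c+1)$, and use the constraint $g\le c$ to relate the total to $f_1+n$. The relative sensitivities of the shifted functions are then at most $O(\log(1/\eta))/n$, giving total sensitivity $O(\log(1/\eta))$.

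\emph{Applying the framework.} The VC dimension is $d+1$, by the same inversion argument as for $f_2$. An $\eta$-approximation of the shifted sum requires $O(d\ln(cdJ)/\eta^2)$ uniform samples, after absorbing the $\log$ factors, matching $\alpha$.

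\emph{Transferring the error.} The error is then $\eta\cdot(\text{total shifted cost})$. I would bound this by $\eta f_1+\eta n$ by comparing $f_3$ against $f_1+n$, using $\log x\le x$-type inequalities together with $g\le c$.

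\textbf{Main obstacle.} The hardest step is turning the uniform-sensitivity guarantee into the additive bound $\eta f_1+\eta n$ uniformly over $D(\eta)$. I would first try to bound each negative-log term directly by a constant on $D(\eta)$, which makes the uniform sampling error purely additive in $n$. I would then absorb the remaining factors into $\alpha$.
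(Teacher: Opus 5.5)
Your proposal follows the paper's proof in both parts. The first part uses the same intercept shift $\vartheta=\vartheta^*+\eta e_1$: the log part only decreases, and the squared part is controlled by expanding $(r+\delta)^2$. As you correctly note, $|\delta|\le J\eta$ tacitly needs bounded $\lambda$ entries, which the paper also assumes silently. The second part is the same sensitivity sampling with an $O(d)$ VC bound and the per-term bound $-\ln\langle\vartheta_j,a'_{ij}\rangle\le\ln(1/\eta)\le\eta^{-1}$ on $D(\eta)$, which gives a purely additive error.

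The one step to drop is the detour that compares $f_3$ with $f_1+n$ via $g\le c$. That constraint only upper-bounds $\log\langle\vartheta_j,a'_{ij}\rangle$, so it controls $f_2$, not $f_3$. Your fallback of bounding each term directly on $D(\eta)$ is exactly what the paper does, and it suffices since $\eta n\le\eta f_1+\eta n$.
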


\textbf{Main Result.} We get the following theorem by a union bound over \Cref{lem:main:squarepart,lem:main:poslogpart,lem:main:neglogpart} and adding up their error bounds using the triangle inequality. The additive errors of \Cref{lem:main:neglogpart} are further charged against the optimal cost using the normalization given by $\log\mathcal N$, see above at the beginning of \Cref{helper}.

\begin{restatable}{theorem}{thmmain}
   Assume that $S$ is a sample consisting of $O(J^2 d^2 \ln^3(c d J)  c^{6} /\varepsilon^2)$ points drawn with probability $p_i\geq\alpha(u_i+1/n)$, where $\alpha \in O(J^2 d \ln^3(c d J) c^{6}/\varepsilon^2)$ combined with all extreme points $(i, j) \in [n] \times [J]$ that are on the convex hull of $\{a_{ij}' ~|~ i \in [n], j\in [J]\}$. Then with high probability it holds for any $(\vartheta, \lambda) \in D(\eta) $ that $|f(A, \vartheta, \lambda)-f(A(S), \vartheta, \lambda, w)|\leq \varepsilon f(A, \vartheta, \lambda)$ and there exists $(\vartheta, \lambda)\in D(\eta)$ such that $f(A,\vartheta, \lambda)\leq (1+O(\eps)) f(A,\vartheta^*, \lambda^*),$ for an optimal solution $(\vartheta^*, \lambda^*).$
\end{restatable}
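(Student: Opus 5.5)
The plan is to combine the three lemmas for a single sample and then convert their mixed errors into a relative error using the normalization $\log\mathcal N = nJ(\ln(c)+1)$.

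\textbf{Step 1: one sample for all three lemmas.} Draw $S$ with probabilities $p_i \geq \alpha(u_i+1/n)$, with $\alpha$ the maximum of the thresholds required by \Cref{lem:main:squarepart,lem:main:poslogpart,lem:main:neglogpart}, and add all convex hull points of $\{a'_{ij}\}$. Since $p_i$ dominates both $\alpha u_i$ and $\alpha/n$, each lemma applies to the same sample. Their failure events are then combined by a union bound.

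\textbf{Step 2: choice of $\eta$ and the extra log factors.} In \Cref{lem:main:neglogpart} I take $\eta = \Theta(\eps/J)$, or $\eta=\Theta(\eps)$ with $J$ absorbed. Its uniform term $d\ln(cdJ)/\eta^2$ is then dominated by the leverage-score term. The additional $\ln^2$ factors come from boosting the success probability of each lemma to $1-1/\poly(cdJ)$ so that the union bound holds.

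\textbf{Step 3: approximation bound on $D(\eta)$.} Since $f = f_1+f_2-f_3$ (plus the constant shift, which cancels), the triangle inequality gives, for all $(\vartheta,\lambda)\in D(\eta)$,
\[|f(A,\cdot)-f(A(S),\cdot,w)| \leq \eps f_1 + \eps f_1 + \eta f_1 + \eta n = O(\eps) f_1(A,\vartheta,\lambda) + O(\eps) n .\]

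\textbf{Step 4: charging against the shifted objective (main obstacle).} I need $f_1 + n \lesssim f + \log\mathcal N$. The Lipschitz assumption gives $\log\langle\vartheta_j,a'_{ij}\rangle - \frac12(\cdot)^2 \leq \ln c$ for each term, so each term of $f$ is at least $-\ln c$. The shifted term $f_{ij}+\ln c+1$ is therefore at least $1$, which yields $nJ \leq f+\log\mathcal N$.

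For $f_1$ alone, I would split the square. Using $\log x \leq x-1$ and the bound $g\le c$, the log part is at most $\frac12(\cdot)^2$ plus a constant only when the square dominates. More simply, I write $\frac12 s^2 = \frac14 s^2 + \frac14 s^2$ and apply the Lipschitz bound to $\frac14 s^2$ with the rescaled constant. This should give $f_1 \leq 2(f+\log\mathcal N)$ up to $O(\ln c)$ per term, which the shift absorbs.

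I expect this step to require care, and possibly a modified constant in the normalization. Rescaling $\eps$ by a constant then gives the relative bound $\eps(f+\log\mathcal N)$, which is the statement's $f$ read with the shift included.

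\textbf{Step 5: existence of a near-optimal solution in $D(\eta)$.} The first part of \Cref{lem:main:neglogpart} gives $(\vartheta,\lambda)\in D(\eta)$ with additive error $2J\eta f_1^* + J\eta n + J^2\eta^2 n$. With $\eta=\Theta(\eps/J)$ this is $O(\eps)(f_1^*+n)$. By Step 4 this is $O(\eps)\, f(A,\vartheta^*,\lambda^*)$ in the normalized sense, so $f(A,\vartheta,\lambda) \leq (1+O(\eps)) f(A,\vartheta^*,\lambda^*)$.
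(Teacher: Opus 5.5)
Your proposal is correct and follows the paper's skeleton. One sample with $p_i\ge\alpha(u_i+1/n)$, plus the hull, serves all three lemmas via a union bound. The triangle inequality gives $2\varepsilon f_1+\eta f_1+\eta n$, this is charged against the shifted objective, and the first part of \Cref{lem:main:neglogpart} gives the existence claim.

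The genuine difference is your Step 4. The paper proves \Cref{lem:main:neglogparthelp}, $f\ge\max\{nJ,f_1/(2\ln c)\}$. It writes $\langle\vartheta_j,a'_{ij}\rangle=bt$, extracts $b\le ec$ from the Lipschitz bound by rescaling, and splits cases at $t=\sqrt{2\ln c}$ under a ``$c$ large enough'' proviso. Your splitting idea, once made explicit, is shorter and gives more. Applying $g(i,j)\le c$ at $(\vartheta/\sqrt2,\lambda)$ yields $\ln\langle\vartheta_j,a'_{ij}\rangle\le\frac14 s_{ij}^2+\ln c+\frac12\ln 2$. Hence $f_{ij}+\ln c+1\ge\frac14 s_{ij}^2+1-\frac12\ln 2$, and together with $f_{ij}+\ln c+1\ge 1$ you get $f+\log\mathcal N\ge\max\{nJ,f_1/2\}$. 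This bound has four advantages over the paper's:
\begin{itemize}
\item it needs no change to $\mathcal N$, since $\frac12\ln 2<1$, so your hedge about modifying the normalization is unnecessary;
\item it needs no assumption on the size of $c$;
\item its constant is $2$ instead of $2\ln c$;
\item it holds for every feasible parameter. That is what the existence step actually needs at $(\vartheta^*,\lambda^*)$, which need not lie in $D(\eta)$.
\end{itemize}

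The paper's weaker constant is what forces $\eta=\varepsilon/(2J\ln c)$. Substituted into $\alpha\in O(d\ln(cdJ)/\eta^2)$, this is what produces the $\ln^3(cdJ)$ in the statement. Strictly, the paper's $2\varepsilon f_1$ term would also need $\varepsilon$ rescaled by $\ln c$. Your attribution of the extra $\ln^2$ to boosting the success probability is therefore inaccurate: boosting only adds an additive $\log(1/\delta)$ inside the sensitivity bound. The misattribution is harmless, because with your constant $\eta=\Theta(\varepsilon/J)$ suffices and the stated $\ln^3$ is slack.

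Keep $\eta=\Theta(\varepsilon/J)$ rather than $\Theta(\varepsilon)$; otherwise the existence error $2J\eta f_1^*$ carries a factor $J$. There is also a harmless sign slip: $f=f_1-f_2+f_3$, not $f_1+f_2-f_3$, which the triangle inequality does not see.
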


The formal proof can be found in \Cref{app:theory}. We remark that the convex hull can comprise $\Omega(nJ)$ points, however, different $\eta$-kernel coresets of size $\Theta(1/\eta^{(d-1)/2})$ exist for the problem \citep{AgarwalHV04,Chan04}, surveyed in \citep{AgarwalHV05}, in the field of computational geometry. These $\eta$-kernels also match the requirements of the shifted domain $D(\eta)$. We discuss one particular choice \citep{blum2019sparse} in the experimental \cref{sec:experiments} below.

\textbf{Lower Bounds. }
We also give two lower bounds under different natural assumptions on the $\lambda$ coefficients that define the dependence structure of the Gaussian copula. These indicate the limitations of coresets for MCTMs and show that our upper bounds leave only small gaps. The details are again in \Cref{app:theory}.

\begin{restatable}{lemma}{lemlowerone}
There exists a dataset $\{ a_{ij} \}_{i \in [n] j \in [J]} $ such that any coreset for MCTMs with $\eps<1$ has size at least $\Omega(d J^2) $ even if $\lambda_{ij}=0$ for all $i,j  \in [n]$ with $j>i $ and $|\lambda_{ij}|\leq 1 $ for all $i, j \in [J]$.
\end{restatable}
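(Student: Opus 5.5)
The plan is a ``hard instance'' argument in the spirit of the standard $\Omega(\text{dimension})$ lower bound for $\ell_2$ subspace embeddings built from standard basis vectors. I will build a dataset with $\Omega(dJ^2)$ loss contributions $(i,j)$. For each of them there is a feasible parameter $(\vartheta,\lambda)$, with $\lambda$ lower triangular and $|\lambda_{jk}|\le 1$, such that this single contribution dominates $f$ by an arbitrarily large factor while every other contribution stays bounded. A coreset that omits that pair then assigns essentially the wrong cost, so it cannot be a $(1\pm\eps)$-approximation for any $\eps<1$. Since a coreset is a weighted subset $S\subseteq[n]\times[J]$, every such pair must lie in $S$, and the lower bound follows by counting.

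\textbf{Construction.} Because of the intercept, the first coordinate of every $a_{ij}$ and $a'_{ij}$ is $1$, so I work with coordinates $l\in\{2,\dots,d\}$. For every triple $(j,k,l)$ with $1\le k\le j\le J$ and $2\le l\le d$, I introduce one data point $i=i(j,k,l)$. For it I set $a_{ik}=e_1+e_l$, $a_{ik'}=e_1$ for $k'\neq k$, and $a'_{ik'}=e_1$ for all $k'$. Then $n=(d-1)J(J+1)/2=\Omega(dJ^2)$, and the derivative rows carry only the intercept. (Choosing one point per triple rather than per pair $(k,l)$ is harmless and makes the counting clean.)

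\textbf{Witness parameters.} Fix a triple $(j,k,l)$ and a scale $t>0$. I choose $\vartheta_{k}=e_1+t e_l$ and $\vartheta_{k'}=e_1$ for $k'\neq k$. For the copula I set $\lambda_{jk}=1$ and $\lambda_{j'k}=0$ for $j'\neq j$; all other entries lie in $\{0,1\}$ and are lower triangular, with diagonal entries equal to $1$ except $\lambda_{kk}=0$ when $j\neq k$. If a zero diagonal is disallowed, I instead take $\lambda_{kk}=1/t^2$ or similar, which only perturbs the argument below by $O(1)$. With these choices:
\begin{itemize}
\item Every log term is $-\log\langle\vartheta_{j'},a'_{ij'}\rangle=-\log 1=0$.
\item The Lipschitz-type assumption $g(i,j')\le c$ holds with $c=1$.
\item The squared term of pair $(i',j')$ equals $\bigl(\sum_{k'}\lambda_{j'k'}\langle\vartheta_{k'},a_{i'k'}\rangle\bigr)^2$. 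The only summand that can grow with $t$ is $\lambda_{j'k}\,t\,[a_{i'k}]_l$. This is nonzero only when $j'=j$ and $[a_{i'k}]_l=1$, i.e.\ exactly for the pair $(i(j,k,l),j)$.
\end{itemize}
So that pair contributes $\Theta(t^2)$, while each of the other $O(dJ^3)$ contributions is $O(J^2)$, independently of $t$. Adding the normalization shift $\log\mathcal N$, which is $O(nJ)$, the full cost is $f=\Theta(t^2)$. In contrast, the cost of any weighted subset omitting $(i(j,k,l),j)$ stays bounded as $t\to\infty$. Hence for large $t$ the relative error tends to $1$, which exceeds $\eps$.

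\textbf{Conclusion and main obstacle.} Every one of the $\Omega(dJ^2)$ pairs must be in $S$, which gives the claimed bound. The step needing the most care is the bookkeeping in the witness parameters. I must make sure that the intercept coordinates do not create other $t$-dependent terms, and that the chosen $\lambda$ (in particular its diagonal) respects whatever admissibility the model imposes. I will also check that the log part and the assumption $g\le c$ remain valid uniformly in $t$. Both are handled by routing all $t$-dependence through a single non-intercept coordinate of a single $\vartheta_k$, and by keeping $a'$ supported on the intercept.
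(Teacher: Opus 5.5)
\textbf{Gap in the isolation step.} The data you attach to $i(j,k,l)$ do not depend on $j$. Every point $i'=i(j'',k,l)$ with $k\le j''\le J$ has $a_{i'k}=e_1+e_l$ and $a_{i'k'}=e_1$ for $k'\neq k$, so these $J-k+1$ points are identical copies. Consequently $[a_{i'k}]_l=1$ for all of them, and your witness for $(j,k,l)$ inflates every pair $(i(j'',k,l),j)$ with $k\le j''\le J$, not just $(i(j,k,l),j)$. These pairs carry identical loss functions, so a coreset can keep one of them with the combined weight. Hence two of your claims are false as stated: that any weighted subset omitting $(i(j,k,l),j)$ has bounded cost, and that every one of your pairs must lie in $S$. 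Your remark that one point per triple is ``harmless and makes the counting clean'' is exactly where it breaks. The index $j$ is already supplied by the $J$ loss contributions of each data point, so indexing points by $j$ as well only creates interchangeable duplicates.

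\textbf{Repair.} Take one point per pair $(k,l)$, so $n=(d-1)J$. Then the witness for $(j,k,l)$ with $k\le j$ inflates exactly the pair $(i(k,l),j)$. This gives $(d-1)J(J+1)/2=\Omega(dJ^2)$ pairs that must all be in $S$. Equivalently, keep your dataset and argue that $S$ must meet each of the $(d-1)J(J+1)/2$ pairwise disjoint classes $\{(i(j'',k,l),j): k\le j''\le J\}$.

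\textbf{Comparison with the paper.} Once repaired, your argument is a valid alternative to the paper's. The paper uses a telescoping instance: $a_{tj}=e_k$ for $j\ge j_0$ and $0$ otherwise, with $\lambda_{j_2j_1}=1$ and $\lambda_{j_2(j_1-1)}=-1$, so that all but one contribution vanish exactly. It phrases this as a rank argument for the squared part only, without respecting the intercept or addressing the logarithmic terms. Your construction does more:
\begin{itemize}
\item it keeps the intercept;
\item it makes every log term vanish via $a'_{ij}=e_1$;
\item its witnesses stay inside $D(\eta)$ for every $\eta<1$;
\item it handles the full loss, including $\log\mathcal N$.
\end{itemize}
The price is a limit $t\to\infty$ instead of exact cancellation. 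Your concern about the diagonal is moot: the statement allows $\lambda_{kk}=0$, and the paper's witnesses also have zero diagonal entries.
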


\begin{restatable}{lemma}{lemlowertwo}
There exists a dataset $\{ a_{ij} \}_{i \in [n] j \in [J]} $ such that any coreset for MCTMs with $\eps<1$ has size $\Omega(d J) $ even if $\lambda_{ii}=1$ for $i \in [J]$ and $\lambda_{ij}=0$ for $i<j$.
\end{restatable}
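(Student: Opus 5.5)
We establish the lower bound $\Omega(dJ)$ with a dataset in which the MCTM loss decouples across the $J$ output components. Each component then behaves like an independent $d$-dimensional least-squares problem, and every coreset must contain $\Omega(d)$ rows from each component. With $\lambda_{jj}=1$ and $\lambda_{jk}$ free for $k<j$, we may set $\lambda_{jk}=0$ for $k<j$ in the adversarial parameter choices. The squared part then reduces to $\frac12\sum_{i,j}\langle\vartheta_j,a_{ij}\rangle^2$. Since a coreset must give a $(1\pm\eps)$ approximation for all parameters, it suffices to exhibit parameters in this restricted family that force the bound.

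\textbf{Construction.} Take $n\ge d$ and, for each $j\in[J]$, set $a_{ij}=a'_{ij}=e_i$ for $i\le d$, where the intercept issue is handled by a shift of coordinates. For $i>d$ we repeat points or let $a_{ij}$ be copies of $e_1$, so that only $d$ directions per component matter. Fix $j_0\in[J]$ and $i_0\le d$. Suppose a coreset $S$ contains no index $(i,j_0)$ with $a_{ij_0}=e_{i_0}$.

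\textbf{Parameter choice and contradiction.} Choose $\vartheta_{j}$ for $j\ne j_0$ as fixed feasible vectors, and let $\vartheta_{j_0}=\mu e_{i_0}+\delta\mathbf 1$. The small positive $\delta$ keeps all $\langle\vartheta_{j_0},a'_{ij_0}\rangle>0$, so the point lies in the domain. As $\mu\to\infty$, the full loss contains the term $\frac12\mu^2$ (up to lower order) together with a $-\log\mu$ term from the row $e_{i_0}$. On the coreset, the component-$j_0$ contribution depends on $\mu$ only through rows that are not $e_{i_0}$. Those rows see only $\delta$, so the coreset loss stays bounded while the full loss grows like $\mu^2$. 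For large $\mu$ this violates $|f-f_S|\le\eps f$ for any $\eps<1$. Hence $S$ must contain, for every $j_0\in[J]$ and every $i_0\in[d]$, at least one index with $a_{ij_0}=e_{i_0}$. These indices are pairwise distinct pairs $(i,j)$, so $|S|\ge dJ$.

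\textbf{Main obstacle.} The delicate step is keeping the chosen parameters feasible and the normalization consistent. If the required intercept coordinate makes the $a'$ rows share a common coordinate, one must ensure that $\langle\vartheta_j,a'_{ij}\rangle>0$ and that the log terms do not also blow up on the coreset side. I would choose $a'$ independent of $\mu$, for instance with $a'_{ij}$ equal to the intercept vector $e_1$ only, so that the log part stays constant in $\mu$. I would also place the basis directions $e_2,\dots,e_{d}$ (plus the intercept) in $a_{ij}$, which gives $d-1=\Omega(d)$ directions per component. The additive normalization $nJ(\ln c+1)$ is independent of $\mu$, so it does not affect the $\mu^2$ divergence argument.
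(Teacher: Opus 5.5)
Your proposal is correct and follows essentially the same route as the paper, which also takes $n=d$ with $a_{ij}=e_i$, sets the off-diagonal $\lambda_{jk}$ to zero to decouple the components, and observes that omitting a row $(i_0,j_0)$ leaves a nonzero kernel vector $\beta$ (here just $e_{i_0}$). On $\beta$ the coreset's squared part vanishes while the full squared part is positive, and your $\delta\mathbf 1$ shift with $\mu\to\infty$ scaling makes this rigorous for the full loss: the paper argues only about the squared part, and its choice $\vartheta_k=0$ for $k\neq j_0$ even leaves the logarithmic terms undefined (take $\delta>\eta$ if the guarantee is only required on $D(\eta)$).
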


\section{EMPIRICAL EVALUATION}
\label{sec:experiments}
In this section, we systematically investigate {coreset} data reduction techniques for MCTMs with large-scale data for a variety of 2-dimensional data generation processes as well as two multi-dimensional, large-scale, real-world applications. Our objectives are to

\emph{(i) validate the effectiveness of the proposed sampling and convex hull algorithm for MCTMs at fitting  different distributions, different correlation structures, and non-linear or heavy-tailed scenarios, and to} 

\emph{(ii) quantify and compare the performance of uniform sampling, pure $\ell_2$ leverage score sampling, and our sensitivity sampling combined with the convex hull approximation on various metrics.}

Our algorithms are stated in \Cref{app:algorithm}. In particular, our sensitivity sampling is detailed in \Cref{algorithm:mctm_coreset_sparse}. As convex hull approximation, we implemented \citep{blum2019sparse} given in \Cref{alg:epskernel}. For \emph{mild} data, i.e., data that admits an $\eta$-kernel below the $\Omega(1/\eta^{(d-1)/2})$ lower bound, it yields an $\eta$-kernel of size $O(k^*/\eta^2)$, where $k^*$ is the smallest possible size.\footnote{Our theoretical results show that $\eta = \Theta(\eps)$ suffices, and in the experiments we simply choose $\eta=2\varepsilon$.} All experiments were carried out on a 2021 MacBook Pro equipped with an Apple M1 Pro chip and 16 GB of RAM. The code to reproduce our experiments is available at \url{https://github.com/zeyudsai/mctmcoreset/}.

\subsection{Simulation Study on 2-Dimensional Data}

We conduct a set of $2$-dimensional data simulation experiments encompassing different dependence structures to validate the advantages of our proposed coreset approach ($\ell_2$-hull) over simple leverage scores ($\ell_2$-only) and uniform sampling as baselines.

All experimental results can be found in \Cref{app:experiments}. In particular, complete descriptions of 14 data generation processes (DGPs) are specified in \Cref{app:sim_DGP}. Additional density contour plots that visualize the DGPs are in \Cref{app:sim_visual}.

\Cref{tab:performance_comparison_30_partial} summarizes the simulation results for five representative scenarios. The experimental results clearly show that even in the extreme case of fitting the model using only a few dozen points, our proposed coreset method achieves a satisfactory approximation. To ensure the reliability of our experimental results, we performed $10$ independent repetitions of each simulation, to obtain the means and standard deviations of results.

\begin{table}[htbp]
\centering
\caption{Performance comparison of coreset methods (coreset size = 30). Results show mean $\pm$ std over 10 runs. Relative improvement: avg \% improvement across metrics. Best values highlighted in \textbf{bold}.}
\label{tab:performance_comparison_30_partial}
\footnotesize
\setlength{\tabcolsep}{0.1pt}
\begin{tabular}{@{}ll|c@{\hspace{1pt}}|c@{\hspace{1pt}}|c@{\hspace{1pt}}|c@{}}
\toprule
\textbf{\quad} & \textbf{Method } & \textbf{$\ell_2$ err.} & \textbf{$\lambda$ err.} & \textbf{LR} & \textbf{ Imp.(\%)} \\
\midrule
\multirow{3}{*}{\rotatebox{90}{\tiny DGP 1}} 
& $\ell_2$-hull & $2.56 \pm 0.7$ & $0.44 \pm 0.1$ & $\mathbf{\,1.54 \pm 0.2}$ & $\mathbf{12.8}$ \\
& $\ell_2$-only & $\mathbf{2.54 \pm 0.6}$ & $0.51 \pm 0.1$ & $1.65 \pm 0.3$ & $1.6$ \\
& uniform & $4.91 \pm 4.7$ & $\mathbf{\,0.29 \pm 0.2}$ & $1.94 \pm 1.1$ & baseline \\
\midrule
\multirow{3}{*}{\rotatebox{90}{\tiny DGP 2}} 
& $\ell_2$-hull & $\mathbf{1.76 \pm 0.8}$ & $\mathbf{0.09 \pm 0.1}$ & $\mathbf{\,1.03 \pm 0.0}$ & $\mathbf{49.8}$ \\
& $\ell_2$-only & $2.18 \pm 0.8$ & $0.10 \pm 0.1$ & $1.05 \pm 0.0$ & $38.8$ \\
& uniform & $3.08 \pm 0.9$ & $0.11 \pm 0.1$ & $1.21 \pm 0.4$ & baseline \\
\midrule
\multirow{3}{*}{\rotatebox{90}{\tiny DGP 3}} 
& $\ell_2$-hull & $\mathbf{\,2.60 \pm 1.0}$ & $\mathbf{0.14 \pm 0.1}$ & $\mathbf{1.07 \pm 0.0}$ & $\mathbf{49.6}$ \\
& $\ell_2$-only & $2.76 \pm 0.9$ & $0.16 \pm 0.1$ & $1.08 \pm 0.0$ & $43.7$ \\
& uniform & $4.14 \pm 1.8$ & $0.2 \pm 0.1$ & $1.42 \pm 0.3$ & baseline \\
\midrule
\multirow{3}{*}{\rotatebox{90}{\tiny DGP 4}} 
& $\ell_2$-hull & $\mathbf{2.51 \pm 2.6}$ & $\mathbf{0.06 \pm 0.0}$ & $\mathbf{1.33 \pm 0.5}$ & $\mathbf{41.4}$ \\
& $\ell_2$-only & $4.09 \pm 4.3$ & $0.07 \pm 0.0$ & $1.58 \pm 0.5$ & $9.0$ \\
& uniform & $4.11 \pm 2.4$ & $0.14 \pm 0.1$ & $1.47 \pm 0.5$ & baseline \\
\midrule
\multirow{3}{*}{\rotatebox{90}{\tiny DGP 5}} 
& $\ell_2$-hull & $\mathbf{2.07 \pm 0.6}$ & $\mathbf{0.08 \pm 0.0}$ & $\mathbf{1.07 \pm 0.1}$ & $\mathbf{64.8}$ \\
& $\ell_2$-only & $2.65 \pm 0.8$ & $\mathbf{0.08 \pm 0.0}$ & $1.09 \pm 0.1$ & $58.4$ \\
& uniform & $4.47 \pm 3.3$ & $0.16 \pm 0.1$ & $1.63 \pm 1.1$ & baseline \\
\bottomrule
\end{tabular}
\end{table}

Based on the full experimental results in \Cref{tab:performance_comparison_30,tab:performance_comparison_100} in \Cref{app:experiments}, we conclude that out of all $14$ DGPs, our proposed coreset method ($\ell_2$-hull) significantly outperforms uniform subsampling in $12$ scenarios, and fails to show an advantage only in two scenarios, but never falls behind. Similarly, it also outperforms the plain $\ell_2$ leverage scores ($\ell_2$-only).
This suggests that our coreset method has a robust performance across a wide range of data dependence structures, and is particularly suitable for dealing with non-linearities, heteroscedasticity, complex dependence structures, and multimodality. Our results thus highlight the stability and universality of our method.

Shortcomings in the two remaining scenarios, namely for \texttt{t-copula} and \texttt{skew-t}, show limitations for heavy-tailed distributions, when the coreset size is fixed. These have a dense convex hull and thus require the size of the convex hull approximation to be increased in order to compensate and reduce the error.

\subsection{Real-World Data Experiments}

We evaluate our method on two large-scale multivariate datasets from different real-world applications.

$\bullet$ \textbf{Forest Cover Data (Covertype).} The UCI Covertype dataset \cite{covertype_31} contains $n=581\,012$ samples and 54 variables describing terrain attributes. We focus on 10 continuous variables (e.g., elevation, slope, hillshade, distances) and a subsample of size $n=300\,000$. We report comparisons of $\ell_2$-hull versus $\ell_2$-only, uniform sampling on parameter $\vartheta$ error, $\lambda$ error, log-likelihood ratio in \Cref{tab:covertype_unconditional_performance_filled}. Additional baselines include ridge leverage scores (ridge-lss) and root leverage scores (root-l2). Further results and running times are presented in \Cref{fig:covertype_results} and in \Cref{app:experiments}.

$\bullet$ \textbf{Equity Returns.} We consider two stock‐return datasets: one comprising 10 major stocks’ forty-year daily returns ($n\approx10\,000$), and another with 20 stocks. Performances at varying coreset sizes $k\in\{50,100,200,300\}$ are summarized in \Cref{tab:stock_return_performance,tab:stock_return_performance_2} in \Cref{app:experiments}, and \Cref{fig:stock_comparison} visualizes the log-likelihood ratio, parameter $\ell_2$ distance and $\lambda$ distance.

We refer to \Cref{app:real_world} for further details on real-world data experiments. Across both real-world applications, our proposed $\ell_2$-hull strategy outperforms uniform subsampling, and other baselines, achieving tighter log-likelihood approximations and smaller parameter errors at comparable running times. These results confirm that coresets enable scalable, accurate MCTM fitting on large, multivariate datasets.
\begin{figure*}[htbp]
  \centering
  \begin{subfigure}[t]{0.33\textwidth}
    \centering
    \includegraphics[width=\textwidth]{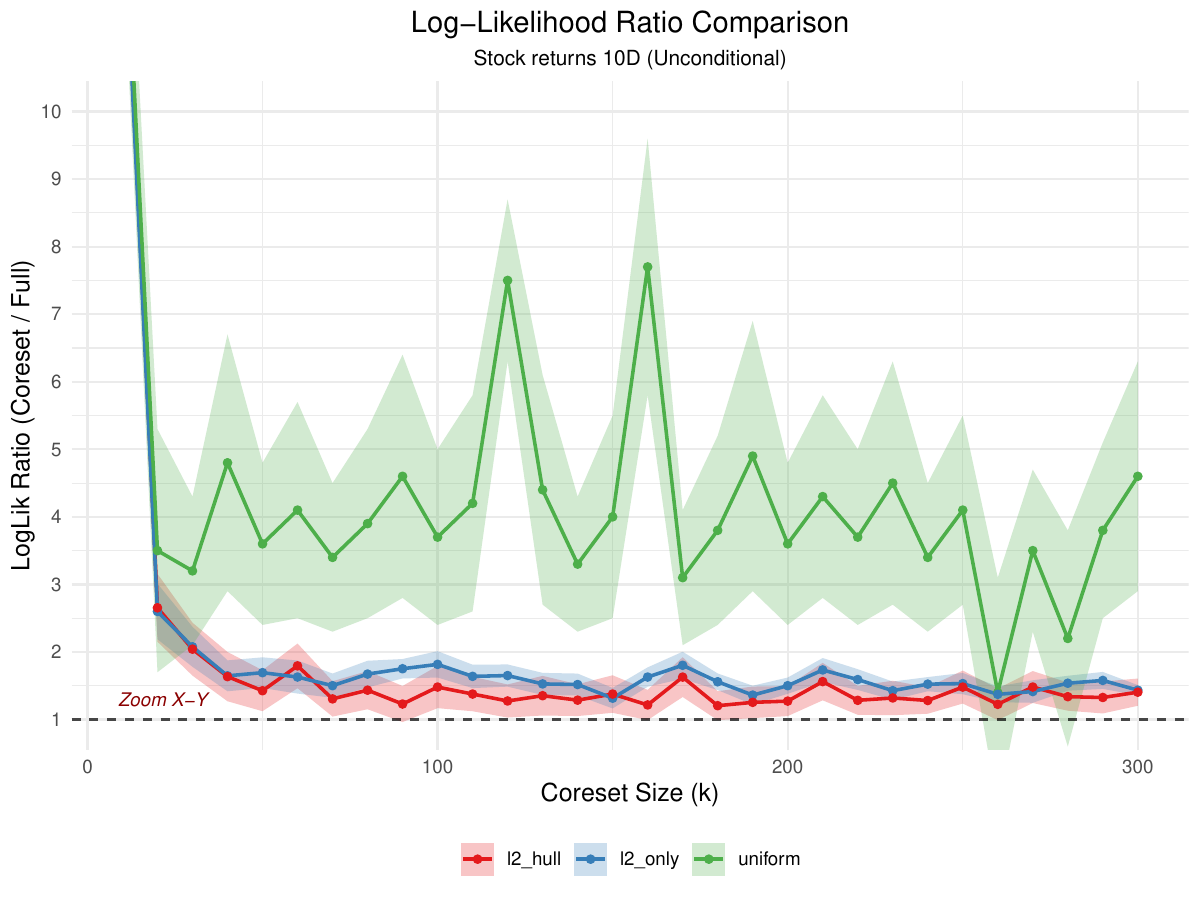}
    \caption{10-stock log-likelihood ratio}
    \label{fig:8_loglik_ratio}
  \end{subfigure}\hfill
  \begin{subfigure}[t]{0.33\textwidth}
    \centering
    \includegraphics[width=\textwidth]{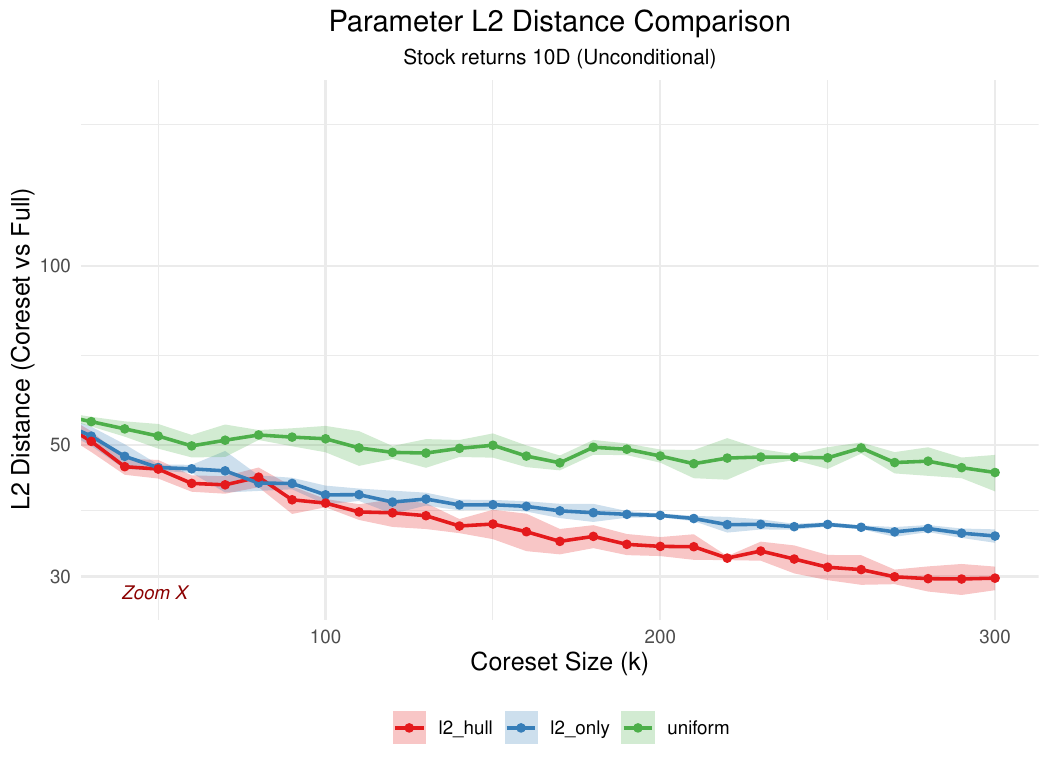}
    \caption{10-stock parameter $\vartheta$ distance}
    \label{fig:8_param_l2}
  \end{subfigure}\hfill
  \begin{subfigure}[t]{0.33\textwidth}
    \centering
    \includegraphics[width=\textwidth]{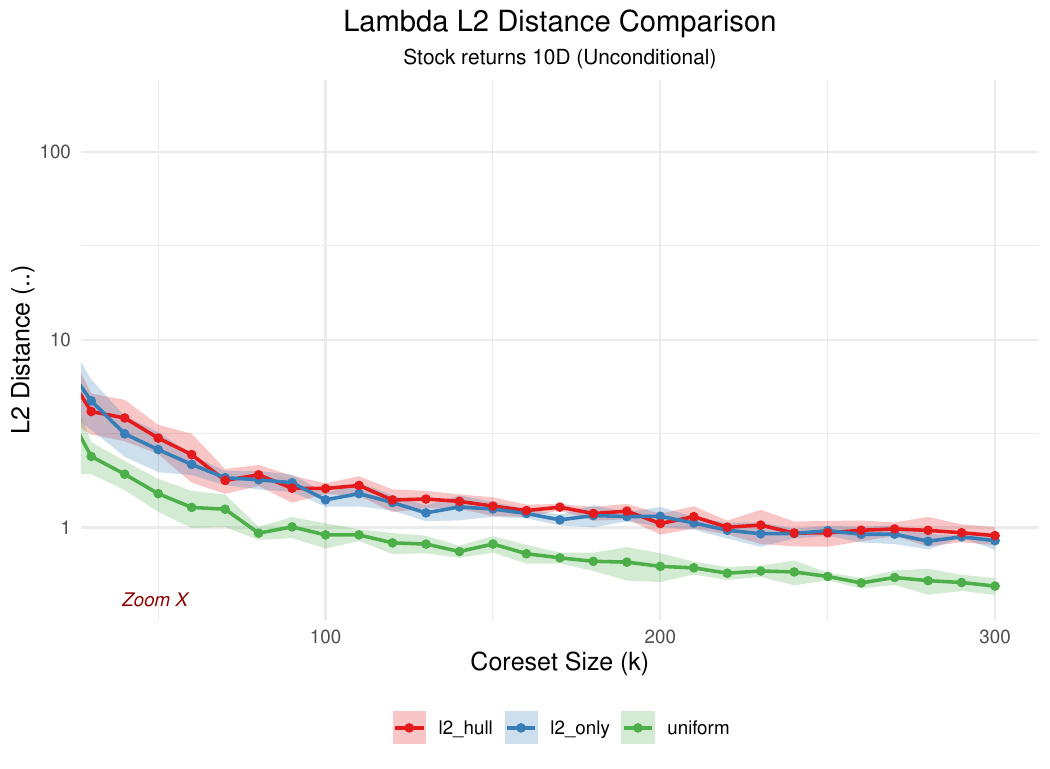}
    \caption{10-stock parameter $\lambda$ distance}
    \label{fig:8_lambda_l2}
  \end{subfigure}
  \begin{subfigure}[t]{0.33\textwidth}
    \centering
    \includegraphics[width=\textwidth]{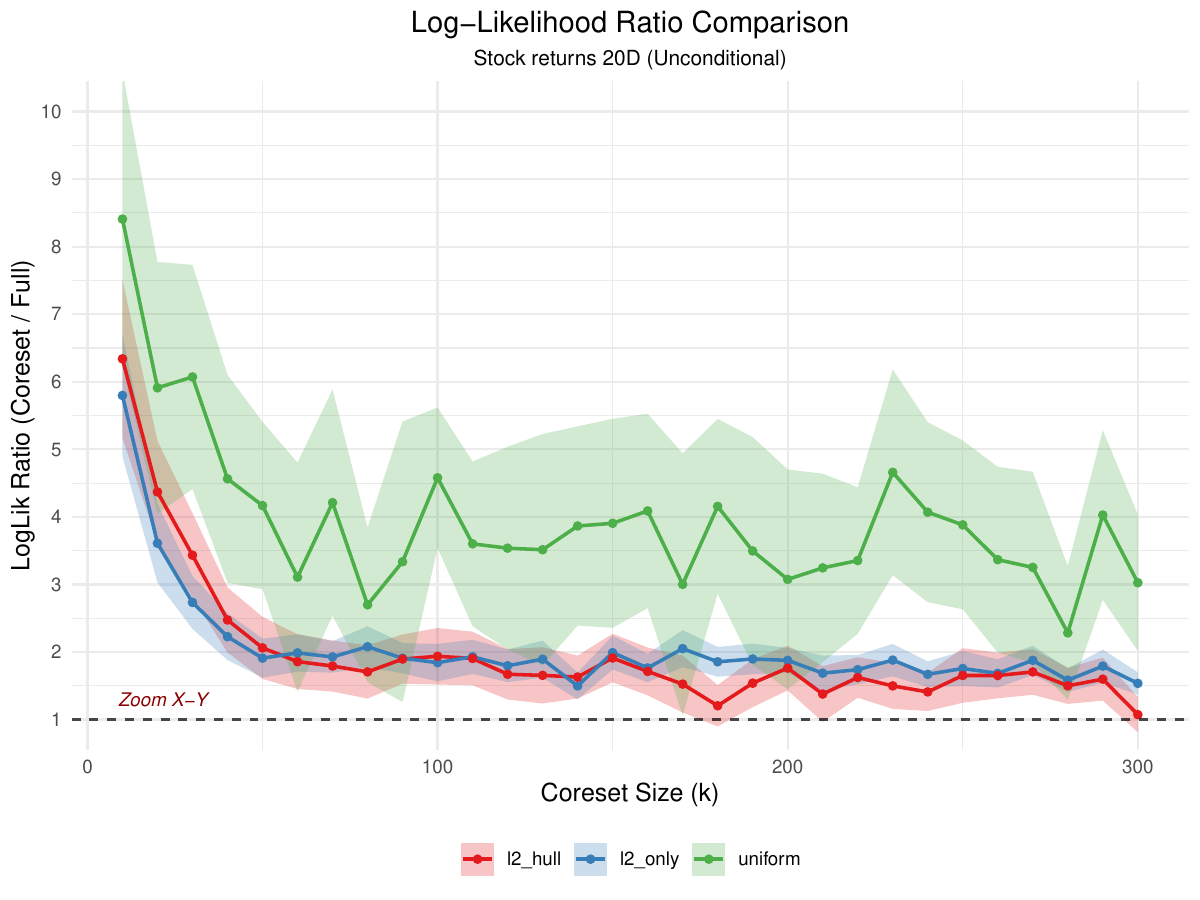}
    \caption{20-stock log-likelihood ratio }
    \label{fig:20_loglik_ratio}
  \end{subfigure}\hfill
  \begin{subfigure}[t]{0.33\textwidth}
    \centering
    \includegraphics[width=\textwidth]{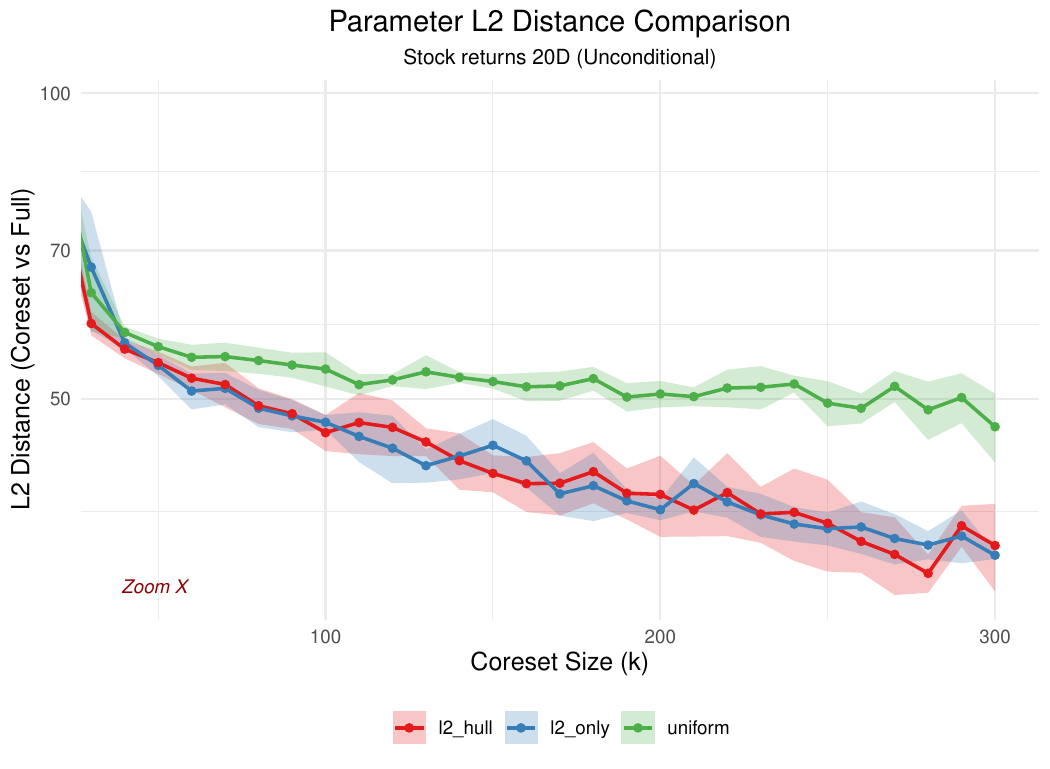}
    \caption{20-stock parameter $\vartheta$ distance }
    \label{fig:20_param_l2}
  \end{subfigure}\hfill
  \begin{subfigure}[t]{0.33\textwidth}
    \centering
    \includegraphics[width=\textwidth]{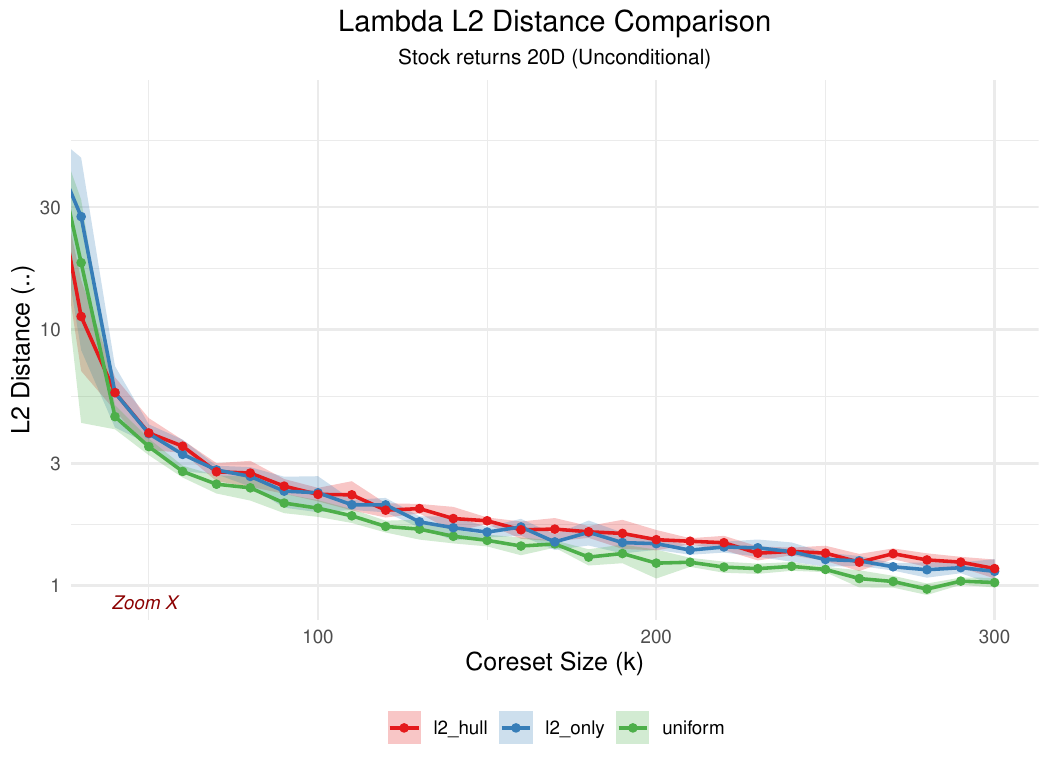}
    \caption{20-stock $\lambda$ distance }
    \label{fig:20_lambda_l2}
  \end{subfigure}
  \caption{Coreset performance comparison on stock-return data. Top row: 10 stocks; bottom row: 20 stocks. 
  Shaded bands indicate $\pm 1$ standard deviation, and solid lines show the averages over multiple repetitions.
  }
  \label{fig:stock_comparison}
\end{figure*}

Through experiments on the real-valued features of the Covertype dataset (as shown in \Cref{tab:covertype_unconditional_performance_filled}), we find that the proposed coreset method (especially the hybrid sampling strategy combining $\ell_2$ sensitivity with convex hull approximation) significantly outperforms uniform sampling and other baselines at different coreset sizes $k\in\{50, 100, 200,500\}$. In particular, when the coreset size is small (e.g., $k=50$), where the performance of uniform sampling deteriorates, our method can effectively preserve the approximation accuracy of the model, demonstrating the ability to capture important key data points. In addition, our coreset method remains stable and maintains its advantage over uniform sampling as the subset size increases. This indicates that our proposed method not only effectively reduces the computational cost on real-world datasets from several hours to a few seconds, but also accurately approximates the benchmark MCTM model fitted to the complete data, thus achieving efficient and accurate large-scale multivariate probabilistic modeling.

\begin{table}[htbp]
\centering
\caption{Covertype data performance for different coreset sizes. Results: mean $\pm$ std over 5 trials. For $\ell_2$ distances, lower is better. For log-likelihood Ratio, closer to 1 is better. Best values highlighted in \textbf{bold}.}
\label{tab:covertype_unconditional_performance_filled}
\footnotesize
\setlength{\tabcolsep}{1pt}
\begin{tabular}{@{}l|c@{\hspace{1pt}}|c@{\hspace{1pt}}|c@{\hspace{1pt}}|c@{\hspace{1pt}}}
\toprule
\textbf{Size} & \textbf{ Method } & \textbf{ Param L2 } & \textbf{ Lambda L2 } & \textbf{LR} \\
\midrule
\multirow{5}{*}{\hspace{.1in}\rotatebox{90}{\tiny $k=50$}} 
& l2-hull & $\mathbf{23.4 \pm 9.8}$ & $\mathbf{18.2 \pm 12.2}$ & $\mathbf{11.4 \pm 6.2}$  \\
& l2-only & $40.7 \pm 24.2$ & $37.0 \pm 26.6$ & $71.8 \pm 8.7$ \\
& ridge-lss & $30.8 \pm 21.5$ & $25.6 \pm 24.3$ & $51.7 \pm 45.7$ \\
& root-l2 & $64.6 \pm 10.4$ & $63.2 \pm 10.7$ & $122.0 \pm 96.1$  \\
& uniform & $52.6 \pm 10.6$ & $50.6 \pm 10.9$ & $84.8 \pm 90.5$ \\
\midrule
\multirow{5}{*}{\hspace{.1in}\rotatebox{90}{\tiny $k=200$}} 
& l2-hull & $\mathbf{14.0 \pm 3.1}$ & $\mathbf{7.4 \pm 5.4}$ & $\mathbf{1.42 \pm 0.13}$  \\
& l2-only & $15.7 \pm 2.3$ & $10.1 \pm 4.0$ & $1.55 \pm 0.33$ \\
& ridge-lss & $16.3 \pm 2.2$ & $11.5 \pm 3.5$ & $3.28 \pm 5.0$ \\
& root-l2 & $28.2 \pm 12.9$ & $24.2 \pm 15.7$ & $14.5 \pm 7.7$ \\
& uniform & $37.2 \pm 4.6$ & $35.2 \pm 4.8$ & $45.1 \pm 16.2$  \\
\midrule
\multirow{5}{*}{\hspace{.1in}\rotatebox{90}{\tiny $k=500$}} 
& l2-hull & $\mathbf{11.1 \pm 2.1}$ & $\mathbf{6.5 \pm 3.5}$ & $\mathbf{1.07 \pm 0.02}$  \\
& l2-only & $15.9 \pm 1.2$ & $11.7 \pm 1.5$ & $1.12 \pm 0.01$ \\
& ridge-lss & $16.3 \pm 1.2$ & $12.2 \pm 1.5$ & $1.27 \pm 0.19$  \\
& root-l2 & $17.4 \pm 9.0$ & $12.7 \pm 11.1$ & $1.27 \pm 0.11$  \\
& uniform & $25.3 \pm 10.5$ & $21.6 \pm 12.8$ & $20.0 \pm 14.0$ \\
\midrule
\multirow{2}{*}{\hspace{.1in}\rotatebox{90}{\tiny Bench.}} 
& $n=100k$ & 0 & 0 & 1  \\
& $n=300k$ & 0 & 0 & 1  \\
\bottomrule
\end{tabular}
\end{table}

Our comparisons of size vs. empirical relative errors consider different methods at fixed coreset sizes. Instead, one could evaluate the reduction in model size that our method achieves compared to its competitors at a fixed error level. However, it is impractical to fix the exact error in advance due to variability of random sampling. The reduction can be seen in \Cref{fig:stock_comparison} (and in similar figures in \Cref{app:experiments}) by drawing a horizontal line at any desired error level to compare the required sample sizes. The plots thus demonstrate that to achieve the same log-likelihood ratio or parameter estimate accuracy, uniform sampling typically requires a lot larger sample size than our proposed method.

\section{DISCUSSION AND EXTENSIONS}
We elaborate on limitations and possible extensions that go beyond the scope of our paper.

\textbf{Choice of copula and basis functions. }
The Gaussian copula has been chosen as an example for \emph{introducing} small coresets to copula models by using their close connection to $\ell_2$ leverage scores. We demonstrate in our $2$-dimensional examples that this already allows very flexible modeling despite a specific formulation. Our coreset method will work as well for other log-concave copulas (with convex level sets). The idea is that such distributions are \emph{similar} to a Gaussian because we can find a so-called John ellipsoid $E$ that is enclosed in a level set and its expansion $\sqrt{d} E$ encloses the same level set. Then, we can derive leverage scores from the quadratic form that describes the ellipsoid as in \citep{TukanMF20}, instead of the leverage scores obtained from the data.
Copulas with non-convex level sets could be handled similarly, depending on the level sets being in some sense \emph{close} to their convex hull.

Our coreset construction is agnostic to the choice of basis functions. The basis function family is thus simply exchangeable. 
We relied on Bernstein polynomials as in prior work by \citet{klein2022multivariate} for convenience, as they allow to easily account for the required monotonicity constraint while being flexible enough to model \emph{arbitrary} marginal distributions. We would thus not expect any significant difference or gains when modeling with alternative choices. However, there may be computational issues to obtain a valid CDF. For instance B-splines have been suggested by \citet{carlan2024bayesian}. Their approach is Bayesian allowing to incorporate monotonicity via appropriate prior choices, which is not directly applicable to our framework, but probably interesting towards a Bayesian extension.

Extending our methods to \emph{conditional} transformation models would be straightforward for a linear conditional structure; it only increases the dimension dependence by the number of features conditioned on. For non-linear structures, the situation is more complicated and one would have to consider different available coreset techniques, e.g., for a specific generalized linear model such as logit, probit, or Poisson.

\textbf{Dimension dependence. }
There are several important aspects regarding the dimension dependence.

First of all, we note that density estimation (with provable guarantees) has \emph{principled} limitations for high-dimensional data. For instance, it is known that the empirical density approaches the original density with an error of $O(1/n^{1/d})$ in terms of Wasserstein distance \citep{NguyenH22a}, thus requiring a sample size of $n=O(1/\eps^d)$ to bring the error term in the order of $\eps$. Existing work is thus limited to very few dimensions.

Our $2$-dimensional simulated examples are mainly meant to visually demonstrate the model's flexibility beyond Gaussian structures, despite the limitation to Gaussian copulas. Our real-world experiments are conducted on $10$ variables (Covertype) and $10$ resp. $20$ variables (Equity returns), exceeding the usual range. We also note that for $10$ dimensions and $7$ basis functions, the dimension of the resulting optimization problem is significantly more than one hundred. As described in \Cref{app:real_world}, for such dimensions and $n\approx 580\,000$ our standard laptop crashes when fitting the full data. With our coreset reduction, we did not experience any such scalability limitations.

The coreset size specified in our theorem is $O(J^2d^2 \polylog(Jd))$, and we give a close lower bound of $\Omega(J^2d)$ corroborating that our methods reducing the double-sum of squares to standard methods do \emph{not} lift to dimensions higher than necessary and do \emph{not} impose restrictions to the spectrum as in prior work \citep{HuangSV20} on comparable loss (only w.r.t. the squared part).
The additional factor $d$ in our upper bound is an artifact of the sensitivity framework used as a black-box in our analysis; recent optimal analyses \citep{MunteanuO24optimallpsampling} suggest that this factor can be eliminated via tighter chaining analyses.

The convex hull component, however, has exponential $\Theta(1/\eta^{(d-1)/2})$ worst-case complexity. Unfortunately, for high-dimensional data, one must rely on their 'mildness' and to heuristics, or preselect a subset of the dimensions using leverage scores or classic PCA \citep{teschke2024detecting,Pearson1901}. In our experiments, heavy-tailed data turned out to be 'hard', for which the approximation deteriorates at fixed coreset size, and it requires an exponential increase of the convex hull component to compensate for this effect.

\textbf{Data streams and distributed data. } 
These computational settings can be realized via black-box reductions using coresets. If data are merely inserted (at possibly different sites) they can be aggregated using Merge \& Reduce \citep[cf.][]{GeppertIMS20,Munteanu23}. To handle deletions, dynamic updates, and sparsity, one usually requires oblivious sketches \citep[e.g.,][]{MunteanuOW21,MunteanuOW23,MaiMM0SW23}. Leverage score sampling, convex hulls and John ellipsoids can also be approximated in data streams \citep{WoodruffY22geometricstreaming,MunteanuO24turnstile}.

\section{SUMMARY AND CONCLUSION}
Our paper focuses on the \emph{coreset} approach to enhance efficiency and scalability of fitting MCTM models to large datasets. Despite MCTM's advantages in flexibly modeling complex multivariate joint distributions, its computational burden increases significantly with increasing sample size, even with moderate output dimension. This hinders its direct application to real-world \emph{Big Data} scenarios. To this end, we developed a novel coreset construction algorithm, which integrates subsampling based on $\ell_2$ leverage scores with convex hull selection. On the one hand, the $\ell_2$ leverage scores ensure that the samples focus on particularly informative observations. On the other hand, the convex hull captures the extreme value patterns of the distribution. In experiments, it has been demonstrated that our \texttt{$\ell_2$-hull} algorithm reduces the data to a negligible proportion of their original size. Moreover, the fitting accuracy (log-likelihood ratio, parameter $\vartheta,\lambda$ distances) are virtually unchanged in contrast to pure $\ell_2$ leverage score sampling and the simplest uniform sampling. Our work comprehensively evaluates fourteen 2-dimensional simulation scenarios and two multivariate real-world datasets with large sample sizes. Our findings demonstrate that \texttt{$\ell_2$-hull} outperforms uniform sampling in $12$ out of $14$ simulated scenarios. Furthermore, it achieves significantly superior statistical performance in the context of multivariate real-world data. The running time of \texttt{$\ell_2$-hull} is comparable to that of uniform sampling, yet it is neither inferior nor merely comparable to pure $\ell_2$ leverage score sampling.

The main contribution of this paper is the first introduction of the coreset method to the complex semi-parametric MCTM framework. Unlike previous coresets, which were limited to parametric models such as (generalized) linear regression, we propose a new sampling scheme that naturally combines $\ell_2$ leverage score sampling with iterative convex hull selection. This allows us to give a rigorous theoretical proof of the error bounds, which ensure that downstream statistical performance measures are retained when applied to large-scale data. Furthermore, we discuss the intrinsic connection between MCTMs and normalizing flows (NFs), which have become popular in machine learning recently: both of them map complex distributions to simple reference distributions through invertible transformations, thus achieving highly flexible distribution modeling. This not only provides a new perspective on the connection of semi-parametric transformation models with deep generative models \citep{herp2025graphicaltransformationmodels}, but also opens new research directions for inserting coreset techniques into NF models or combining MCTMs with NFs more closely in the future. Future research directions of our coreset approach include extensions to semi-parametric additive or mixed models or to Bayesian settings. Online streaming updating, and distributed calculation of our coreset construction can be explored to adapt to time- and location-varying data scenarios. Finally, it is an intriguing question how far our methods can be generalized to build coresets for NFs directly.

\subsubsection*{Acknowledgements}
We thank the anonymous reviewers for their valuable comments. We thank Pia Schreiber for her preliminary work on convex hull approximations.
Alexander Munteanu and Simon Omlor were supported by the German Research Foundation (DFG) -- grant MU 4662/2-1 (535889065) and by the TU Dortmund -- Center for Data Science and Simulation (DoDaS). Zeyu Ding, Katja Ickstadt, and Simon Omlor acknowledge the support of BMBF and MKW.NRW within the Lamarr-Institute for Machine Learning and Artificial Intelligence. Nadja Klein acknowledges support by the German Research Foundation (DFG) through the Emmy Noether grant KL3037/1-1.

{\small

\bibliographystyle{apalike}
\bibliography{Reference}
}

\appendix
\thispagestyle{empty}

\onecolumn
\aistatstitle{Scalable Learning of Multivariate Distributions via Coresets\\Supplementary Materials}

\allowdisplaybreaks
\section{THEORETICAL RESULTS}
\label{app:theory}
\allowdisplaybreaks

\subsection{Preliminaries}
Considering the MCTM model whose negative log-likelihood function can be defined as in \Cref{eq:log_llk}, the goal of the coreset approach is to obtain a subset $C\subset D$ of data, such that the approximation of the original likelihood function is bounded within a factor $(1\pm\epsilon)$.

After applying the basis functions $a$ and their derivatives $a'$ to the raw data, we can assume that for $(i, j) \in [n] \times [J]$ we are given data points $a_{ij} \in \mathbb{R}^d$ and $a_{ij}'\in \mathbb{R}^d$. We use $A, A'\in \mathbb{R}^{nJ\times d}$ to denote the corresponding data matrices.
Now for $i \in [n], j \in [J]$ consider $f(a_{ij}, \vartheta, \lambda)=\frac{1}{2}(\sum_{k=1}^j \lambda_{j, k} \vartheta_k a_{ij} )^2 - \log(\vartheta_j a_{ij}') $ which is the negative logarithm of $g(i, j)={\vartheta_j a_{ij}'} \exp(- \frac{1}{2}(\sum_{k=1}^j \lambda_{j, k} \vartheta_k a_{ij} )^2) $.
We assume that for all $i \in [n], j \in [J]$ and all choices of parameters it holds that $g(i,j) \leq c$ for some constant $c \in \mathbb{R}_{\geq 1}$. This is a natural Lipschitz-type restriction ensuring that the distribution function has a smooth transition from $0$ to $1$ preventing sudden jumps. Note, that this is equivalent to $-\ln(g(i, j))\geq -\ln(c)$.
We further add to the $nJ$ loss contributions a total shift of $\log \mathcal N = nJ(\ln(c)+1)$. This corresponds to a normalization term $\mathcal N$ that ensures non-negativity and thus allows a relative approximation to be meaningful, but does not affect the optimization because it is independent of the parameters that we optimize. Additionally, we assume that there is an intercept, i.e., that for each $(i, j)$ the first coordinate of both $a_{ij}$ and $a_{ij}' $ is $1$ to make it consistent with the presence of intercepts in the transformation functions $h$ defined above.
In the following, we show that if we sample with probabilities that are larger than the $\ell_2$ leverage scores plus a uniform term and add the convex hull of $\{a_{ij}' ~|~ i \in [n], j\in [J]\}$, then we get a coreset for $f(A,\vartheta,\lambda)$ to be minimized as in \Cref{eq:log_llk}. Let $w\in \mathbb{R}^{n\times J}$ be a weight vector. We split the weighted version of $f$ into three parts ($w_{ij}$ are omitted in the unweighted case):
\begin{align*}
    \text{squared part:}\quad&f_1(A, \vartheta, \lambda, w)= \frac{1}{2}\sum\nolimits_{(i, j) \in [n]\times [J]}w_{i j}\left( \sum_{k=1}^j \lambda_{j, k} \langle \vartheta_k, a_{ij} \rangle \right)^2 \\
    \text{positive log part:}\quad&f_2(A, \vartheta, \lambda, w)=\sum\nolimits_{(i, j) \in [n]\times [J]} w_{i j}\max\{\log(\langle \vartheta_j, a'_{ij} \rangle), 0\} \\\quad
    \text{negative log part:}\quad&f_3(A, \vartheta, \lambda, w)=\sum\nolimits_{(i, j) \in [n]\times [J]}w_{i j}\max\{-\log(\langle \vartheta_j, a'_{ij} \rangle), 0\}.
\end{align*}

\subsection{Squared Part}
\label{app:squared}
We let $u_i = \sup_{\|x\|_2=1}\frac{|M_ix|^2}{\|Mx\|_2^2}$ for the $i$-th row of a matrix $M$ denote their $\ell_2$ leverage score. We show that sampling proportionally to the $\ell_2$ leverage scores preserves the squared part $f_1$: given rows $a_{ij} \in \mathbb{R}^d$ where $i \in [n]$ and $j \in [J]$, we are looking for an $\varepsilon$-coreset, which is given by a matrix comprising a subset of rows indexed by $S \subseteq [n] \times [J]$ and corresponding weights $w_{i,j}$ for every $(i,j)\in S$, such that for all $\vartheta_1 , \ldots, \vartheta_J \in \mathbb{R}^d $ and $\lambda \in \mathbb{R}^{J \times J}$ it holds that
\begin{align*}
    \left|\sum_{i=1}^n \sum_{j=1}^J \left(\sum_{k=1}^j\lambda_{j, k} \langle \vartheta_k, a_{ij} \rangle \right)^2  - 
    \sum_{(i, j) \in S} w_{i, j}\right.&\left. \left(\sum_{k=1}^j \lambda_{j, k} \langle \vartheta_k, a_{ij} \rangle \right)^2 \right| \nonumber\\
    &\leq \varepsilon \left| \sum_{i=1}^n \sum_{j=1}^J \left(\sum_{k=1}^j\lambda_{j, k} \langle \vartheta_k, a_{ij} \rangle\right)^2 \right|
\end{align*}

In the following, we arrange data points in a matrix such that sampling rows of this matrix yields a coreset for the function defined above.
We set $B \in \mathbb{R}^{nJ \times dJ^2}$ to be the matrix whose rows equal
$(b_{iJ+j})_{k}= a_{il}$ if $k=(j-1)J+l$ for some $l \in [J]$ and $(b_{iJ+j})_{k}= 0$ otherwise.
Then $B$ consists of $n$ vertically stacked blocks. The $i$-th block, for $i\in [n]$, is defined by
\[
    B_i =
  \left[ {\begin{array}{cccccc}
    b_i & 0 & 0 & 0 & \cdots & 0 \\
    0 & b_i & 0 & 0 &\cdots & 0\\
    0 & 0 & b_i & 0 &\cdots & 0 \\
    \vdots & \vdots & \vdots &  \ddots & \cdots &  0\\
    0& 0 & 0 & 0 & \cdots & b_i \\
  \end{array} } \right]  \in \mathbb{R}^{J\times dJ^2},
\]
where $b_i=(b_{i1}, b_{i2}, \ldots, b_{iJ})$.
The idea is that for any possible parametrization, the squared part can be represented by a product of the new matrix $B$ with the vector $\theta = (\vartheta^T , \lambda^T )^T $. An $\eps$-subspace embedding for $\ell_2$ is therefore sufficient to approximate the squared part, i.e., it yields that $\forall \theta\colon \|B'\theta\|_2^2 = (1\pm\eps) \|B\theta\|_2^2$, where $B'$ consists of few weighted rows subsampled from $B$ according to their $\ell_2$ leverage scores \cite{Woodruff14}.

\lemsquarepart*

\begin{proof}
    We apply $\ell_2$ leverage score sampling to $B$ which gives us a set $S \subseteq [n] \times [J]$ and weights $w_{ij} \in \mathbb{R}_{\geq 1}$ for all $(i,j)\in S$ such that $|S|=O(J^2 d /\varepsilon^2)$ and with high probability for all $x \in \mathbb{R}^{d J^2}$ it holds that
    \[ \left|\Vert Bx \Vert_2^2 - \sum_{(i, j) \in S} w_{i,j} \Vert b_{i, j}x \Vert_2^2  \right| \leq  \varepsilon \Vert Bx \Vert_2^2. \]
    Assume that the statement above holds and consider any parameterization $\vartheta_1 , \dots \vartheta_J \in \mathbb{R}^d $ and $\lambda \in \mathbb{R}^{J \times J}$.
    We define $x \in \mathbb{R}^{dJ^2}$ by $x_{jk}= \lambda_{j,k} \vartheta_k$.
    Then we have that
    \begin{align*}
        &\left|\sum_{i=1}^n \sum_{j=1}^J \left(\sum_{k=1}^j\lambda_{j, k} \langle \vartheta_k, a_{ij} \rangle \right)^2 - \sum_{(i, j) \in S} w_{i, j}\left(\sum_{k=1}^j \lambda_{j, k} \langle \vartheta_k, a_{ij} \rangle \right)^2 \right|\\
        &=  \left|\Vert Bx \Vert_2^2 - \sum_{(i, j) \in S} w_{i,j} \Vert b_{i, j}x \Vert_2^2  \right|
        \leq \varepsilon \Vert Bx \Vert_2^2        =\varepsilon \left| \sum_{i=1}^n \sum_{j=1}^J \left(\sum_{k=1}^j\lambda_{j, j} \langle \vartheta_k, a_{ij} \rangle\right)^2 \right|.
     \end{align*}
\end{proof}

\subsection{Logarithmic Parts}

We now turn our attention to the positive logarithmic part $f_2$. This is going to be handled using the sensitivity framework, which generalizes the previous leverage score sampling for the $\ell_2$ norm to more general families of functions, we refer to \Cref{sec:sensitivityframework} for details. First of all, we bound the VC dimension of the logarithmic function. This is done in a standard way. Using strict monotonicity, the logarithmic function of the inner product can be inverted (respecting their domain and range), relating it to linear classifiers in $d$ dimensions. The latter have a known VC dimension of $d+1$ using classic learning theory results \cite{kearns94clt}. The second part is bounding the total sensitivity, which is the sum of all sensitivity scores $s_i=\sup_{\vartheta,\lambda}\frac{f_2(a_{ij},\theta,\lambda)}{\sum f_2(a_{ij},\theta,\lambda)} $ of single data point contributions. To bound this value, we leverage that for all parameters $\lambda$ and $\vartheta$ it holds that
$\ln(\vartheta_j a_{ij}') -\frac{1}{2}( w_{i, j}\sum_{k=1}^j \lambda_{j, k} \vartheta_k a_{ij} )^2\leq \ln(c),$
which allows us to relate the contribution of the logarithmic part to the squared part up to a constant $\gamma>1$ such that $s_i\leq \gamma(u_i+1/n)$. This allows to reuse the $\ell_2$ leverage scores for this part as well, albeit with an additional uniform component, and with an increase of the sample size, which comes from incorporating the VC dimension and the Lipschitz bound $c$. We also note that the $\eps$-error is relative to $f_1$ instead of $f_2$ directly.

\lemposlogpart*
\begin{proof}
    Our goal is to apply the results of sensitivity sampling.
    For $(i, j)\in [n]\times [J]$ we define $h_{i, j}(\vartheta, \lambda)=\max\{ -\ln(\vartheta_j a_{ij}'), 0 \}$ and $h_0(\vartheta, \lambda)=f_1(A, \vartheta, \lambda)$.
    We set $h(A, \vartheta, \lambda, w)=h_0(\vartheta, \lambda)+\sum_{(i, j)\in [n]\times [J]}h_{i, j}(\vartheta, \lambda)$.
    It holds that the VC-dimension of $ \{ h_{i, j}~|~ (i, j)\in [n]\times [J] \} \cup \{h_0\}$ is bounded by $d+2$ since $ \log(\cdot)$ is a monotonic function and the VC dimension of $\{ h_x:\vartheta \mapsto x \vartheta ~|~ x \in \mathbb{R}^d \}$ is bounded by $d+1$ \cite{kearns94clt}
    and adding the function $h_0$ can only increase the VC-dimension by $1$.

    For the sensitivity we observe the following: since for all parameters $\lambda$ and $\vartheta$ it holds that
    \begin{align*}
        \ln(\vartheta_j a_{ij}') -\frac{1}{2}\left( \sum_{k=1}^j \lambda_{j, k} \vartheta_k a_{ij} \right)^2\leq \ln(c)
    \end{align*}
    it in particular holds that
    \begin{align*}
        \ln(\vartheta_j a_{ij}') -\frac{1}{2}\left( \vartheta_j a_{ij} \right)^2 \leq \ln(c)
    \end{align*}
    Note that there is some $b \in \mathbb{R} $ such that $\vartheta_j a_{ij}'= b\vartheta_j a_{ij}$.
    Since we can scale $\vartheta_j$ arbitrarily we get that
    \begin{align*}
        \ln(bt) -\frac{1}{2}  t^2 \leq \ln(c)
    \end{align*}
    holds for any $t \in \mathbb{R}$.
    Note, that the term on the left hand side is maximized if $t=1$ and thus
    \begin{align*}
        \ln(b) -\frac{1}{2} 1^2 \leq \ln(c)
    \end{align*}
    or equivalently $ b \leq e c$.
    We further have that the derivative of
    \[
        \frac{\ln(bt)}{ \frac{1}{2}t^2}
    \]
    is given by $(t/2 - \ln(bt)t)/(\frac{1}{2}t^2)^2$ which is $0$ if $t=0$ or $ \ln(bt)=1/2$ or equivalently $t=\exp(\frac{1}{2})/b$ which implies that 
    \[
        \frac{\ln(bt)}{ \frac{1}{2} t^2} \leq \frac{\ln(\exp(\frac{1}{2}))}{ (e/2)\left(  1/b \right)^2}= b^2 / e.
    \]
    
    We thus have that  $s_{(i, j)}=(e c^2)u_{i, j}$ is an upper bound for the sensitivity of $h_{i, j}$.
    Consequently the total sensitivity is bounded by $ (e c^2) d J^2$.

    Lastly we have that $h(A, \vartheta, \lambda, w) \leq (e c^2+1)f_1(A, \vartheta, \lambda, w) $. Thus applying sensitivity sampling with error parameter $\varepsilon/c^2$ gives us the desired result.
\end{proof}

Next, we handle the remaining negative logarithmic part given by $f_3$. We note that the asymptote at $0$ precludes finite bounds on the sensitivity. We handle this similarly to \cite{LieM24} by restricting the optimization space to $D(\eta)=\{ (\vartheta, \lambda) ~|~ \forall (i, j) \in [n] \times [J]: \langle \vartheta_j, a_{ij}'\rangle > \eta \}$ comprising only solutions for which the inner product is bounded away by $\eta\geq 0$ from zero. Setting $\eta=0$ corresponds to the original domain.\footnote{We also note that the final choice will be $\eta = \Theta(\eps)$ and negative value correction into the positive domain is common practice and was implemented in \citep{klein2022multivariate} before our theoretical investigations.} By avoiding high sensitivity points in this way, $f_3$ can be bounded in terms of uniform sensitivities together with the VC dimension bound $d+1$ and approximated by invoking the sensitivity framework again.

\lemneglogpart*
\begin{proof}
    Let $e_1 \in \mathbb{R}^d$ be the first unit vector and let $\vartheta', \lambda$ be a feasible solution, i.e. $\vartheta_j' a_{ij}'>0$.
    Then we claim that $(\vartheta, \lambda)$ with $\vartheta= \vartheta'+\eta e_1$ fulfills $f(A, \vartheta, \lambda)\leq f(A, \vartheta', \lambda)+ \eta f_1(A, \vartheta', \lambda) + \eta $.
    Applying this to $ (\vartheta^*, \lambda^*)$ proves the first part of the lemma.
    First note that as $\vartheta_j' a_{ij}'>0$ we have that $(\vartheta, \lambda) \in D(\eta)$.
    Further we have that $$f_3(A, \vartheta, \lambda)-f_2(A, \vartheta, \lambda) \leq f_3(A, \vartheta', \lambda)-f_2(A, \vartheta', \lambda)$$
    as $\vartheta_j a_{ij}' \geq \vartheta_j' a_{ij}'$.

    Lastly note that
    \begin{align*}
        2f_1(A, \vartheta, \lambda)&=\sum_{(i, j) \in [n]\times [J]}\left( \sum_{k=1}^j \lambda_{j, k} \vartheta_k a_{ij} \right)^2\\
        &= \sum_{(i, j) \in [n]\times [J]}\left( \sum_{k=1}^j \lambda_{j, k} (\vartheta_k' +e_1)a_{ij} \right)^2 \\
        &\leq \sum_{(i, j) \in [n]\times [J]}\left( \sum_{k=1}^j \lambda_{j, k} \vartheta_k' a_{ij} \right)^2+ 2J \eta \max\left\{\left(\sum_{k=1}^j \lambda_{j, k} \vartheta_k' a_{ij}\right), 1\right\}+J^2\eta^2 \\
        &\leq \sum_{(i, j) \in [n]\times [J]}\left( \sum_{k=1}^j \lambda_{j, k} \vartheta_k' a_{ij} \right)^2+ 2J \eta \max\left\{\left(\sum_{k=1}^j \lambda_{j, k} \vartheta_k' a_{ij}\right)^2, 1\right\}+J^2\eta^2 \\
        &=(2+4J \eta )f_1(A, \vartheta', \lambda)+2J \eta n + J^2\eta^2 n.
    \end{align*}
 
    Next we show by sampling techniques that with high probability it holds that $|f_3(A, \vartheta, \lambda)-f_3(A(S), \vartheta, \lambda, w)|\leq \eta f_1(A, \vartheta, \lambda) + \eta n$.
    Again we use sensitivity sampling.
    The bound of the VC dimension is again $d+2$.
    For the sensitivity we have that $-\ln( \vartheta_j a_{ij}') \leq \eta^{-1}$.
    Since we only need an absolute error of $\eta n$ we can apply the results of sensitivity sampling.
\end{proof}

\subsection{Main Result}
We start with a technical lemma. With a similar proof technique as in \Cref{lem:main:poslogpart}, we get the following lower bound on the loss function that allows us to charge the previous errors depending only on $f_1$ and additive terms against the original loss function:

\begin{lemma}
\label{lem:main:neglogparthelp}
    For any $ (\vartheta, \lambda) \in D(\eta)$ it holds that $f(A, \vartheta, \lambda) \geq \max \{ nJ, f_1(A, \vartheta, \lambda)/(2\ln(c)) \}$.
\end{lemma}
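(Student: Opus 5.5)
We prove the two lower bounds $f \ge nJ$ and $f \ge f_1/(2\ln(c))$ separately; here $f$ denotes the shifted loss, i.e.\ the $nJ$ contributions plus the normalization $\log\mathcal N = nJ(\ln(c)+1)$ from the beginning of \Cref{helper}. We work term by term, writing $q_{ij} = \frac12(\sum_{k\le j}\lambda_{j,k}\langle\vartheta_k,a_{ij}\rangle)^2$ and $\ell_{ij} = \log\langle\vartheta_j,a'_{ij}\rangle$. On $D(\eta)$ the inner product is positive, so $\ell_{ij}$ is finite. The shifted contribution of $(i,j)$ is then
\[ t_{ij} = q_{ij}-\ell_{ij}+\ln(c)+1 . \]

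\textbf{First bound.} The standing assumption $g(i,j)\le c$ means exactly $\ell_{ij}-q_{ij}\le \ln(c)$. Hence $t_{ij}\ge -\ln(c)+\ln(c)+1 = 1$ for every pair. Summing over the $nJ$ pairs gives $f\ge nJ$.

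\textbf{Second bound.} We want $t_{ij}\ge q_{ij}/(2\ln c)$ for each pair, and then sum; a per-term constant-factor bound is enough. Two facts are available.
\begin{itemize}
\item The same argument as in the proof of \Cref{lem:main:poslogpart} applies: the inequality $\ell_{ij}-q_{ij}\le\ln c$ must hold for all parameters, including rescalings.
\item That rescaling argument shows $\ell_{ij}\le \ln(ec)+\ln|t|-\dots$, i.e.\ that $\ell_{ij}$ grows at most logarithmically in the scale of the parameters, while $q_{ij}$ grows quadratically.
\end{itemize}

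The cleanest route is a case split on $q_{ij}$:
\begin{itemize}
\item If $q_{ij}\le 2\ln(c)$, then $q_{ij}/(2\ln c)\le 1\le t_{ij}$ by the first bound.
\item If $q_{ij}>2\ln c$, I would try to show $\ell_{ij}\le q_{ij}/2+\ln c$, which gives $t_{ij}\ge q_{ij}/2+1\ge q_{ij}/(2\ln c)$ (when $\ln c\ge1$; otherwise adjust the constant). To get this, apply the assumption to the rescaled parameter $\vartheta_j/\sqrt2$ with $\lambda$ fixed. This multiplies the diagonal part of the quadratic by $1/2$ and shifts $\ell_{ij}$ by $-\tfrac12\ln 2$.
\end{itemize}

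\textbf{Main obstacle.} The second case is where I expect trouble. The quadratic involves the off-diagonal $\vartheta_k$, $k<j$, so rescaling only $\vartheta_j$ does not scale $q_{ij}$ cleanly. I would first try rescaling all of $\vartheta$ (or $\lambda_{j,\cdot}$) jointly by $1/\sqrt2$. Since $\ell_{ij}$ depends only on $\vartheta_j$, this turns $\ell-q\le\ln c$ into $\ell_{ij}-\tfrac12\ln2-q_{ij}/2\le \ln c$. That is exactly $\ell_{ij}\le q_{ij}/2+\ln c+\tfrac12\ln 2$, which suffices after absorbing constants.

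Summing the per-term bound then gives $f\ge f_1/(2\ln c)$, and combining with the first bound gives the maximum.
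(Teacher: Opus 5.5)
Your argument is correct (for $\ln c\ge 1$, which plays the role of the paper's ``if $c$ is large enough''). Its skeleton matches the paper's: a per-term bound $t_{ij}\ge 1$ from $g\le c$, then a rescaling of the hypothesis. The key step, however, is different.

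\textbf{The paper's route.} It imports the ratio bound $b\le ec$ from the proof of \Cref{lem:main:poslogpart}. It then treats each shifted term as the one-variable function $\frac12 s^2-\ln(bs)+\ln c+1$ with $s=\langle\vartheta_j,a_{ij}\rangle$. Finally it splits at $s=\sqrt{2\ln c}$ and uses $\ln(ecs)\le s^2/3$ above that threshold. This has two weaknesses:
\begin{itemize}
\item The reduction silently replaces $q_{ij}$ by its diagonal part $\frac12\langle\vartheta_j,a_{ij}\rangle^2$, ignoring the $\lambda_{j,k}$ with $k<j$.
\item However large $c$ is, the growth estimate fails for $2\ln c\le s^2<3\ln c$, since there $\ln(ecs)>\ln c>s^2/3$.
\end{itemize}

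\textbf{Your route.} The joint rescaling $\vartheta\mapsto\vartheta/\sqrt2$ with $\lambda$ fixed avoids both problems. Since $q_{ij}$ is $2$-homogeneous in $\vartheta$ while $\ell_{ij}$ only shifts by $-\frac12\ln 2$, the hypothesis gives $\ell_{ij}\le q_{ij}/2+\ln c+\frac12\ln 2$ for the full quadratic, off-diagonal terms included. Rescaling $\vartheta$ rather than $\lambda_{j,\cdot}$ is the safer choice, because it does not require $\lambda_{jj}$ to be free; in \eqref{eq:log_llk} it is fixed to $1$. Like the paper's own scaling argument, this uses $g\le c$ at points possibly outside $D(\eta)$, which the global hypothesis allows.

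Your bound gives $t_{ij}\ge q_{ij}/2+1-\frac12\ln 2\ge q_{ij}/2$ for \emph{every} pair. So the case split is unnecessary: you get $f\ge f_1/2$ outright, and hence $f\ge f_1/(2\ln c)$ once $c\ge e$.

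Two small corrections:
\begin{itemize}
\item The second bullet (``$\ell_{ij}\le\ln(ec)+\ln|t|-\dots$'') is incomplete and unused; drop it.
\item ``Otherwise adjust the constant'' is not an option, because the statement fixes the factor $2\ln c$, and it can genuinely fail for $c$ near $1$. Take $J=1$, $a'_{i1}=a_{i1}$ and $\lambda_{11}=1$: the hypothesis holds for every $c\ge 1$, yet $t_{i1}/q_{i1}\to 1$ as $q_{i1}\to\infty$. The right fix is to observe that $g\le c$ only weakens as $c$ grows. You may therefore assume $c\ge e$ from the start, defining $\log\mathcal N$ with that $c$.
\end{itemize}
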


\begin{proof}
    In previous proof we showed that $\vartheta_j a_{ij}'= b\vartheta_j a_{ij}$. holds for all $(i, j \in [n] \times [J])$ for some $b \leq e c$.
    Consider $\frac{1}{2} t^2- \ln (bt)+\ln(c)+1$.
    For any $t\leq \mathbb{R}_{> 0}$ we have that
    \[
        \frac{1}{2}t^2- \ln (bt)+\ln(c)+1  \geq 1,
    \]
    for any $t\leq \sqrt{2\ln(c)}$ we have that
    \[
        \frac{1}{2}t^2 \leq \ln(c).  
    \]
    and thus
    \[
        \frac{1}{2}t^2- \ln (bt)+\ln(c)+1  \geq \max\left\{1, \frac{1}{2}  t^2\Big/(2\ln(c))\right\}
    \]
    For any $t\geq \sqrt{2\ln(c)}$ we have that
    \[
        \ln (bt) \leq \ln(e c t) \leq \frac{1}{3}t^2 \leq \frac{2}{3}\cdot \frac{1}{2}t^2
    \]
    if $c$ is large enough and thus
    \[
         \frac{1}{2}t^2- \ln (bt)+\ln(c)+1  \geq \frac{1}{2}t^2/(2\ln(c))
    \]
    Thus, we have component-wise that
    \begin{align*}
        \frac{1}{2}w_{i j}\left( \sum_{k=1}^j \lambda_{j, k} \langle \vartheta_k, a_{ij} \rangle \right)^2&-w_{i j}\max\{\log(\langle \vartheta_j, a'_{ij} \rangle), 0\} \\
        &\geq \max\left\{1, \frac{1}{2}w_{i j}\left( \sum_{k=1}^j \lambda_{j, k} \langle \vartheta_k, a_{ij} \rangle \right)^2\Big/(2\ln(c)) \right\}
    \end{align*}
    and thus the lemma follows.
\end{proof}

We get the following theorem by a union bound over \Cref{lem:main:squarepart,lem:main:poslogpart,lem:main:neglogpart} and putting their error bounds together using the triangle inequality. The additive errors of \Cref{lem:main:neglogpart} are further charged against the optimal cost using \Cref{lem:main:neglogparthelp}.

\thmmain*
\begin{proof}
     By Lemma \ref{lem:main:squarepart} we have that $|f_1(A, \vartheta, \lambda)-f_1(A(S), \vartheta, \lambda, w)|\leq \varepsilon f_1(A, \vartheta, \lambda)$ with high probability.
     
     By Lemma \ref{lem:main:poslogpart} we have that $|f_2(A, \vartheta, \lambda)-f_2(A(S), \vartheta, \lambda, w)|\leq \varepsilon f_1(A, \vartheta, \lambda)$ with high probability.

     By Lemma \ref{lem:main:neglogpart} we have that $|f_3(A, \vartheta, \lambda)-f_3(A(S), \vartheta, \lambda, w)|\leq \eta f_1(A, \vartheta, \lambda) + \eta n$ with high probability.

     Overall, by the triangle inequality, we have that $|f(A, \vartheta, \lambda)-f(A(S), \vartheta, \lambda, w)|\leq 2\varepsilon f_1(A,\vartheta, \lambda) + \eta f_1(A, \vartheta, \lambda) + \eta n$.

     By \Cref{lem:main:neglogparthelp} we have that $f(A, \vartheta, \lambda) \geq \max \{ nJ, f_1(A, \vartheta, \lambda)/(2\ln(c)) \}$.
     Substituting $\varepsilon/(2J\ln(c))$ for $\eta$ yields the desired bound  $\forall (\vartheta, \lambda) \in D(\eta): |f(A, \vartheta, \lambda)-f(A(S), \vartheta, \lambda, w)|\leq \varepsilon f(A, \vartheta, \lambda)$

     By \Cref{lem:main:neglogpart} and using \Cref{lem:main:neglogparthelp} again,  there exists $(\vartheta, \lambda)\in D(\eta)$ such that $f(A, \vartheta, \lambda) \leq f(A, \vartheta^*, \lambda^*) + 2 \eta J f_1(A, \vartheta^*, \lambda^*) + J \eta n + J^2 \eta ^2 n \leq (1+O(\eps)) f(A,\vartheta^*, \lambda^*),$ where $(\vartheta^*, \lambda^*)$ is an optimal solution.
\end{proof}

We remark that the convex hull can comprise $\Omega(nJ)$ points, however, different $\eta$-kernel coresets of size $\Theta(1/\eta^{(d-1)/2})$ exist for the problem \citep{AgarwalHV04,Chan04}, surveyed in \citep{AgarwalHV05}, in the field of computational geometry. These $\eta$-kernels also match the requirements of the shifted domain $D(\eta)$. We discuss our particular choice \citep{blum2019sparse} in the experimental section.

\subsection{Lower Bounds}

It suffices to prove a lower bound for the squared part. In particular the bound will hold against preserving the subspace (equivalently the rank) spanned by the data. 
In the following, we give two lower bounds under different natural assumptions on the $\lambda$ coefficients that define the dependence structure of the Gaussian copula. 
The first one is a weaker lower bound that holds without assumptions on $\lambda$ other than its lower triangular structure.
The second lower bound is stronger and holds under additional but common assumptions on $\lambda$.

\lemlowerone*
\begin{proof}
    Consider the instance with $n=(J-1)^2 d $ consisting of $J d$ blocks $\{ a_ij \}_{t j \in [J]} $ where $t \in J\times d$.
    More precisely for $t=(j_0, k) \in J \times J \times d$ we have that
    \begin{align*}
        a_{tj}=
        \begin{cases}
            e_k, & \text{if $j \geq j_0\}$}\\
            0 & \text{else}
        \end{cases}
    \end{align*}
    We set $A \in \mathbb{R}^{j^2d \times J^2d}$ to be the matrix with parameters that represents these rows.
    For $t= (3, k)$ the $t$-th block $A_t \in \mathbb{R}^{J \times J d}$ of the matrix $A$ looks as follows:
    \[
    A_t =
  \left[ {\begin{array}{ccccccc}
    0 & 0 & 0 & 0 & 0 &\cdots & 0 \\
    0 & 0 & 0 & 0 & 0 &\cdots & 0\\
    0 & 0 & e_k & 0 &0 &\cdots & 0 \\
    0 & 0 & e_k & e_k &0 &\cdots & 0 \\
    0 & 0 & e_k & e_k &e_k &\vdots & 0 \\
    \vdots & \vdots & \vdots & \vdots & \vdots & \ddots  &  0\\
    0 & 0 & e_k & e_k & e_k & \cdots & e_k \\
  \end{array} } \right]
    \]
 and the matrix itself has the form
    \[
    A =
  \left[ {\begin{array}{c}
    A_{1, 1} \\
    A_{1, 2} \\
    \vdots \\
    A_{1, J}\\
    A_{2, 1}\\
    \vdots \\
    A_{d, J}\\
  \end{array} } \right]
    \]
    Note that the matrix $A \in \mathbb{R}^{J^2\times d}$ with rows $(a_{ij})$ is of rank at least  $d J(J-1)$ even if we restrict the parameter space by requiring $\lambda_{ij}=0$ for all $i,j  \in [n]$ with $j>i $ and $|\lambda_{ij}|\leq 1 $ for all $i, j \in [J]$.. 
    To see this observe that for each $k \in [d]$ and $j_1$ and $j_2$ with $j_1 \leq j_2$ there is a parametrization such that only the contribution from row $j_2 $ from block $(k, j_1)$ is non zero:
    \begin{align*}
        &\lambda_{j_2j_1}=1 && \text{~}\\
        &\lambda_{j_2(j_1-1)}=-1 &&\text{if $j_1 < j_2$ and $j_1 \geq 1$}\\
        &\lambda_{jl}=0 &&\text{else}\\
        &\vartheta_k=e_k &&\text{for $k = j_0$}\\
        &\vartheta_k=0&& \text{else.}
    \end{align*}
    Thus since any coreset coreset preserves the rank of the matrix any coreset must also be of rank at least $d J(J-1)$ and thus of size at least $\Omega(d J^2) $.
\end{proof}

\lemlowertwo*
\begin{proof}
    Consider the instance with $n=d$ and $a_{ij}=e_i $ for all $i \in [n]$ and $j \in [J]$.
    Now consider any instance $\{ a'_{ij} \}_{i \in [n], j \in [J]} $ with at least one zero entry, i.e. $a'_{i_0j_0}=0$ for some $i_0 \in [n], j_0 \in [J]$.
    Then $A'$ cannot be a coreset of $A$ for the following reason:
    let $A' \in \mathbb{R}^{n \times d}$ be the matrix consisting of rows $a'_{1j_0}, a'_{2j_0}, \dots a'_{nj_0}$.
    Since $n=d$ and $a'_{i_0j_0}=0$ there exists $\beta \in ker(A')\setminus\{0\}$.
    Consider the following parameterizations:
    \begin{align*}
        &\lambda_{jl}=0 && \text{for $j > l$ and $l=j_0$}\\
        &\lambda_{jj}=1 &&\text{for all $j \in [J]$}\\
        &\lambda_{jl}=0 &&\text{else}\\
        &\vartheta_k=\beta &&\text{for $k = j_0$}\\
        &\vartheta_k=0&& \text{else}
    \end{align*}
    Now we have that
    \[
        \sum_{i=1}^n \sum_{j=1}^J \left(\sum_{k=1}^j\lambda_{j, j} \vartheta_k a_{ij}\right)^2=\Vert \beta \Vert_2^2 > 0
    \]
    and
    \[
        \sum_{i=1}^n \sum_{j=1}^J \left(\sum_{k=1}^j\lambda_{j, j} \vartheta_k a'_{ij}\right)^2=0
    \]
    thus $\{ a'_{ij} \}_{i \in [n], j \in [J]} $ cannot be a coreset.
\end{proof}

\section{SENSITIVITY SAMPLING FRAMEWORK}
\label{sec:sensitivityframework}

The sensitivity sampling framework, as outlined and most recently updated in \citep{feldman2020turning}, provides a methodology for generating coresets for optimization problems where the objective is to reduce the cost associated with and aggregate over the input data points. In this method, data points are selected at random, yet the selection probability for each point is proportional to its sensitivity with respect to the optimization problem, as an importance measure. This technique is designed to ensure the inclusion of pivotal points that might otherwise be missed under an unbiased uniform random selection due to the equally low probability of selection.
\begin{definition}[Sensitivity \citep{langberg2010universal}]
Let $ F = \{g_1, \ldots, g_n\} $ be a family of functions that map from $ \mathbb{R}^d $ to the non-negative real numbers, weighted with $ w \in \mathbb{R}^n_{>0} $. The sensitivity of $ g_i $ for $ f_w(\theta) = \sum_{j=1}^{n} w_j \cdot g(x_j \theta) $ is
\[ \zeta_i = \sup_{\theta \in \mathbb{R}^d, f_w(\theta)>0} \frac{w_i g_i(\theta)}{f_w(\theta)}. \]
The sum of the sensitivities $ Z = \sum_{i=1}^{n} \zeta_i $ is called the total sensitivity.
\end{definition}
The sensitivity sampling framework has demonstrated considerable advantages in the construction of coresets across various computational problems and statistical models, including linear regression \citep{clarkson2005subgradient,DrineasMM06,RudelsonV07,dasgupta2009sampling}, logistic regression \citep{munteanu2018coresets}, probit regression \citep{ding2024scalable}, and Poisson regression \citep{MolinaMK18,LieM24}.

Sampling with probabilities proportional to sensitivity scores provably leads to a good approximation, although it requires the determination of the exact sensitivities to solve the original problem. However, it has been shown that the sample can also be drawn proportionally to any upper bounds such that $ S=\sum_{i=1}^n s_i \geq \sum^n_{i=1}\zeta_i = Z$. Therefore, in order to be able to build a coreset using sensitivity sampling, it suffices to find the upper bounds to approximate a sensitivity. However, since the total sensitivity determines the sample size, this overestimation must be controlled carefully. 

The following theorem builds the core for sensitivity sampling based coreset construction and is presented in its most recently updated and optimized version due to \citep{feldman2020turning}.

\begin{theorem}[\citealp{langberg2010universal,feldman2011unified,feldman2020turning}]
Let $ F = \{g_1, \ldots, g_n\} $ be a finite set of functions that map from $ \mathbb{R}^d $ to $ \mathbb{R}_{\geq 0} $ and let $ w \in \mathbb{R}^n_{>0} $ be a vector of positive weights. Let $ \varepsilon $, $ \delta $ be in $ (0, 1/2) $. Moreover, let $ s_i \geq \zeta_i $ be upper bounds for the sensitivities and $ S = \sum_{i=1}^{n} s_i \geq Z $. Then for given $ s_i $, a set $ R \subseteq F $ can be found in time $ O(|F|) $ with
\[ |R| = O\left( \frac{S}{\varepsilon^2} \left( \Delta \log S + \log \left( \frac{1}{\delta} \right) \right) \right) \]
With the calculations of the weighted functions, such that with probability \(1 - \delta\) for all \(\theta \in \mathbb{R}^d\) it holds:
\[ (1 - \varepsilon) \sum_{g \in F} w_i g_i(\theta) \leq \sum_{g \in R} u_i g_i(\theta) \leq (1 + \varepsilon) \sum_{g \in F} w_i g_i(\theta) \]
Each element of \(R\) is drawn i.i.d. with probability \(p_j = \frac{s_j}{S}\) from \(F\), \(u_i = \frac{Sw_j}{s_j |R|}\) denotes the weight of a function \(g_i \in R\), which corresponds to \(g_i \in F\), and \(\Delta\) is an upper bound for the VC dimension of the Range Spaces \(\mathcal{R}_{F^*}\), induced by \(F^*\) that we obtain by scaling all functions \(g_i \in F\) with \(\frac{Sw_j}{s_j |R|}\). It is thus

\[ F^* = \left\{ \frac{Sw_j}{s_j |R|} g_j(\theta) ~\Big|~ j \in [n] \right\}.\]

\end{theorem}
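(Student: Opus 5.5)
The plan is to reduce the weighted, non-uniform sampling problem to a range-space approximation problem under the sampling distribution $p_j=s_j/S$, and then invoke a relative ($\varepsilon$-sensitive) approximation bound for range spaces of VC dimension $\Delta$.

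\textbf{Normalization.} Fix $\theta$ with $f_w(\theta)=\sum_j w_j g_j(\theta)>0$. Define
\[
h_j(\theta)=\frac{w_j g_j(\theta)}{s_j f_w(\theta)}.
\]
Since $w_j g_j(\theta)/f_w(\theta)\le \zeta_j\le s_j$, we have $h_j(\theta)\in[0,1]$. Moreover $\mathbb{E}_{j\sim p}[h_j(\theta)]=\sum_j \frac{s_j}{S}h_j(\theta)=1/S$. With $R$ consisting of $m=|R|$ i.i.d.\ draws and weights $u_j=Sw_j/(s_j m)$, the coreset estimate divided by $f_w(\theta)$ is exactly
\[
\frac{S}{m}\sum_{j\in R}h_j(\theta).
\]
Hence the claim is equivalent to the empirical mean of $h_j(\theta)$ over $R$ deviating from its true mean $1/S$ by at most $\varepsilon/S$, simultaneously for all $\theta$.

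\textbf{Layer-cake reduction to ranges.} Write $h_j(\theta)=\int_0^1 \mathbf{1}[h_j(\theta)>r]\,dr$. For each pair $(\theta,r)$ consider the set $\{j: h_j(\theta)>r\}$. This set equals $\{j : \tfrac{Sw_j}{s_j m}g_j(\theta) > r\,\tfrac{S}{m}f_w(\theta)\}$, which is a range of $\mathcal{R}_{F^*}$ with threshold $r\,S f_w(\theta)/m$. Consequently all these sets lie in a range space of VC dimension at most $\Delta$. It then suffices that, for every range $Q$ of this form, the empirical frequency $\hat p(Q)$ over $R$ and the true mass $p(Q)$ satisfy
\[
|\hat p(Q)-p(Q)|\le \varepsilon\,\max\{p(Q),\,1/S\}.
\]
Integrating over $r\in[0,1]$ then gives
\[
\Bigl|\tfrac1m\sum_{j\in R}h_j - \tfrac1S\Bigr|\le \varepsilon\int_0^1\bigl(p(\{h>r\})+1/S\bigr)\,dr\le \frac{2\varepsilon}{S}.
\]
Rescaling $\varepsilon$ by a constant yields the statement.

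\textbf{Relative approximation.} The required guarantee is a $(\nu,\alpha)$-relative approximation (Li--Long--Srinivasan; Har-Peled--Sharir) with $\nu=1/S$ and $\alpha=\varepsilon$. The known sample bound for this is $m=O\bigl(\tfrac{1}{\varepsilon^2\nu}(\Delta\log(1/\nu)+\log(1/\delta))\bigr)$, which with $\nu=1/S$ is exactly $O\bigl(\tfrac{S}{\varepsilon^2}(\Delta\log S+\log(1/\delta))\bigr)$. The running time $O(|F|)$ comes from computing the $p_j$ and drawing i.i.d.\ samples, for instance by alias sampling.

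\textbf{Main obstacle.} I expect the hard step to be getting the linear dependence on $S$. A plain additive $\varepsilon/S$-approximation would cost $S^2/\varepsilon^2$ samples, so the sensitive bound, whose error is proportional to $\max\{p(Q),\nu\}$, is essential. That bound comes from chaining or a symmetrization argument with a variance-sensitive Bernstein-type tail over the $O(m^\Delta)$ traces of the range space on the sample; I would take it as the black box. A secondary subtlety is checking that the ranges induced jointly by $(\theta,r)$ really are captured by $\mathcal{R}_{F^*}$, since the threshold $r\,S f_w(\theta)/m$ depends on $\theta$. This works because the definition of the range space already quantifies over all thresholds, so it is consistent with how $\Delta$ is defined.
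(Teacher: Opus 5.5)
The paper gives no proof of this theorem; it imports it as a black box from Langberg--Schulman, Feldman--Langberg and Feldman--Schmidt--Sohler. Your proposal is correct and is essentially the standard proof from that literature. You normalize to $h_j(\theta)=w_jg_j(\theta)/(s_jf_w(\theta))\in[0,1]$, which has $p$-mean $1/S$. You write $h_j$ as a layer-cake integral of indicators of common-threshold sets $\{j : h_j(\theta)\ge r\}$; after multiplying through by $S/m$ these are ranges of $\mathcal{R}_{F^*}$. Finally you apply the Li--Long--Srinivasan relative $(\nu,\varepsilon)$-approximation with $\nu=1/S$, and it is exactly this step that yields the linear rather than quadratic dependence on $S$.

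The loose ends are routine:
\begin{enumerate}
\item Parameters with $f_w(\theta)=0$ are trivial: then every $g_j(\theta)=0$ and both sides vanish.
\item Indices with $s_j=0$ have $g_j\equiv 0$ and can be discarded.
\item If $\mathcal{R}_{F^*}$ is defined with non-strict thresholds, integrate $\mathbf{1}[h_j(\theta)\ge r]$ instead of $\mathbf{1}[h_j(\theta)>r]$; nothing changes.
\item Li--Long--Srinivasan state their bound via $d_\nu(a,b)=|a-b|/(\nu+a+b)$. Converting this to $|\hat p(Q)-p(Q)|\le\varepsilon\max\{p(Q),\nu\}$ costs only constant factors.
\item Alias sampling takes $O(|F|+|R|)$ time. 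This is $O(|F|)$, because when $|R|\ge|F|$ you may simply return $F$ with its original weights.
\end{enumerate}
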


The set of functions  $R \subseteq F$ with new weights $u$ is a representation of our sought coreset. The size of $R$ depends on both, the estimate of the total sensitivity, as well as the VC-dimension of the range spaces, which is induced by the reweighted function set  $F^*$. Since the construction of the coreset is a probabilistic process, it cannot be ruled out that this may fail (unless we choose all data points for our coreset). The theorem thus introduces an controllable error probability $\delta$, which also influences the size of the coreset logarithmically.

\section{ALGORITHMS}
\label{app:algorithm}
\begin{algorithm}[!ht]
\caption{Hybrid coreset construction for MCTMs}
\label{algorithm:mctm_coreset_sparse}
\SetAlgoLined
\DontPrintSemicolon
\KwData{Full dataset $\mathcal{D} = \{y_i\}_{i=1}^n \subset \mathbb{R}^J$, target coreset size $k$}
\KwResult{Weighted coreset $\mathcal{C} = \{(y_j, w_j)\}_{j=1}^k$}
\BlankLine

\textbf{Compute transformed statistics}: Apply Bernstein polynomial transformation $a$ of degree $d$ to each $y_i$ to obtain $B \in \mathbb{R}^{nJ\times dJ^2}$ as described in \Cref{sec:squared} resp. \Cref{app:squared}\;

\textbf{Compute $\ell_2$ leverage scores of $B$}: Use fast leverage score computation as in Theorem 2.13 of \cite{Woodruff14} to get a constant factor approximation for $u_i=\sup_{x\neq 0} \frac{|B_ix|^2}{\|Bx\|_2^2}$ for each $i\in[nJ]$.

\textbf{Compute sensitivity proxy}: For each $i$, set $s_i \leftarrow u_i+1/n$ as a sensitivity score\;

\textbf{Normalize to probabilities}: $p_i \leftarrow \frac{s_i}{\sum_{j=1}^n s_j}$\;

\textbf{Sampling phase}:

  \Indp (a) Let $k_1 = \lfloor \alpha k \rfloor$ (e.g. $\alpha = 0.8$) be the size of the sensitivity sample\;
  
  (b) Sample $k_1$ points independently with probability $\{p_i\}$ \;
  
  (c) Assign weights $w_i \leftarrow \frac{1}{k_1 \cdot p_i}$ for each sampled point\;

\Indm

\textbf{Convex hull augmentation}:

  \Indp (a) Let $k_2 = k - k_1$\;
  
  (b) Compute $\varepsilon/J$-kernel convex hull approximation as in \cite{blum2019sparse} over the derivatives $\{a_{j}'(y_{ij})\}_{i\in[n], j\in[J]}$ and select $k_2$ extremal points\;
  
  (c) Assign weight $w_j \leftarrow 1$ to each convex hull point\;

\Indm

\textbf{Form final coreset}: Combine sampled points and hull points into $\mathcal{C}$ with associated weights $w$\;

\textbf{Coreset fitting}: Fit MCTM using weighted log-likelihood:
\[
\hat{\theta}_{\text{coreset}} = \operatorname{argmax}_{\theta} \sum_{(Y_j, w_j) \in \mathcal{C}} w_j \cdot \log f_\theta(Y_j)
\]
\end{algorithm}

\begin{algorithm}[!ht]
\caption{Sparse approximation of the convex hull \citep{blum2019sparse}}
\label{alg:epskernel}
\SetAlgoLined
\DontPrintSemicolon
\KwData{A set of points $P$, a query point $q$, tolerance $\epsilon$}
\KwResult{Approximated convex hull point $t_M$ closest to $q$}
\BlankLine

Initialize first two points: randomly select one point $a_0$ and obtain $a_1$ as the furthest point to $a_0$

Obtain the third point $a_2$, which is the furthest point to the line $\overline{a_0a_1}$. The set $\{a_0, a_1, a_2\}$ compose as the initial convex hull.

\For{$j \leftarrow 1$ \KwTo $n-3$}{

$t_0 \leftarrow$ closest point of $P$ to $q$\;
$M \leftarrow O(1/\epsilon^2)$\;

\For{$i \leftarrow 1$ \KwTo $M$}{
  $v_i \leftarrow q - t_{i-1}$\;
  
  $p_i \leftarrow$ point in $P$ that is extremal in the direction of $v_i$\;
  
  \eIf{$p_i$ exists}{
    Compute the projection of $q$ onto the line through $t_{i-1}$ and $p_i$ to find $t_i$\;
    
    $t_i \leftarrow$ the closest point to $q$ on the line segment $s_i = t_{i-1}p_i$\;
  }{
    $t_i \leftarrow t_{i-1}$\;
    \textbf{break}\;
  }
  \If{$\|q - t_i\| < \epsilon$ or $i = M$}{
    \textbf{return} $t_i$\;
  }
}
}
\end{algorithm}

\section{RELATIONSHIP BETWEEN MCTMS AND NORMALIZING FLOWS}
\label{app:NFs}
Multivariate Conditional Transformation Models (MCTMs) and Normalizing Flows (NFs) are both powerful frameworks for modeling complex conditional probability distributions \( p_Y(y \mid x) \). They are built upon the same mathematical foundation, namely the probability transformation formula, which enables density estimation by transforming a complex target distribution into a simpler base distribution through a bijective and differentiable function.

Let \( Y \in \mathbb{R}^J \) be the response variable whose conditional density \( p_Y(y \mid x) \) is of interest, and let \( Z \in \mathbb{R}^J \) be a latent variable with a known base distribution \( p_Z(z) \), typically a standard multivariate Gaussian. Both frameworks assume the existence of a transformation function \( {h}: \mathbb{R}^J \times \mathcal{X} \to \mathbb{R}^J \) that is bijective in \( y \) for each fixed \( x \), such that
\[
Z = {h}(Y \mid x).
\]
The inverse function \( {g} = {h}^{-1} \) satisfies
\[
Y = {g}(Z \mid x).
\]
By the probability transformation formula, the conditional density of \( Y \) given \( x \) can be expressed as 
\[
p_Y(y \mid x) = p_Z({h}(y \mid x)) \left| \det \left( \frac{\partial {h}(y \mid x)}{\partial y} \right) \right|.
\]
Taking the logarithm yields
\[
\log p_Y(y \mid x) = \log p_Z({h}(y \mid x)) + \log \left| \det \left( \frac{\partial {h}(y \mid x)}{\partial y} \right) \right|,
\]
which is the objective typically maximized in both MCTM and NF during maximum likelihood estimation respectively training.

In MCTM, the transformation \( {h}(y \mid x) \) is defined in a structured, semi-parametric way. It follows a lower triangular structure such that the \( j \)-th component \( h_j(y \mid x) \) depends only on \( y_1, \ldots, y_j \) and \( x \). This triangular form ensures that the Jacobian matrix \( \frac{\partial {h}(y \mid x)}{\partial y} \) is lower triangular. Each \( h_j \) is further decomposed as a linear combination of marginal transformations \( \tilde{h}_\ell(y_\ell \mid x) \) for \( \ell < j \), and its own marginal \( \tilde{h}_j(y_j \mid x) \). Formally,
\[
h_j(y_1, \ldots, y_j \mid x) = \sum_{\ell=1}^{j-1} \lambda_{j\ell}(x) \tilde{h}_\ell(y_\ell \mid x) + \tilde{h}_j(y_j \mid x).
\]
In matrix-vector notation, the full transformation is written as \( {h}(y \mid x) = \Lambda(x) \tilde{{h}}(y \mid x) \), where \( \tilde{{h}}(y \mid x) = (\tilde{h}_1(y_1 \mid x), \ldots, \tilde{h}_J(y_J \mid x))^T  \), and \( \Lambda(x) \) is a lower triangular matrix with ones on the diagonal. Each marginal transformation \( \tilde{h}_j \) is modeled using a basis function expansion of the form \( \tilde{h}_j(y_j \mid x) = {a}_j(y_j)^T  {\vartheta}_j(x) \), where \( {a}_j(y_j) \) is a fixed basis vector and \( {\vartheta}_j(x) \) is a parameter vector that may depend on \( x \). Monotonicity is enforced by constraining the derivative \( \frac{\partial \tilde{h}_j}{\partial y_j} > 0 \). The Jacobian determinant simplifies significantly due to the triangular structure: it is the product of the marginal derivatives,
\[
\det \left( \frac{\partial {h}(y \mid x)}{\partial y} \right) = \prod_{j=1}^J \frac{\partial \tilde{h}_j(y_j \mid x)}{\partial y_j} = \prod_{j=1}^J ({a}'_j(y_j))^T  {\vartheta}_j(x).
\]
Therefore, the conditional log-likelihood becomes
\[
\log p_Y(y \mid x) = \log \phi_J(\Lambda(x) \tilde{{h}}(y \mid x)) + \sum_{j=1}^J \log (({a}'_j(y_j))^T  {\vartheta}_j(x)),
\]
where \( \phi_J \) denotes the density of the standard \( J \)-variate normal distribution.

Normalizing Flows employ the same principle of density transformation, but define \( {h}(y \mid x) \) or its inverse \( {g}(z \mid x) \) through a sequence of invertible mappings composed of simple building blocks. These mappings are typically parameterized using deep neural networks. A transformation in NF is written as \( {h} = h_L \circ \cdots \circ h_1 \), where each \( h_\ell \) is a bijective transformation with tractable Jacobian determinant. The parameters of each transformation may be functions of \( x \), enabling conditional modeling.

Popular designs include autoregressive flows, where each output \( h_j \) depends only on \( y_1, \ldots, y_{j-1} \) and \( x \), resulting in a lower triangular Jacobian similar to MCTMs. Another class of NF architectures is based on coupling layers, where the input vector is split into two parts, one of which remains fixed while the other is transformed using an affine function whose parameters are learned from the fixed part and the context \( x \). These transformations are composed repeatedly, with role reversal between parts in successive layers to ensure expressiveness.

Despite their differences in parameterization and implementation, MCTM and Normalizing Flows share the same mathematical foundation and ultimately rely on the same probability transformation principle. Both frameworks define a transformation from the observed data space to a latent space with a known density, compute the associated Jacobian determinant, and maximize the resulting log-likelihood over the data.

\section{EXPERIMENTAL DESIGN}
\label{app:experiments}

\subsection{Simulation Study}

\subsubsection{Data Generation Processes}
\label{app:sim_DGP}
To thoroughly evaluate our proposed method across diverse dependency structures, we implemented 14 different data generation processes. Each process was designed to test specific aspects of dependency modeling. For all processes, we generated datasets with $n=10\,000$ samples and evaluated coreset performance at sizes 30 and 100. Below, we describe each process mathematically.

\textbf{1. Bivariate Normal Distribution}

The standard bivariate normal distribution with correlation parameter $\rho$:
\begin{equation*}
(Y_1, Y_2) \sim \mathcal{N}\left(\begin{bmatrix} 0 \\ 0 \end{bmatrix}, \begin{bmatrix} 1 & \rho \\ \rho & 1 \end{bmatrix}\right)
\end{equation*}
where $\rho = 0.7$ represents the correlation between variables. This baseline case tests performance on linear dependency structures.

\textbf{2. Non-linear Correlation}

A non-linear correlation structure where the correlation coefficient varies with the predictor:
\begin{align*}
Y_1 &= X^2 + \epsilon_1, \quad \epsilon_1 \sim \mathcal{N}(0, 0.5^2) \\
Y_2 &\sim \mathcal{N}(0, 1), \text{ with correlation } \rho(X) = \sin(X) \text{ to } Y_1
\end{align*}
where $X \in [-3, 3]$. This tests the ability to capture location-dependent correlation.

\textbf{3. Bivariate Normal Mixture}

A mixture of two bivariate normal distributions with different means and correlation structures:
\begin{align*}
(Y_1, Y_2) &\sim 0.5 \cdot \mathcal{N}\left(\begin{bmatrix} 0 \\ 0 \end{bmatrix}, \begin{bmatrix} 1 & 0.8 \\ 0.8 & 1 \end{bmatrix}\right) \\
&+ 0.5 \cdot \mathcal{N}\left(\begin{bmatrix} 3 \\ -2 \end{bmatrix}, \begin{bmatrix} 1.5 & -0.5 \\ -0.5 & 1.5 \end{bmatrix}\right)
\end{align*}
This tests performance on multimodal data with different local dependency structures.

\textbf{4. Geometric Mixed Distribution}

A distribution combining two distinct geometric patterns:
\begin{align*}
(Y_1, Y_2) &\sim 0.5 \cdot \text{Circular} + 0.5 \cdot \text{Cross}
\end{align*}
where the Circular component generates points along a circle with radius $r \sim \mathcal{N}(2, 0.2^2)$ and uniform angle $\theta \sim \text{Uniform}(0, 2\pi)$, while the Cross component generates points along two perpendicular lines with controlled variance. This distribution tests the ability to capture multiple geometric structures simultaneously, with continuous circular features intersecting with linear patterns. The sharp differences in geometric structure and local data density create a particularly challenging scenario for coreset selection.

\textbf{5. Skewed t-Distribution}

A heavy-tailed distribution with skewness:
\begin{equation*}
(Y_1, Y_2) \sim \text{Skew-}t_{\nu}\left(\xi, \Omega, \alpha\right)
\end{equation*}
where $\xi = [0, 0]^T $ is the location parameter, $\Omega = \begin{bmatrix} 1 & 0.5 \\ 0.5 & 1 \end{bmatrix}$ is the scale matrix, $\alpha = [5, -3]^T $ is the skewness parameter, and $\nu = 4$ specifies the degrees of freedom. This tests performance on asymmetric, heavy-tailed distributions.

\textbf{6. Heteroscedastic Distribution}

A distribution where the variance depends on the location:
\begin{align*}
Y_1 &\sim \mathcal{N}(X^2, \sigma_1^2(X)) \text{ where } \sigma_1(X) = e^{0.5X} \\
Y_2 &\sim \mathcal{N}(\sin(X), \sigma_2^2(X)) \text{ where } \sigma_2(X) = \sqrt{|X|}
\end{align*}
with $X \in [-3, 3]$. This tests the ability to capture variance heterogeneity.

\textbf{7. Copula Complex Distribution}

A Clayton copula-based dependency structure with gamma and log-normal marginals:
\begin{align*}
(U_1, U_2) &\sim C_{\text{Clayton}}(\theta = 2) \\
Y_1 &= F_{\text{Gamma}(2,1)}^{-1}(U_1) \\
Y_2 &= F_{\text{LogNormal}(0,1)}^{-1}(U_2)
\end{align*}
This tests performance on complex dependency structures with non-normal marginals.

\textbf{8. Spiral Dependency}

A spiral pattern with added noise:
\begin{align*}
t &\in [0, 3\pi] \\
r &= 0.5t \\
Y_1 &= r\cos(t) + \epsilon_1, \quad \epsilon_1 \sim \mathcal{N}(0, 0.5^2) \\
Y_2 &= r\sin(t) + \epsilon_2, \quad \epsilon_2 \sim \mathcal{N}(0, 0.5^2)
\end{align*}
This tests performance on complex geometric dependencies.

\textbf{9. Circular Dependency}

A circular pattern with radius variation:
\begin{align*}
\theta &\sim \text{Uniform}(0, 2\pi) \\
r &\sim \mathcal{N}(5, 1^2) \\
Y_1 &= r\cos(\theta) \\
Y_2 &= r\sin(\theta)
\end{align*}
This tests the ability to capture circular dependencies where linear correlation measures fail.

\textbf{10. t-Copula Dependency}

A t-copula with t and exponential marginals:
\begin{align*}
(U_1, U_2) &\sim C_{t}(\rho = 0.7, \nu = 3) \\
Y_1 &= F_{t_5}^{-1}(U_1) \\
Y_2 &= F_{\text{Exp}(1)}^{-1}(U_2)
\end{align*}
This tests performance on tail dependencies with asymmetric marginals.

\textbf{11. Piecewise Dependency}

A piecewise function with different correlation regimes:
\begin{align*}
Y_1 &\sim \mathcal{N}(0, 2^2) \\
Y_2 &= 
\begin{cases} 
1.5Y_1 + \epsilon_1, & \text{if } Y_1 < -1 \\
-0.5Y_1 + \epsilon_2, & \text{if } -1 \leq Y_1 < 1 \\
-2Y_1 + \epsilon_3, & \text{if } Y_1 \geq 1
\end{cases}
\end{align*}
where $\epsilon_1, \epsilon_3 \sim \mathcal{N}(0, 0.5^2)$ and $\epsilon_2 \sim \mathcal{N}(0, 0.8^2)$. This tests ability to capture regime-dependent correlations.

\textbf{12. Hourglass Dependency}

A heteroscedastic pattern with variance increasing quadratically from center:
\begin{align*}
Y_1 &\sim \mathcal{N}(0, 2^2) \\
Y_2 &\sim \mathcal{N}(0, \sigma^2(Y_1)) \text{ where } \sigma^2(Y_1) = 0.2 + 0.3Y_1^2
\end{align*}
This tests ability to capture complex variance structures.

\textbf{13. Bimodal Clusters}

Two distinct clusters with opposing correlation structures:
\begin{align*}
(Y_1, Y_2) &\sim 0.5 \cdot \mathcal{N}\left(\begin{bmatrix} -2 \\ 2 \end{bmatrix}, \begin{bmatrix} 1 & 0.8 \\ 0.8 & 1 \end{bmatrix}\right) \\
&+ 0.5 \cdot \mathcal{N}\left(\begin{bmatrix} 2 \\ 2 \end{bmatrix}, \begin{bmatrix} 1 & -0.7 \\ -0.7 & 1 \end{bmatrix}\right)
\end{align*}
This tests the ability to capture cluster-specific dependency structures.

\textbf{14. Sinusoidal Dependency}

A sinusoidal relationship with noise:
\begin{align*}
Y_1 &\in [-3, 3] \\
Y_2 &= 2\sin(\pi Y_1) + \epsilon, \quad \epsilon \sim \mathcal{N}(0, 0.5^2)
\end{align*}
This tests performance on periodic dependencies.

These 14 data generation processes provide a comprehensive evaluation framework for testing our coreset method across a wide range of dependency structures, from simple linear correlations to complex geometric patterns and regime-dependent relationships.

\subsubsection{Simulation Data Visualization}
\label{app:sim_visual}

We visualize each data generation process with coresets created using three different sampling methods: Uniform Sampling, $\ell_2$ Sensitivity Sampling, and our proposed $\ell_2$-Hull Sampling method. Each visualization shows how well the different coreset methods preserve the underlying data distribution structure.

\begin{figure}[htbp]
    \centering
    \begin{subfigure}[b]{0.32\textwidth}
        \includegraphics[width=\textwidth]{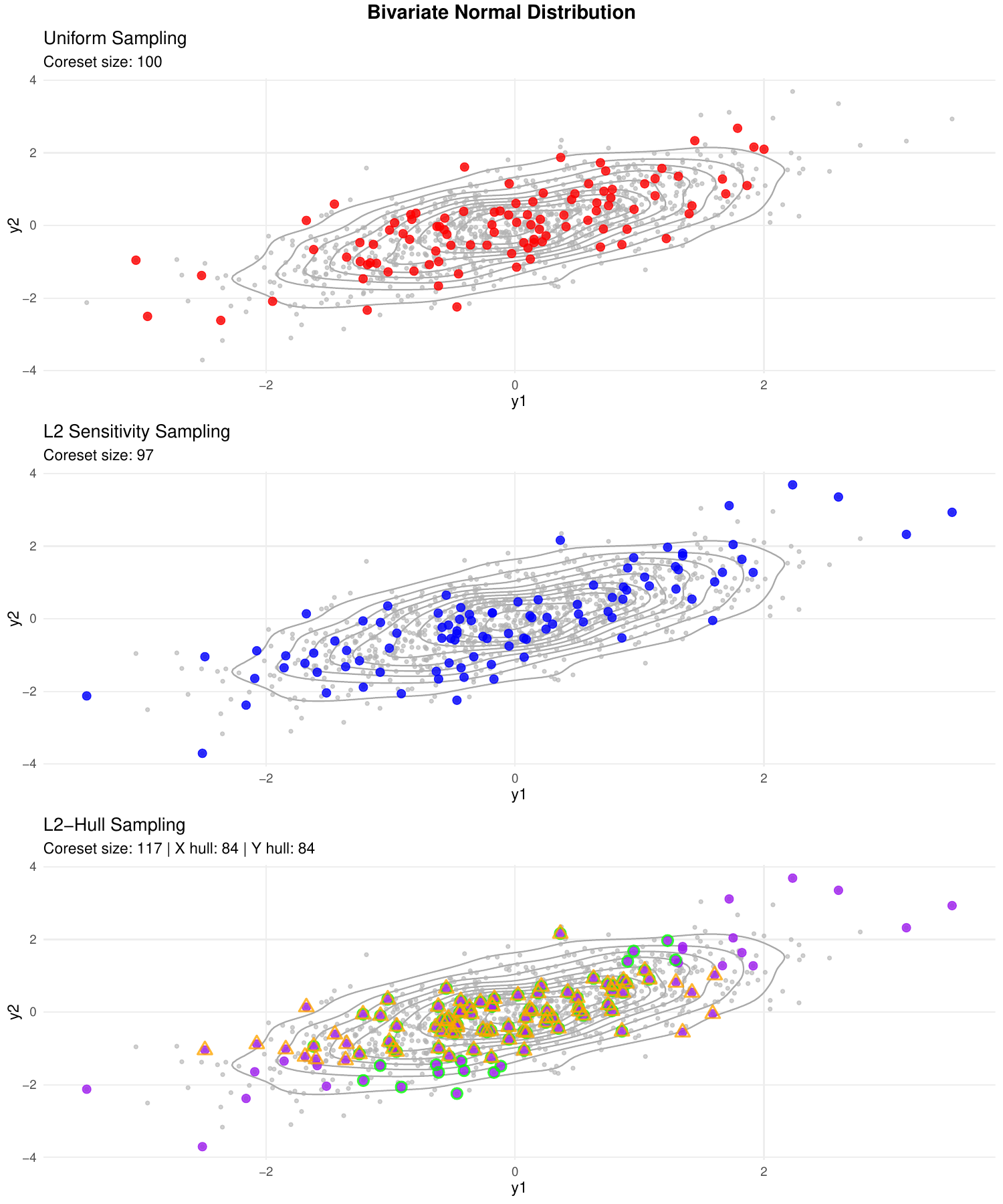}
        \caption{Bivariate Normal}
    \end{subfigure}
    \hfill
    \begin{subfigure}[b]{0.32\textwidth}
        \includegraphics[width=\textwidth]{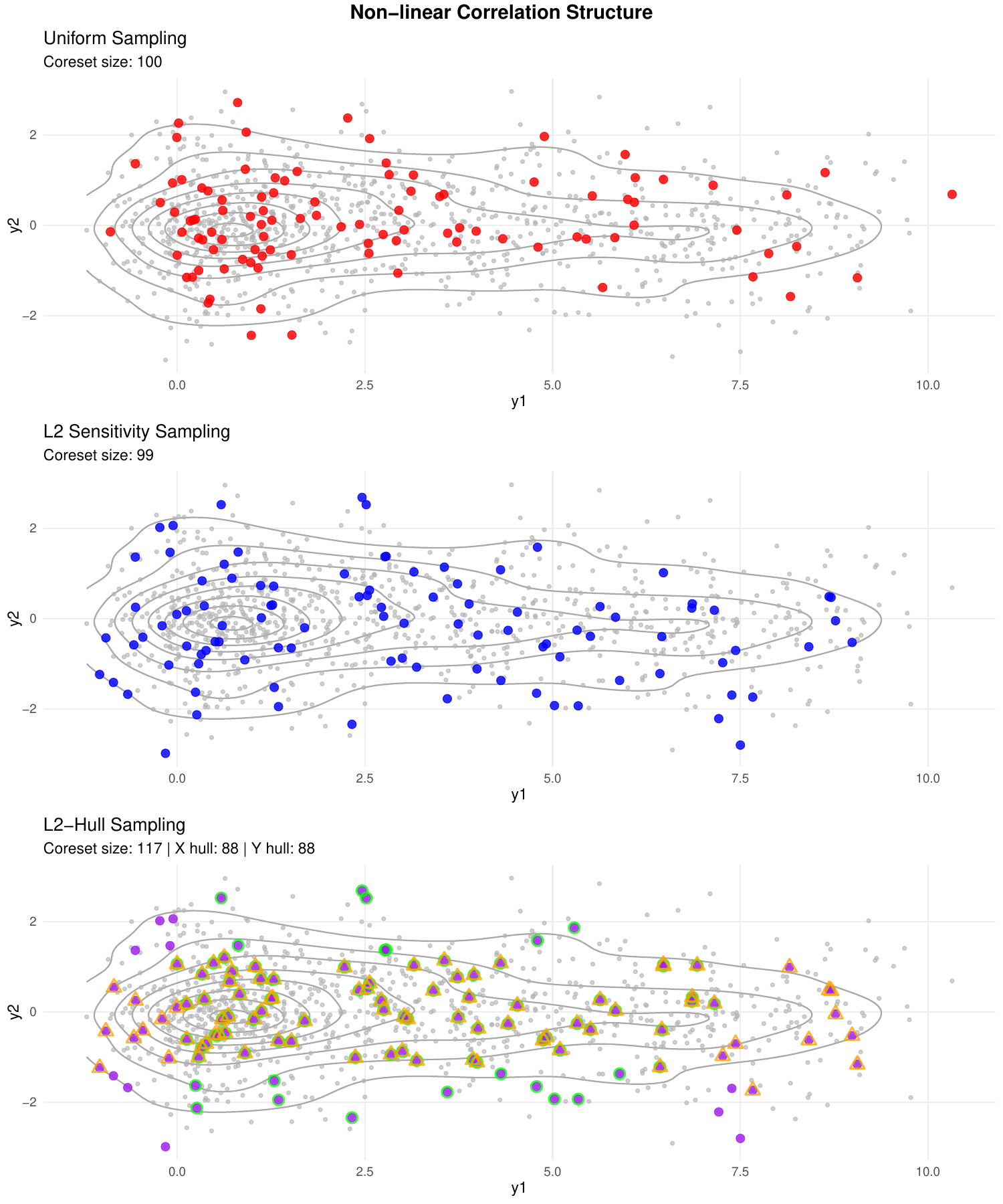}
        \caption{Non-linear Correlation}
    \end{subfigure}
    \hfill
    \begin{subfigure}[b]{0.32\textwidth}
        \includegraphics[width=\textwidth]{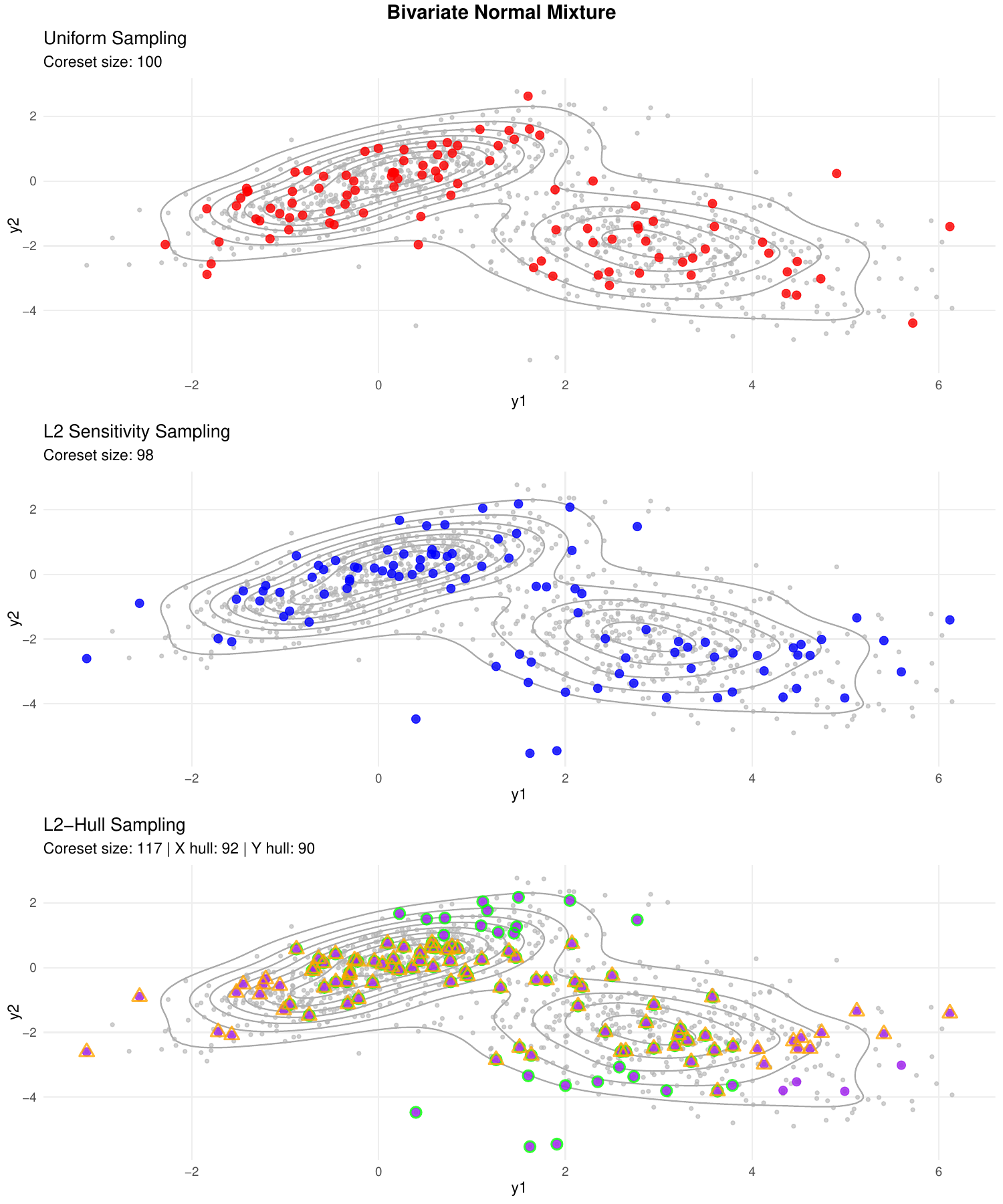}
        \caption{Bivariate Normal Mixture}
    \end{subfigure}
    \caption{Coreset visualization for basic probability distributions. Each row shows a different sampling method: Uniform (top), $\ell_2$ Sensitivity (middle), and $\ell_2$-Hull (bottom).}
    \label{fig:dgp_vis_1}
\end{figure}

\begin{figure}[htbp]
    \centering
    \begin{subfigure}[b]{0.32\textwidth}
        \includegraphics[width=\textwidth]{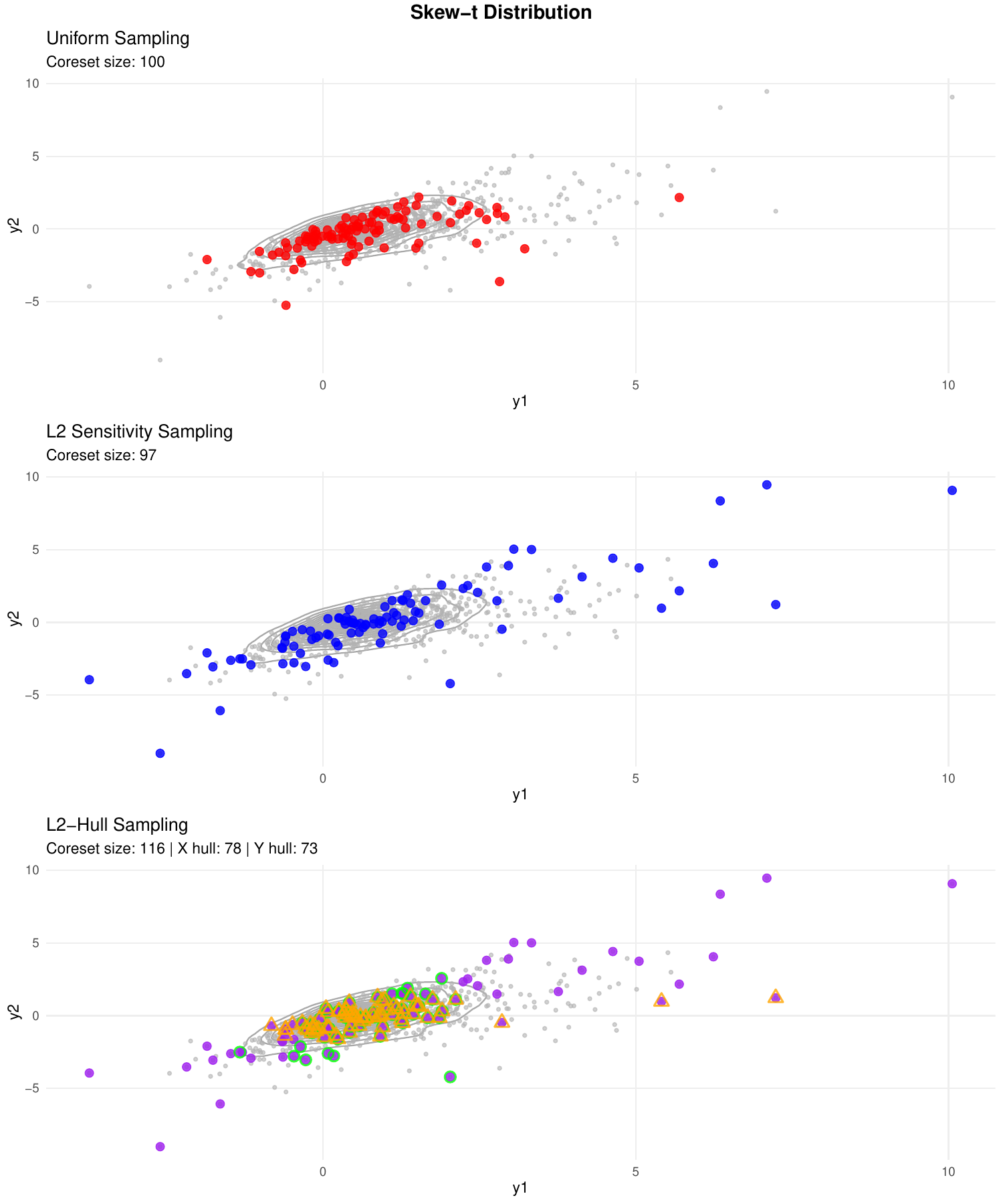}
        \caption{Skew-t Distribution}
    \end{subfigure}
    \hfill
    \begin{subfigure}[b]{0.32\textwidth}
        \includegraphics[width=\textwidth]{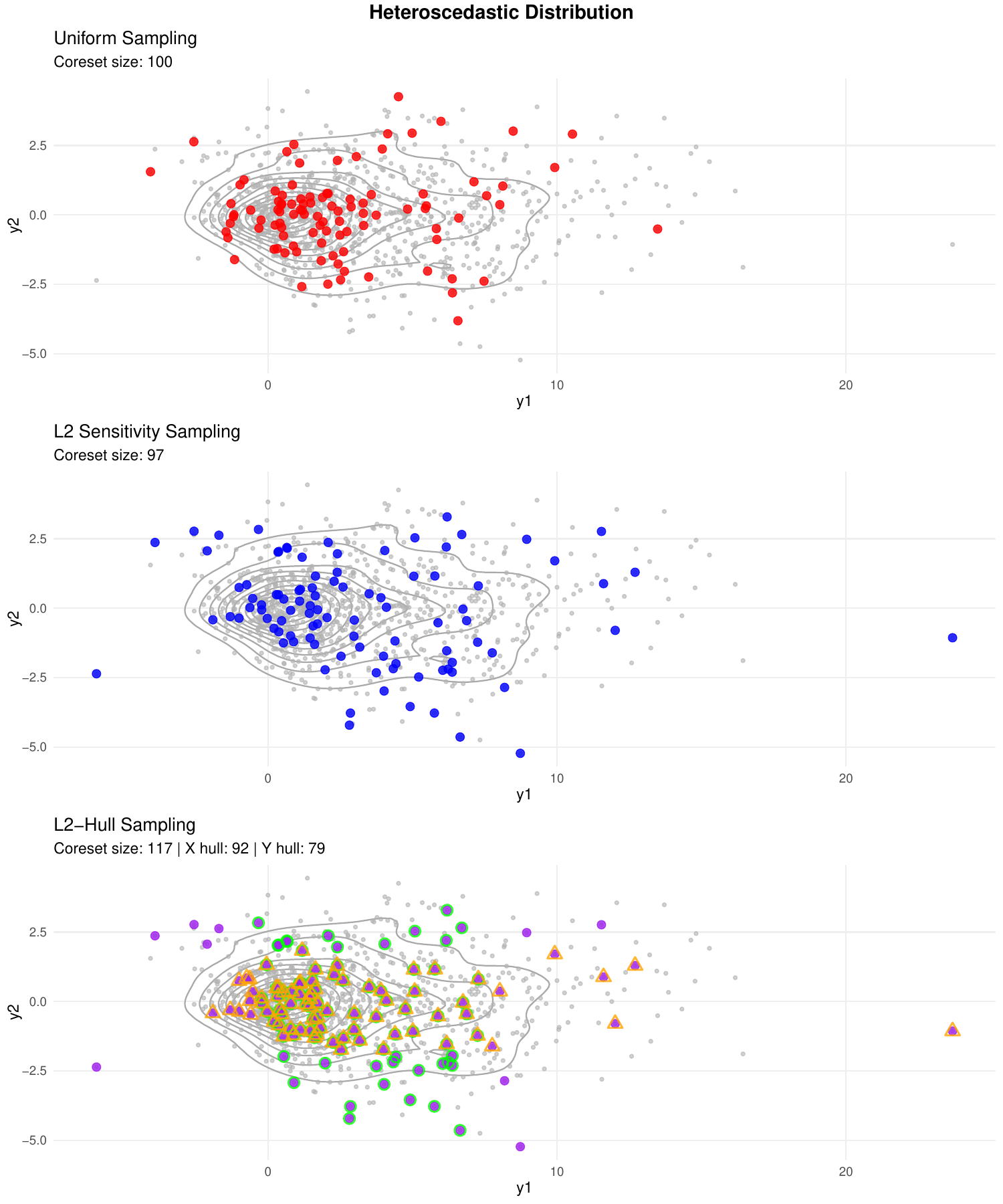}
        \caption{Heteroscedastic Distribution}
    \end{subfigure}
    \hfill
    \begin{subfigure}[b]{0.32\textwidth}
        \includegraphics[width=\textwidth]{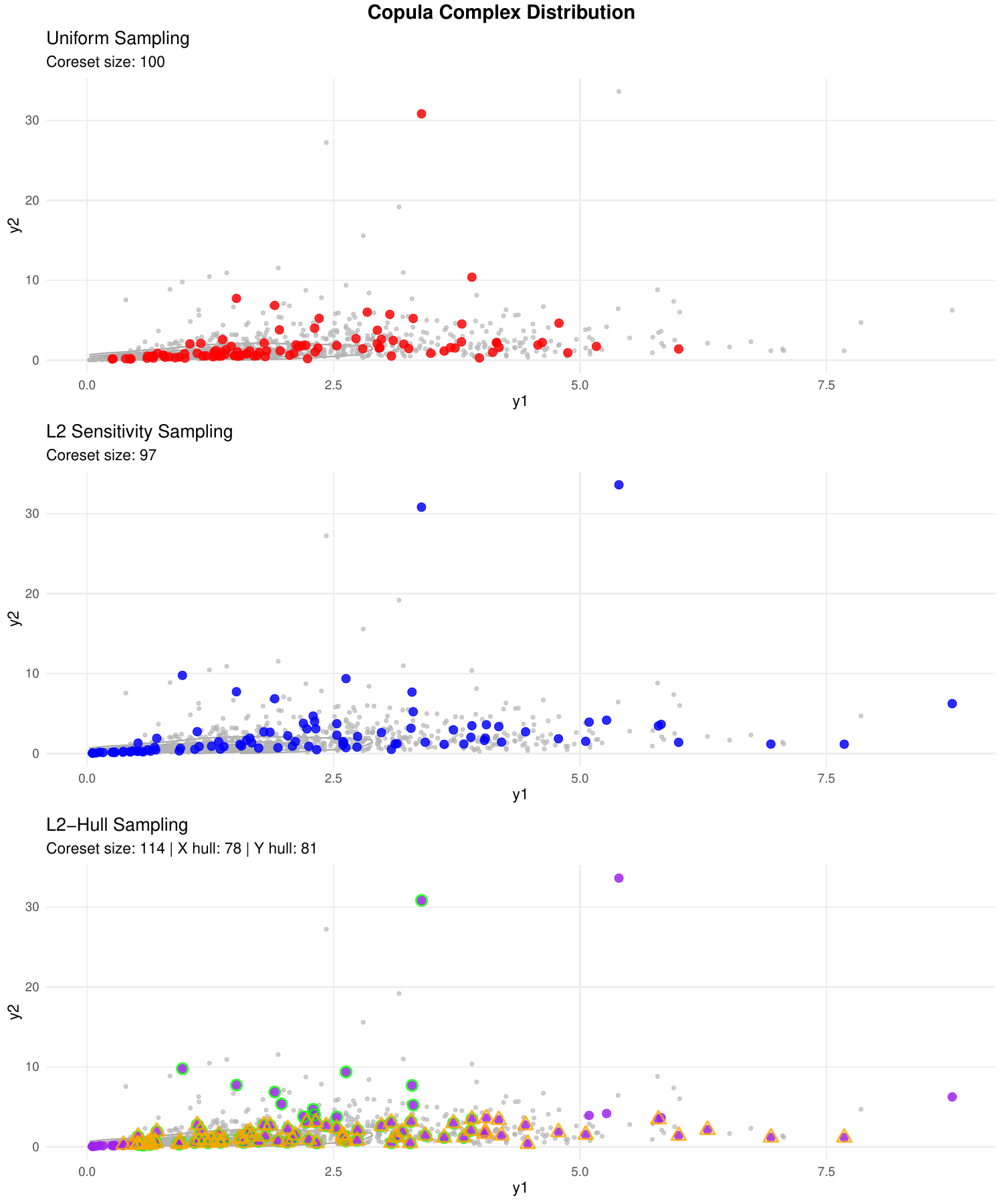}
        \caption{Copula Complex Distribution}
    \end{subfigure}
    \caption{Coreset visualization for complex probability distributions. Each column shows a different sampling method: Uniform (top), $\ell_2$ Sensitivity (middle), and $\ell_2$-Hull (bottom).}
    \label{fig:dgp_vis_2}
\end{figure}

\begin{figure}[htbp]
    \centering
    \begin{subfigure}[b]{0.32\textwidth}
        \includegraphics[width=\textwidth]{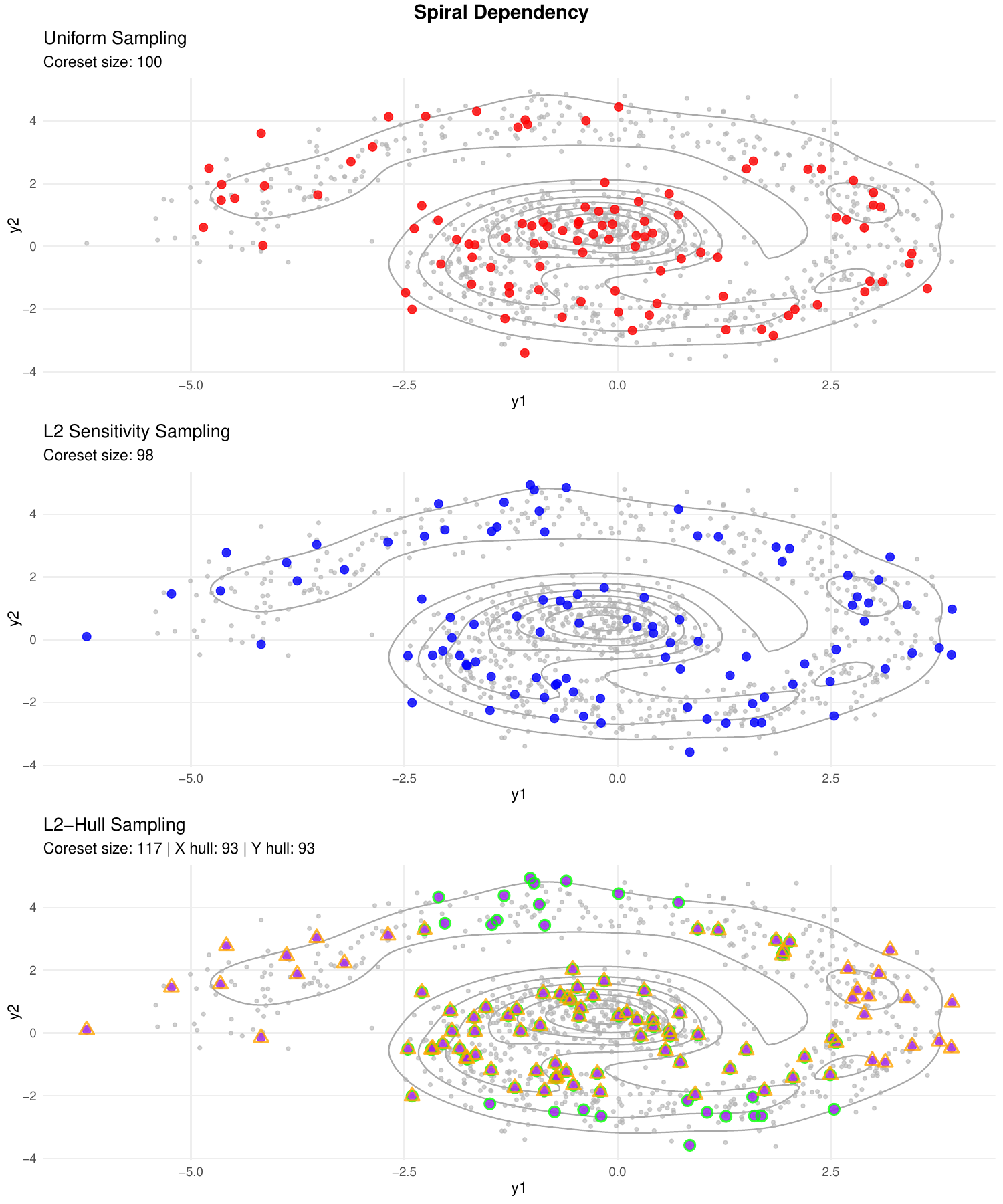}
        \caption{Spiral Dependency}
    \end{subfigure}
    \hfill
    \begin{subfigure}[b]{0.32\textwidth}
        \includegraphics[width=\textwidth]{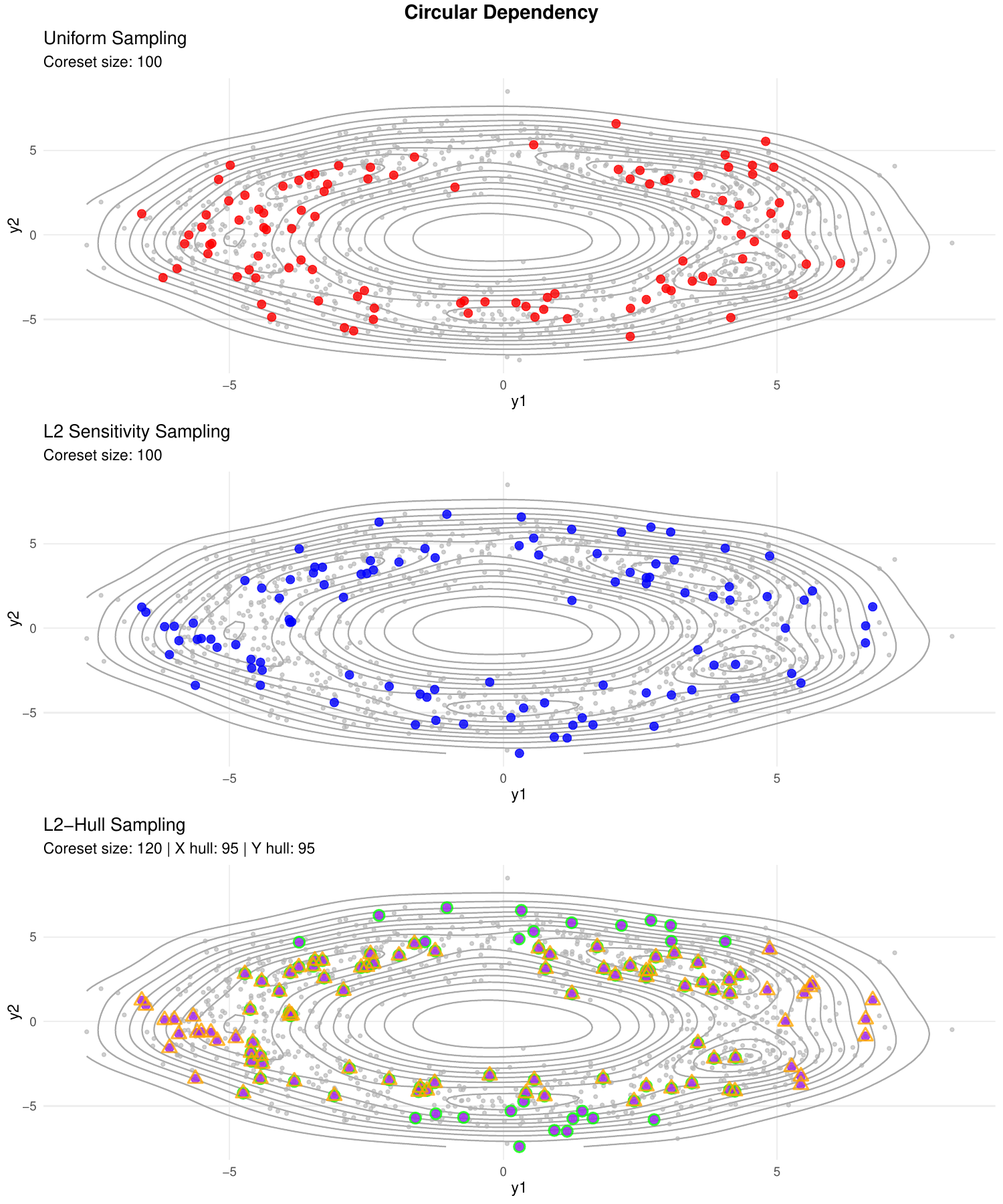}
        \caption{Circular Dependency}
    \end{subfigure}
    \hfill
    \begin{subfigure}[b]{0.32\textwidth}
        \includegraphics[width=\textwidth]{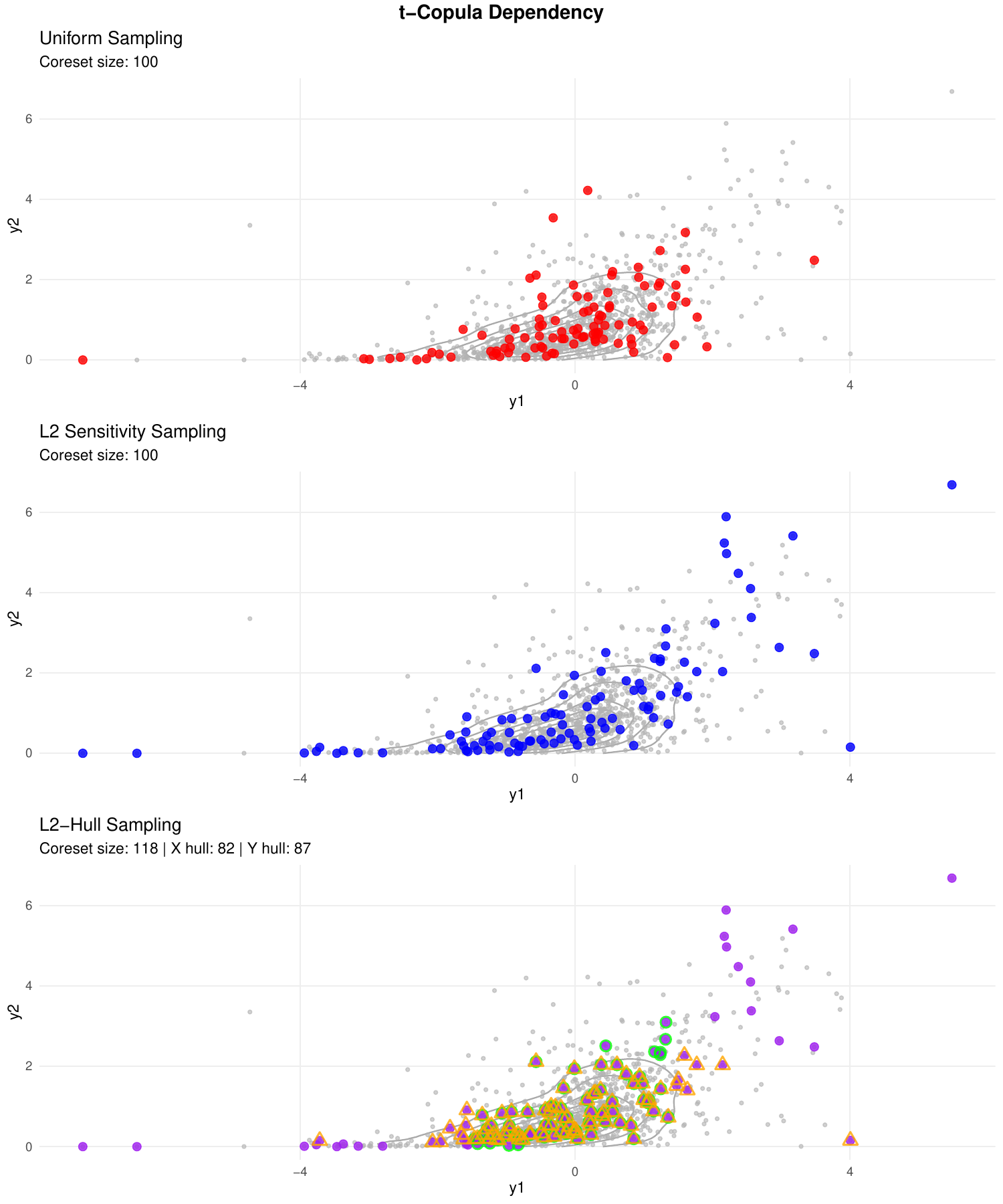}
        \caption{t-Copula Dependency}
    \end{subfigure}
    \caption{Coreset visualization for geometric dependency structures. Each column shows a different sampling method: Uniform (top), $\ell_2$ Sensitivity (middle), and $\ell_2$-Hull (bottom).}
    \label{fig:dgp_vis_3}
\end{figure}

\begin{figure}[htbp]
    \centering
    \begin{subfigure}[b]{0.32\textwidth}
        \includegraphics[width=\textwidth]{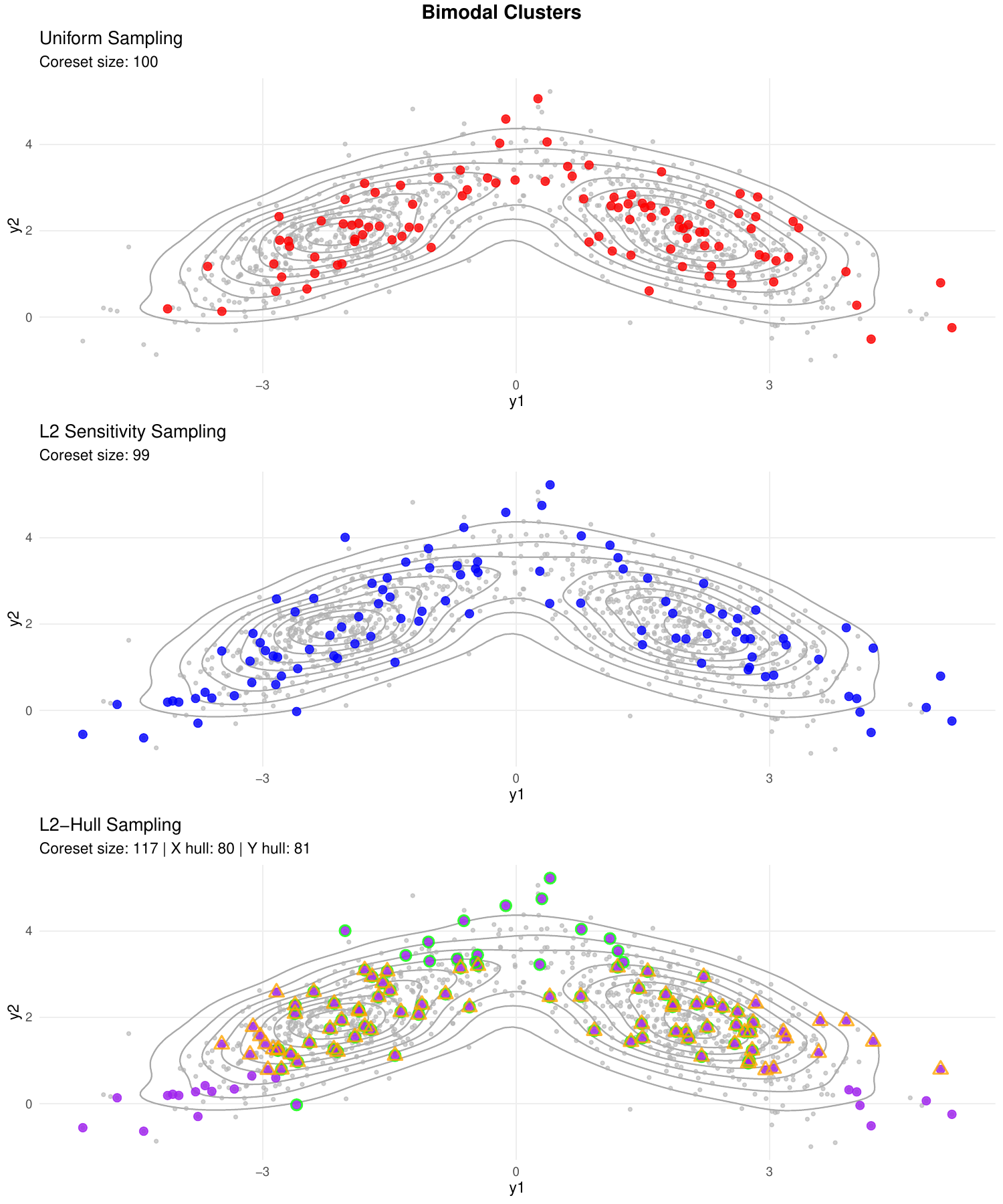}
        \caption{Bimodal Clusters}
    \end{subfigure}
    \hfill
    \begin{subfigure}[b]{0.32\textwidth}
        \includegraphics[width=\textwidth]{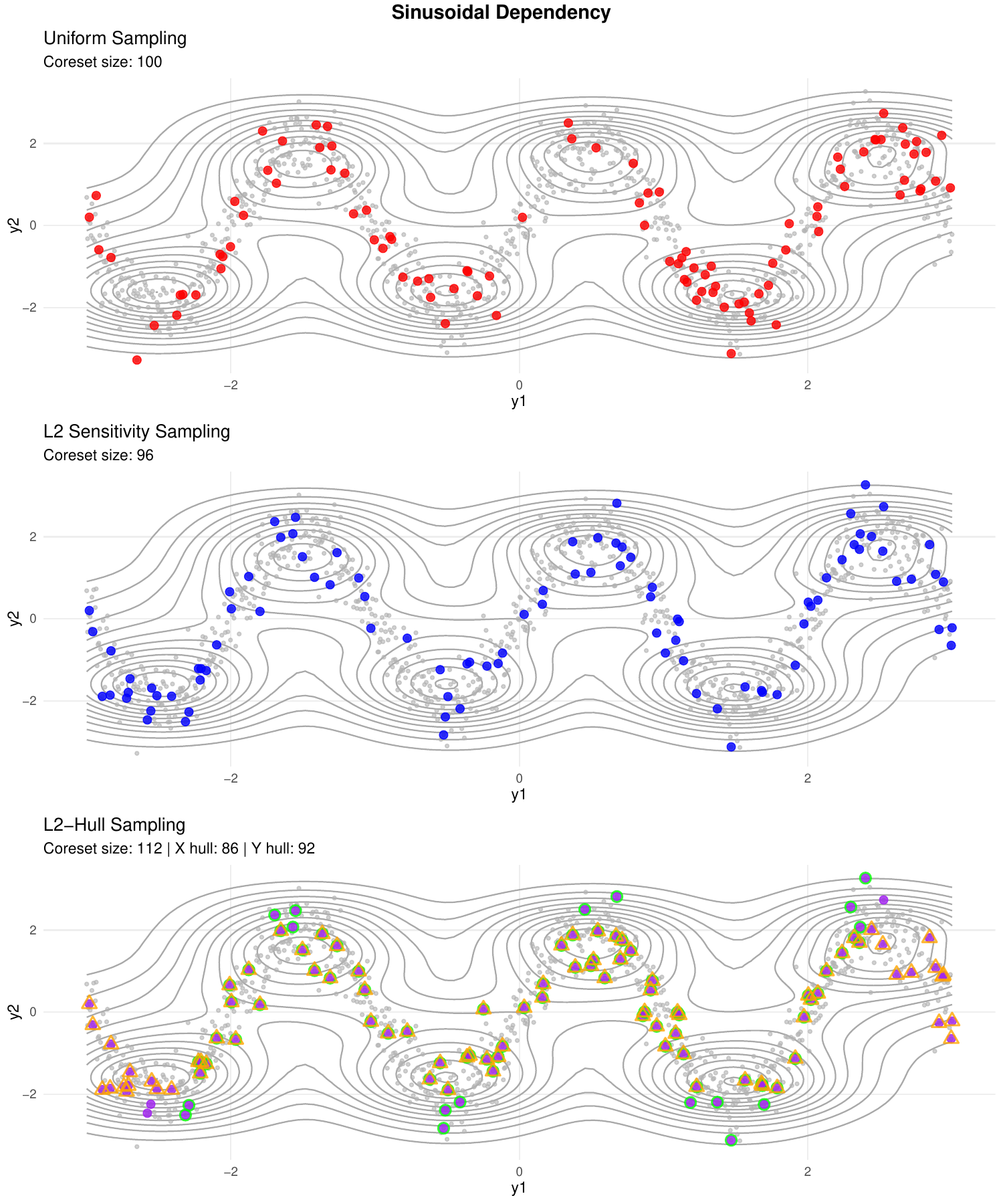}
        \caption{Sinusoidal Dependency}
    \end{subfigure}
    \hfill
    \begin{subfigure}[b]{0.32\textwidth}
        \includegraphics[width=\textwidth]{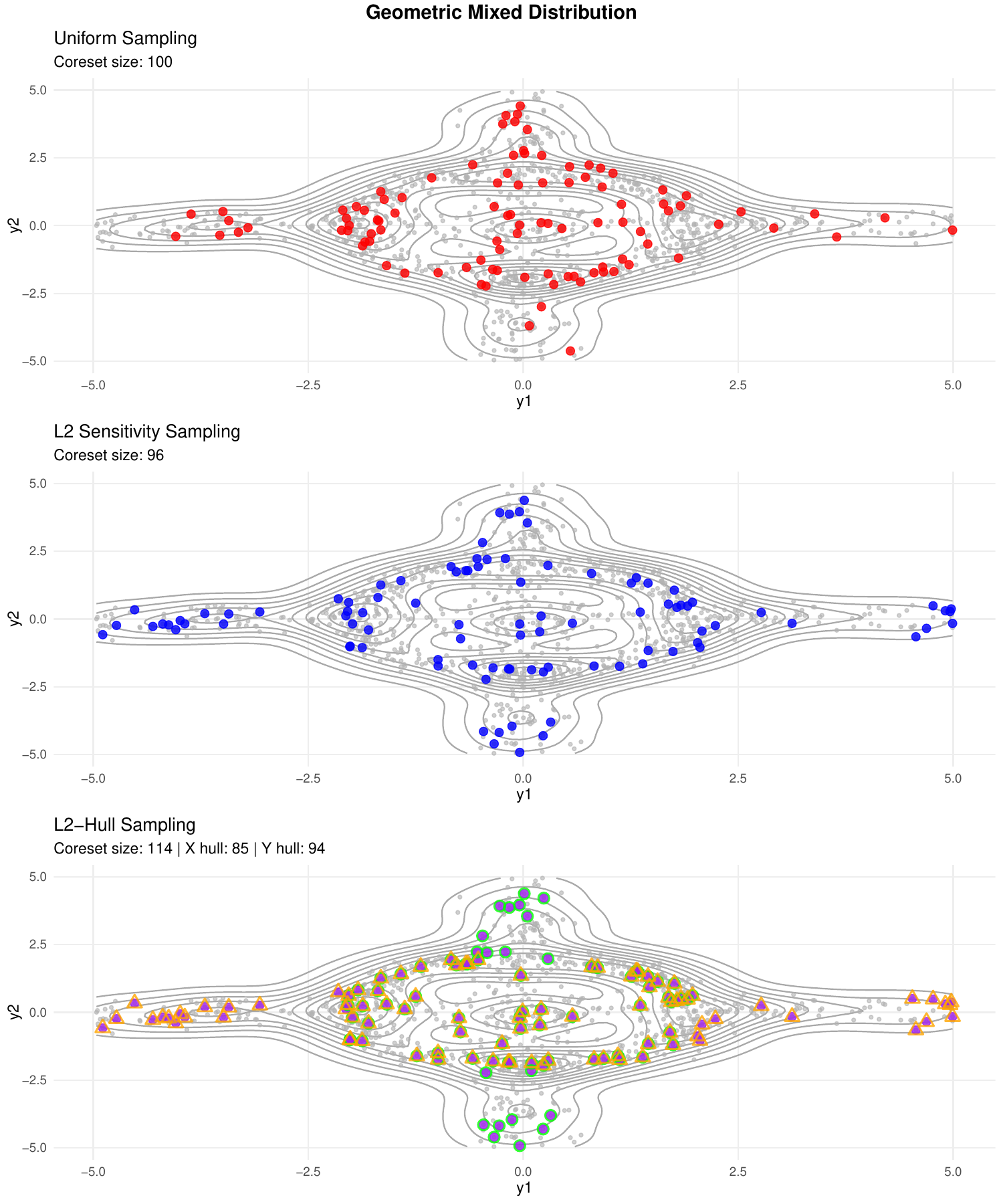}
        \caption{Mixture Distribution}
    \end{subfigure}
    \caption{Coreset visualization for additional dependency structures. Each column shows a different sampling method: Uniform (top), $\ell_2$ Sensitivity (middle), and $\ell_2$-Hull (bottom).}
    \label{fig:dgp_vis_5}
\end{figure}

\begin{figure}[htbp]
    \centering
    \begin{subfigure}[b]{0.45\textwidth}
        \includegraphics[width=\textwidth]{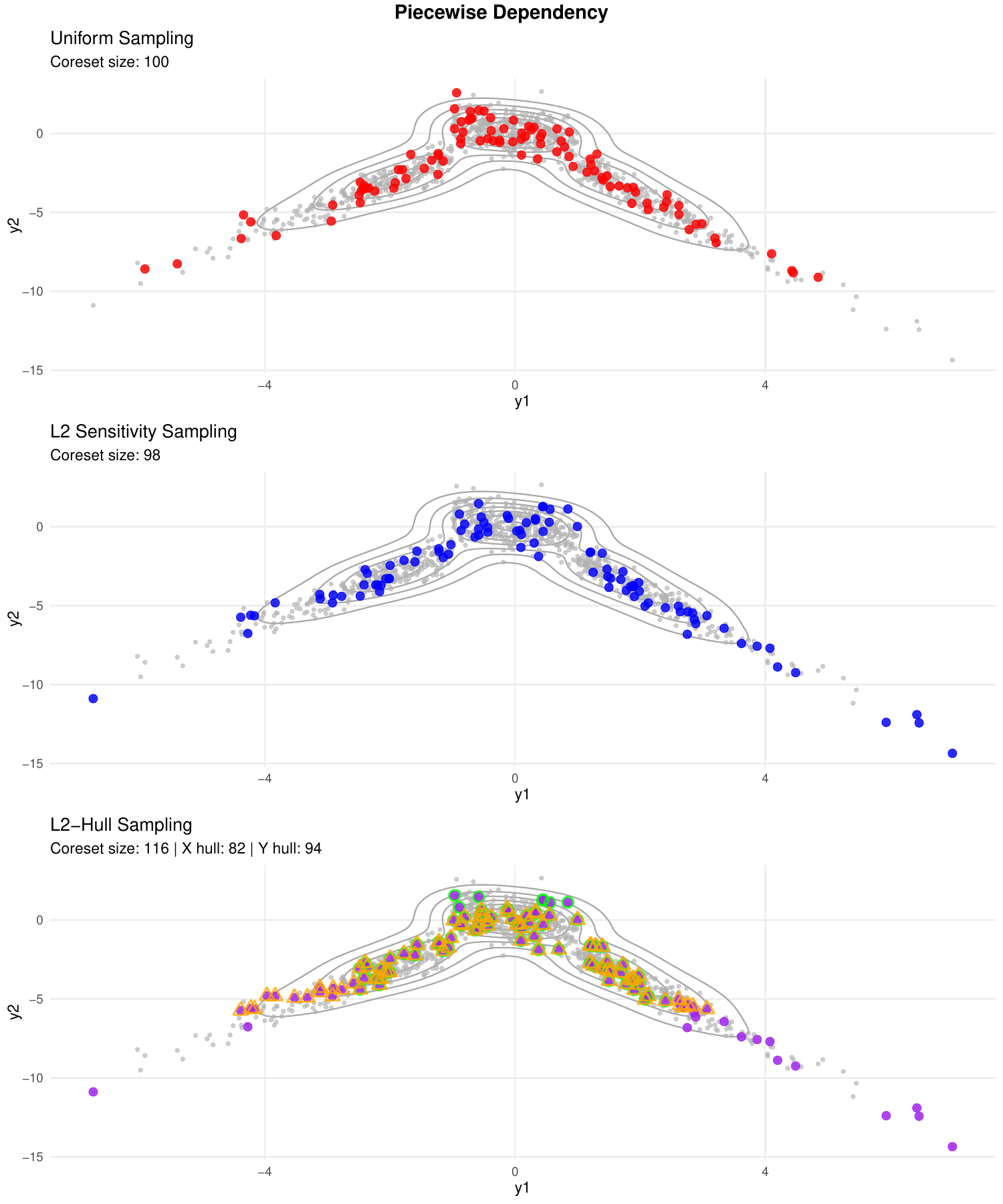}
        \caption{Piecewise Dependency}
    \end{subfigure}
    \hfill
    \begin{subfigure}[b]{0.45\textwidth}
        \includegraphics[width=\textwidth]{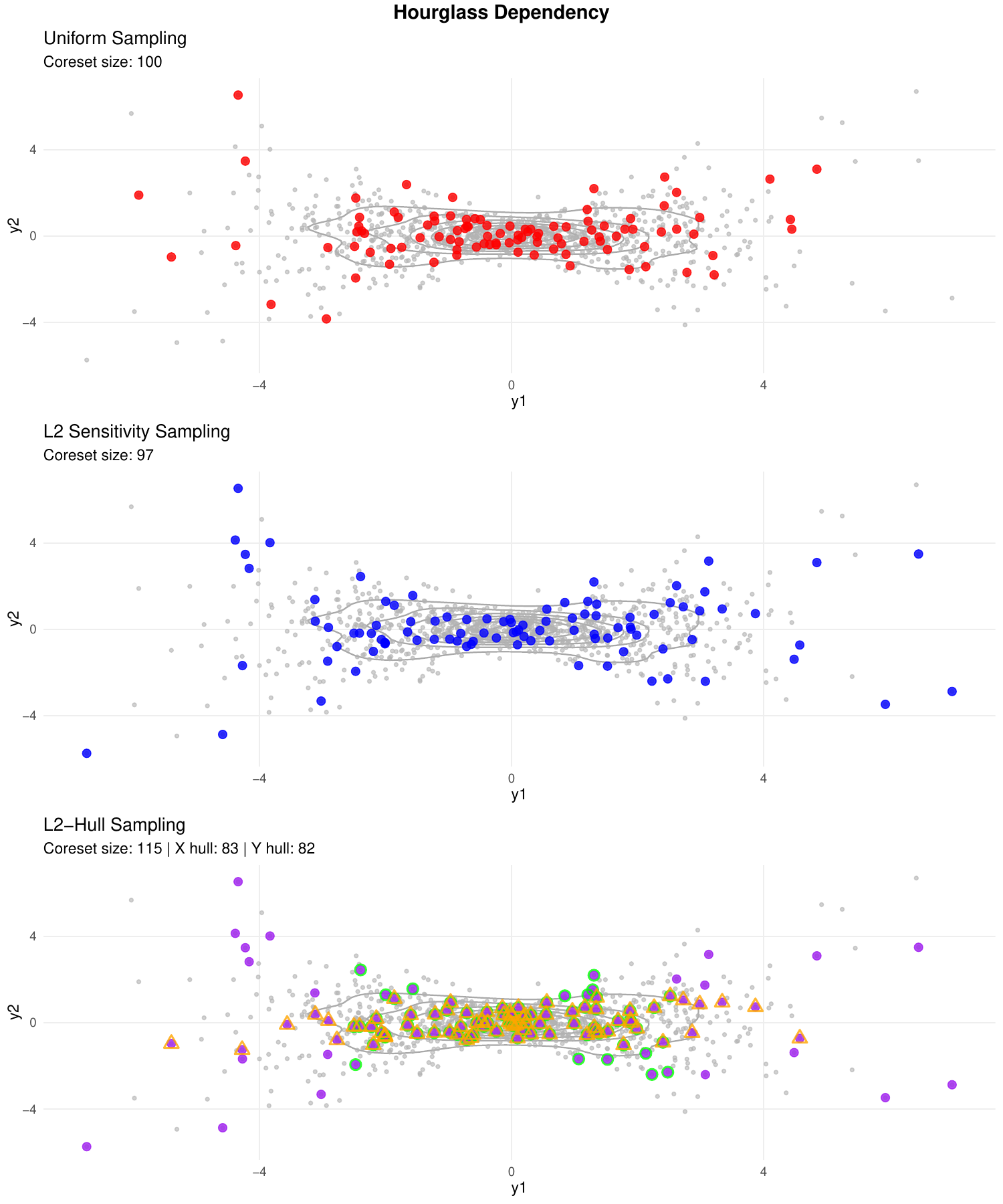}
        \caption{Hourglass Dependency}
    \end{subfigure}
    \caption{Coreset visualization for functional dependency structures. Each column shows a different sampling method: Uniform (top), $\ell_2$ Sensitivity (middle), and $\ell_2$-Hull (bottom).}
    \label{fig:dgp_vis_4}
\end{figure}

To visually demonstrate the advantages of our proposed method, we present coreset visualizations for representative data generation processes. In each figure, we display coresets of approximately $100$ points generated from original samples of $1\,000$ points using three different sampling methods: Uniform Sampling (top), $\ell_2$ Sensitivity Sampling (middle), and our proposed $\ell_2$-Hull Sampling (bottom).

As shown in Figures \ref{fig:dgp_vis_1}--\ref{fig:dgp_vis_5}, our $\ell_2$-Hull method effectively captures the entire shape of the underlying data distribution, particularly in complex scenarios such as the Hourglass Dependency, Piecewise Dependency, Spiral Dependency, Copula Complex Distribution, and Bimodal Clusters. The visualization highlights a key limitation of uniform sampling: it often fails to include points that are distant from the central mass of the distribution but are nevertheless critical for accurately representing the overall data structure.

For instance, in the Bimodal Clusters visualization (Figure \ref{fig:dgp_vis_5}a), our $\ell_2$-Hull method ensures comprehensive coverage of both clusters including their boundaries, while uniform sampling misses several critical points that define the extent of the distribution. Similarly, in the Piecewise Dependency case (Figure \ref{fig:dgp_vis_4}a), the non-linear relationship is better preserved by our method, particularly at the extremes of the distribution.

It is worth noting that in many scenarios, the performances of $\ell_2$ Sensitivity Sampling and our $\ell_2$-Hull method appear similar, especially as the coreset size increases. This is expected, as the primary purpose of adding convex hull points is to safeguard against extreme cases that may arise during optimization. As the sample size grows, the probability of encountering such extreme cases diminishes. Nevertheless, the $\ell_2$-Hull method provides a theoretical guarantee that important boundary points are always included, regardless of the particular dataset instance.

The enhanced coverage provided by our method translates directly to the improved empirical performance metrics observed in \Cref{tab:performance_comparison_30,tab:performance_comparison_100}, particularly for complex dependency structures where capturing the overall shape of the distribution is crucial for accurate statistical inference.

\subsubsection{Simulation Experiments Results}

 In all the above scenarios, we will evaluate under the same sampling/modeling process and can adjust the sample size $n$ as well as the correlation parameters (e.g., $\rho$) as needed.

 \paragraph{Coreset Construction Methods} We compare three sampling strategies: 
 
 \begin{enumerate} 
 
 \item \textbf{Uniform Subsampling}: $k$ points are taken with equal probability from all $n$ samples without replacement, and the weights are set to $\frac{n}{k} $. 
 
 \item \textbf{$\ell_2$ -Only Leverage Score Sampling}: leverage scores (or upper bounds on sensitivity) are calculated for $a(x_i)$ and $a(y_i)$, respectively, and subsamples are combined and reweighted after sampling them with the probability $p_i\propto \ell_2$ leverage score; and the weight is then determined using \emph{merge probability} to compute the final weights and do the normalization. 
 
 \item \textbf{$\ell_2$-Hull Hybrid}: on the basis of the above $\ell_2$ sensitivity sampling, we also perform \emph{convex hull} (or $\varepsilon$-kernel) approximation sampling on its derivative matrix $a'(\cdot)$ by adding an extra batch of points located at the extreme geometric boundaries, thus avoiding the logarithmic term $\log(a'(x_i)^T \vartheta)$ in the likelihood function on which the values are unstable or near zero. The \emph{sensitivity subsample} and \emph{convex hull subsample} are eventually summarized together in a joint index. 
 
 \end{enumerate}

 \paragraph{Main Workflow} For each data generation method, we follow the following flow of experiments: 
 
 \begin{enumerate} 
 
 \item \textbf{Data Generation:} Call the corresponding function (e.g. \texttt{generate\_mixture}) to generate the samples, which can be randomly initialized multiple times. \item \textbf{Full Data Baseline:} Perform MCTM fitting with full data, record its log-likelihood $\ell_{full}$
with the estimated coefficients $\hat{\theta}_{full}$, as a baseline. 

\item \textbf{Coreset Sampling \& Fitting:} Use \emph{Uniform Sampling}, \emph{$\ell_2$-Only}, and \emph{$\ell_2$-Hull}, respectively, for multiple subset sizes from the set $k\in\{\text{min\_size},\dots,\text{max\_size}\}$ to obtain a subsample and compute the weights; subsequently, the MCTM is fitted using only this subsample to obtain the log-likelihood $\ell_{coreset}$ and estimated coefficients $\hat{\theta}_{coreset}$. 

\item \textbf{Evaluation Metrics:} 
\begin{itemize} 
\item \textbf{Likelihood Ratio:} compare $\ell_{coreset}/\ell_{full}$, if it is close to 1; 
\item \textbf{Parameter Error:} calculate the value of 
$\|\hat{\vartheta}_{coreset}-\hat{\vartheta}_{full}\|_2^2$ and convergence trend under different $k$; 
\item \textbf{Lambda Error:} Measure the absolute difference in the dependency parameter 
$|\lambda_{coreset} - \lambda_{full}|$ to evaluate how well the coreset preserves the dependency structure;
\item \textbf{Time Cost:} Record the sampling time consumption and optimization time consumption respectively for evaluating the efficiency of the algorithm. 
\end{itemize} 

\end{enumerate}

\begin{figure}[htbp]
    \centering

    \begin{subfigure}[b]{0.3\textwidth}
        \includegraphics[width=\textwidth]{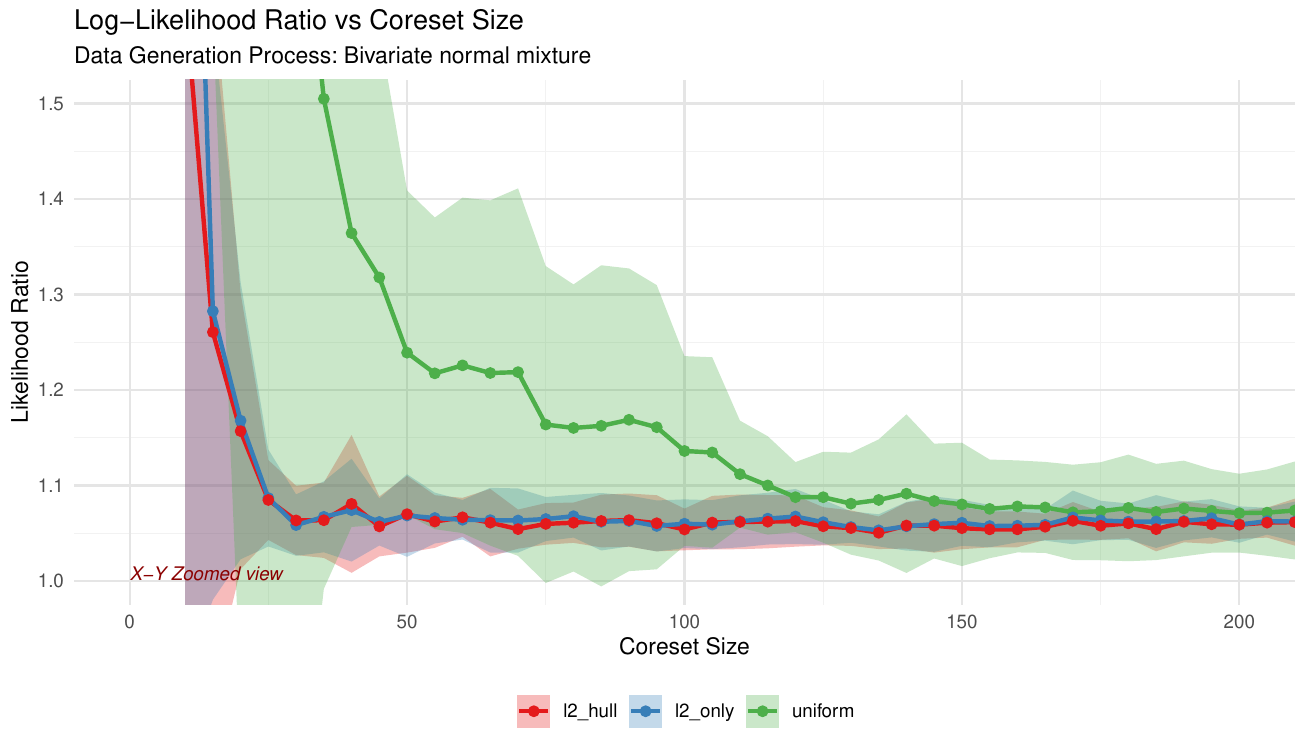}
    
    \end{subfigure}
    \hfill
    \begin{subfigure}[b]{0.3\textwidth}
        \includegraphics[width=\textwidth]{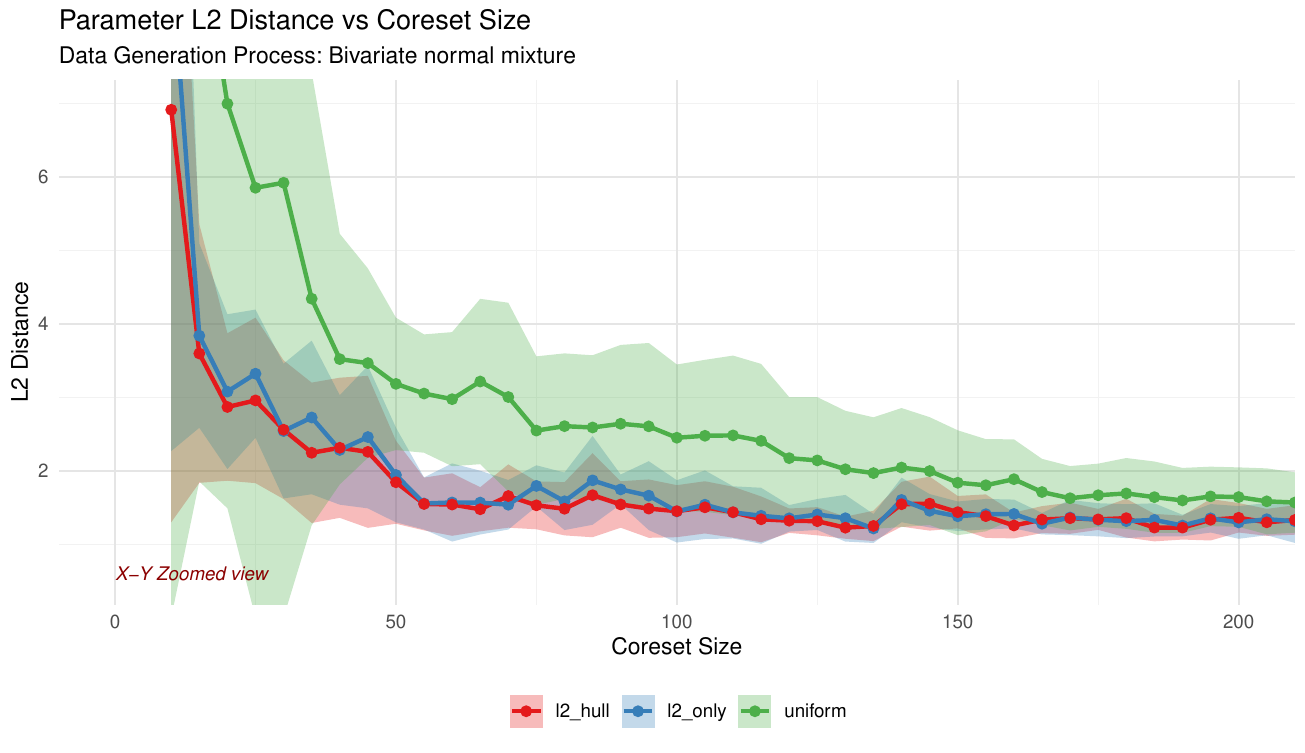}
        
    \end{subfigure}
    \hfill
    \begin{subfigure}[b]{0.3\textwidth}
        \includegraphics[width=\textwidth]{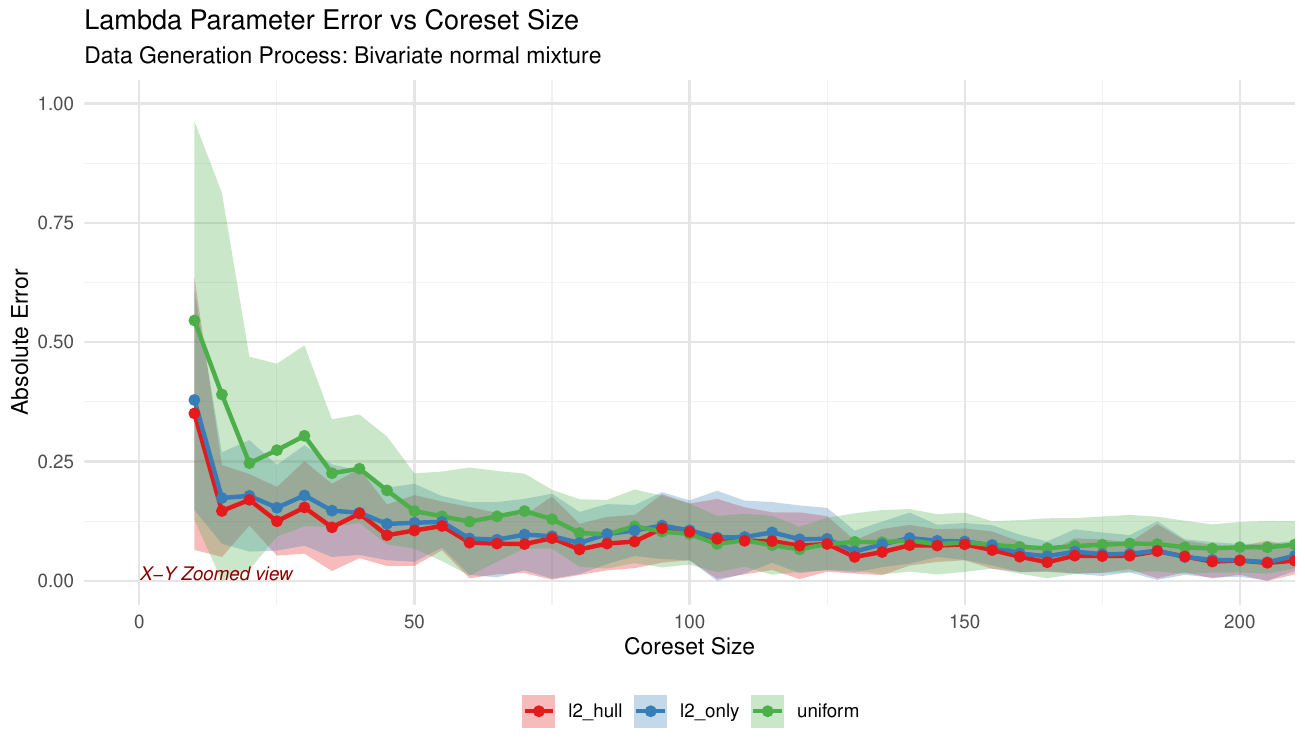}
        
    \end{subfigure}

    \vspace{0.5cm}
    \begin{subfigure}[b]{0.3\textwidth}
        \includegraphics[width=\textwidth]{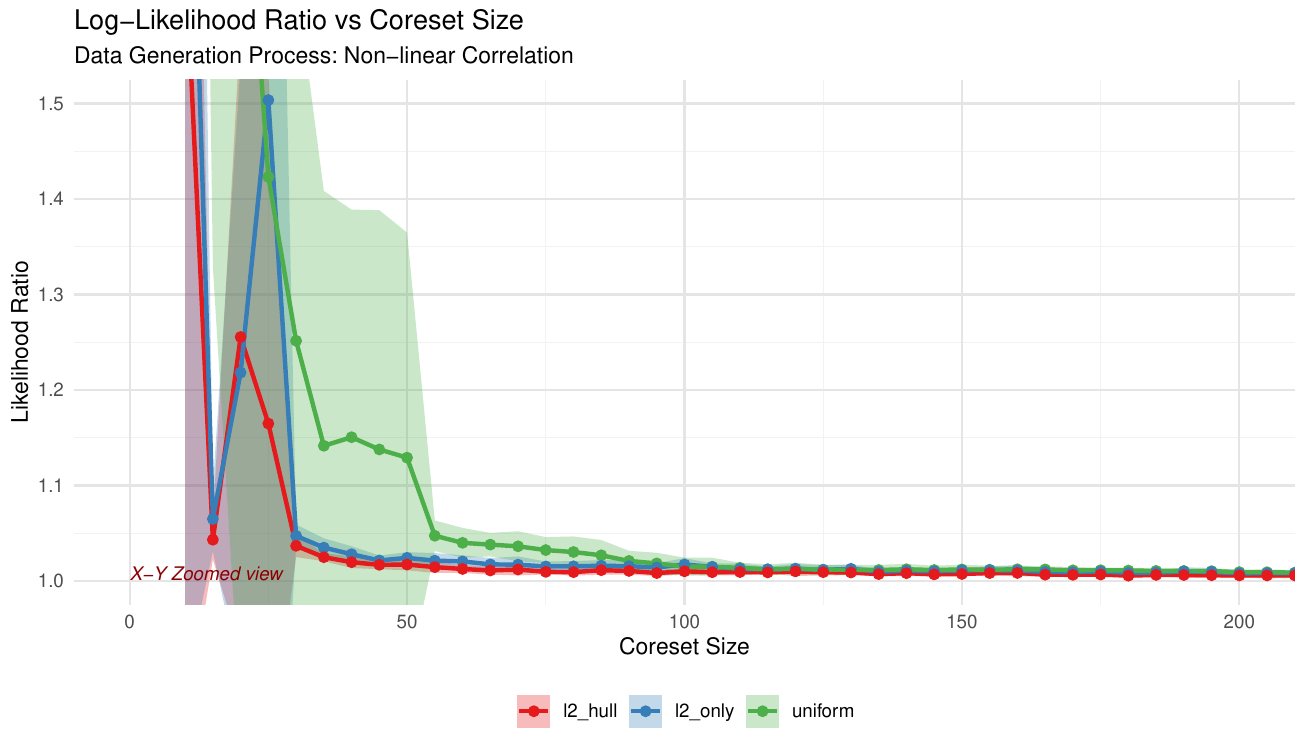}
    
    \end{subfigure}
    \hfill
    \begin{subfigure}[b]{0.3\textwidth}
        \includegraphics[width=\textwidth]{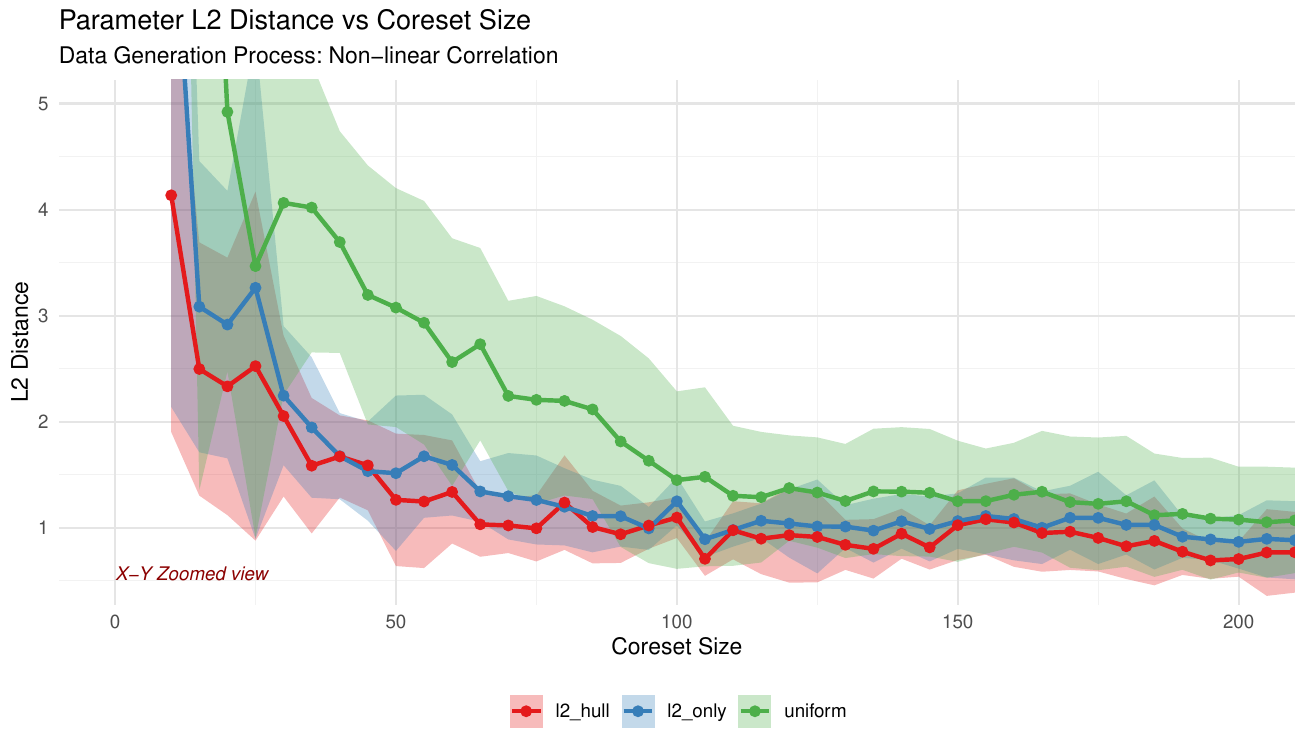}
        
    \end{subfigure}
    \hfill
    \begin{subfigure}[b]{0.3\textwidth}
        \includegraphics[width=\textwidth]{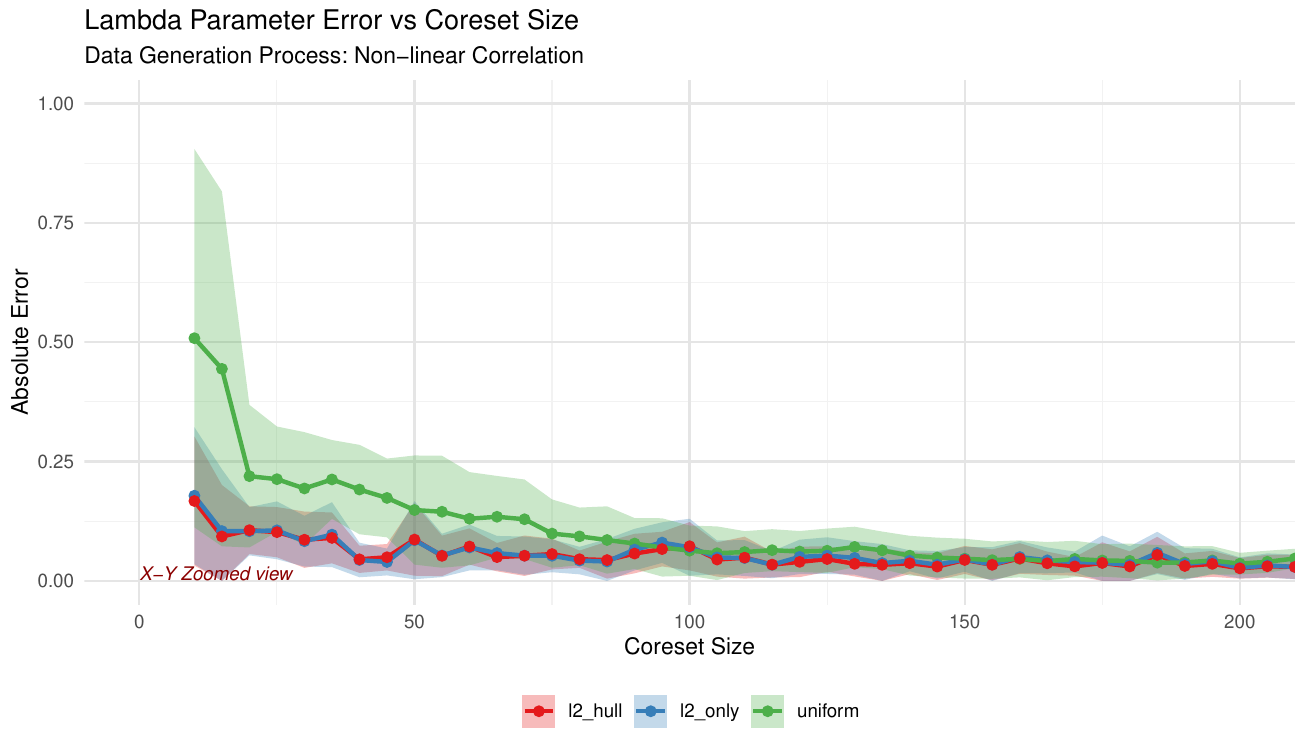}
        
    \end{subfigure}
    
    \vspace{0.5cm}
    \begin{subfigure}[b]{0.3\textwidth}
        \includegraphics[width=\textwidth]{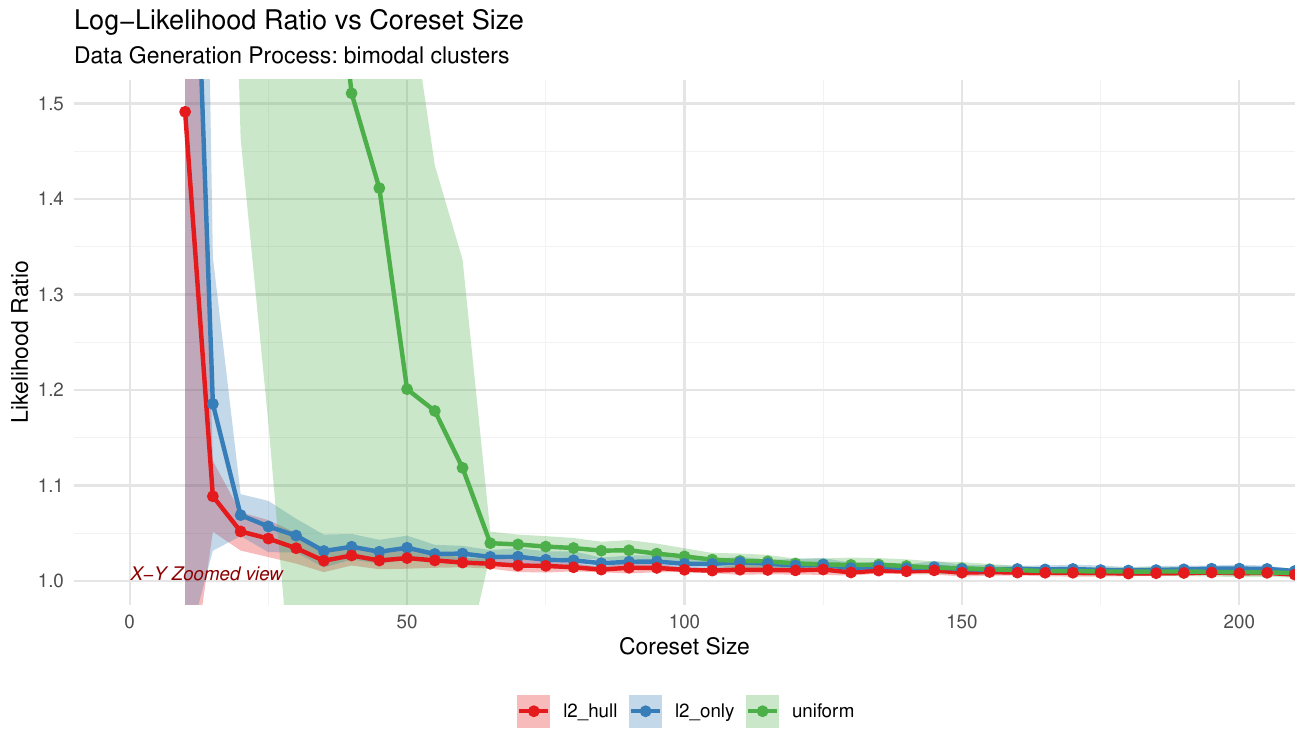}
    \end{subfigure}
    \hfill
    \begin{subfigure}[b]{0.3\textwidth}
        \includegraphics[width=\textwidth]{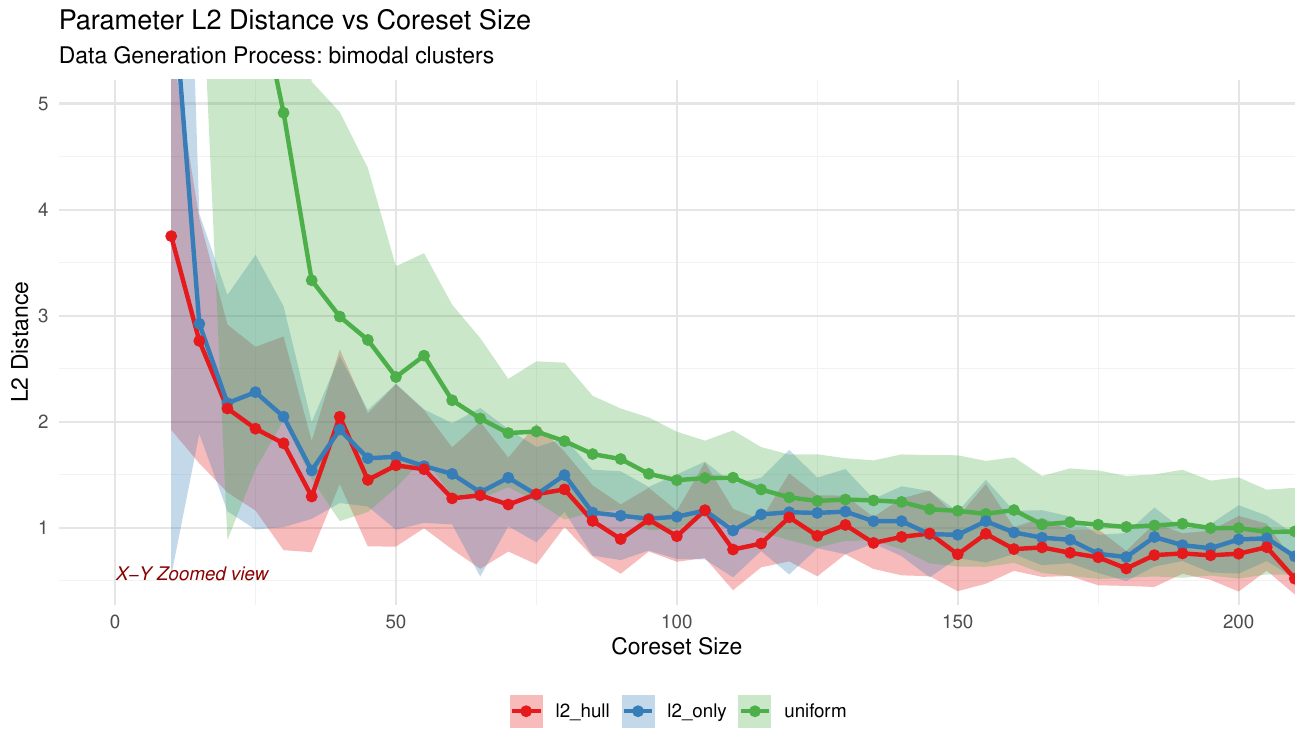}
        
    \end{subfigure}
    \hfill
    \begin{subfigure}[b]{0.3\textwidth}
        \includegraphics[width=\textwidth]{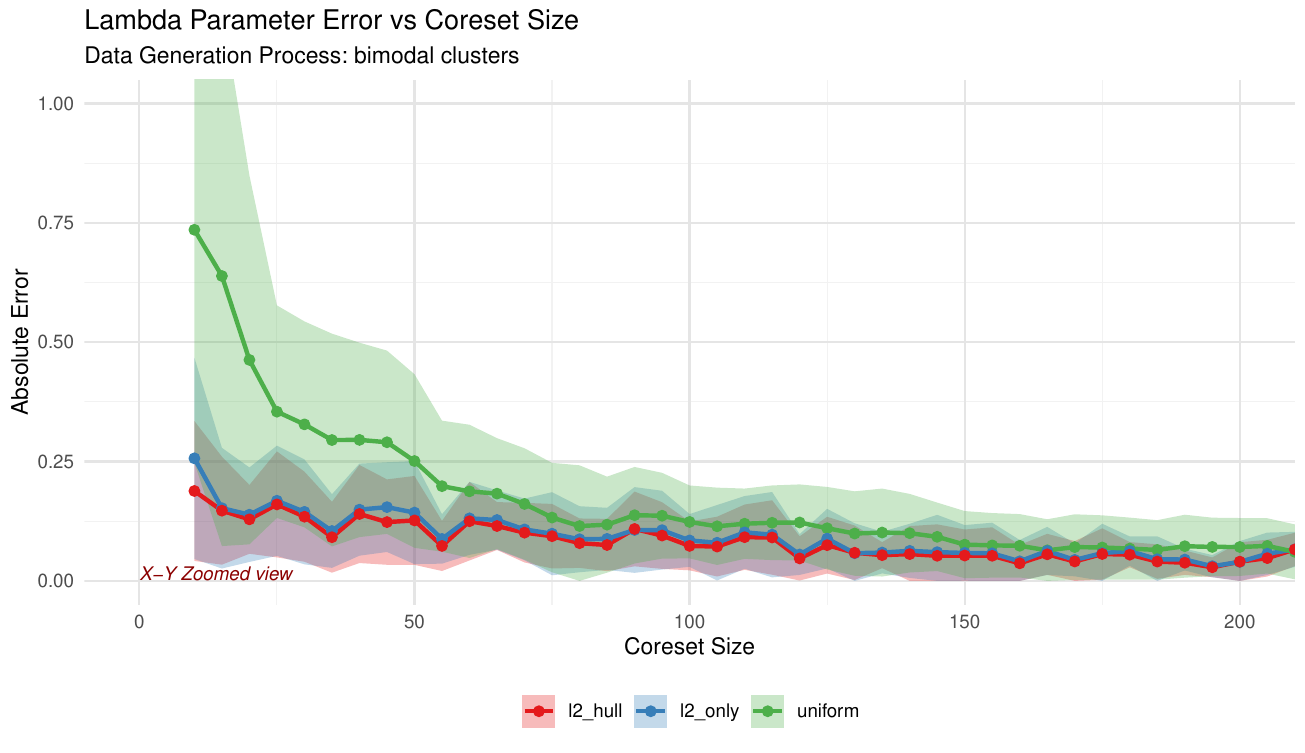}
        
    \end{subfigure}
    
    \caption{Convergence of the likelihood ratio, parameter error, and $\lambda$ error as coreset size increases. First row: Bivariate mixture data of two Gaussian distributions with different means and variances.
    Second row: Non-linear correlation. 
    Third row: Bimodal clusters with two distinct clusters with opposing correlation structure.}
    \label{fig:simulation_results1}
\end{figure}

\begin{figure}[htbp]
    \centering

    \begin{subfigure}[b]{0.3\textwidth}
        \includegraphics[width=\textwidth]{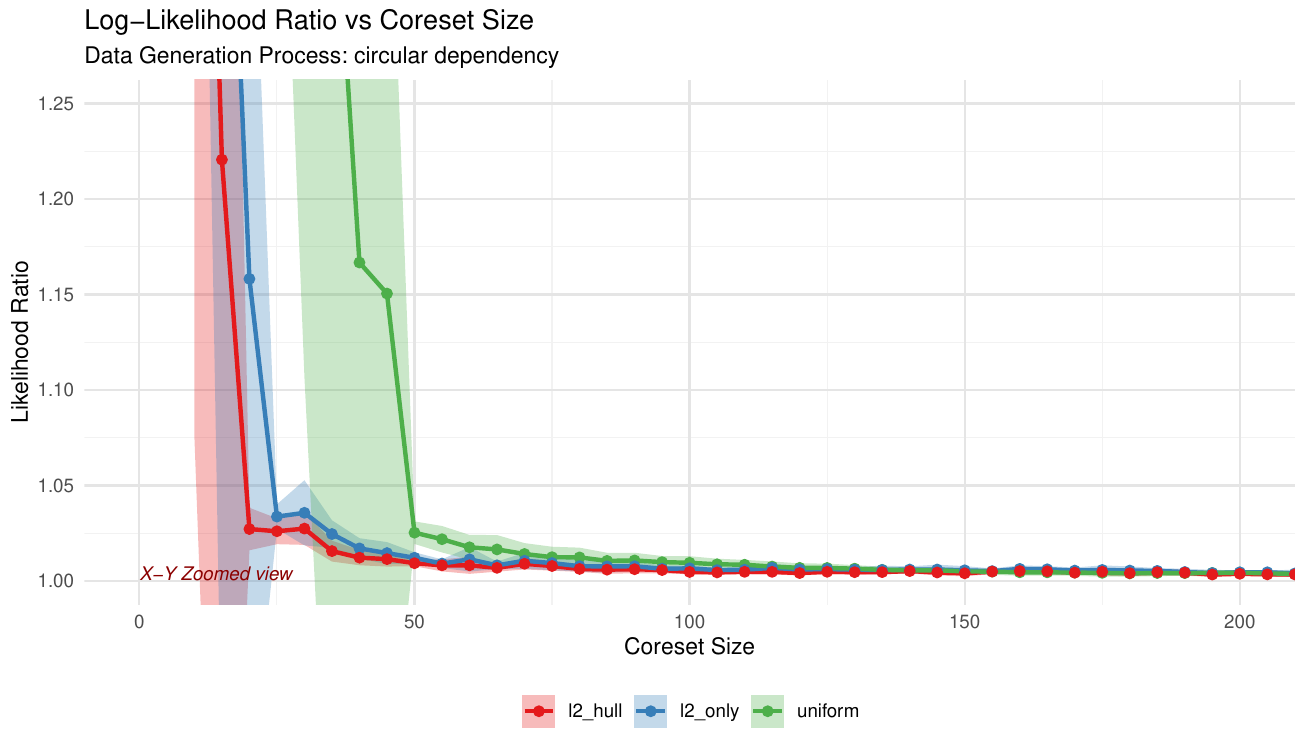}
    
    \end{subfigure}
    \hfill
    \begin{subfigure}[b]{0.3\textwidth}
        \includegraphics[width=\textwidth]{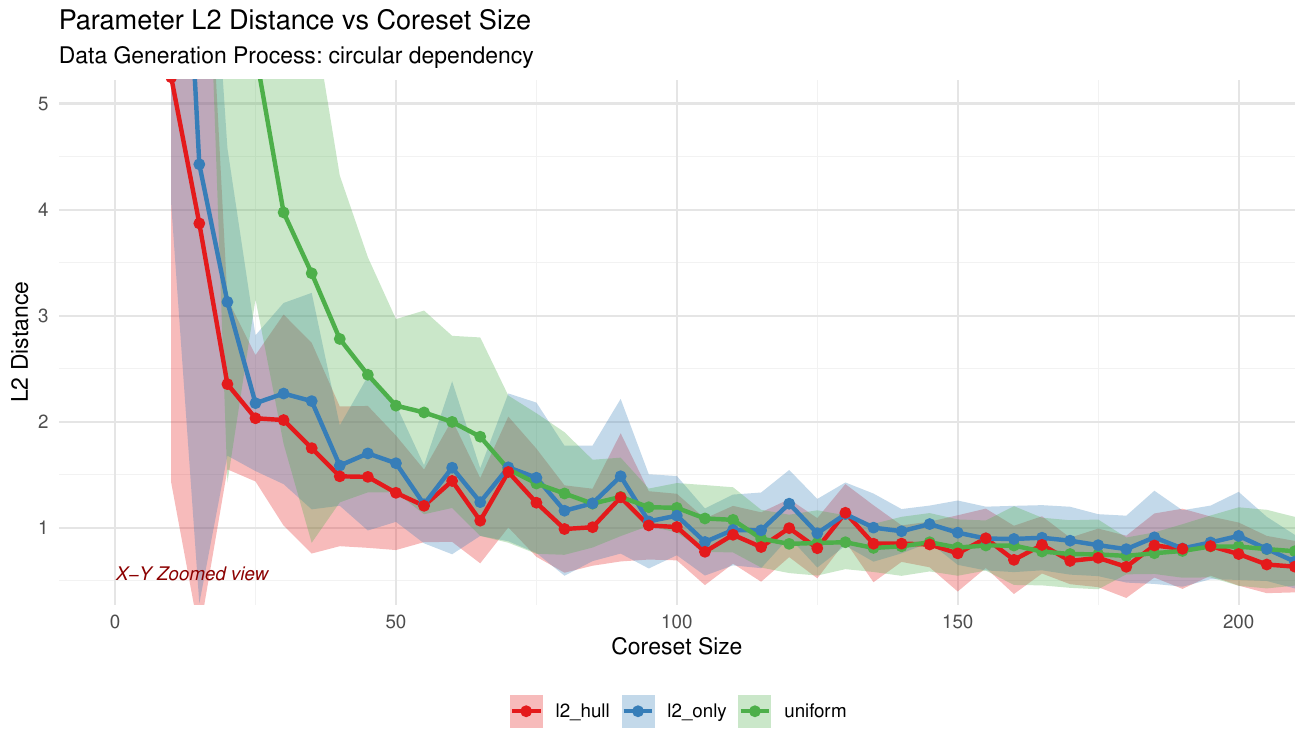}
    
    \end{subfigure}
    \hfill
    \begin{subfigure}[b]{0.3\textwidth}
        \includegraphics[width=\textwidth]{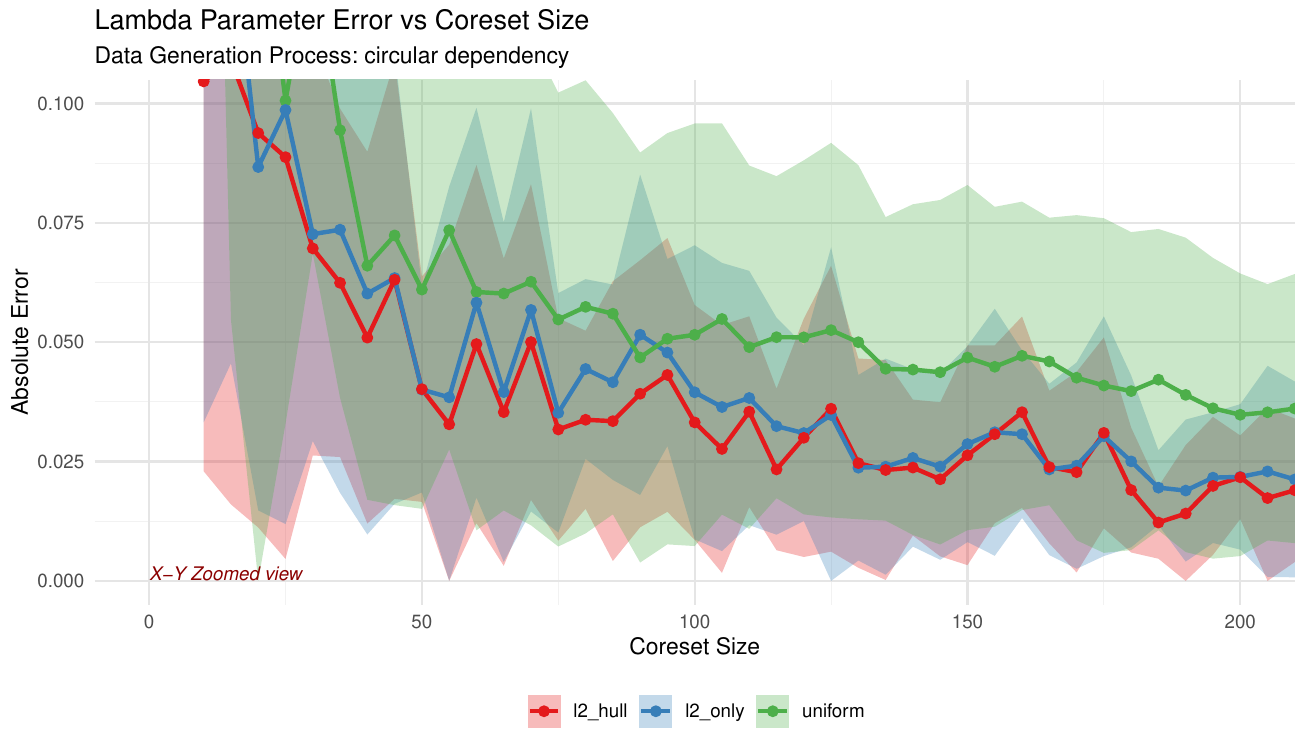}
    
    \end{subfigure}

    \vspace{0.5cm}
    \begin{subfigure}[b]{0.3\textwidth}
        \includegraphics[width=\textwidth]{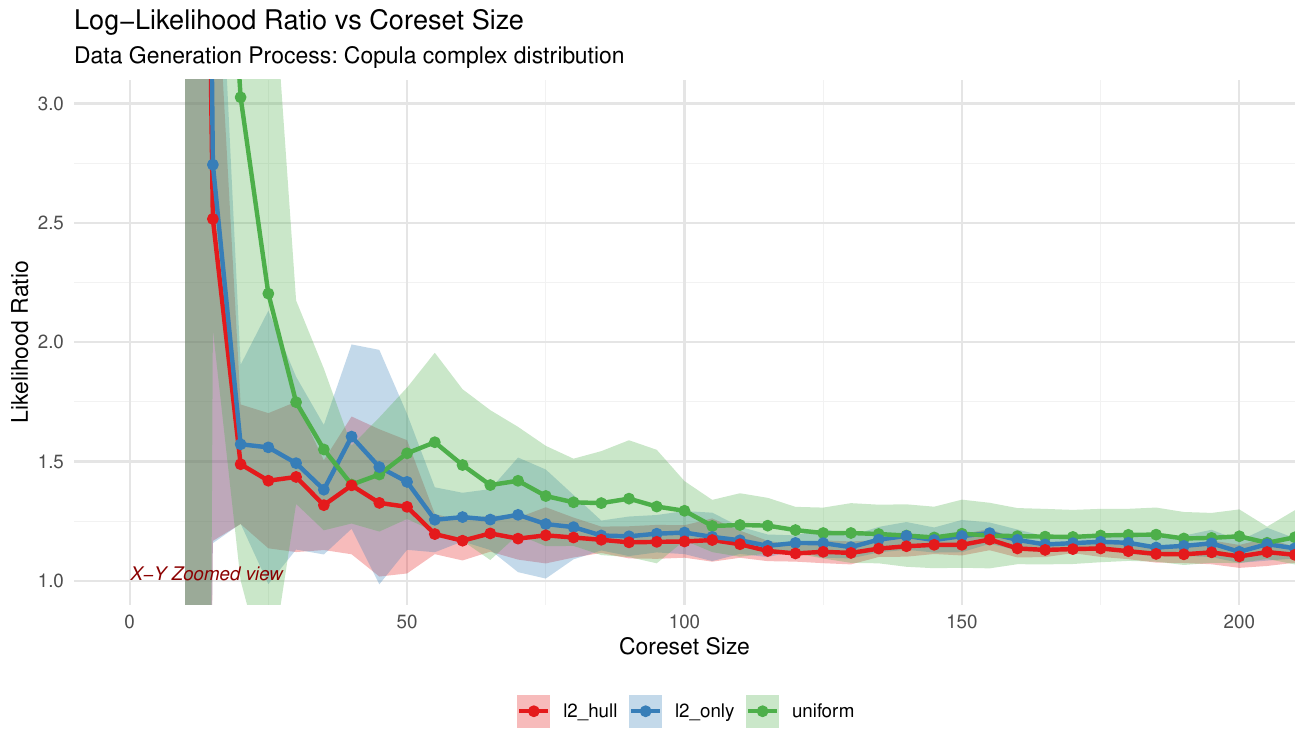}
    
    \end{subfigure}
    \hfill
    \begin{subfigure}[b]{0.3\textwidth}
        \includegraphics[width=\textwidth]{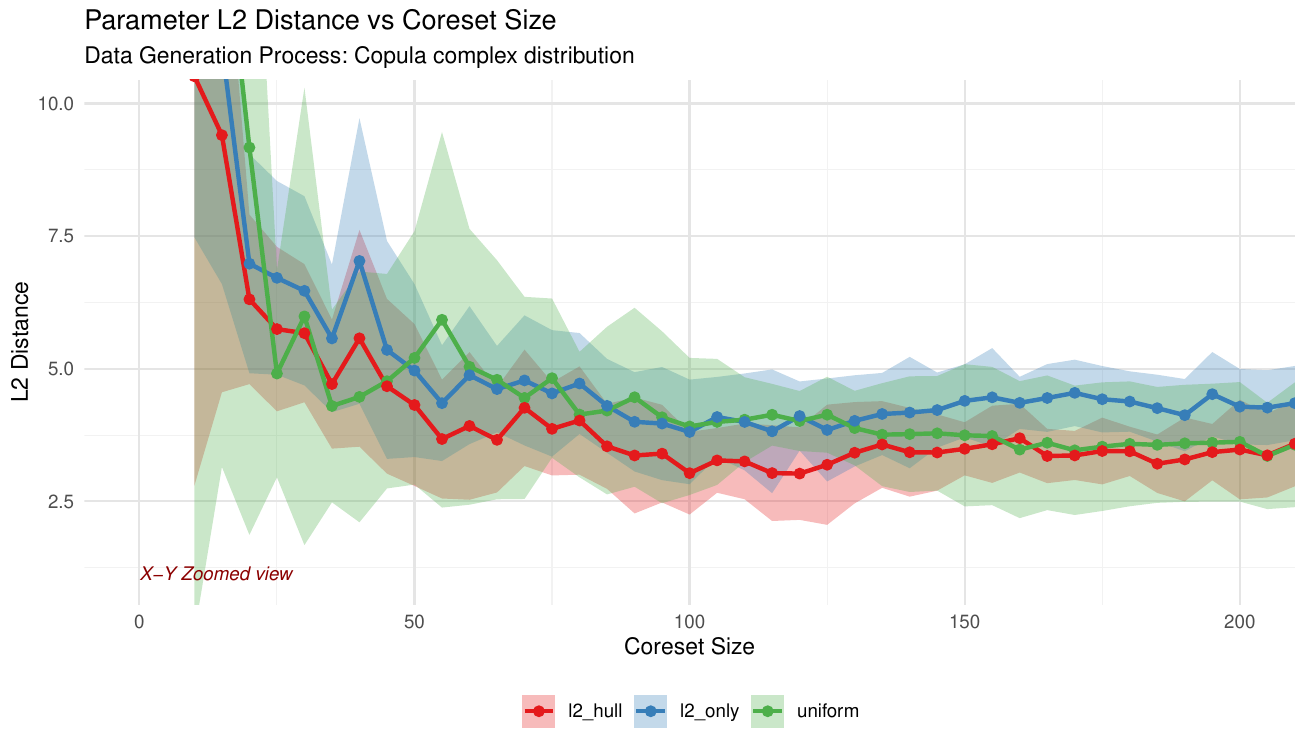}
    
    \end{subfigure}
    \hfill
    \begin{subfigure}[b]{0.3\textwidth}
        \includegraphics[width=\textwidth]{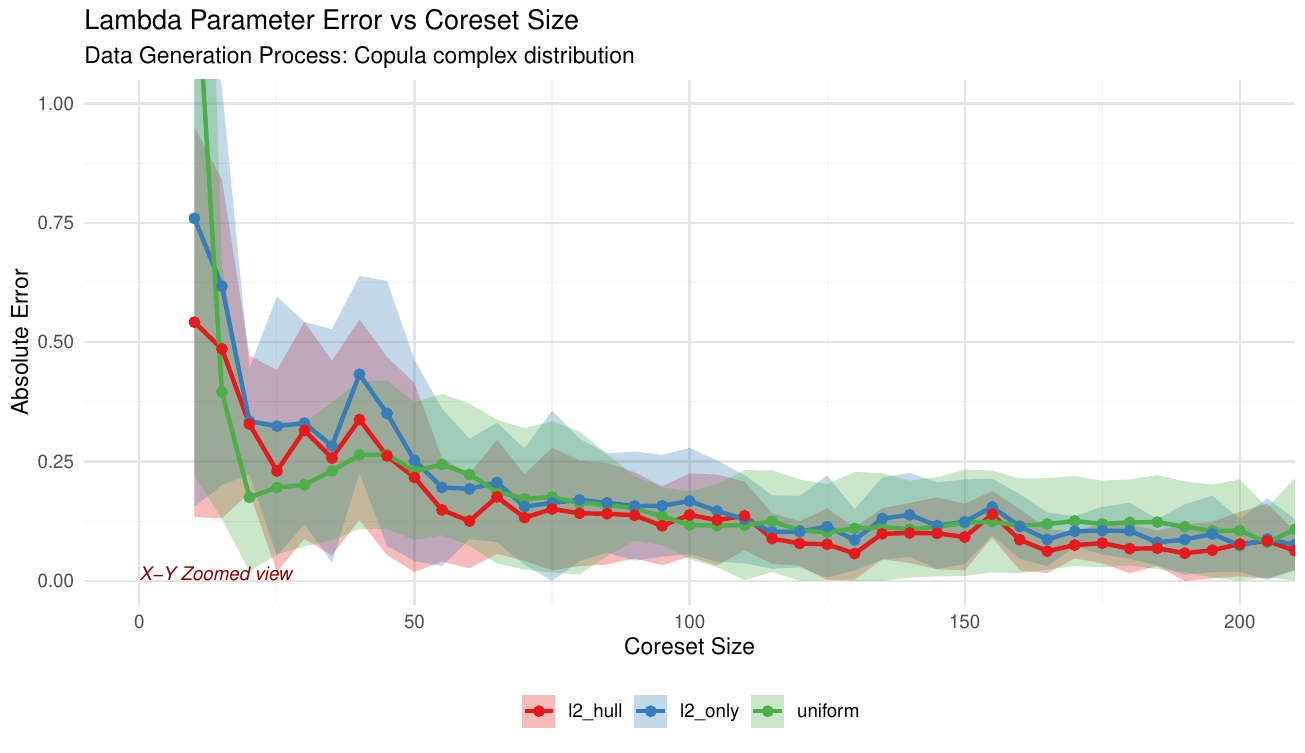}
        
    \end{subfigure}

    \vspace{0.5cm}
    \begin{subfigure}[b]{0.3\textwidth}
        \includegraphics[width=\textwidth]{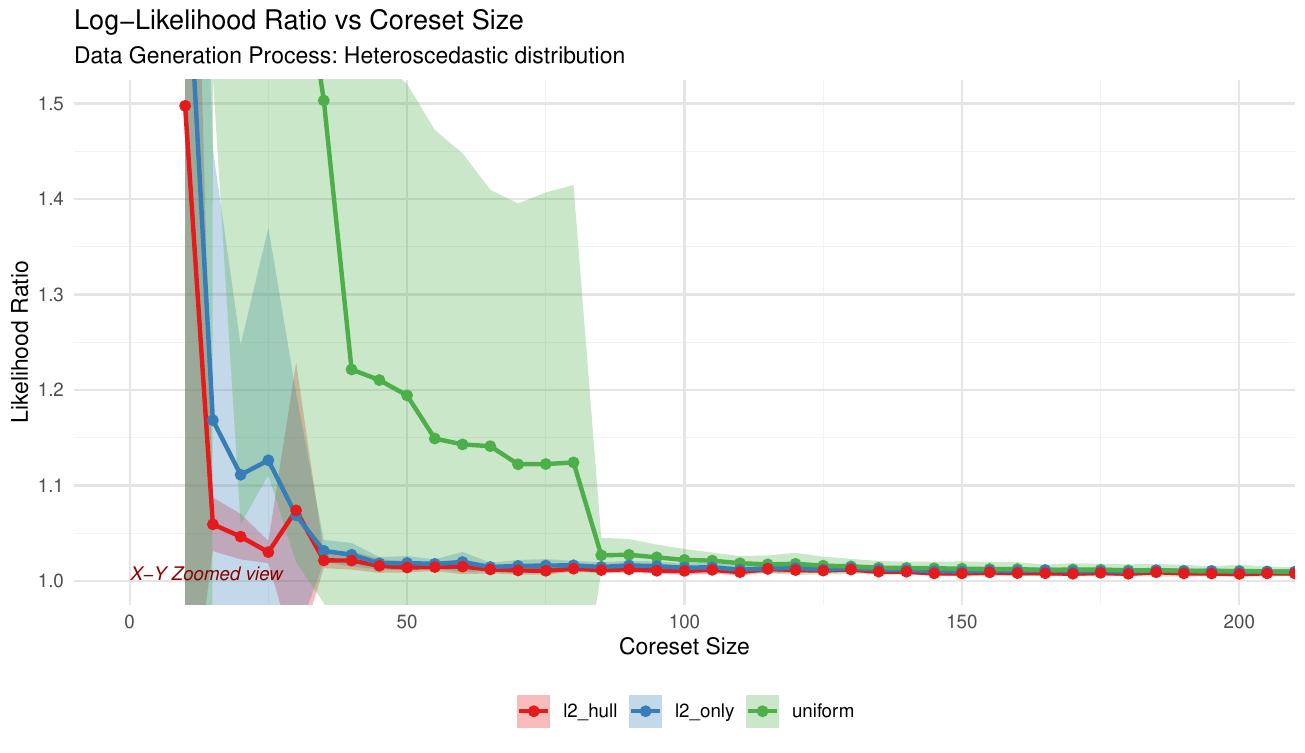}
        
    \end{subfigure}
    \hfill
    \begin{subfigure}[b]{0.3\textwidth}
        \includegraphics[width=\textwidth]{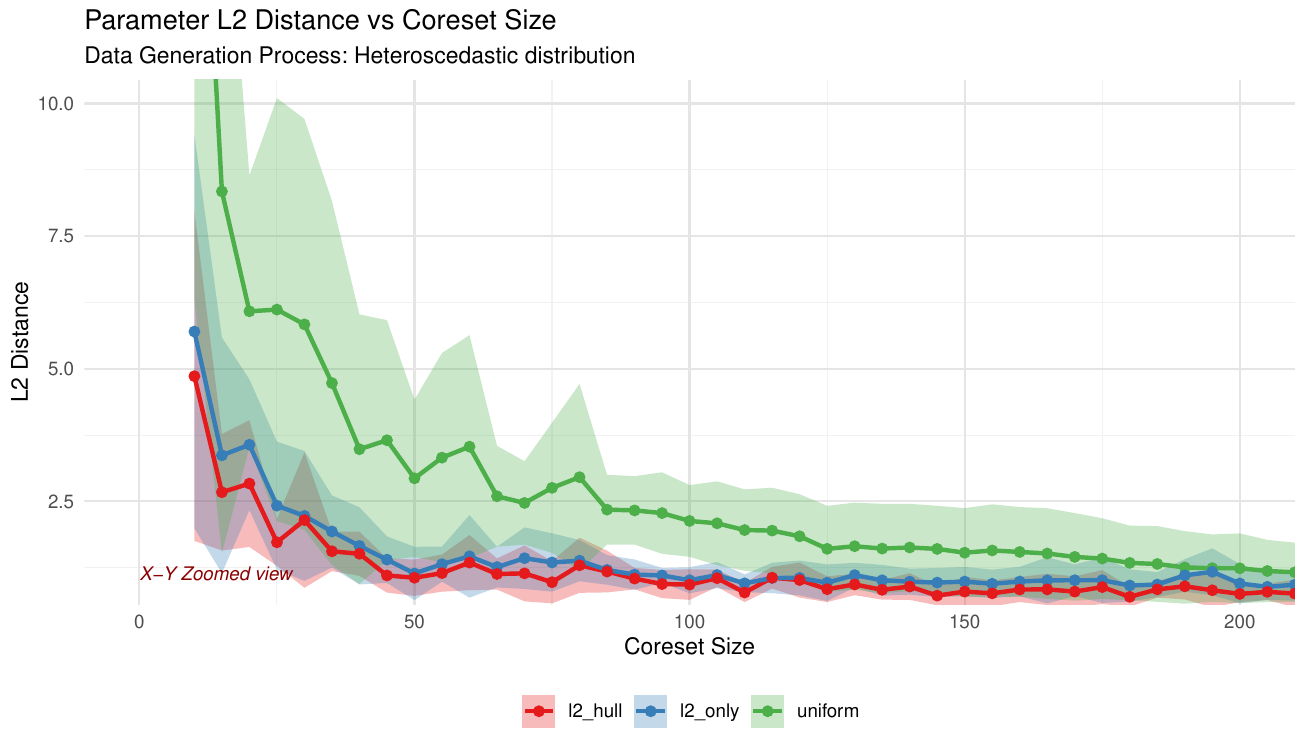}
        
    \end{subfigure}
    \hfill
    \begin{subfigure}[b]{0.3\textwidth}
        \includegraphics[width=\textwidth]{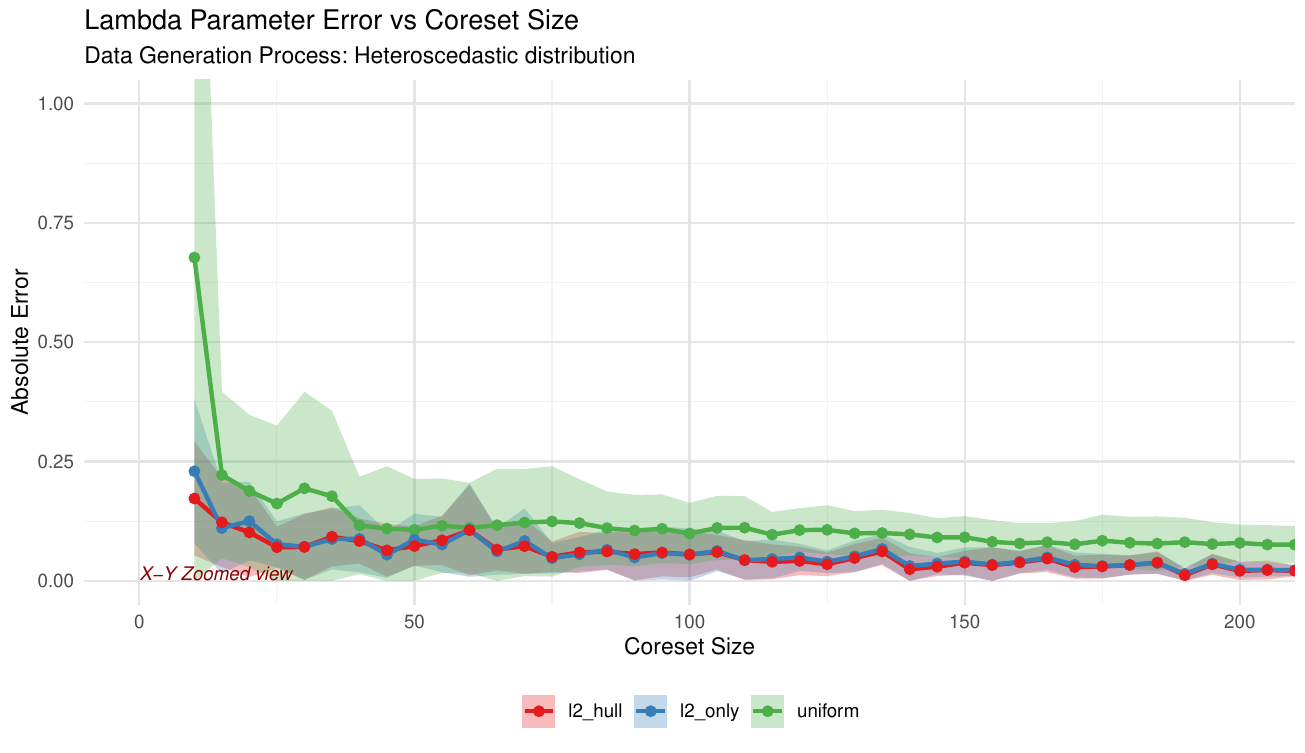}
       
    \end{subfigure}
\caption{Convergence of the likelihood ratio, parameter error, and $\lambda$ error as coreset size increases. 
First row: circular-dependency. Data points are distributed along circular trajectories, demonstrating a circular dependency structure between variables.  Second row: copula complex. A copula construct with tail dependence and non-linear correlation is used. Third row: heteroscedastic distribution. The conditional variance of the data varies with the level of the independent variable, reflecting heteroscedasticity.
}
    \label{fig:simulation_results2}
\end{figure}

\paragraph{Summary} With such multi-scenario simulations, we can systematically examine the proposed MCTM coreset scheme under different distributions. Detailed comparative graphs and numerical analyses showing the combined advantages of the new method in terms of likelihood approximation accuracy, parameter estimation bias, and time efficiency are presented in the experimental results below.

\begin{table}[htbp]
\centering
\caption{Performance comparison of different coreset methods for various data generation processes (coreset Size = 30)}
\label{tab:performance_comparison_30}
\resizebox{\textwidth}{!}{
\begin{tabular}{llccccc}
\toprule

\textbf{Data Gen. Process} & \textbf{Method} & \textbf{Param. $\ell_2$ dist.} & \textbf{$\lambda$ error} & \textbf{Likelihood ratio} & \textbf{Rel. Impr.(\%)} & \textbf{Total time(s)} \\
\midrule
\multirow{3}{*}{Bivariate normal} 
& $\ell_2$-hull & $2.56 \pm 0.76$ & $0.44 \pm 0.16$ & $1.54 \pm 0.29$ & 12.8 & $0.21 \pm 0.03$ \\
& $\ell_2$-only & $\mathbf{2.54 \pm 0.62}$ & $0.51 \pm 0.13$ & $1.65 \pm 0.33$ & 1.6 & $0.20 \pm 0.02$ \\
& uniform & $4.91 \pm 4.74$ & $\mathbf{0.29 \pm 0.26}$ & $1.94 \pm 1.13$ & baseline & $\mathbf{0.13 \pm 0.03}$ \\

\midrule

\multirow{3}{*}{Non-linear correlation}
& $\ell_2$-hull & $\mathbf{1.76 \pm 0.88}$ & $\mathbf{0.09 \pm 0.09}$ & $\mathbf{1.03 \pm 0.01}$ & \textbf{49.8} & $0.19 \pm 0.02$ \\
& $\ell_2$-only & $2.18 \pm 0.82$ & $0.10 \pm 0.10$ & $1.05 \pm 0.02$ & 38.8 & $0.19 \pm 0.01$ \\
& uniform & $3.08 \pm 0.91$ & $0.11 \pm 0.07$ & $1.21 \pm 0.46$ & baseline & $\mathbf{0.12 \pm 0.01}$ \\

\midrule

\multirow{3}{*}{Bivariate normal mixture}
& $\ell_2$-hull & $\mathbf{2.60 \pm 1.00}$ & $\mathbf{0.14 \pm 0.07}$ & $\mathbf{1.07 \pm 0.03}$ & \textbf{49.6} & $0.20 \pm 0.03$ \\
& $\ell_2$-only & $2.76 \pm 0.94$ & $0.16 \pm 0.10$ & $1.08 \pm 0.05$ & 43.7 & $0.21 \pm 0.04$ \\
& uniform & $4.14 \pm 1.80$ & $0.20 \pm 0.15$ & $1.42 \pm 0.34$ & baseline & $\mathbf{0.15 \pm 0.04}$ \\

\midrule

\multirow{3}{*}{Geometric Mixed Distribution}
& $\ell_2$-hull & $\mathbf{2.51 \pm 2.61}$ & $\mathbf{0.06 \pm 0.04}$ & $\mathbf{1.33 \pm 0.53}$ & \textbf{41.4} & $0.20 \pm 0.04$ \\
& $\ell_2$-only & $4.09 \pm 4.35$ & $0.07 \pm 0.03$ & $1.58 \pm 0.58$ & 9.0 & $0.20 \pm 0.02$ \\
& uniform & $4.11 \pm 2.49$ & $0.14 \pm 0.09$ & $1.47 \pm 0.53$ & baseline & $0.13 \pm 0.03$ \\

\midrule

\multirow{3}{*}{Skew-t distribution}
& $\ell_2$-hull & $\mathbf{3.55 \pm 1.46}$ & $0.61 \pm 0.27$ & $\mathbf{1.90 \pm 0.70}$ & 0 & $0.23 \pm 0.03$ \\
& $\ell_2$-only & $3.72 \pm 1.36$ & $0.70 \pm 0.26$ & $2.14 \pm 0.91$ & 0 & $0.23 \pm 0.09$ \\
& uniform & $4.14 \pm 2.04$ & $\mathbf{0.36 \pm 0.31}$ & $2.17 \pm 0.96$ & baseline & $\mathbf{0.15 \pm 0.07}$ \\

\midrule

\multirow{3}{*}{Heteroscedastic distribution}
& $\ell_2$-hull & $\mathbf{2.07 \pm 0.69}$ & $\mathbf{0.08 \pm 0.05}$ & $\mathbf{1.07 \pm 0.12}$ & \textbf{64.8} & $0.20 \pm 0.07$ \\
& $\ell_2$-only & $2.65 \pm 0.88$ & $\mathbf{0.08 \pm 0.05}$ & $1.09 \pm 0.13$ & 58.4 & $0.20 \pm 0.02$ \\
& uniform & $4.47 \pm 3.38$ & $0.16 \pm 0.14$ & $1.63 \pm 1.19$ & baseline & $\mathbf{0.13 \pm 0.02}$ \\

\midrule

\multirow{3}{*}{Copula complex distribution}
& $\ell_2$-hull & $\mathbf{4.56 \pm 1.57}$ & $\mathbf{0.18 \pm 0.13}$ & $\mathbf{1.23 \pm 0.20}$ & \textbf{58.0} & $0.26 \pm 0.04$ \\
& $\ell_2$-only & $5.30 \pm 1.48$ & $0.20 \pm 0.15$ & $1.36 \pm 0.29$ & 51.1 & $0.28 \pm 0.08$ \\
& uniform & $11.05 \pm 7.80$ & $0.27 \pm 0.22$ & $2.35 \pm 0.56$ & baseline & $\mathbf{0.18 \pm 0.06}$ \\

\midrule

\multirow{3}{*}{Spiral dependency}
& $\ell_2$-hull & $\mathbf{1.86 \pm 0.55}$ & $\mathbf{0.04 \pm 0.03}$ & $\mathbf{1.04 \pm 0.01}$ & \textbf{79.5} & $0.21 \pm 0.03$ \\
& $\ell_2$-only & $2.38 \pm 0.80$ & $0.06 \pm 0.05$ & $1.07 \pm 0.02$ & 71.9 & $0.22 \pm 0.07$ \\
& uniform & $6.16 \pm 3.50$ & $0.15 \pm 0.09$ & $2.05 \pm 0.60$ & baseline & $\mathbf{0.21 \pm 0.22}$ \\

\midrule

\multirow{3}{*}{Circular dependency}
& $\ell_2$-hull & $\mathbf{1.51 \pm 0.39}$ & $\mathbf{0.06 \pm 0.04}$ & $\mathbf{1.02 \pm 0.01}$ & \textbf{59.3} & $0.20 \pm 0.02$ \\
& $\ell_2$-only & $1.95 \pm 0.62$ & $\mathbf{0.06 \pm 0.04}$ & $1.12 \pm 0.29$ & 46.3 & $0.21 \pm 0.03$ \\
& uniform & $4.11 \pm 3.24$ & $0.08 \pm 0.08$ & $1.33 \pm 0.73$ & baseline & $\mathbf{0.14 \pm 0.03}$ \\

\midrule

\multirow{3}{*}{t Copula}
& $\ell_2$-hull & $\mathbf{5.77 \pm 1.61}$ & $0.51 \pm 0.30$ & $\mathbf{1.87 \pm 0.62}$ & 0 & $0.26 \pm 0.03$ \\
& $\ell_2$-only & $7.18 \pm 1.63$ & $0.60 \pm 0.29$ & $1.98 \pm 0.63$ & 0 & $0.24 \pm 0.03$ \\
& uniform & $6.58 \pm 3.35$ & $\mathbf{0.23 \pm 0.22}$ & $2.56 \pm 0.92$ & baseline & $\mathbf{0.23 \pm 0.25}$ \\

\midrule

\multirow{3}{*}{Piecewise dependency}
& $\ell_2$-hull & $\mathbf{2.20 \pm 0.99}$ & $\mathbf{0.30 \pm 0.13}$ & $\mathbf{1.11 \pm 0.19}$ & \textbf{58.5} & $0.21 \pm 0.04$ \\
& $\ell_2$-only & $2.38 \pm 0.83$ & $0.35 \pm 0.15$ & $\mathbf{1.11 \pm 0.15}$ & 53.4 & $0.19 \pm 0.02$ \\
& uniform & $5.48 \pm 4.11$ & $0.46 \pm 0.41$ & $1.53 \pm 0.74$ & baseline & $\mathbf{0.13 \pm 0.03}$ \\

\midrule

\multirow{3}{*}{Hourglass dependency}
& $\ell_2$-hull & $1.99 \pm 1.19$ & $\mathbf{0.14 \pm 0.07}$ & $\mathbf{1.04 \pm 0.02}$ & \textbf{55.8} & $0.19 \pm 0.03$ \\
& $\ell_2$-only & $\mathbf{1.85 \pm 1.04}$ & $0.15 \pm 0.08$ & $1.06 \pm 0.03$ & 51.7 & $0.18 \pm 0.01$ \\
& uniform & $5.00 \pm 3.21$ & $0.16 \pm 0.15$ & $1.65 \pm 0.69$ & baseline & $\mathbf{0.12 \pm 0.01}$ \\

\midrule

\multirow{3}{*}{Bimodal clusters}
& $\ell_2$-hull & $\mathbf{2.14 \pm 0.47}$ & $\mathbf{0.15 \pm 0.09}$ & $\mathbf{1.04 \pm 0.01}$ & \textbf{58.5} & $0.19 \pm 0.02$ \\
& $\ell_2$-only & $2.61 \pm 0.66$ & $0.17 \pm 0.09$ & $1.06 \pm 0.02$ & 51.7 & $0.18 \pm 0.02$ \\
& uniform & $4.43 \pm 3.05$ & $0.22 \pm 0.16$ & $1.61 \pm 0.74$ & baseline & $\mathbf{0.14 \pm 0.04}$ \\

\midrule

\multirow{3}{*}{Sinusoidal dependency}
& $\ell_2$-hull & $\mathbf{1.42 \pm 0.48}$ & $\mathbf{0.05 \pm 0.05}$ & $\mathbf{1.02 \pm 0.01}$ & \textbf{38.6} & $0.18 \pm 0.03$ \\
& $\ell_2$-only & $1.66 \pm 0.50$ & $0.09 \pm 0.08$ & $1.03 \pm 0.01$ & 16.8 & $0.19 \pm 0.02$ \\
& uniform & $1.91 \pm 0.74$ & $0.10 \pm 0.06$ & $1.04 \pm 0.01$ & baseline & $\mathbf{0.12 \pm 0.01}$ \\

\bottomrule
\end{tabular}
}
\begin{tablenotes}
\small
\item Note: The results in the table are the mean $\pm 1$ standard deviation of independent simulations. The Relative Improvement is calculated as the average percentage improvement across all metrics, where improvement for parameter $\ell_2$ distance and $\lambda$ error is $(\text{baseline} - \text{method})/\text{baseline} \times 100\%$, and for likelihood ratio it is $(|\text{baseline}-1| - |\text{method}-1|)/|\text{baseline}-1| \times 100\%$. The parameters $\ell_2$ distance and $\lambda$ error are better when smaller, and the log-likelihood ratio is better when closer to 1. Negative relative improvements are shown as 0. The best performance for each metric in each data generation process is highlighted in \textbf{bold}. 
\end{tablenotes}
\end{table}

\begin{table}[htbp]
\centering
\caption{Performance comparison of different coreset methods for various data generation processes (coreset Size = 100)}
\label{tab:performance_comparison_100}
\resizebox{\textwidth}{!}{
\begin{tabular}{llccccc}
\toprule
\textbf{Data Gen. Process} & \textbf{Method} & \textbf{Param. $\ell_2$ dist.} & \textbf{$\lambda$ error} & \textbf{Likelihood ratio} & \textbf{Rel. Impr.(\%)} & \textbf{Total time(s)} \\
\midrule
\multirow{3}{*}{Bivariate normal} 
& $\ell_2$-hull & $\mathbf{1.80 \pm 0.42}$ & $0.30 \pm 0.07$ & $1.32 \pm 0.11$ & 0 & $0.23 \pm 0.02$ \\
& $\ell_2$-only & $2.01 \pm 0.36$ & $0.39 \pm 0.09$ & $1.48 \pm 0.17$ & 0 & $0.24 \pm 0.04$ \\
& uniform & $1.98 \pm 0.66$ & $\mathbf{0.11 \pm 0.06}$ & $\mathbf{1.19 \pm 0.08}$ & baseline & $\mathbf{0.13 \pm 0.02}$ \\

\midrule

\multirow{3}{*}{Non-linear correlation}
& $\ell_2$-hull & $\mathbf{1.05 \pm 0.42}$ & $\mathbf{0.03 \pm 0.03}$ & $\mathbf{1.01 \pm 0.00}$ & \textbf{52.6} & $0.20 \pm 0.02$ \\
& $\ell_2$-only & $1.20 \pm 0.53$ & $\mathbf{0.03 \pm 0.03}$ & $\mathbf{1.01 \pm 0.01}$ & 38.8 & $0.22 \pm 0.02$ \\
& uniform & $1.79 \pm 0.79$ & $0.08 \pm 0.05$ & $1.02 \pm 0.01$ & baseline & $\mathbf{0.11 \pm 0.02}$ \\

\midrule

\multirow{3}{*}{Bivariate normal mixture}
& $\ell_2$-hull & $1.54 \pm 0.42$ & $\mathbf{0.04 \pm 0.04}$ & $\mathbf{1.06 \pm 0.01}$ & \textbf{34.6} & $0.30 \pm 0.22$ \\
& $\ell_2$-only & $\mathbf{1.53 \pm 0.45}$ & $0.07 \pm 0.04$ & $\mathbf{1.06 \pm 0.02}$ & 26.4 & $0.21 \pm 0.02$ \\
& uniform & $1.99 \pm 0.58$ & $0.09 \pm 0.09$ & $1.09 \pm 0.04$ & baseline & $\mathbf{0.13 \pm 0.02}$ \\

\midrule

\multirow{3}{*}{Geometric Mixed Distribution}
& $\ell_2$-hull & $\mathbf{1.19 \pm 0.51}$ & $\mathbf{0.02 \pm 0.02}$ & $\mathbf{1.01 \pm 0.00}$ & \textbf{49.4} & $0.20 \pm 0.02$ \\
& $\ell_2$-only & $1.27 \pm 0.60$ & $0.03 \pm 0.02$ & $1.01 \pm 0.00$ & 42.1 & $0.21 \pm 0.03$ \\
& uniform & $1.75 \pm 0.66$ & $0.06 \pm 0.04$ & $1.02 \pm 0.01$ & baseline & $0.13 \pm 0.03$ \\

\midrule

\multirow{3}{*}{Skew-t distribution}
& $\ell_2$-hull & $\mathbf{2.04 \pm 0.42}$ & $0.26 \pm 0.14$ & $\mathbf{1.24 \pm 0.10}$ & 8.1 & $0.22 \pm 0.01$ \\
& $\ell_2$-only & $\mathbf{1.99 \pm 0.33}$ & $0.33 \pm 0.16$ & $1.28 \pm 0.12$ & 0 & $0.22 \pm 0.03$ \\
& uniform & $2.67 \pm 1.11$ & $\mathbf{0.20 \pm 0.15}$ & $1.34 \pm 0.25$ & baseline & $\mathbf{0.12 \pm 0.02}$ \\

\midrule

\multirow{3}{*}{Heteroscedastic distribution}
& $\ell_2$-hull & $\mathbf{1.06 \pm 0.27}$ & $\mathbf{0.04 \pm 0.03}$ & $\mathbf{1.01 \pm 0.00}$ & \textbf{27.6} & $0.24 \pm 0.08$ \\
& $\ell_2$-only & $1.34 \pm 0.40$ & $\mathbf{0.04 \pm 0.02}$ & $1.02 \pm 0.00$ & 11.3 & $0.21 \pm 0.02$ \\
& uniform & $1.46 \pm 0.84$ & $0.09 \pm 0.05$ & $1.01 \pm 0.01$ & baseline & $\mathbf{0.13 \pm 0.02}$ \\

\midrule

\multirow{3}{*}{Copula complex distribution}
& $\ell_2$-hull & $\mathbf{3.53 \pm 0.70}$ & $\mathbf{0.12 \pm 0.09}$ & $\mathbf{1.15 \pm 0.10}$ & \textbf{32.0} & $0.26 \pm 0.04$ \\
& $\ell_2$-only & $4.05 \pm 0.78$ & $0.13 \pm 0.08$ & $1.16 \pm 0.08$ & 24.1 & $0.26 \pm 0.03$ \\
& uniform & $3.75 \pm 1.42$ & $0.18 \pm 0.15$ & $1.34 \pm 0.27$ & baseline & $\mathbf{0.18 \pm 0.04}$ \\

\midrule

\multirow{3}{*}{Spiral dependency}
& $\ell_2$-hull & $\mathbf{1.37 \pm 0.43}$ & $\mathbf{0.02 \pm 0.02}$ & $\mathbf{1.01 \pm 0.01}$ & \textbf{34.4} & $0.25 \pm 0.08$ \\
& $\ell_2$-only & $1.52 \pm 0.37$ & $0.03 \pm 0.02$ & $1.03 \pm 0.01$ & 0 & $0.27 \pm 0.19$ \\
& uniform & $1.88 \pm 0.57$ & $0.04 \pm 0.03$ & $1.02 \pm 0.01$ & baseline & $\mathbf{0.13 \pm 0.01}$ \\

\midrule

\multirow{3}{*}{Circular dependency}
& $\ell_2$-hull & $\mathbf{0.94 \pm 0.36}$ & $\mathbf{0.02 \pm 0.01}$ & $\mathbf{1.01 \pm 0.00}$ & \textbf{54.2} & $0.21 \pm 0.01$ \\
& $\ell_2$-only & $1.05 \pm 0.20$ & $0.03 \pm 0.02$ & $\mathbf{1.01 \pm 0.00}$ & 43.9 & $0.21 \pm 0.01$ \\
& uniform & $1.83 \pm 0.66$ & $0.05 \pm 0.04$ & $1.01 \pm 0.01$ & baseline & $\mathbf{0.12 \pm 0.01}$ \\

\midrule

\multirow{3}{*}{t Copula}
& $\ell_2$-hull & $3.86 \pm 1.03$ & $0.28 \pm 0.25$ & $1.39 \pm 0.38$ & 0 & $0.27 \pm 0.04$ \\
& $\ell_2$-only & $4.71 \pm 1.26$ & $0.32 \pm 0.27$ & $1.49 \pm 0.50$ & 0 & $0.28 \pm 0.15$ \\
& uniform & $\mathbf{2.43 \pm 1.04}$ & $\mathbf{0.13 \pm 0.14}$ & $\mathbf{1.16 \pm 0.16}$ & baseline & $\mathbf{0.19 \pm 0.11}$ \\

\midrule

\multirow{3}{*}{Piecewise dependency}
& $\ell_2$-hull & $\mathbf{1.06 \pm 0.36}$ & $\mathbf{0.09 \pm 0.08}$ & $\mathbf{1.01 \pm 0.01}$ & \textbf{49.8} & $0.22 \pm 0.03$ \\
& $\ell_2$-only & $1.29 \pm 0.27$ & $0.12 \pm 0.11$ & $\mathbf{1.01 \pm 0.01}$ & 29.8 & $0.21 \pm 0.02$ \\
& uniform & $1.62 \pm 0.90$ & $0.15 \pm 0.16$ & $1.03 \pm 0.04$ & baseline & $\mathbf{0.12 \pm 0.02}$ \\

\midrule

\multirow{3}{*}{Hourglass dependency}
& $\ell_2$-hull & $\mathbf{0.96 \pm 0.33}$ & $\mathbf{0.08 \pm 0.09}$ & $\mathbf{1.01 \pm 0.00}$ & \textbf{46.0} & $0.21 \pm 0.02$ \\
& $\ell_2$-only & $0.97 \pm 0.40$ & $0.09 \pm 0.09$ & $1.02 \pm 0.01$ & 38.6 & $0.20 \pm 0.01$ \\
& uniform & $1.72 \pm 0.54$ & $0.12 \pm 0.09$ & $1.04 \pm 0.02$ & baseline & $\mathbf{0.11 \pm 0.01}$ \\

\midrule

\multirow{3}{*}{Bimodal clusters}
& $\ell_2$-hull & $\mathbf{0.95 \pm 0.24}$ & $\mathbf{0.05 \pm 0.05}$ & $\mathbf{1.01 \pm 0.00}$ & \textbf{46.5} & $0.20 \pm 0.01$ \\
& $\ell_2$-only & $1.03 \pm 0.32$ & $0.06 \pm 0.04$ & $1.02 \pm 0.00$ & 34.9 & $0.20 \pm 0.01$ \\
& uniform & $1.63 \pm 0.53$ & $0.11 \pm 0.10$ & $1.02 \pm 0.01$ & baseline & $\mathbf{0.12 \pm 0.01}$ \\

\midrule

\multirow{3}{*}{Sinusoidal dependency}
& $\ell_2$-hull & $\mathbf{1.24 \pm 0.39}$ & $\mathbf{0.03 \pm 0.03}$ & $\mathbf{1.01 \pm 0.00}$ & \textbf{42.0} & $0.20 \pm 0.01$ \\
& $\ell_2$-only & $1.28 \pm 0.44$ & $0.07 \pm 0.04$ & $\mathbf{1.01 \pm 0.01}$ & 15.6 & $0.21 \pm 0.01$ \\
& uniform & $1.36 \pm 0.40$ & $0.07 \pm 0.04$ & $1.02 \pm 0.01$ & baseline & $\mathbf{0.11 \pm 0.01}$ \\

\bottomrule
\end{tabular}
}
\begin{tablenotes}
\small
\item Note: The results in the table are the mean $\pm 1$ standard deviation of independent simulations. The Relative Improvement is calculated as the average percentage improvement across all metrics, where improvement for parameter $\ell_2$ distance and $\lambda$ error is $(\text{baseline} - \text{method})/\text{baseline} \times 100\%$, and for likelihood ratio it is $(|\text{baseline}-1| - |\text{method}-1|)/|\text{baseline}-1| \times 100\%$. The parameters $\ell_2$ distance and $\lambda$ error are better when smaller, and the log-likelihood ratio is better when closer to 1. Negative relative improvements are shown as 0. The best performance for each metric in each data generation process is highlighted in \textbf{bold}.
\end{tablenotes}
\end{table}

Figure~\ref{fig:simulation_results1} visualizes the performance specifically for three distributions: bivariate normal mixture, non-linear correlation, and bimodal cluster distribution. In Figure~\ref{fig:simulation_results1}, the red line corresponds to our proposed method combining $\ell_2$ subsampling with convex hull approximation, the blue line denotes the $\ell_2$ subsampling method alone, and the green line indicates the traditional uniform subsampling method. All simulations are based on an original dataset size of $10\,000$ points.

\begin{figure}[htbp]
  \centering
  \includegraphics[width=0.3\textwidth]{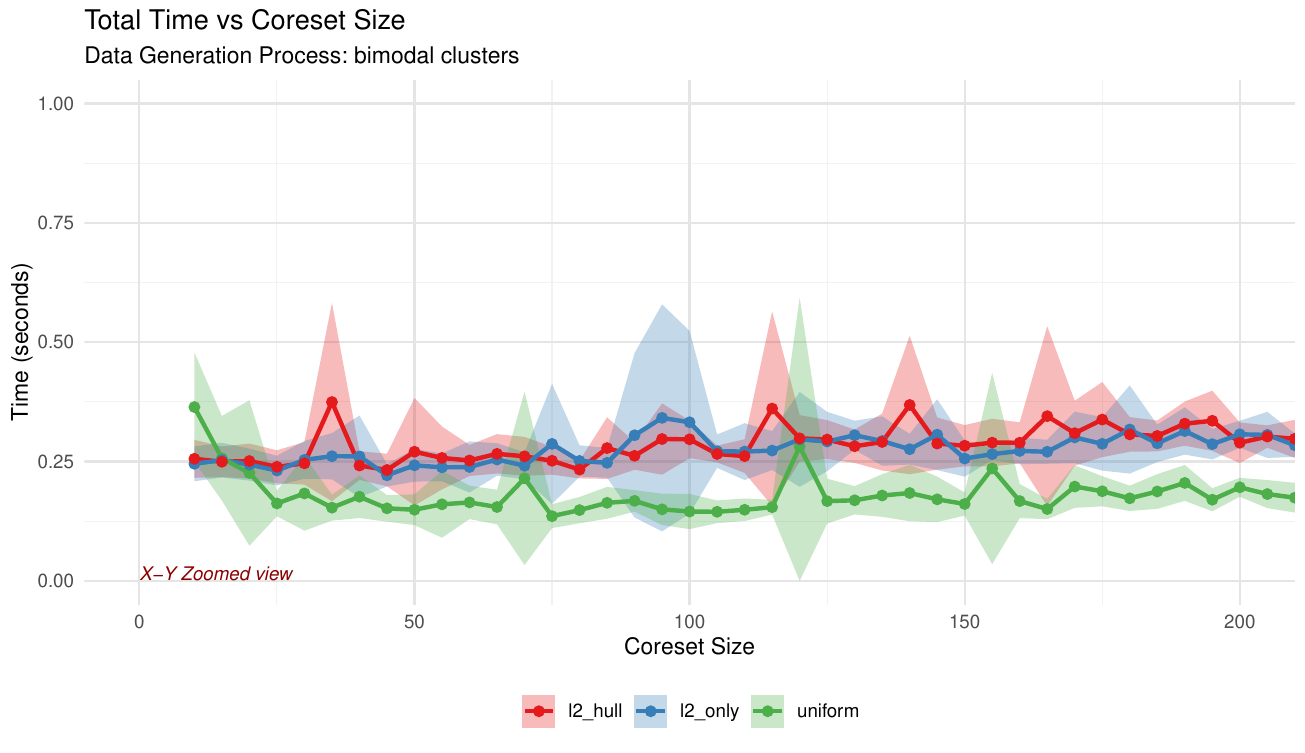}\hfill
  \includegraphics[width=0.3\textwidth]{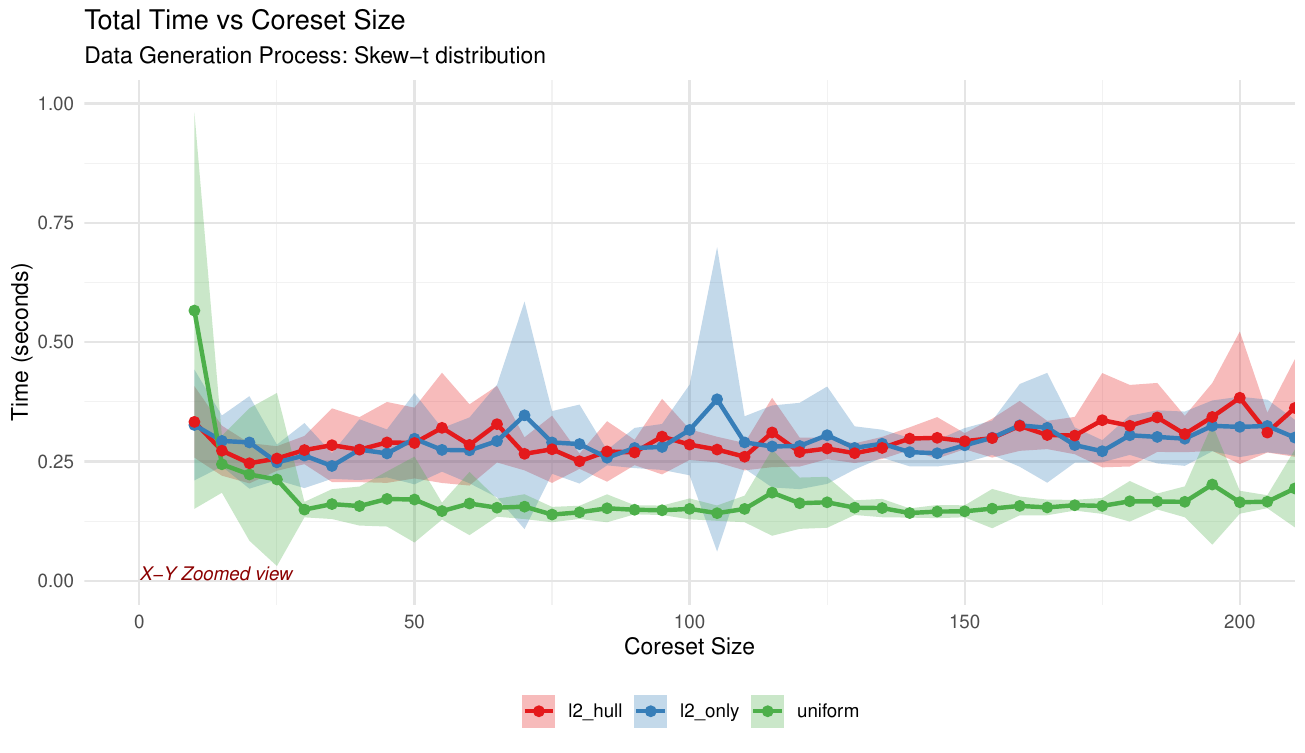}\hfill
  \includegraphics[width=0.3\textwidth]{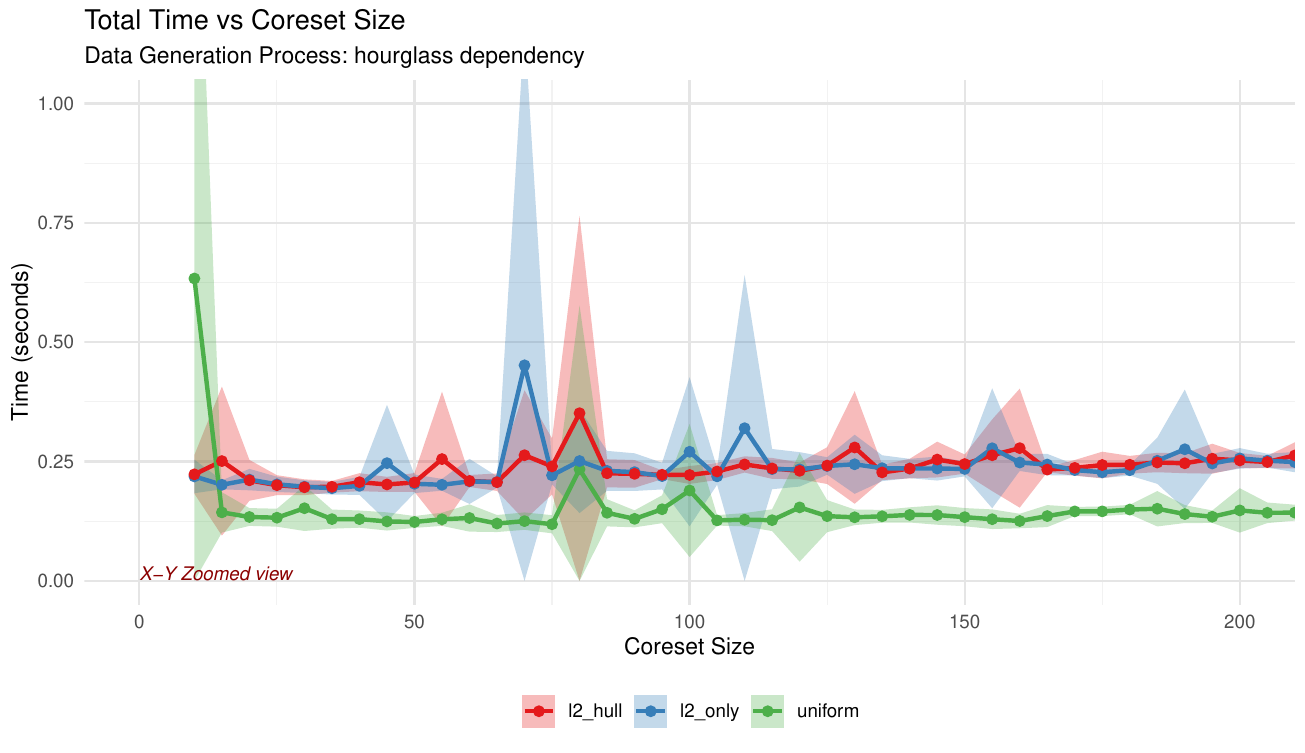}

  \vspace{0.5em}
  \includegraphics[width=0.3\textwidth]{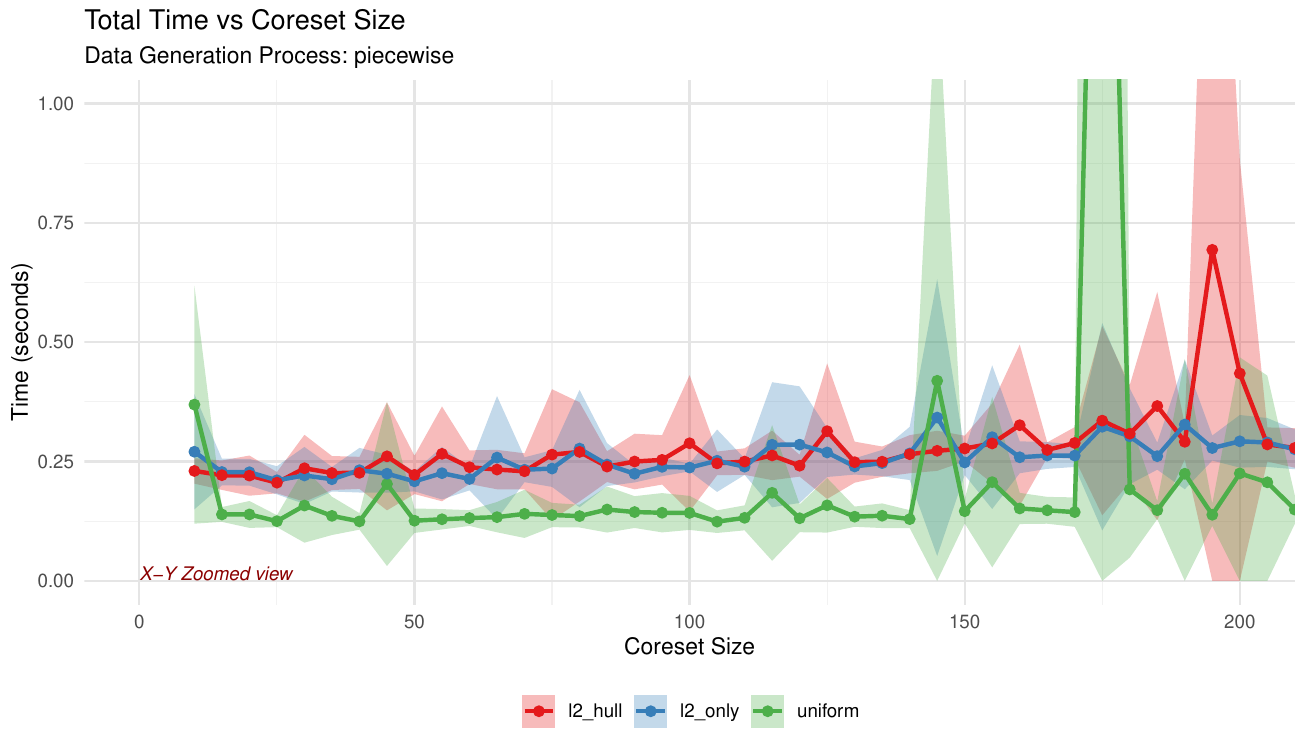}\hfill
  \includegraphics[width=0.3\textwidth]{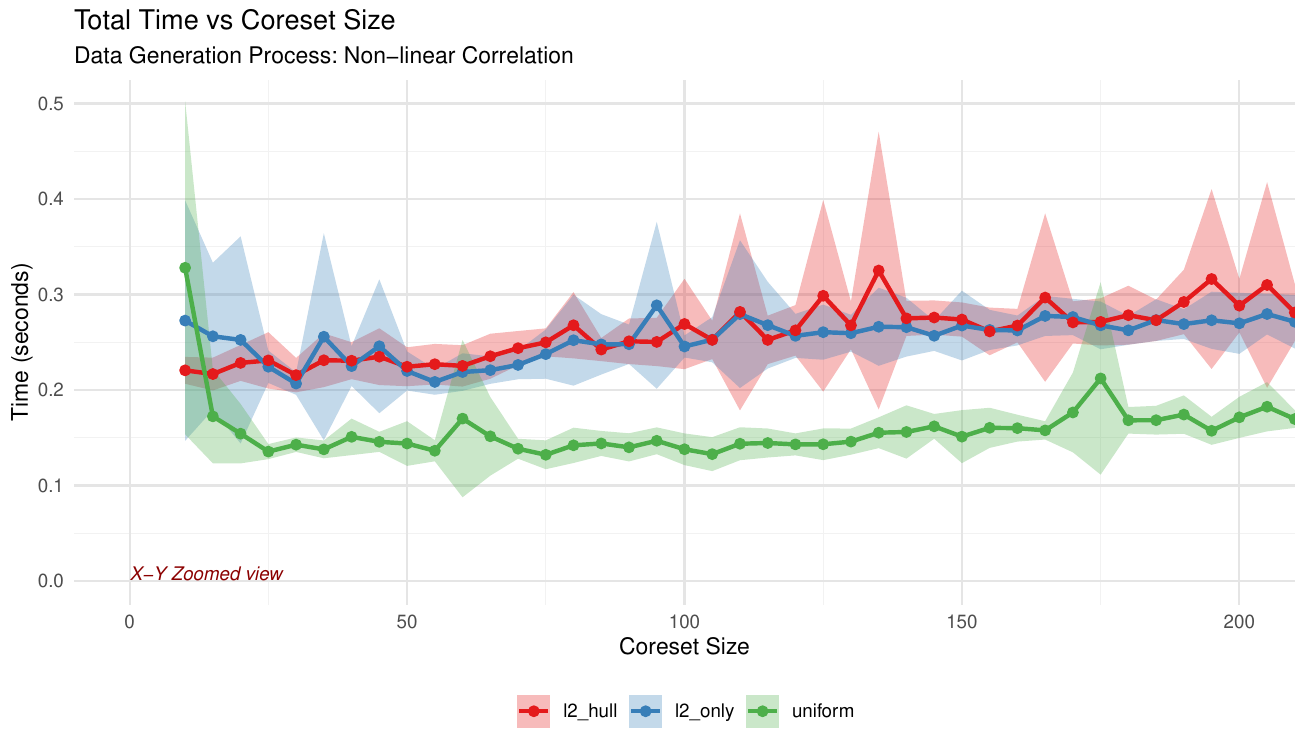}\hfill
  \includegraphics[width=0.3\textwidth]{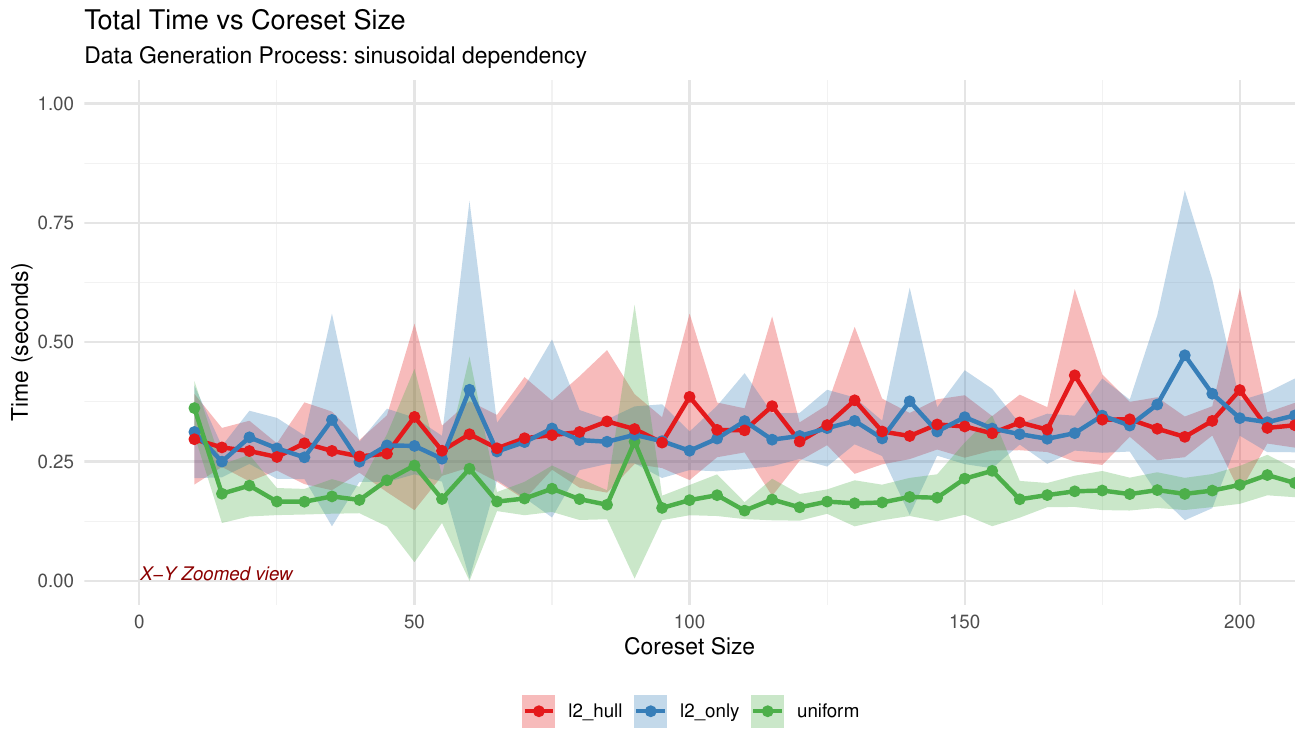}

  \vspace{0.5em}
  \includegraphics[width=0.3\textwidth]{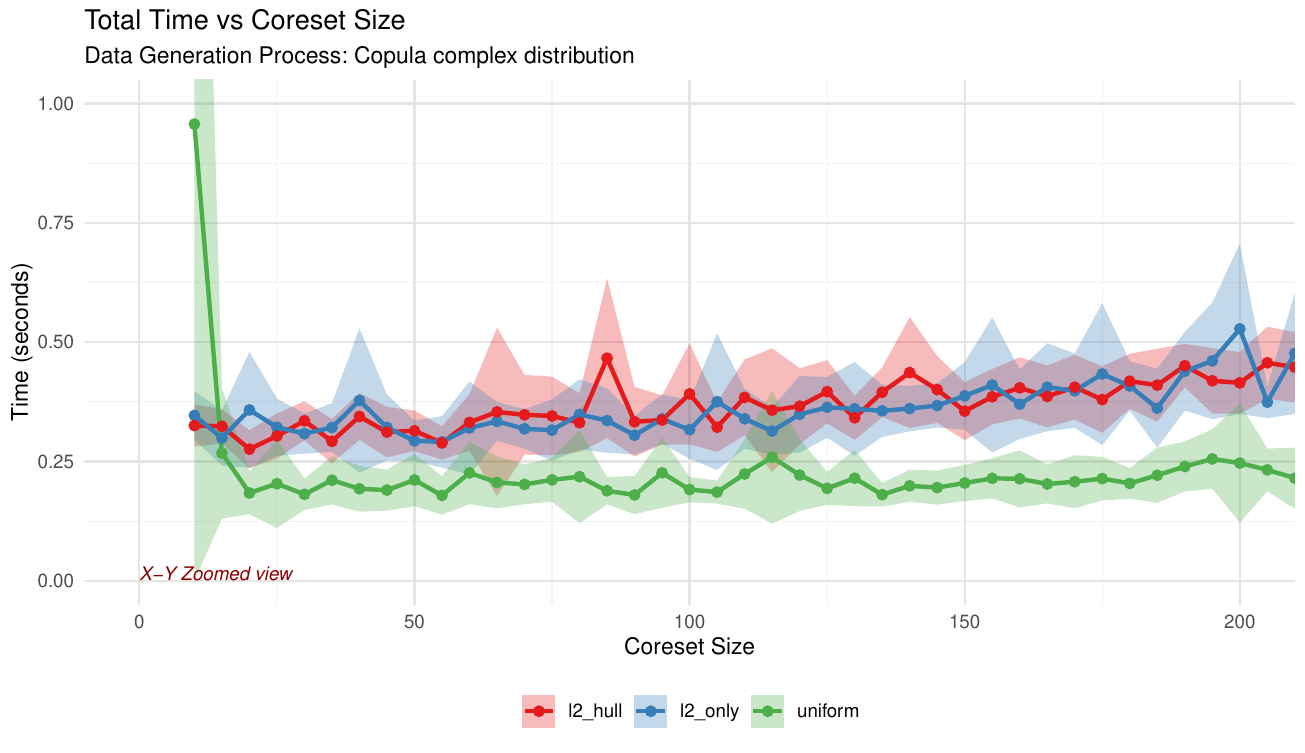}\hfill
  \includegraphics[width=0.3\textwidth]{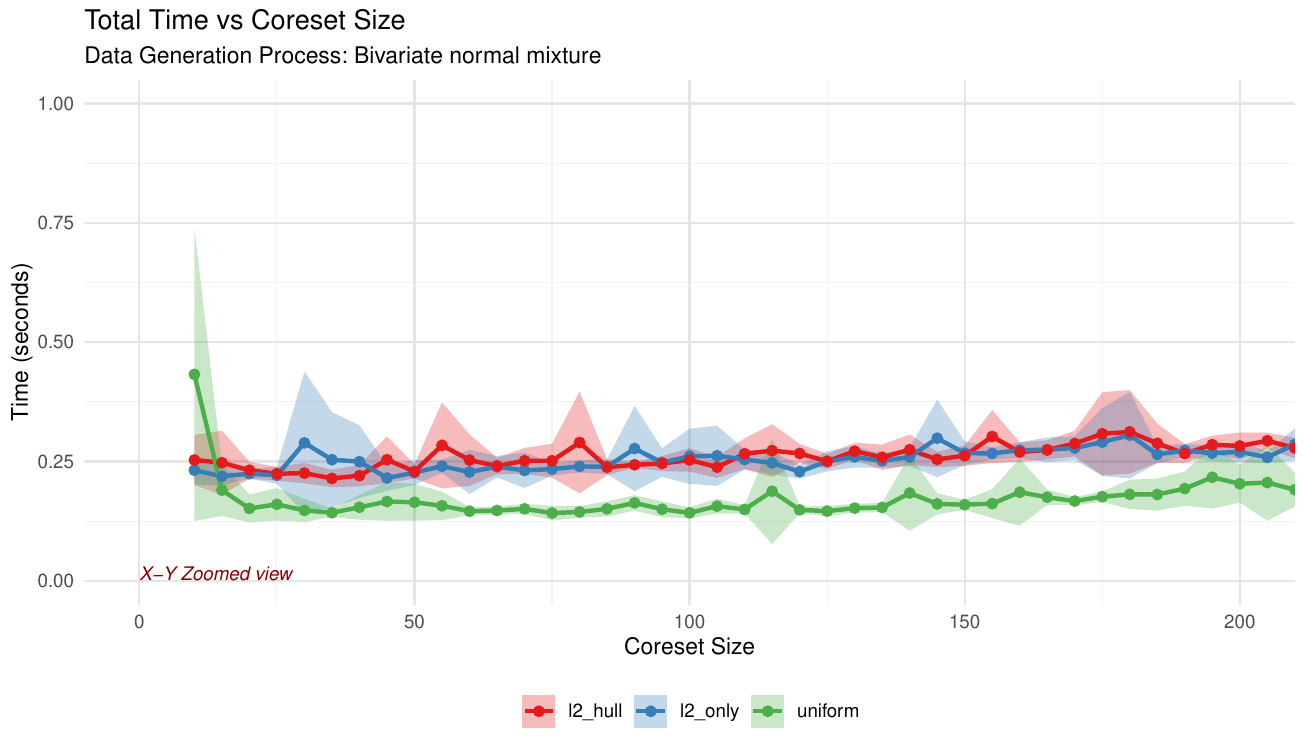}\hfill
  \includegraphics[width=0.3\textwidth]{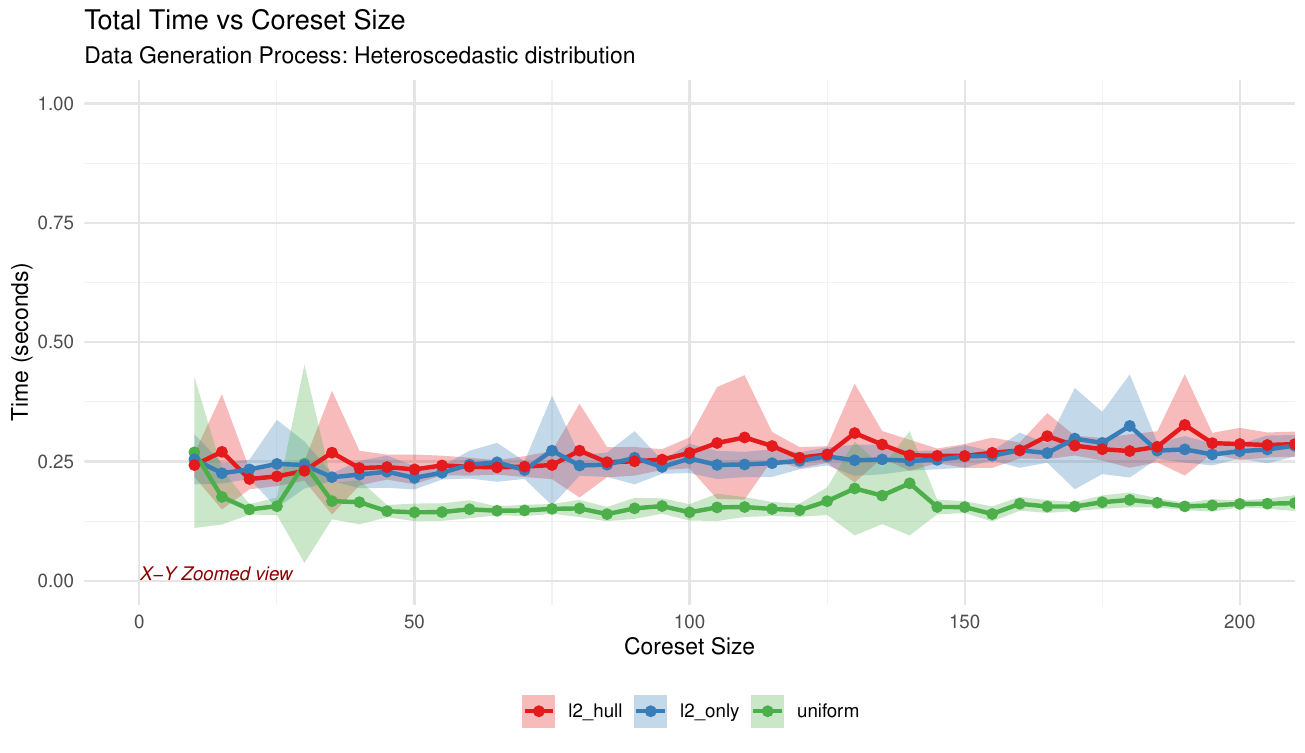}

  \caption{Computation time for 9 simulation distributions}
  \label{fig:sim_times}
\end{figure}

To further illustrate the advantages of our coreset approach, we show the effect of reconstructing the univariate marginal densities after constructing the coreset with three different sampling strategies on simulated data from a binary normal distribution, with realistic marginal distribution predictions for $X$ and $Y$ in Figure~\ref{fig:pred_marg_x} and Figure~\ref{fig:pred_marg_y}, respectively. Each column corresponds to a different coreset size $k=50,100,500$, while the three rows show uniform sampling, $\ell_2$ sampling, and $\ell_2$-hull sampling, respectively. To reflect the stability of the methods, each subplot of the figure plots a single estimate from 10 replicate trials (thin light-gray colored line) and shows their average result as a solid black line, while the red line gives the true theoretical density. It can be seen that when the coreset size is small (e.g., $k=50$), uniform sampling is too random and often fails to cover both ends of the distribution, leading to significant deviations in the predicted curves; using only $\ell_2$ leverage sampling, with a relatively small coreset size, there is still a relatively high risk of failure.; and introduction of the convex hull can effectively ensure fits the theoretical curves better at the minimum size. As $k$ increases, the average predictions of all three methods gradually converge to the red true density - but at any scale, the $\ell_2$ + convex hull scheme reproduces the main shape of the distribution earliest and most consistently, fully reflecting its ability to efficiently capture the distribution with small samples.

\begin{figure}[!ht]
    \centering
    \begin{minipage}{.32\textwidth}
        \centering
        \includegraphics[width=\linewidth]{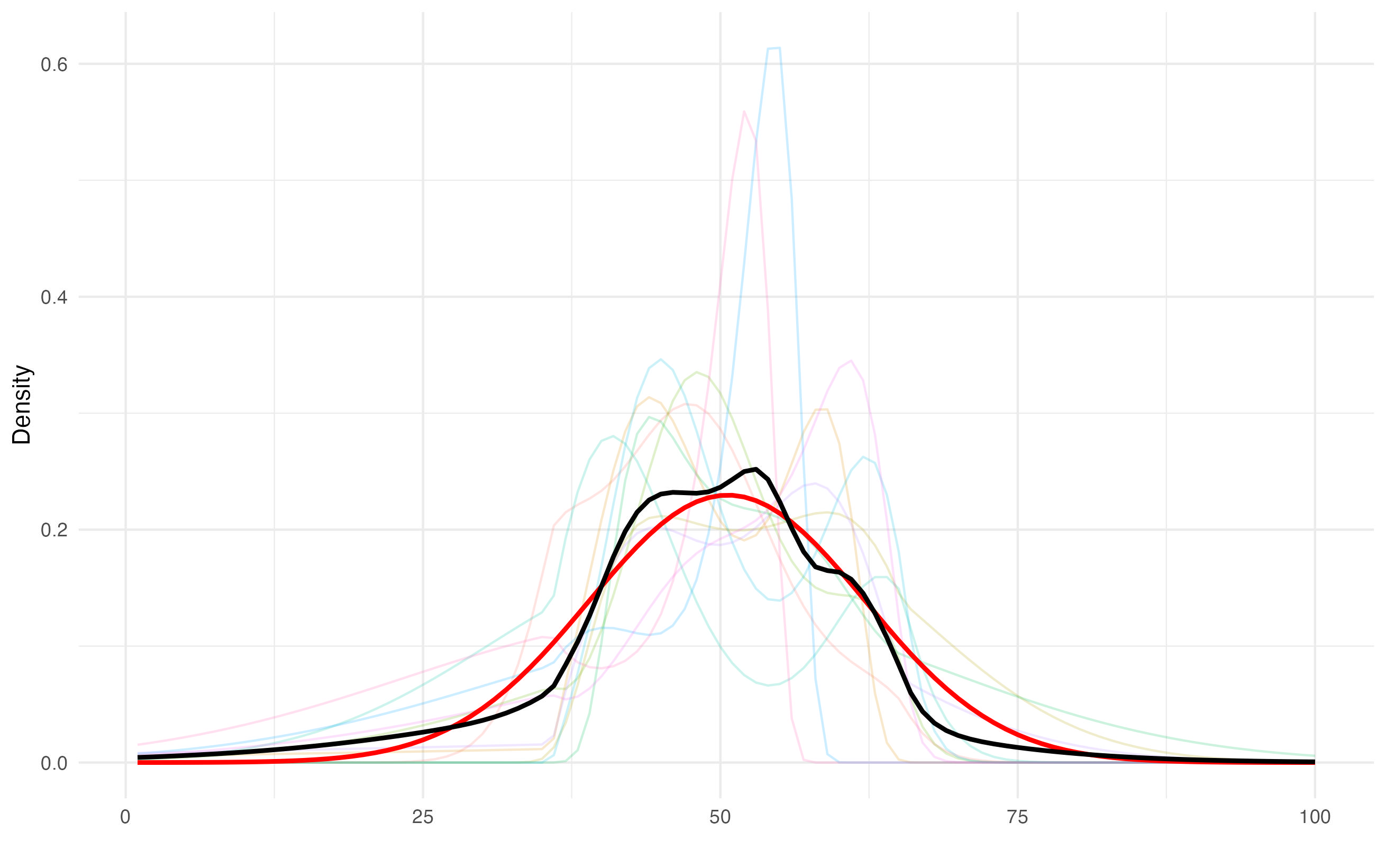} 
    \end{minipage}\hfill
    \begin{minipage}{.32\textwidth}
        \centering
        \includegraphics[width=\linewidth]{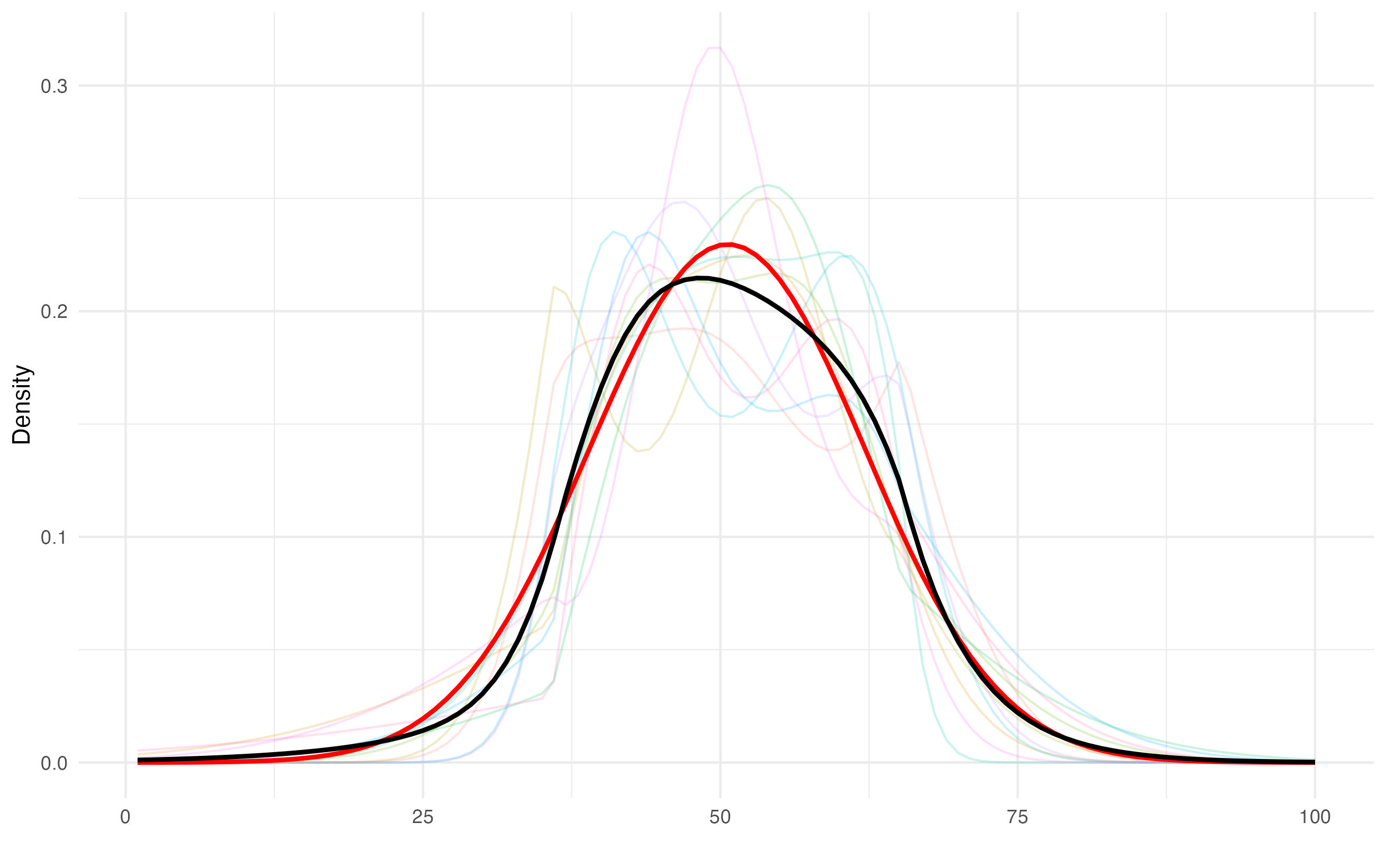} 
    \end{minipage}\hfill
    \begin{minipage}{.32\textwidth}
        \centering
        \includegraphics[width=\linewidth]{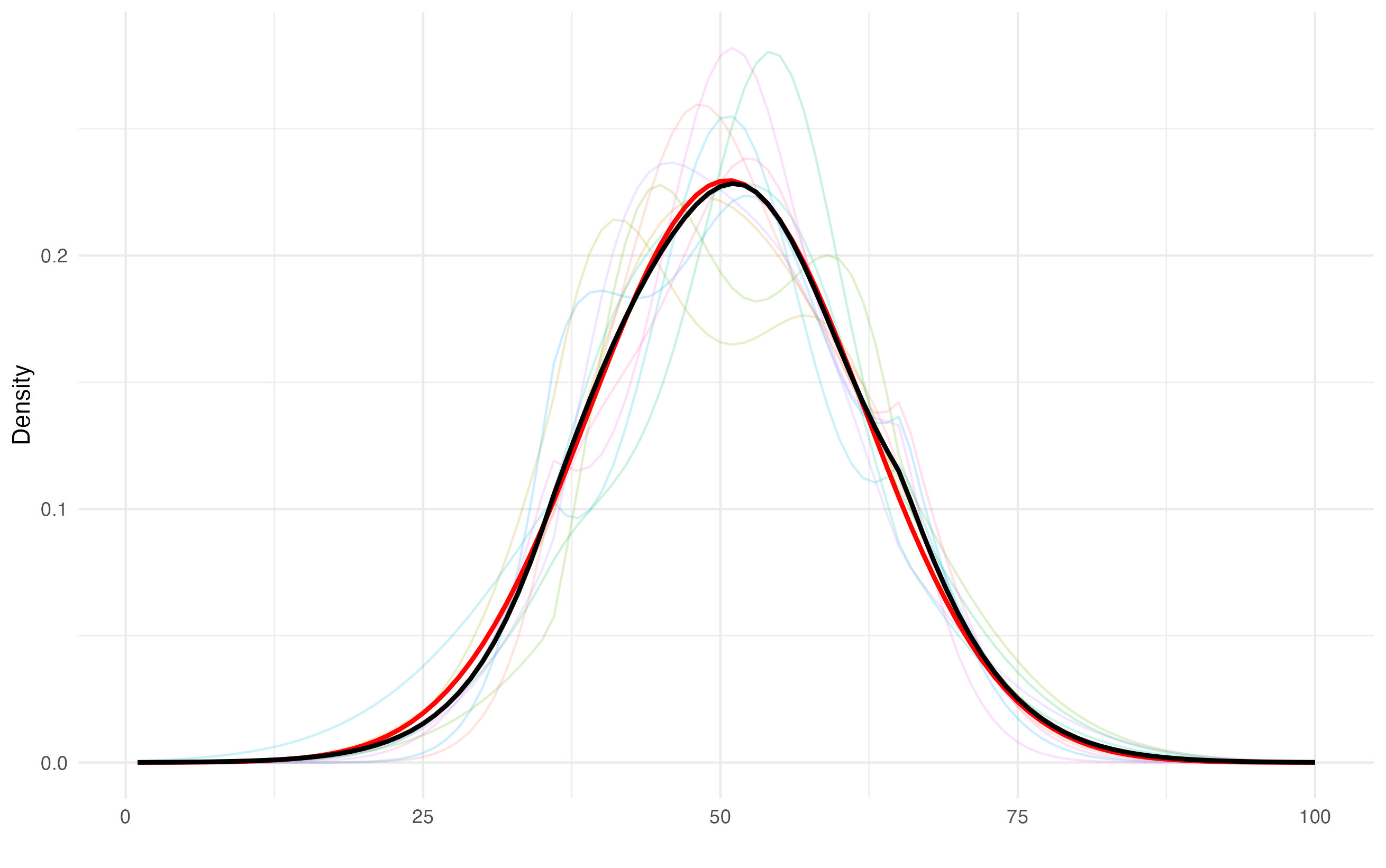} 
    \end{minipage}
    \begin{minipage}{.32\textwidth}
        \centering
        \includegraphics[width=\linewidth]{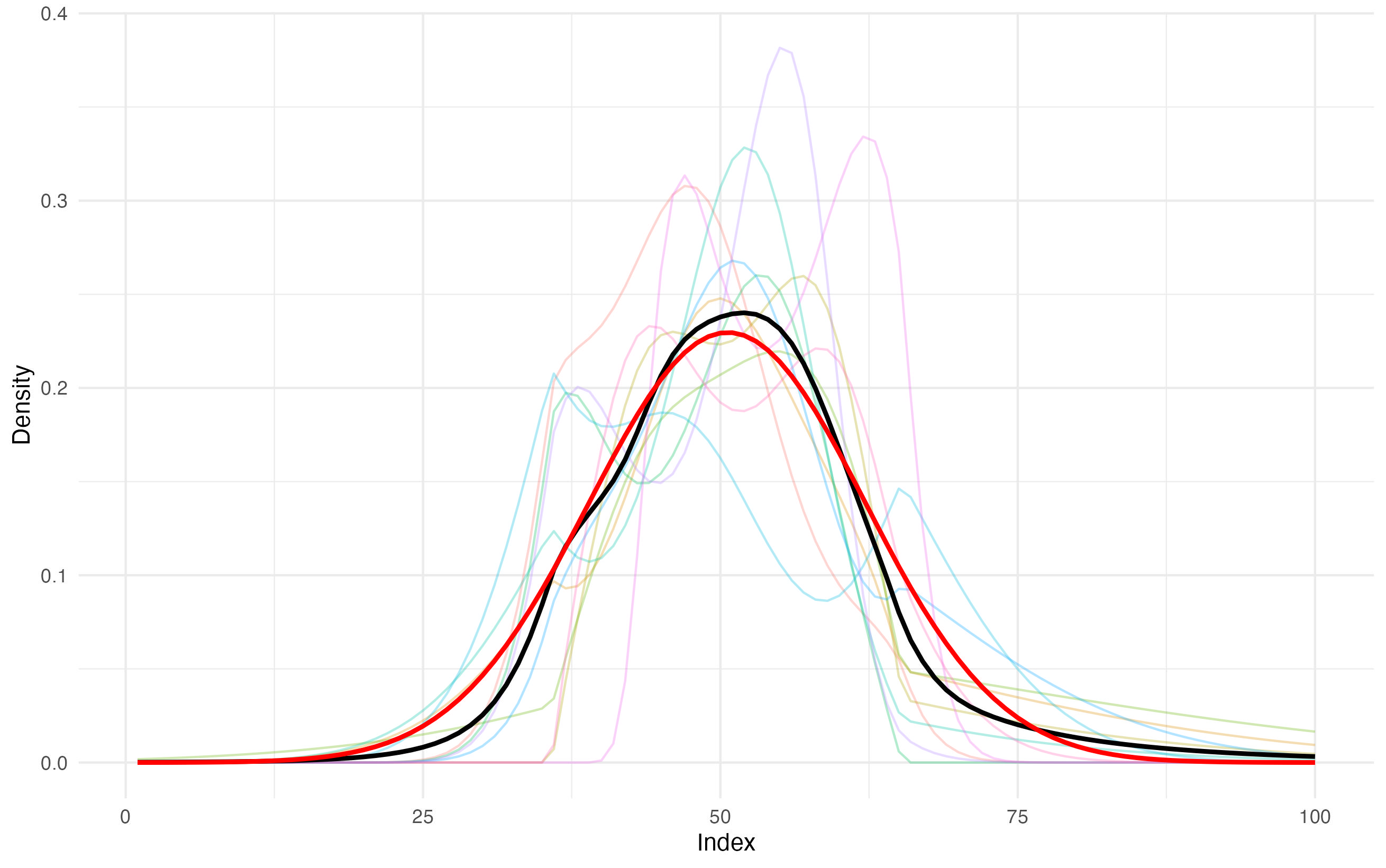} 
    \end{minipage}\hfill
    \begin{minipage}{.32\textwidth}
        \centering
        \includegraphics[width=\linewidth]{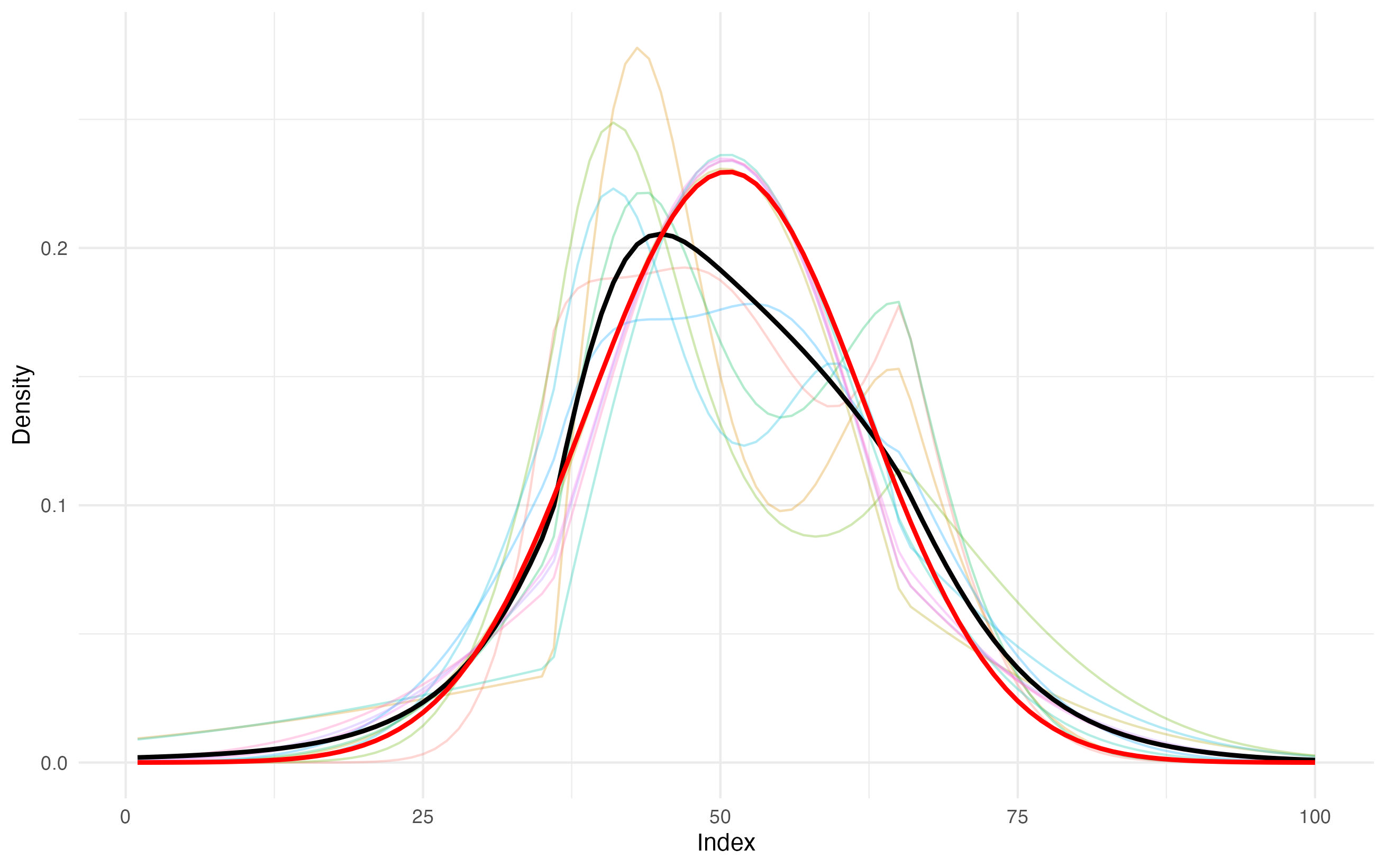} 
    \end{minipage}\hfill
    \begin{minipage}{.32\textwidth}
        \centering
        \includegraphics[width=\linewidth]{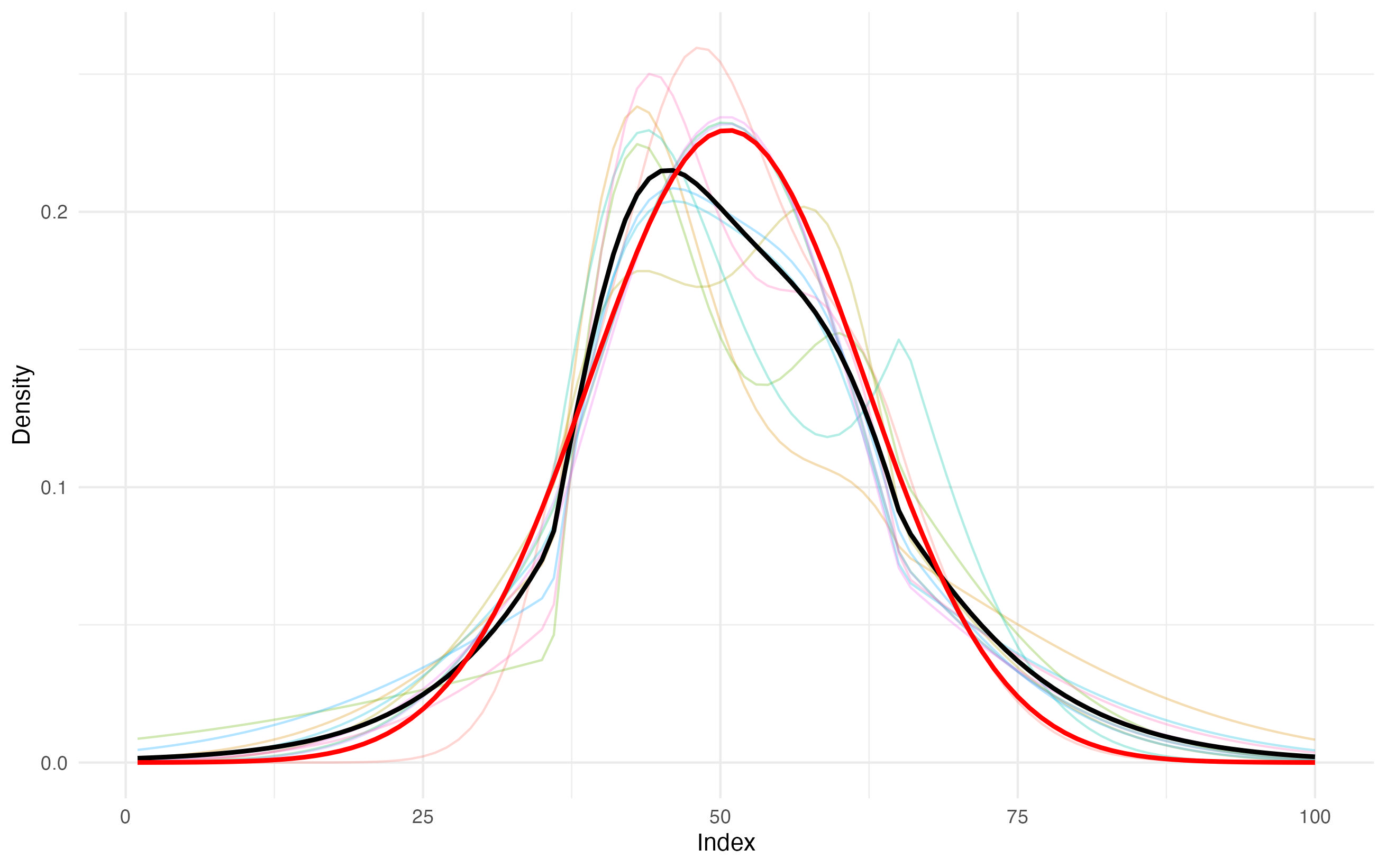} 
    \end{minipage}
    \begin{minipage}{.32\textwidth}
        \centering
        \includegraphics[width=\linewidth]{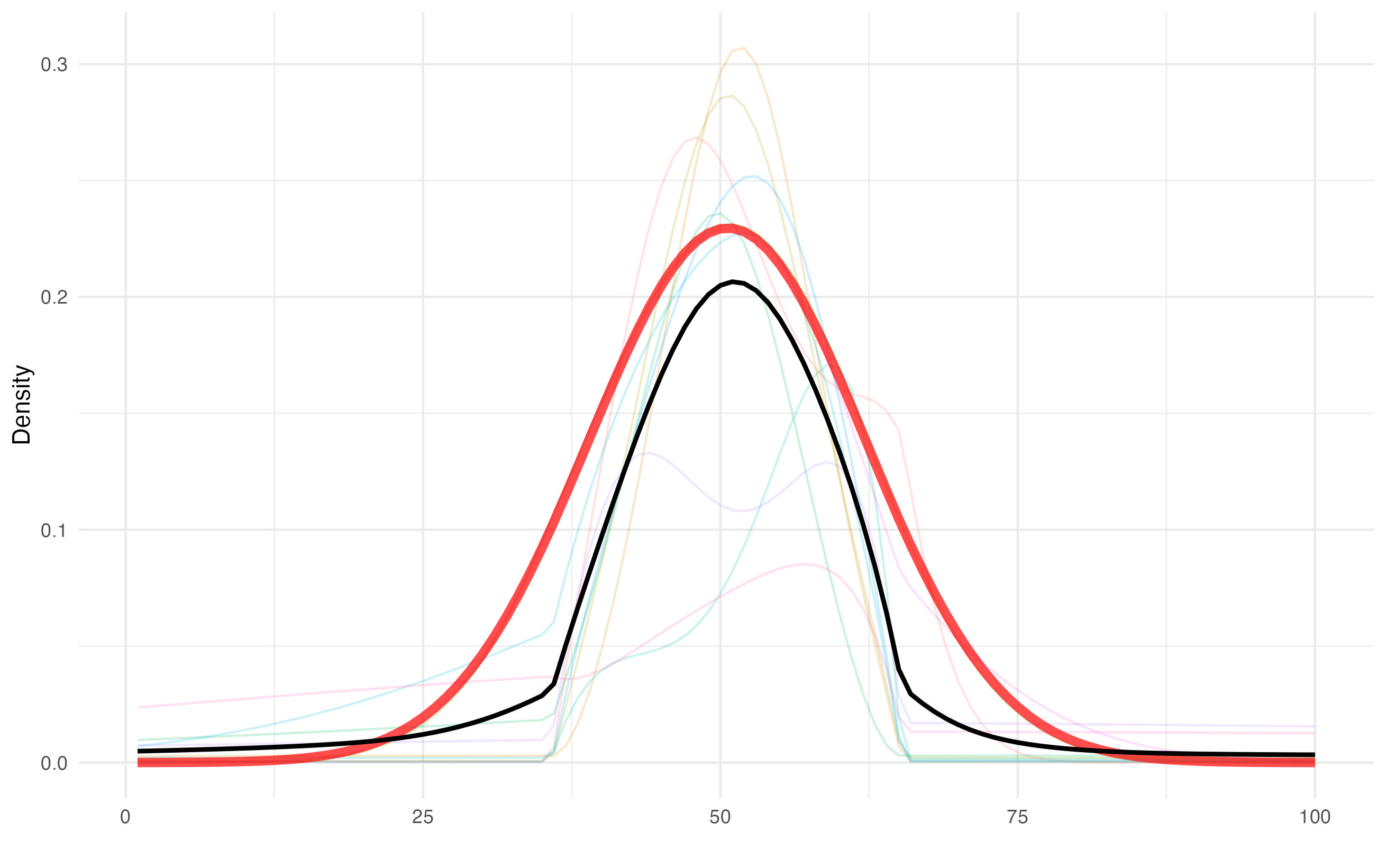} 
    \end{minipage}\hfill
    \begin{minipage}{.32\textwidth}
        \centering
        \includegraphics[width=\linewidth]{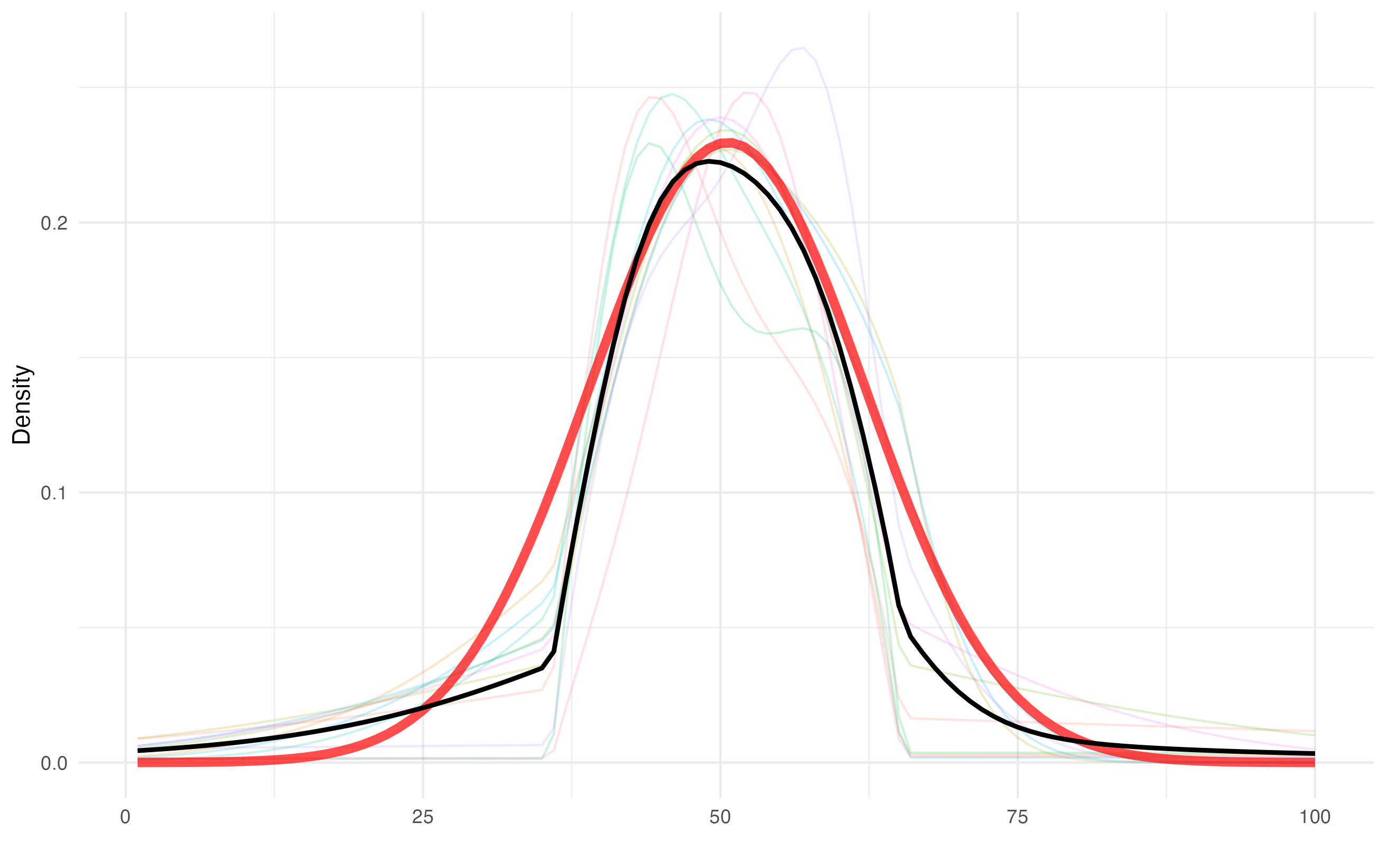} 
    \end{minipage}\hfill
    \begin{minipage}{.32\textwidth}
        \centering
        \includegraphics[width=\linewidth]{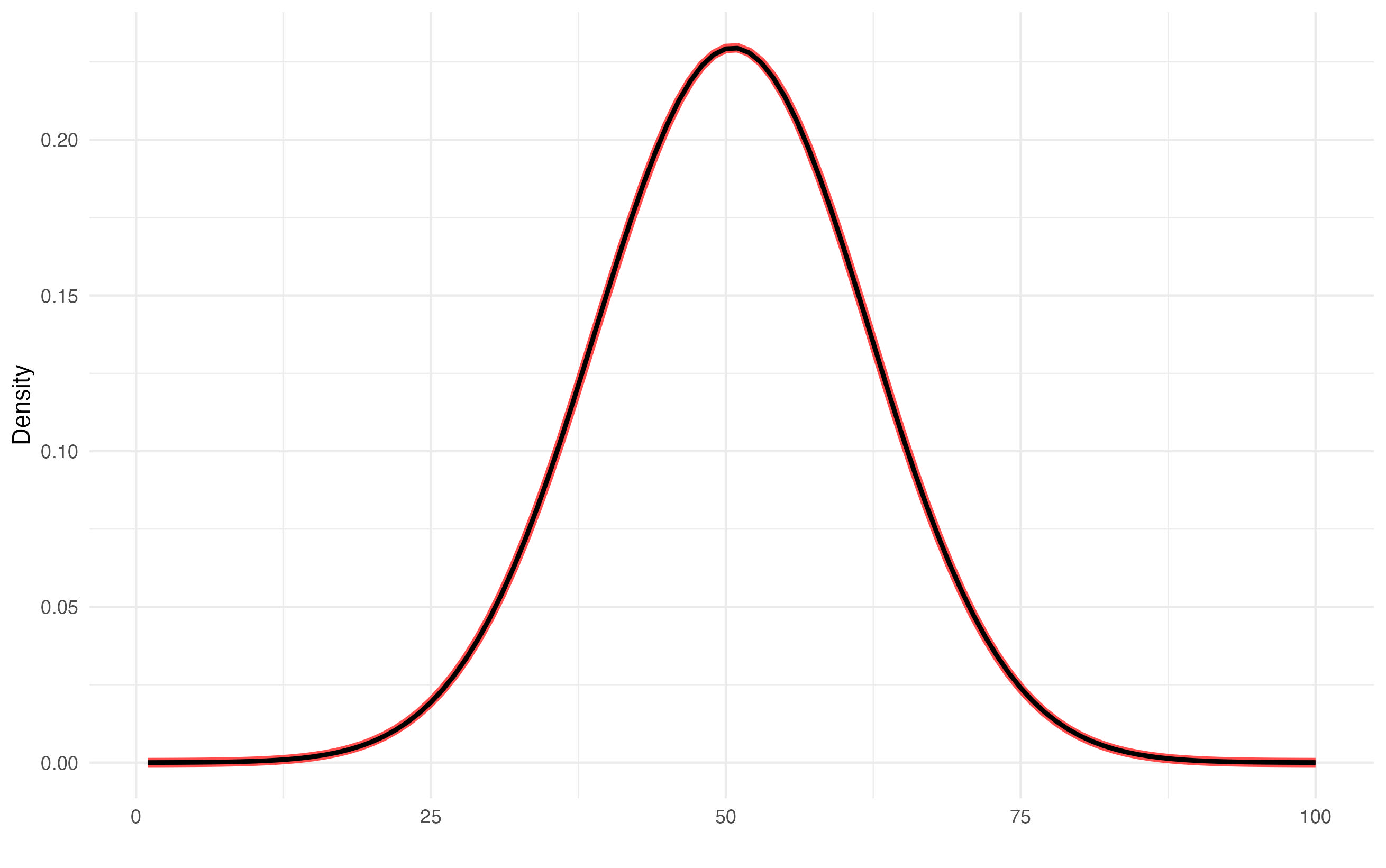} 
    \end{minipage}
    \caption{Predicted marginal density of $x$ for the normal distribution of different coreset size, $k=50, 100, 500$. First Row: uniform subsampling. Second row: Only $\ell_2$ sampling. Third row: $\ell_2$ with $\varepsilon$-kernel convex hull}
    \label{fig:pred_marg_x}
\end{figure}

\begin{figure}[!ht]
    \centering
    \begin{minipage}{.32\textwidth}
        \centering
        \includegraphics[width=\linewidth]{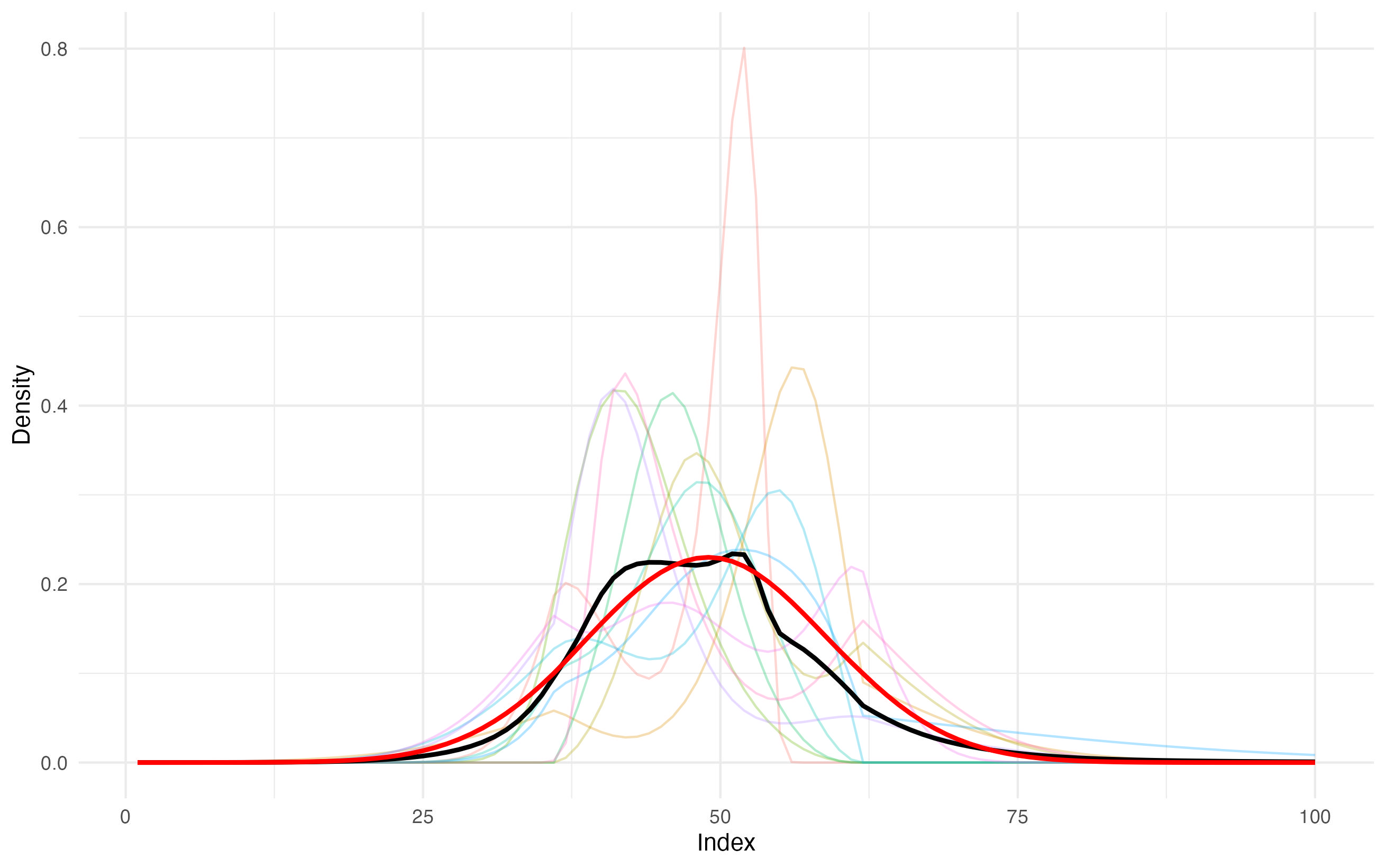} 
    \end{minipage}\hfill
    \begin{minipage}{.32\textwidth}
        \centering
        \includegraphics[width=\linewidth]{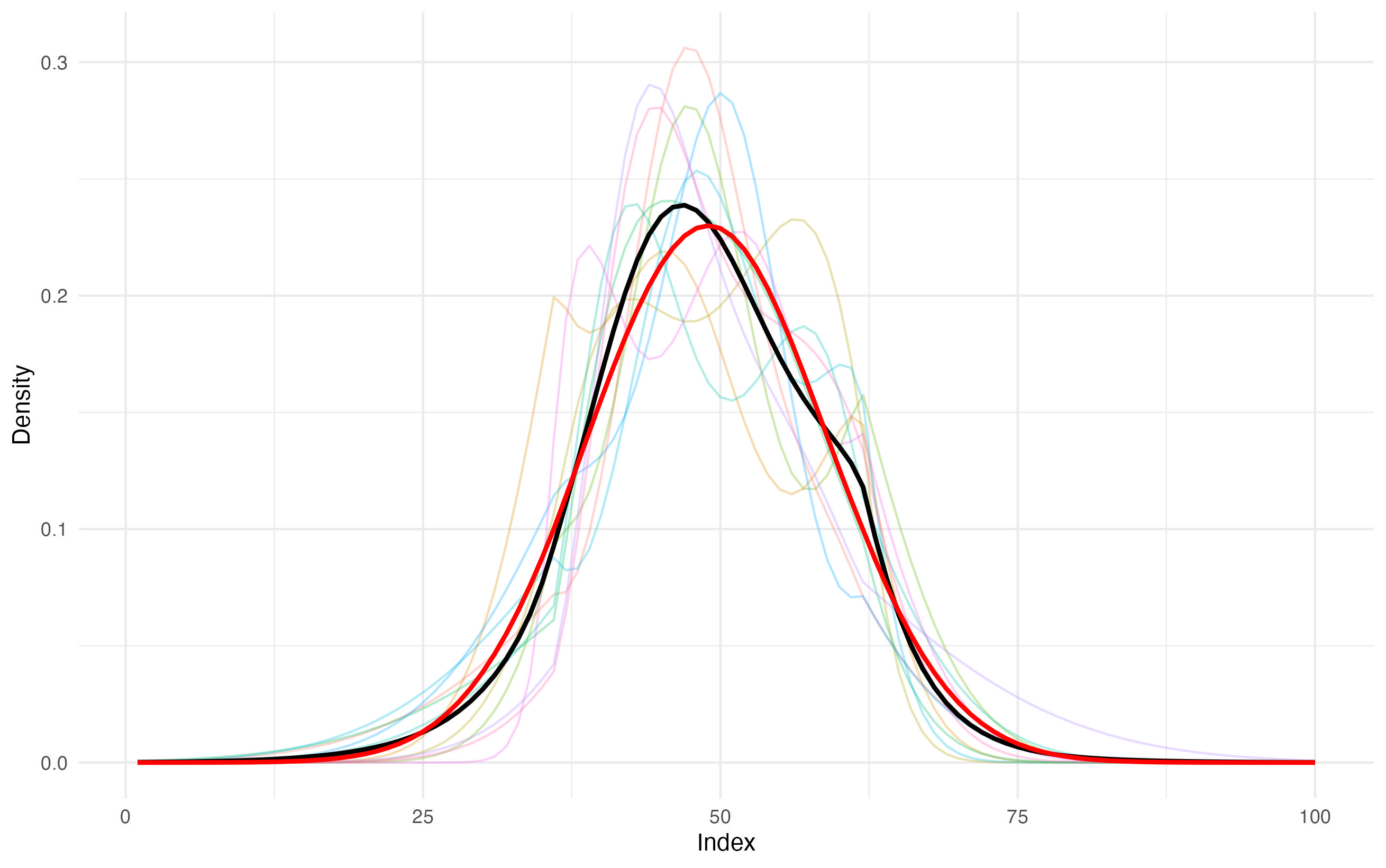} 
    \end{minipage}\hfill
    \begin{minipage}{.32\textwidth}
        \centering
        \includegraphics[width=\linewidth]{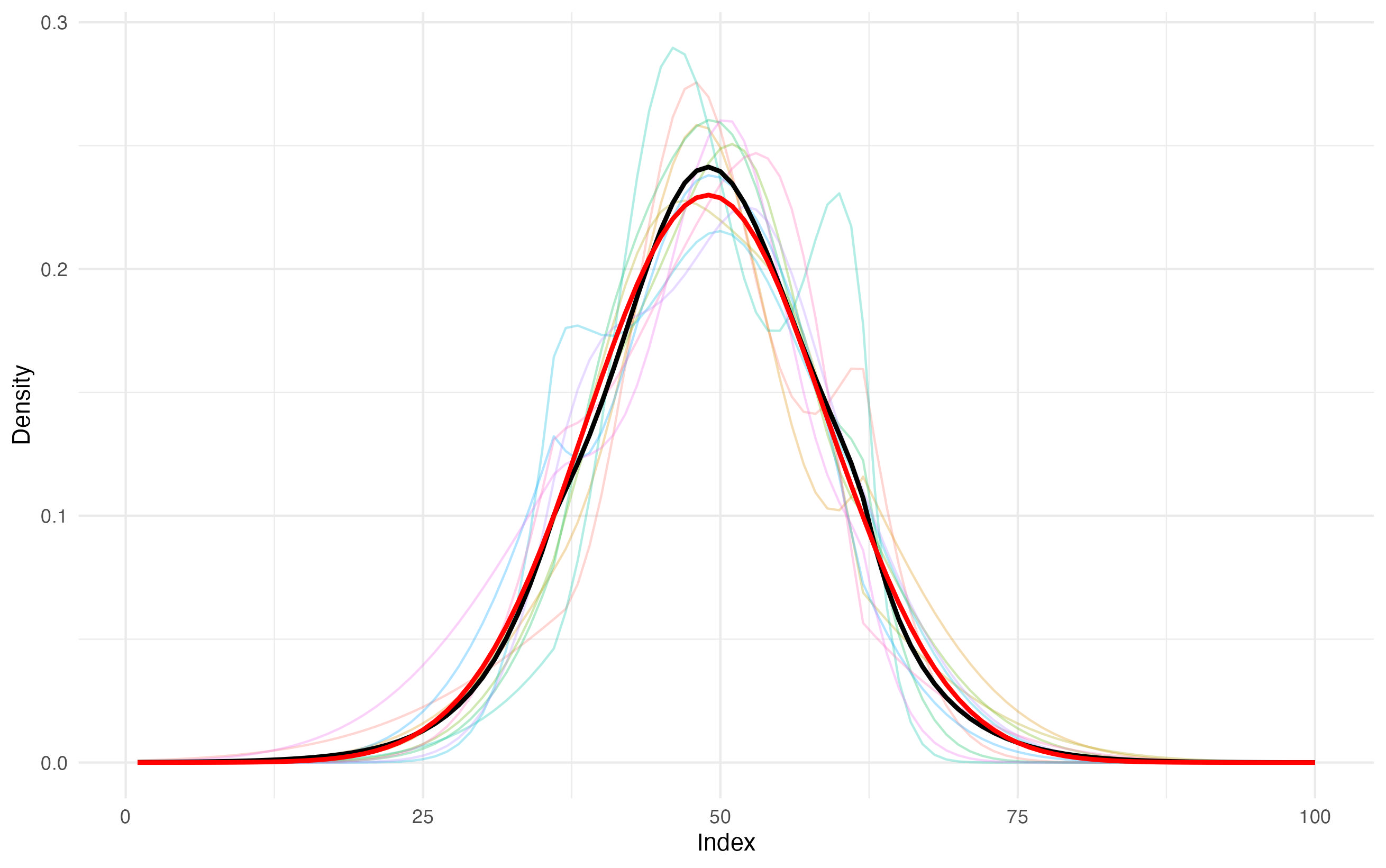} 
    \end{minipage}
    \begin{minipage}{.32\textwidth}
        \centering
        \includegraphics[width=\linewidth]{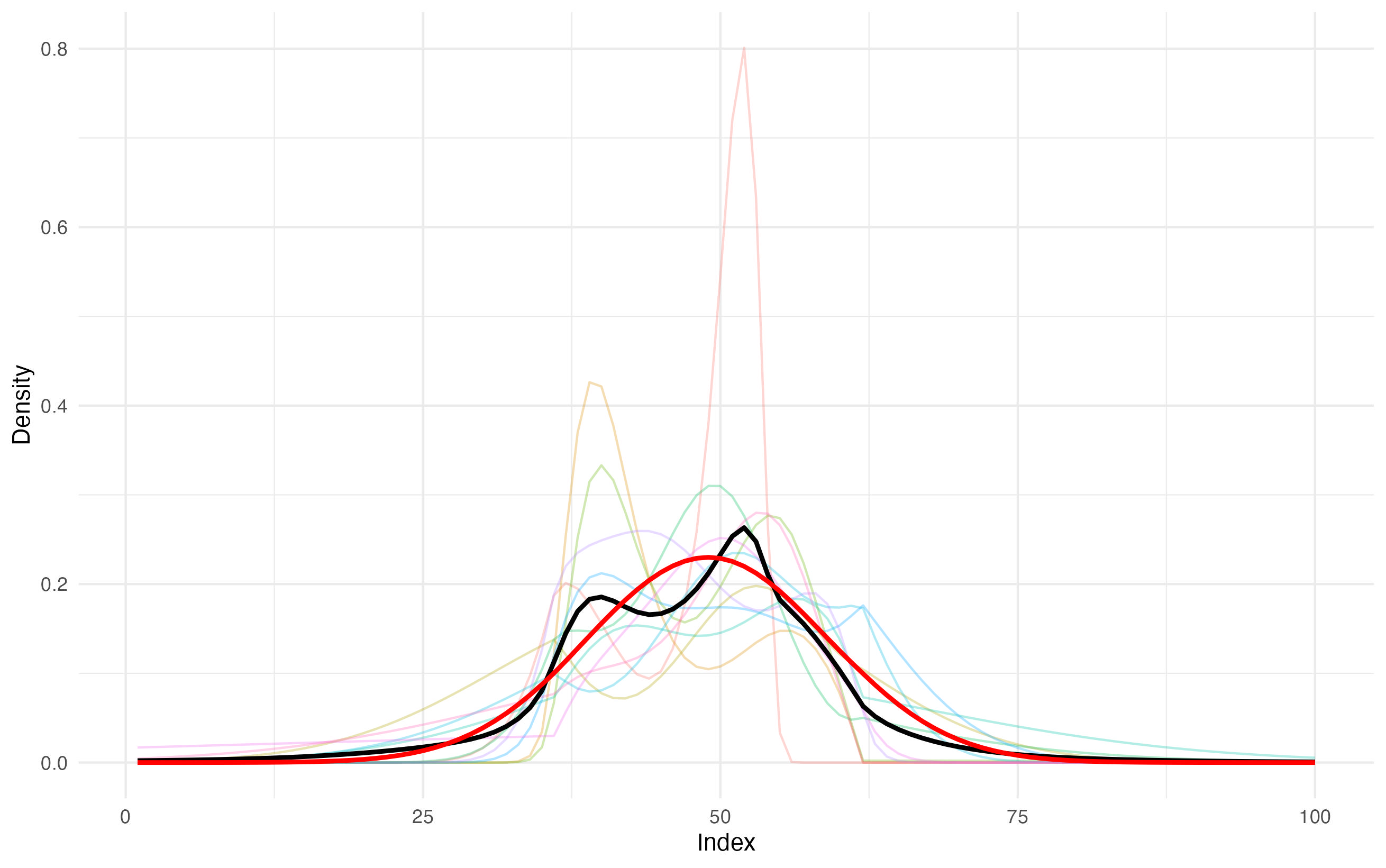} 
    \end{minipage}\hfill
    \begin{minipage}{.32\textwidth}
        \centering
        \includegraphics[width=\linewidth]{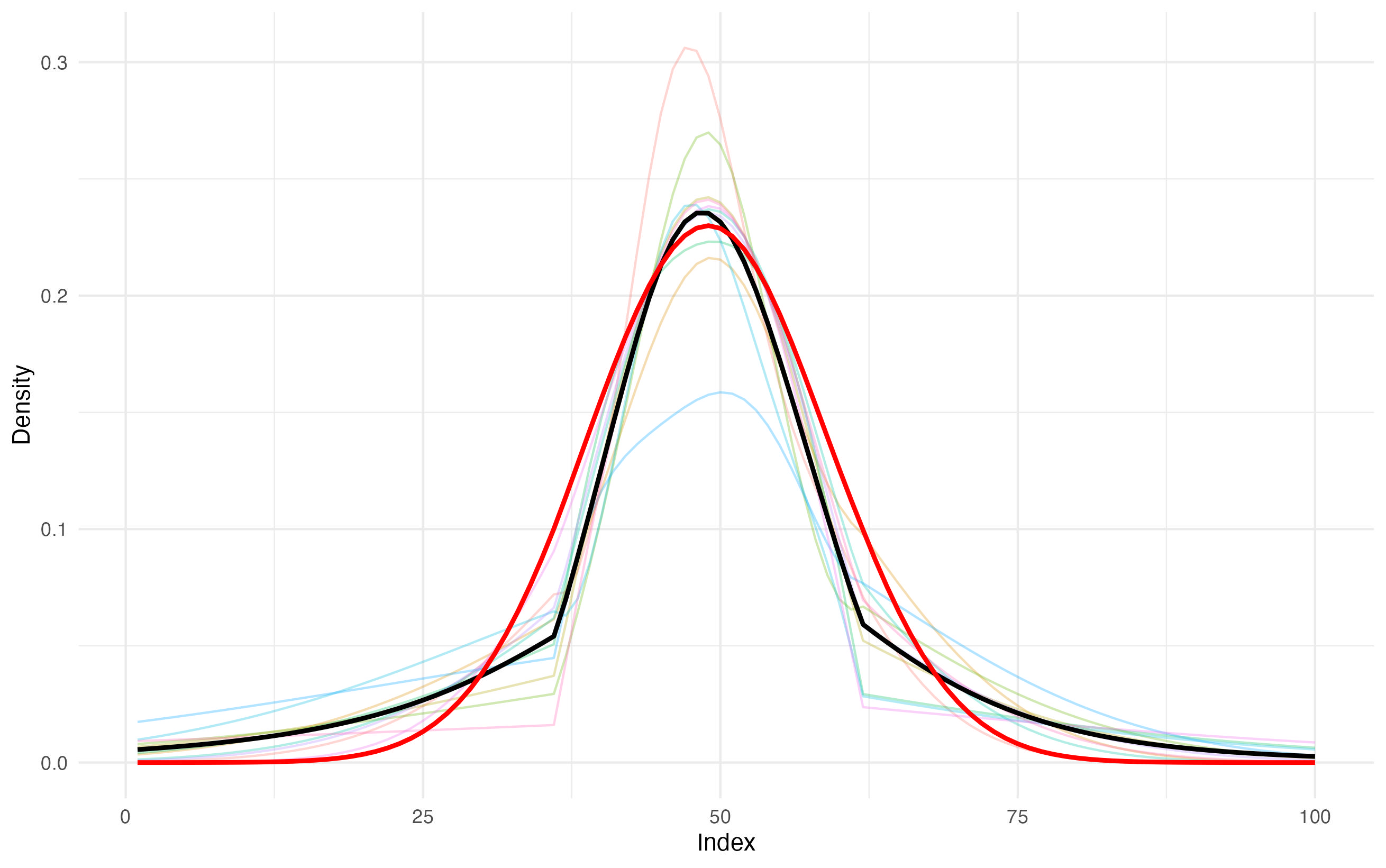} 
    \end{minipage}\hfill
    \begin{minipage}{.32\textwidth}
        \centering
        \includegraphics[width=\linewidth]{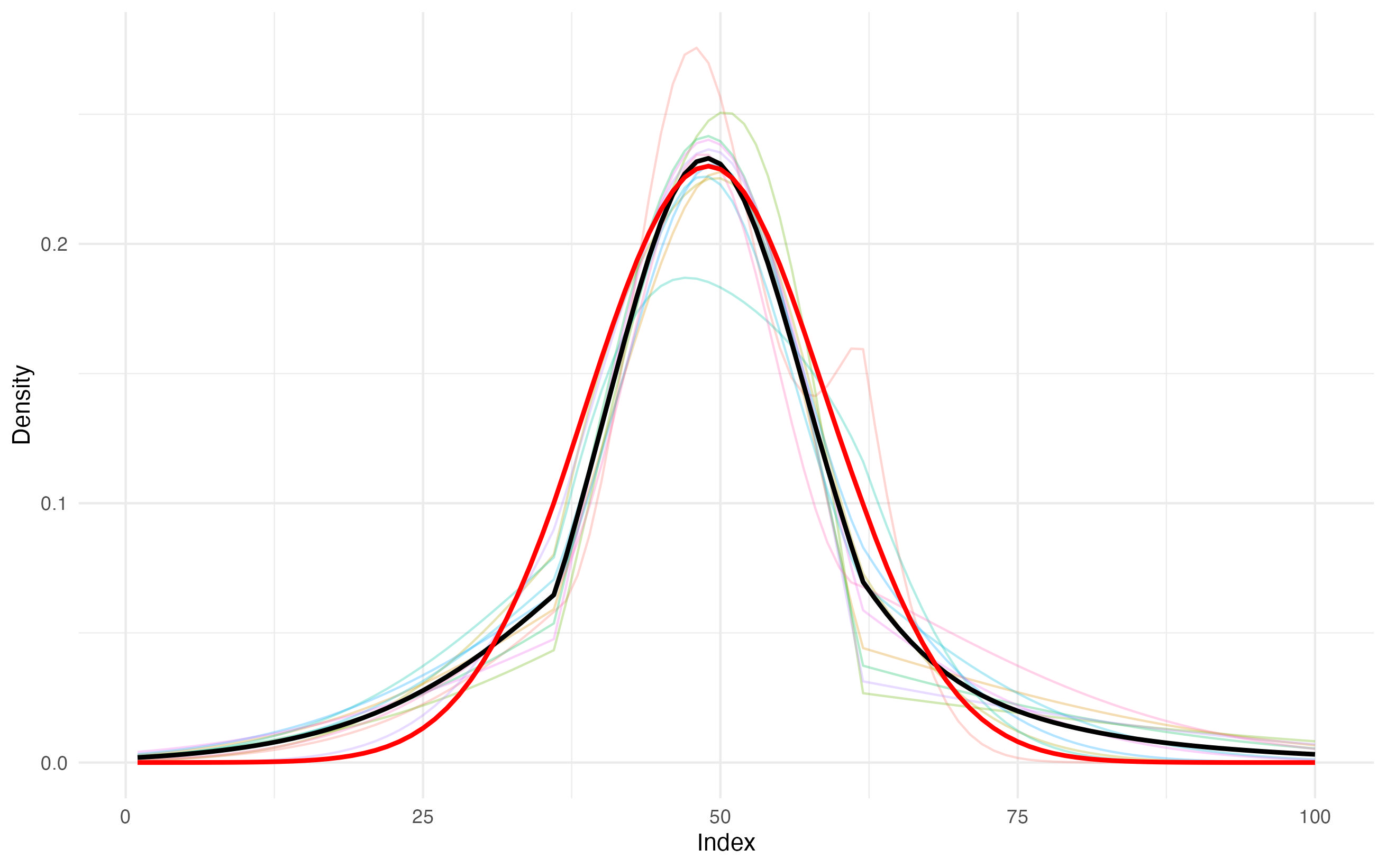} 
    \end{minipage}
    \begin{minipage}{.32\textwidth}
        \centering
        \includegraphics[width=\linewidth]{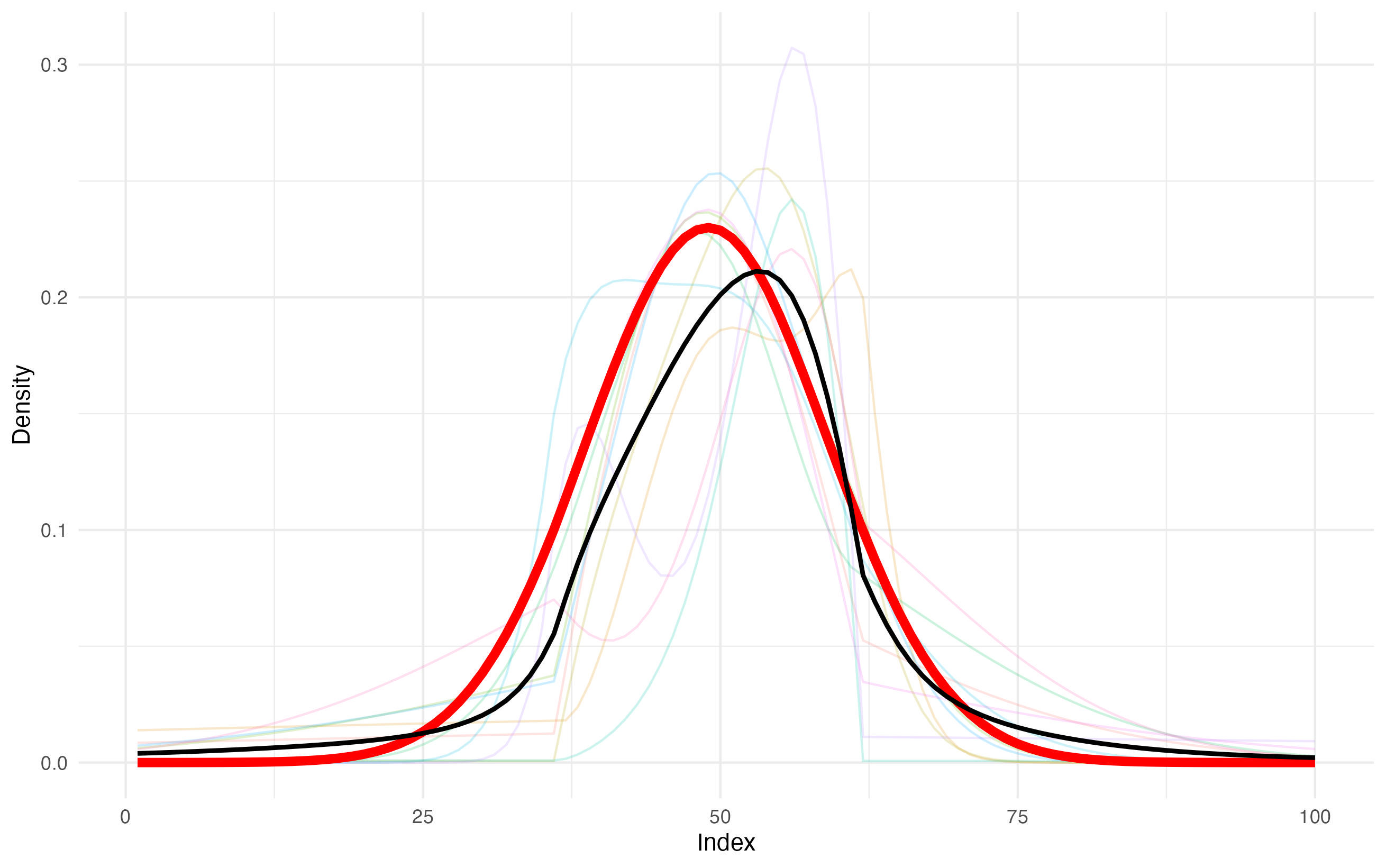} 
    \end{minipage}\hfill
    \begin{minipage}{.32\textwidth}
        \centering
        \includegraphics[width=\linewidth]{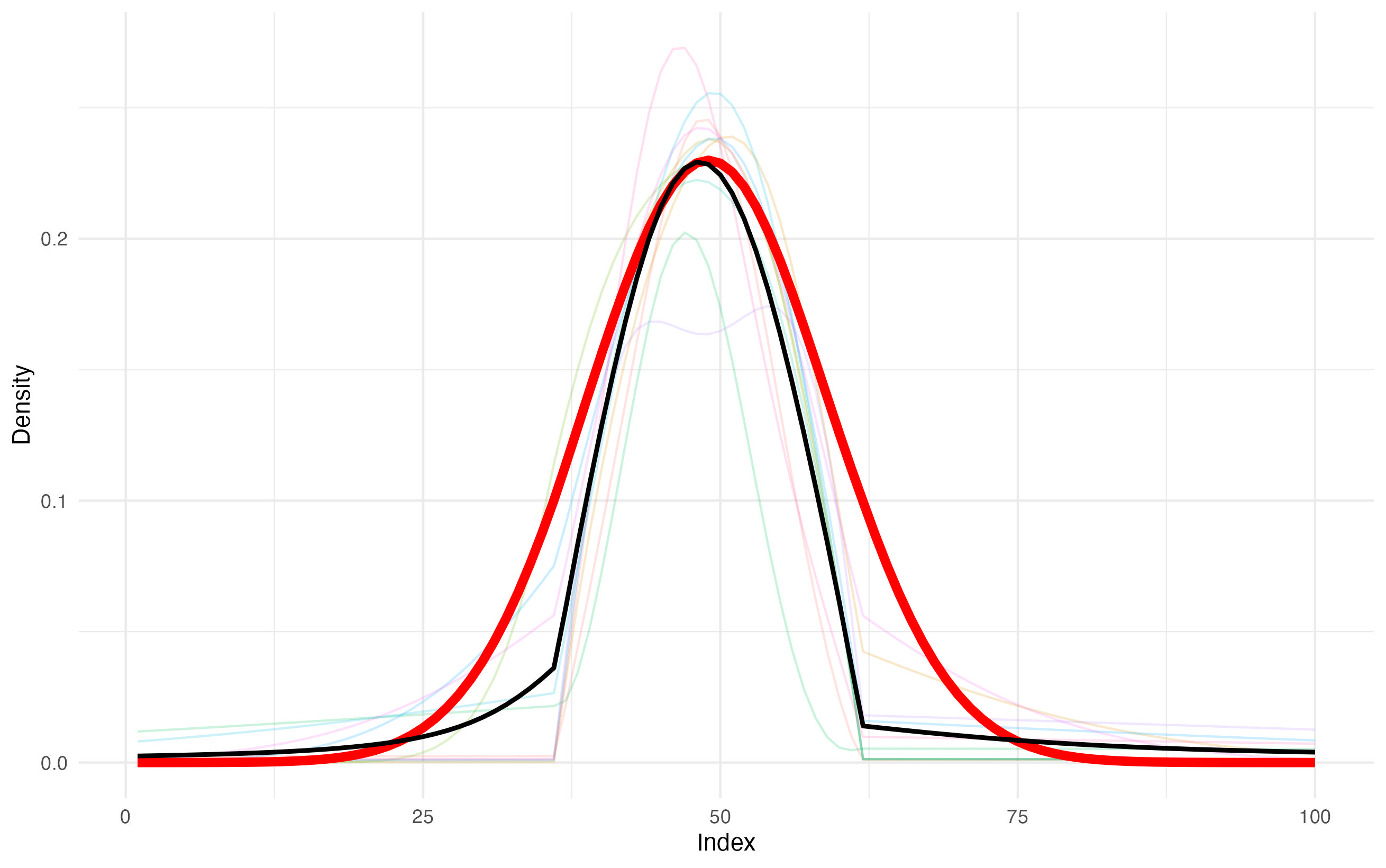} 
    \end{minipage}\hfill
    \begin{minipage}{.32\textwidth}
        \centering
        \includegraphics[width=\linewidth]{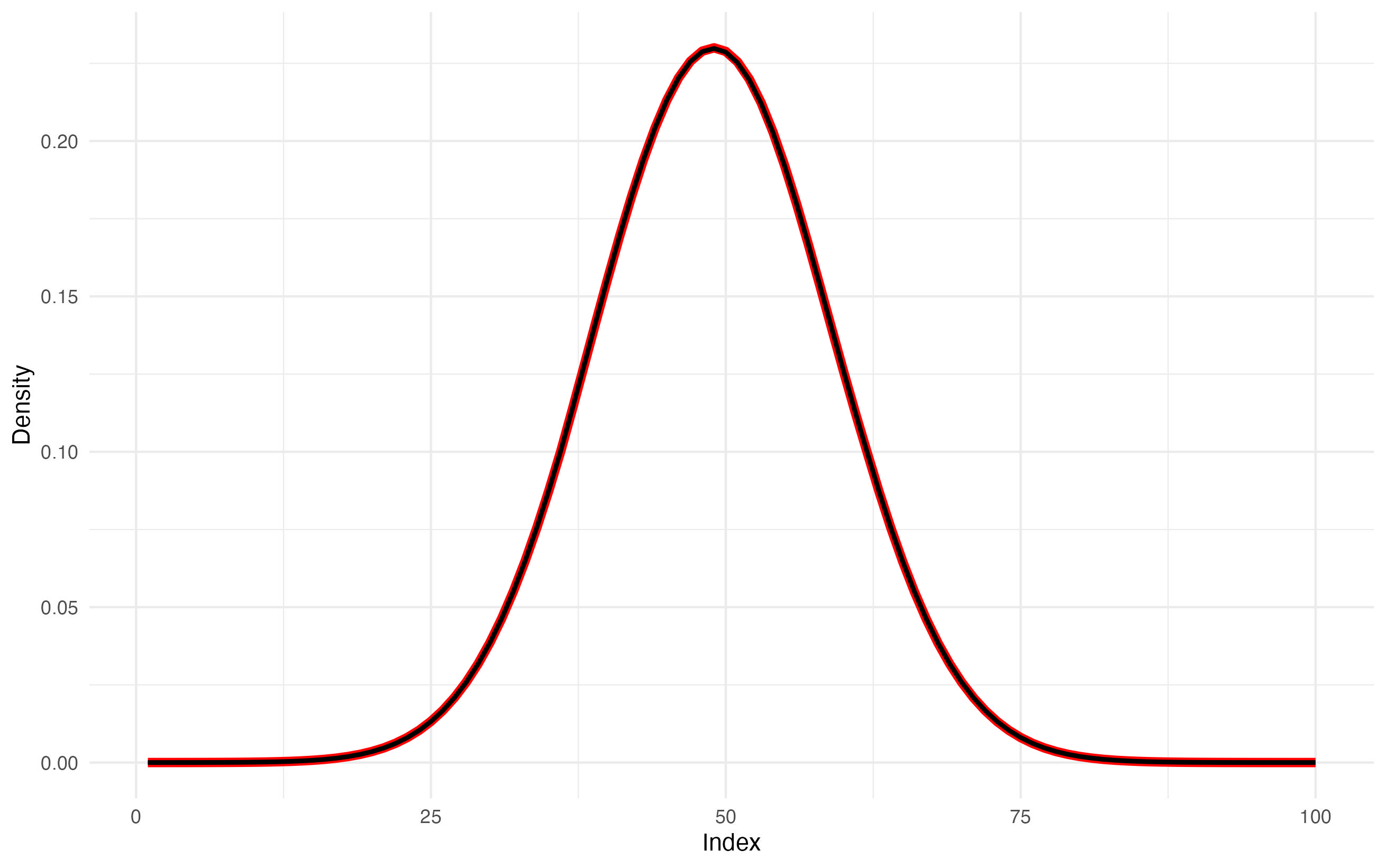} 
    \end{minipage}
    \caption{Predicted marginal density of $y$ of different coreset size, $k=50, 100, 500$. First Row: uniform subsampling. Second row: Only $\ell_2$ sampling. Third row: $\ell_2$ with $\varepsilon$-kernel convex hull}
    \label{fig:pred_marg_y}
\end{figure}

\subsection{Real-world Experiments Results}
\label{app:real_world}

\subsubsection{Covertype Dataset}

We demonstrate our method on the widely used UCI Covertype dataset \cite{covertype_31}, originally intended for forest cover type classification. The full dataset comprises $n=581\,012$ observations and $54$ variables, among which $10$ continuous variables represent various terrain features, such as elevation, slope, aspect, distances to hydrological features, and hillshade indices. 

This dataset presents several practical challenges motivating the use of MCTM. Specifically, the continuous variables exhibit complex joint dependency structures characterized by multimodality, heavy skewness, and highly non-linear pairwise interactions. Standard Gaussian or parametric copula approaches typically fail to capture these features adequately, thus justifying the adoption of MCTM, which flexibly models the multivariate distribution through non-linear transformations.

However, fitting a full MCTM on large-scale datasets is computationally intensive and sometimes fails. For example, in our case, on the Covertype dataset, when we attempted to fit the 10-dimensional MCTM model to the original $n=581\,012$ size dataset, hardware limitations caused this fit to simply crash. Therefore, we selected a subsample of $n=300\,000$ as the original model, and directly fitting an MCTM with only $10$ continuous variables on $n=300\,000$ observations already requires several hours of computation time. The computational challenge rapidly intensifies with higher dimensions or larger sample sizes, thus providing a natural and compelling scenario for demonstrating the practical benefits of our coreset approach.

We use the Covertype dataset to empirically assess whether coresets can efficiently approximate the full-data MCTM while substantially reducing computational costs, thereby enabling scalable probabilistic modeling of large multivariate datasets.

\begin{figure}[htbp]
    \centering
    \begin{subfigure}[t]{0.48\textwidth}
        \includegraphics[width=\linewidth]{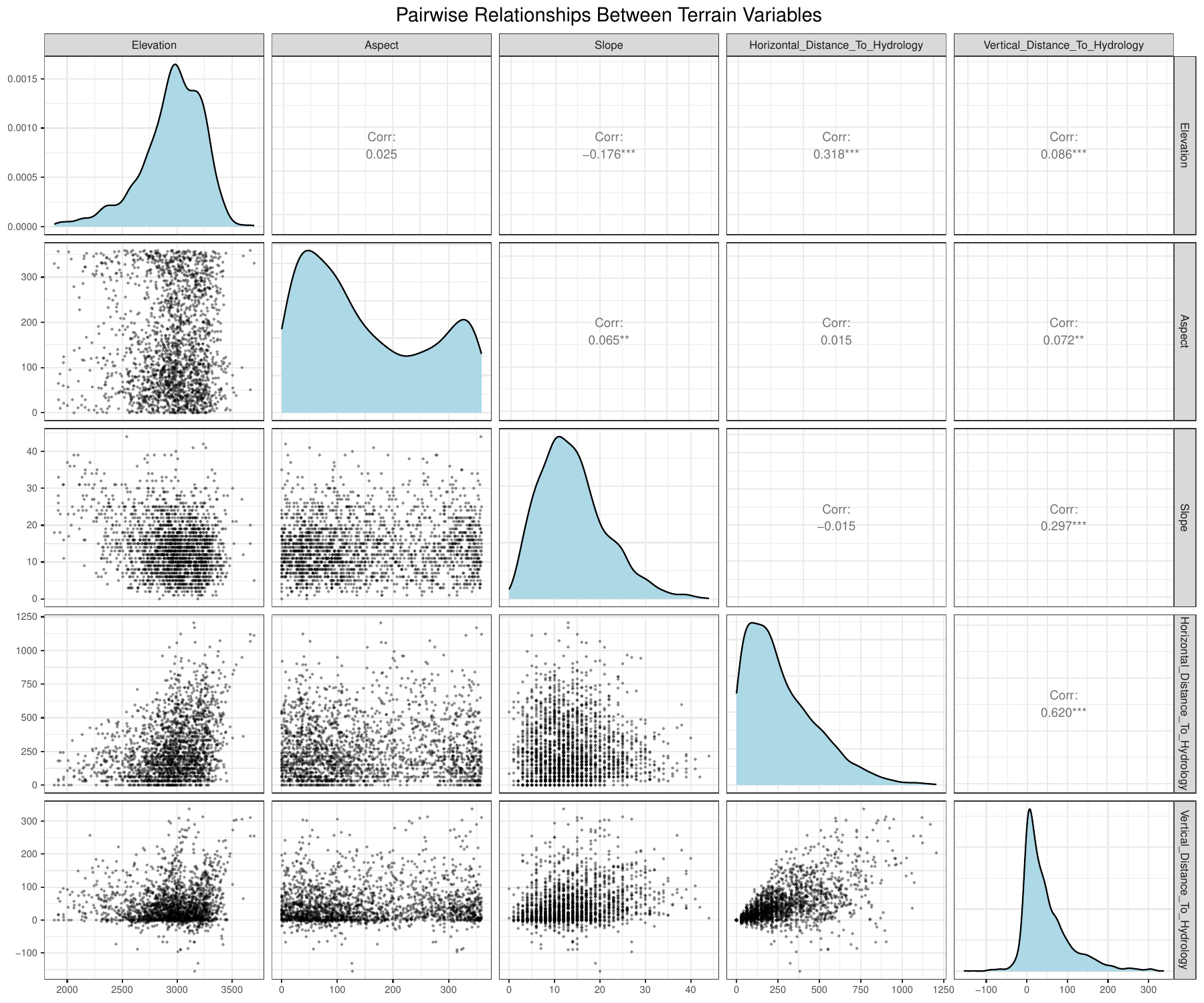}
        \caption{Elevation, Aspect, Slope, Horizontal and Vertical Distance to Hydrology.}
        \label{fig:ggpairs1}
    \end{subfigure}
    \hfill
    \begin{subfigure}[t]{0.48\textwidth}
        \includegraphics[width=\linewidth]{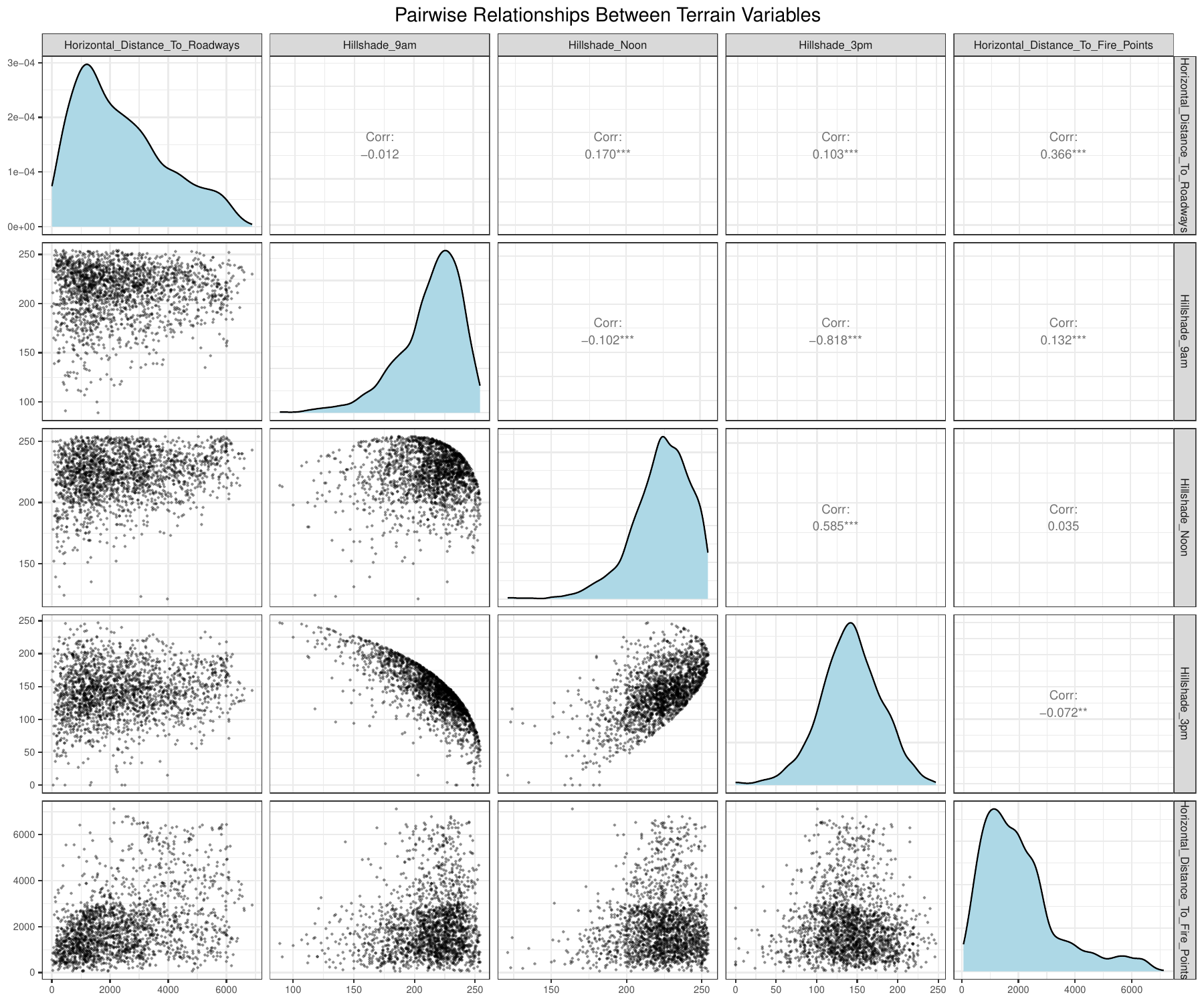}
        \caption{Roadways, Hillshade at different times, and Fire Points.}
        \label{fig:ggpairs2}
    \end{subfigure}
    \caption{Pairwise relationships between different sets of terrain variables from the Covertype dataset.}
    \label{fig:ggpairs_combined}
\end{figure}

\begin{figure}[htbp]
  \centering
  \begin{subfigure}[b]{0.48\textwidth}
    \includegraphics[width=\textwidth]{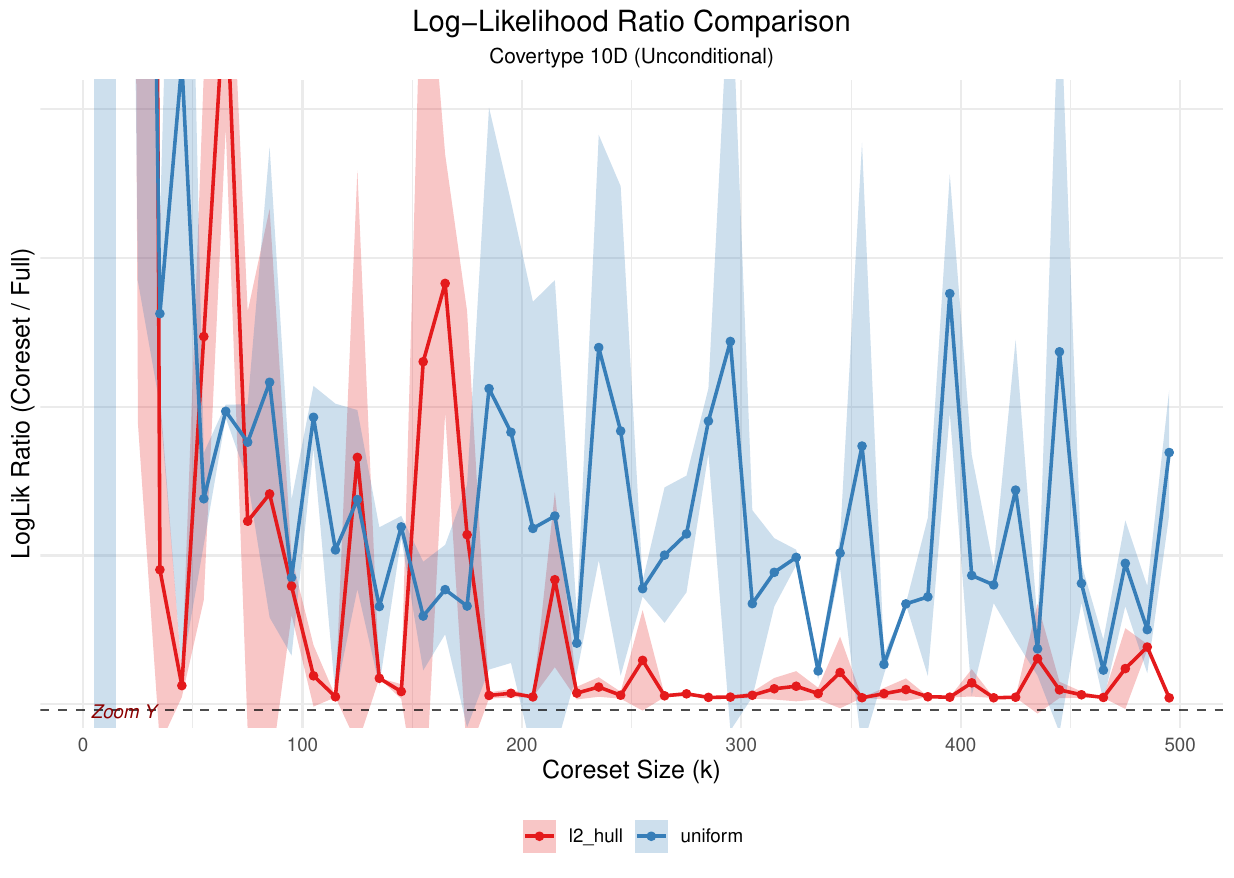}
    \caption{Log-Likelihood Ratio Comparison}
    \label{fig:covertype_llk}
  \end{subfigure}
  \hfill
  \begin{subfigure}[b]{0.48\textwidth}
    \includegraphics[width=\textwidth]{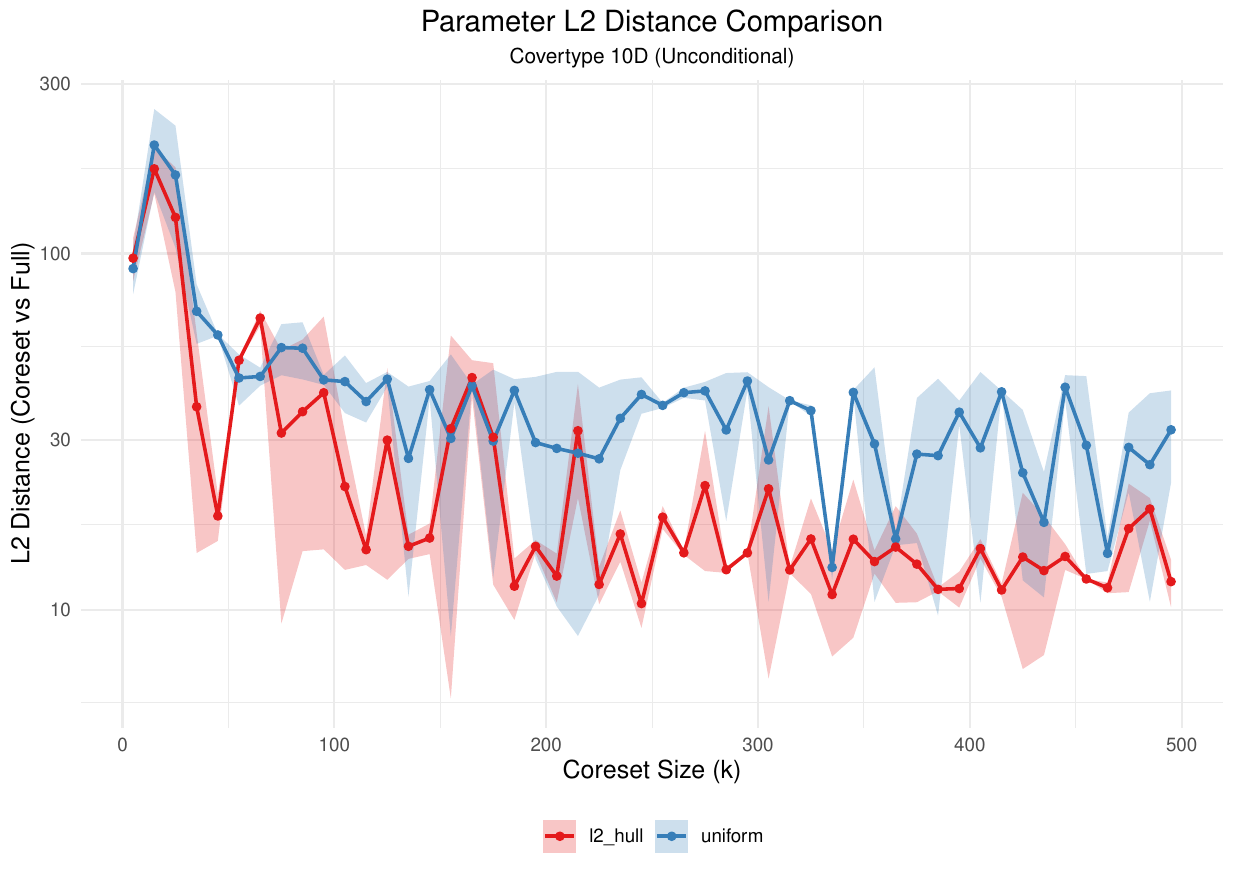}
    \caption{Parameter $\ell_2$ Distance Comparison}
    \label{fig:covertype_param_l2}
  \end{subfigure}

  \vspace{1em}

  \begin{subfigure}[b]{0.48\textwidth}
    \includegraphics[width=\textwidth]{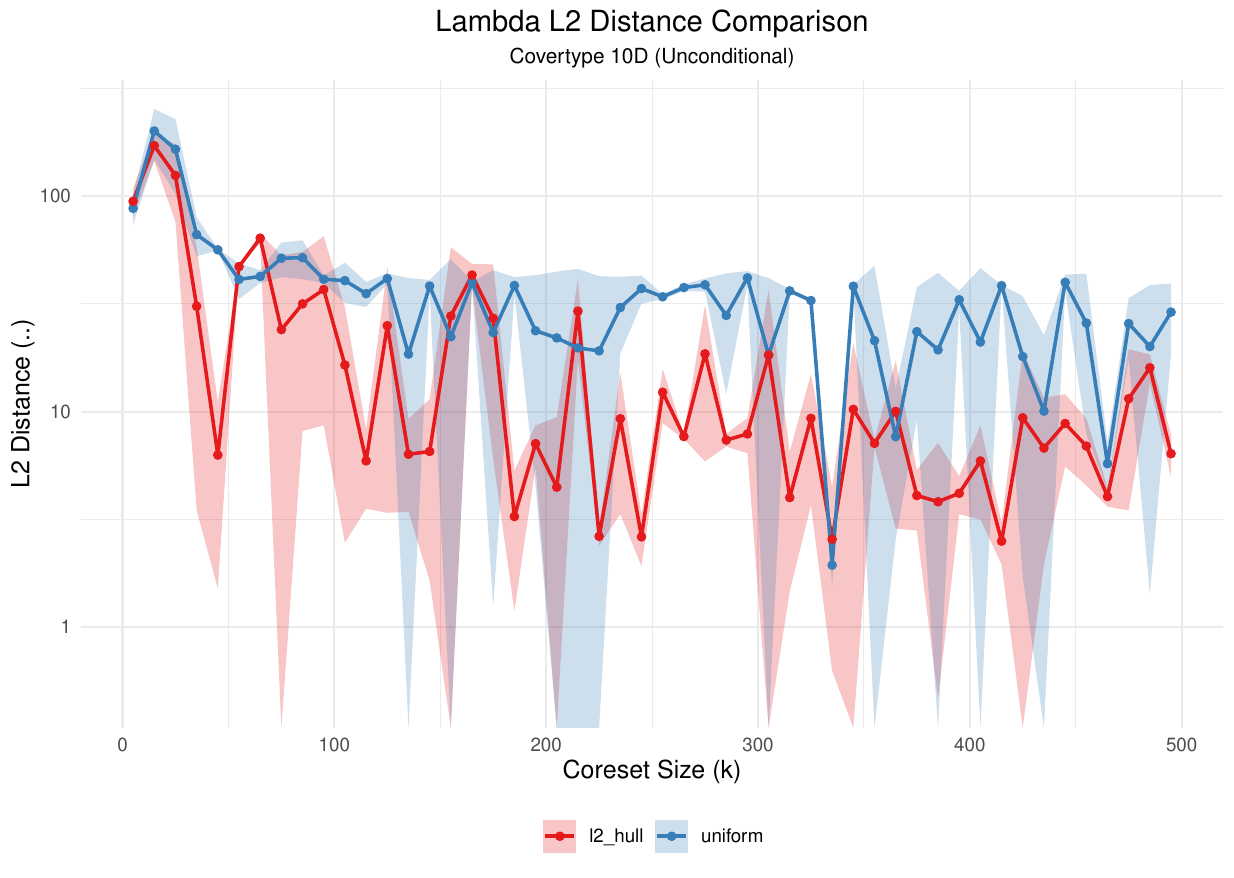}
    \caption{Lambda $\ell_2$ Distance Comparison}
    \label{fig:covertype_lambda_l2}
  \end{subfigure}
  \hfill
  \begin{subfigure}[b]{0.48\textwidth}
    \includegraphics[width=\textwidth]{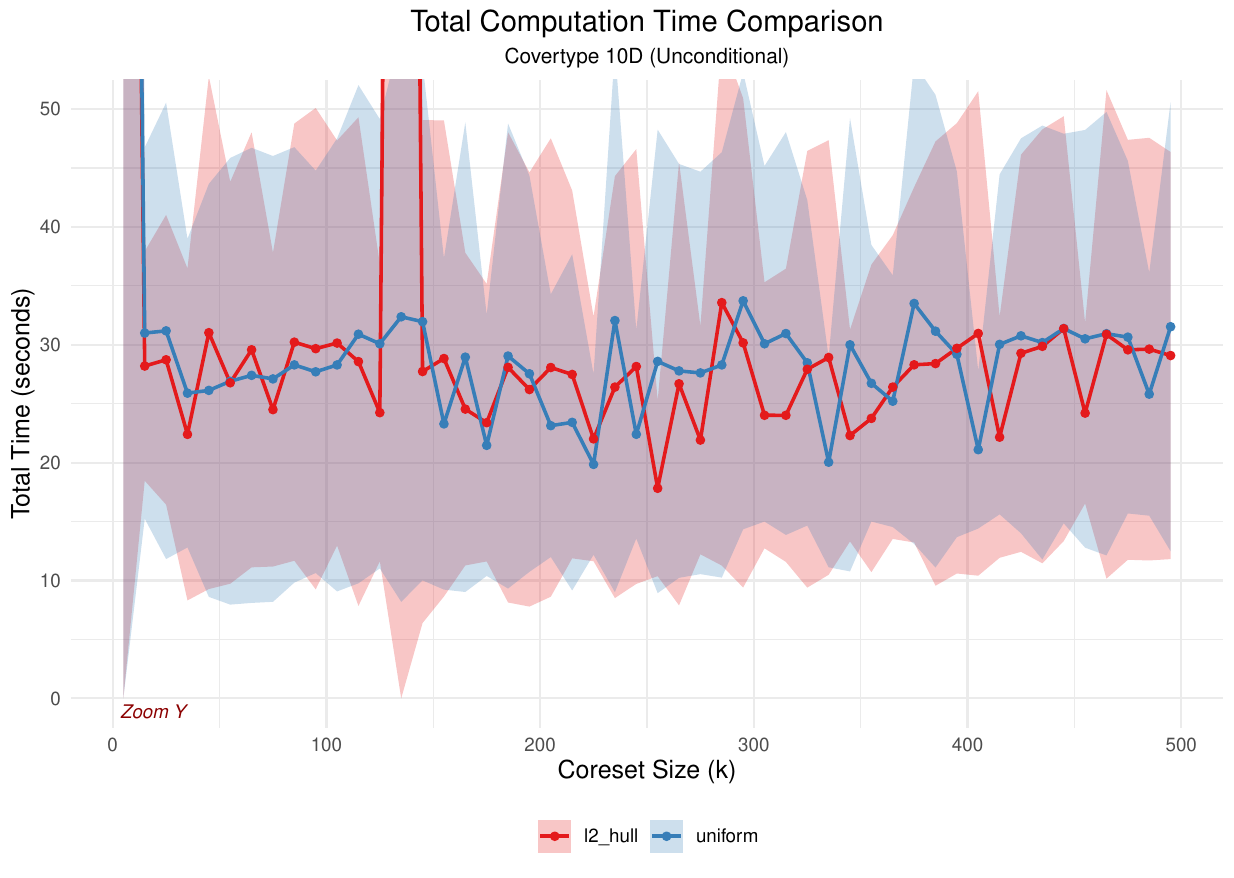}
    \caption{Total Computation Time Comparison}
    \label{fig:covertype_time}
  \end{subfigure}
  \caption{Covertype dataset (10-dimensional, unconditional model) on \texttt{$\ell_2$-hull} versus uniform sampling (\texttt{uniform}) compared with respect to four evaluation metrics:
    (a) log-likelihood ratio, (b) parameter-space $\ell_2$ distances, (c) $\ell_2$ distances dependent on parameter $\lambda$,
    (d) Total computation time.}

  \label{fig:covertype_results}
\end{figure}

As can be seen from the Figure~\ref{fig:covertype_results}, our proposed \texttt{$\ell_2$-hull} method significantly outperforms the uniform sampling (\texttt{uniform}) in all the four evaluation metrics. Specifically, \texttt{$\ell_2$-hull} is closer to 1 in the log-likelihood ratio, which indicates a more accurate approximation of the original model; it maintains a smaller error in the $\ell_2$ distance between the $\vartheta$ parameters and $\lambda$ as well, which proves that it is able to better preserve the distributional structure of the original data in the parameter space whose dimension is roughly squared compared to the plain data dimension; and at the same time, its overall running time is comparable to that of uniform sampling, which does not introduce any additional significant overheads. It can be seen that for multivariate large-scale datasets, the \texttt{$\ell_2$-hull} method, which combines $\ell_2$ subsampling with convex hull techniques, outperforms simple uniform sampling in terms of both performance and efficiency.

\subsubsection{Equity Return Dataset}

In finance, returns on multiple stocks often have complex non-linear and time-varying dependence structures. To capture these dependencies, copula models are widely used in risk management and asset allocation. However, traditional copula mostly need to specify the marginal distributions and assume fixed dependence parameters, which makes it difficult to simultaneously model the dynamic changes of the marginals and conditional correlations. MCTM can more comprehensively reflect the characteristics of the joint distribution of financial asset returns by jointly estimating the marginal distributions and the dependence structures through a flexible transformation function, and thus has a good prospect of being applied in the modeling of stock returns.  Therefore, we select 10 and 20 representative stock returns data, see Table~\ref{tab:stocklist10} and Table~\ref{tab:stocklist20}, respectively, construct their joint distributions based on the multivariate conditional transformation model MCTM, and combine the coreset method proposed in this paper to improve the computational efficiency of model fitting.

\begin{table}[htbp]
\centering
\caption{Performance comparison on 10 stock return series for different coreset sizes (1985-2025)}
\label{tab:stock_return_performance}
\resizebox{\textwidth}{!}{
\begin{tabular}{llccccc}
\toprule
\textbf{Coreset Size} & \textbf{Method} & \textbf{Param. $\ell_2$ dist.} & \textbf{$\lambda$ error} & \textbf{Log‐likelihood ratio} & \textbf{Rel. Impr. L2 / $\lambda$ / LL (\%)} & \textbf{Total time (s)} \\
\midrule

\multirow{3}{*}{$k=50$}
& $\ell_2$‐hull & $\mathbf{45.518 \pm 1.643}$       & $2.992 \pm 0.530$               & $1.683 \pm 0.234$              & 12.0 / 0.0 / 57.5 & $8.21 \pm 2.30$ \\
& $\ell_2$‐only & $45.784 \pm 0.554$                & $2.599 \pm 0.629$               & $\mathbf{1.351 \pm 0.082}$              & 11.5 / 0.0 / 78.2 & $8.45 \pm 2.53$ \\
& uniform      & $51.745 \pm 2.474$                & $\mathbf{1.515 \pm 0.303}$      & $2.610 \pm 0.101$     & baseline         & $\mathbf{7.98 \pm 1.88}$ \\

\midrule

\multirow{3}{*}{$k=100$}
& $\ell_2$‐hull & $\mathbf{39.888 \pm 0.643}$       & $1.613 \pm 0.106$               & $1.399 \pm 0.147$              & 22.1 / 0.0 / 89.3 & $9.93 \pm 2.58$ \\
& $\ell_2$‐only & $41.183 \pm 1.503$                & $1.404 \pm 0.114$               & $\mathbf{1.153 \pm 0.044}$              & 19.6 / 0.0 / 96.1 & $9.80 \pm 2.91$ \\
& uniform      & $51.195 \pm 2.616$                & $\mathbf{0.913 \pm 0.140}$      & $4.926 \pm 0.390$     & baseline         & $\mathbf{8.94 \pm 2.10}$ \\

\midrule

\multirow{3}{*}{$k=200$}
& $\ell_2$‐hull & $\mathbf{33.722 \pm 1.242}$       & $1.050 \pm 0.133$               & $1.236 \pm 0.145$              & 29.6 / 0.0 / 80.9 & $11.37 \pm 2.89$ \\
& $\ell_2$‐only & $38.031 \pm 0.282$                & $1.147 \pm 0.140$               & $\mathbf{1.085 \pm 0.004}$              & 20.6 / 0.0 / 93.1 & $11.40 \pm 3.51$ \\
& uniform      & $47.881 \pm 1.217$                & $\mathbf{0.621 \pm 0.108}$      & $2.242 \pm 0.149$     & baseline         & $\mathbf{10.13 \pm 2.69}$ \\

\midrule

\multirow{3}{*}{$k=300$}
& $\ell_2$‐hull & $\mathbf{29.814 \pm 1.363}$       & $0.905 \pm 0.100$               & $1.127 \pm 0.081$              & 33.6 / 0.0 / 92.1 & $13.95 \pm 4.48$ \\
& $\ell_2$‐only & $35.107 \pm 0.916$                & $0.851 \pm 0.091$      & $\mathbf{1.070 \pm 0.007}$     & 21.8 / 0.0 / 93.5 & $\mathbf{12.63 \pm 3.13}$ \\
& uniform      & $44.915 \pm 3.172$                & $\mathbf{0.488 \pm 0.050}$      & $2.079 \pm 0.171$     & baseline         & $13.01 \pm 3.91$ \\

\bottomrule
\end{tabular}
}
\begin{tablenotes}
\small
\item Results are mean $\pm 1$ standard deviation over 5 valid trials. Bold indicates the best (lowest for error metrics, likelihood ratio closer for 1 for better) per coreset size and metric. Relative improvement (\%) over the uniform baseline shown per metric (Param L2 / Lambda / LL)
\end{tablenotes}
\end{table}

\begin{table}[htbp]
\centering
\caption{Performance comparison on 20 stock return series for different coreset sizes (1985-2025)}
\label{tab:stock_return_performance_2}
\resizebox{\textwidth}{!}{
\begin{tabular}{llccccc}
\toprule
\textbf{Coreset Size} & \textbf{Method} & \textbf{Param. $\ell_2$ dist.} & \textbf{$\lambda$ error} & \textbf{Log‐likelihood ratio} & \textbf{Rel. Impr. L2 / $\lambda$ / LL (\%)} & \textbf{Total time (s)} \\
\midrule

\multirow{3}{*}{$k=50$}
& $\ell_2$‐hull & $54.254 \pm 1.367$                & $3.944 \pm 0.563$                 & $5.208 \pm 0.210$                & 3.6 / 0.0 / 0.0 & $30.95 \pm 5.06$ \\
& $\ell_2$‐only & $\mathbf{53.922 \pm 1.280}$       & $3.920 \pm 0.344$                 & $30.303 \pm 0.426$       & 4.2 / 0.0 / 0.0 & $31.10 \pm 6.00$ \\
& uniform      & $56.263 \pm 1.042$                & $\mathbf{3.490 \pm 0.260}$        & $\mathbf{4.504 \pm 0.200}$                & baseline         & $\mathbf{29.00 \pm 7.55}$ \\

\midrule

\multirow{3}{*}{$k=100$}
& $\ell_2$‐hull & $\mathbf{46.280 \pm 1.887}$       & $2.270 \pm 0.138$                 & $\mathbf{1.675 \pm 0.154}$       & 13.4 / 0.0 / 89.6 & $35.56 \pm 7.94$ \\
& $\ell_2$‐only & $47.385 \pm 0.788$                & $2.304 \pm 0.370$                 & $1.718 \pm 0.141$                & 11.4 / 0.0 / 88.9 & $\mathbf{34.95 \pm 6.53}$ \\
& uniform      & $53.469 \pm 2.072$                & $\mathbf{2.006 \pm 0.151}$        & $7.518 \pm 0.305$                & baseline         & $35.22 \pm 6.65$ \\

\midrule

\multirow{3}{*}{$k=200$}
& $\ell_2$‐hull & $40.225 \pm 3.690$                & $1.511 \pm 0.138$                 & $1.574 \pm 0.147$                & 20.4 / 0.0 / 62.3 & $49.73 \pm 7.74$ \\
& $\ell_2$‐only & $\mathbf{38.852 \pm 0.925}$       & $1.458 \pm 0.079$        & $\mathbf{1.131 \pm 0.044}$       & 23.1 / 0.0 / 91.4 & $52.99 \pm 12.33$ \\
& uniform      & $50.531 \pm 1.513$                & $\mathbf{1.224 \pm 0.158}$                 & $2.525 \pm 0.067$                & baseline         & $\mathbf{45.91 \pm 7.39}$ \\

\midrule

\multirow{3}{*}{$k=300$}
& $\ell_2$‐hull & $\mathbf{35.819 \pm 3.552}$       & $1.165 \pm 0.099$                 & $1.400 \pm 0.107$                & 23.6 / 0.0 / 75.2 & $\mathbf{60.99 \pm 6.86}$ \\
& $\ell_2$‐only & $35.036 \pm 0.286$                & $1.137 \pm 0.132$        & $\mathbf{1.109 \pm 0.033}$       & 25.3 / 0.0 / 93.2 & $61.41 \pm 10.14$ \\
& uniform      & $46.906 \pm 3.669$                & $\mathbf{1.028 \pm 0.044}$                 & $2.617 \pm 0.195$                & baseline         & $63.89 \pm 12.15$ \\

\bottomrule
\end{tabular}
}
\begin{tablenotes}
\small
\item Results are mean $\pm 1$ standard deviation over 5 valid trials. Bold indicates the best (lowest for error metrics, likelihood ratio closer for 1 for better) per coreset size and metric. Relative improvement (\%) over the uniform baseline shown per metric (Param L2 / Lambda / LL)
\end{tablenotes}
\end{table}

In Table~\ref{tab:stock_return_performance} (10 stocks) and Table~\ref{tab:stock_return_performance_2} (20 stocks) we can see that in most of the experimental scenarios, the $\ell_2$-hull method achieves excellent performance, especially in the two metrics of $\ell_2$ distance and log-likelihood ratio in the parameter space, which are comparable compared to the pure $\ell_2$ sampling scheme. This may be due to the fact that in these scenarios, there are not many extreme points and the convex hull approximation does not add a significant advantage. In addition to this, the $\ell_2$-hull is far superior to uniform subsampling in terms of parameter error and log-likelihood ratios in the vast majority of scenarios, and is no slouch in terms of estimation accuracy dependent on the structural parameter $\lambda$. In summary, the combination of sensitivity sampling and convex hull technique not only maintains the comparable effect with simple $\ell_2$ sampling, but also outperforms uniform sampling when facing sparse extremes and complex multivariate structures, and achieves a good balance between high accuracy and high efficiency.
\begin{table}[htbp]
\centering
\caption{List of 10 Selected Stocks}
\label{tab:stocklist10}
\begin{tabular}{ll}
\toprule
\textbf{Ticker} & \textbf{Company Name} \\
\midrule
JNJ  & Johnson \& Johnson \\
PG   & Procter \& Gamble Co. \\
KO   & The Coca-Cola Company \\
XOM  & Exxon Mobil Corporation \\
WMT  & Walmart Inc. \\
IBM  & International Business Machines Corporation \\
GE   & General Electric Company \\
MMM  & 3M Company \\
MCD  & McDonald's Corporation \\
PFE  & Pfizer Inc. \\
\bottomrule
\end{tabular}
\end{table}

\vspace{1cm}

\begin{table}[htbp]
\centering
\caption{List of 20 Selected Stocks}
\label{tab:stocklist20}
\begin{tabular}{ll}
\toprule
\textbf{Ticker} & \textbf{Company Name} \\
\midrule
JNJ   & Johnson \& Johnson \\
PG    & Procter \& Gamble Co. \\
KO    & The Coca-Cola Company \\
XOM   & Exxon Mobil Corporation \\
WMT   & Walmart Inc. \\
IBM   & International Business Machines Corporation \\
GE    & General Electric Company \\
MMM   & 3M Company \\
MCD   & McDonald's Corporation \\
PFE   & Pfizer Inc. \\
AAPL  & Apple Inc. \\
MSFT  & Microsoft Corporation \\
INTC  & Intel Corporation \\
CSCO  & Cisco Systems Inc. \\
AMGN  & Amgen Inc. \\
CMCSA & Comcast Corporation \\
COST  & Costco Wholesale Corporation \\
GILD  & Gilead Sciences Inc. \\
SBUX  & Starbucks Corporation \\
TOT   & TotalEnergies SE \\
\bottomrule
\end{tabular}
\end{table}

\end{document}